\documentclass[twoside]{article}

\usepackage[accepted]{"aistats2021"}

\usepackage[round]{natbib}

\usepackage[utf8]{inputenc} 
\usepackage[T1]{fontenc}    
\usepackage[hidelinks]{hyperref}       
\usepackage{url}            
\usepackage{booktabs}       
\usepackage{amsfonts}       
\usepackage{nicefrac}       
\usepackage{microtype}      

\usepackage{dsfont}
\usepackage{graphicx}
\usepackage{subfigure}
\usepackage{booktabs} 
\usepackage{amsmath}
\usepackage{amssymb}
\usepackage{amsthm}
\usepackage{bbm}
\usepackage{caption}
\usepackage[ruled,vlined]{algorithm2e}
\usepackage{chngcntr} 

\newcommand{\vertiii}[1]{{\left\vert\kern-0.25ex\left\vert\kern-0.25ex\left\vert #1 
    \right\vert\kern-0.25ex\right\vert\kern-0.25ex\right\vert}}

\newcommand{\Prob}{\mathbb{P}}
\newcommand{\bs}{\boldsymbol}

\newcommand\numberthis{\addtocounter{equation}{1}\tag{\theequation}}

\newcommand{\glasso}{graphical lasso}

\newcommand{\thav}{thAV}
\newcommand{\av}{AV}
\newcommand{\stars}{StARS}

\newcommand{\oracle}{oracle}

\DeclareMathOperator*{\argmin}{\arg\!\min}

\DeclareMathOperator*{\Sd+}{\mathcal{S}_d^+}

\DeclareMathOperator*{\tr}{\operatorname{tr}}

\usepackage{color}
\usepackage[normalem]{ulem}

\renewcommand{\cite}[1]{\citep{#1}}

\newtheorem{lem}{Lemma}
\newtheorem{cor}[lem]{Corollary}
\newtheorem{defi}[lem]{Definition}
\newtheorem{theorem}[lem]{Theorem}

\newtheorem{prop}[lem]{Proposition}

\newtheorem{assumption}[lem]{Assumption}

\title{Thresholded Adaptive Validation:\\Tuning the Graphical Lasso for Graph Recovery}


\interfootnotelinepenalty=10000
\begin{document}
%
\runningtitle{Thresholded Adaptive Validation: Tuning the Graphical Lasso for Graph Recovery}

%
\runningauthor{Mike Laszkiewicz, Asja Fischer, Johannes Lederer}

\twocolumn[

\aistatstitle{Thresholded Adaptive Validation:\\Tuning the Graphical Lasso for Graph Recovery}

\aistatsauthor{Mike Laszkiewicz \And Asja Fischer \And  Johannes Lederer }

\aistatsaddress{ Department of Mathematics, Ruhr University
Bochum, Germany } ]

\begin{abstract}
Many Machine Learning algorithms are formulated as regularized optimization problems, but their performance hinges on a regularization parameter that needs to be calibrated to each application at hand. In this paper, we propose a general calibration scheme for regularized optimization problems and apply it to the graphical lasso, which is a method for Gaussian graphical modeling.  
The scheme is equipped with theoretical guarantees and motivates a thresholding pipeline that can improve graph recovery. 
Moreover, requiring at most one line search over the regularization path, the calibration scheme is computationally more efficient than competing schemes that are based on resampling.
Finally, we show in simulations that our approach can improve on the graph recovery of other approaches considerably. 
\end{abstract}

\section{Introduction}
Over the last decades full of technical achievements, we experienced a revolutionary supply of data, confronting us with large-scale data sets. In order to handle and to infer new insights from the appearing wealth of data it presupposes us to put effort into the development of new, scaleable procedures.
One approach to address this problem is using graphical models, which proved to serve as an intuitive, easy-understanding visualization of the underlying interaction network that can then be further analyzed.
Typical applications for graphical models occur in several modern sciences, including genetics \cite{.2004}, the analysis of brain connectivity networks \cite{Bu.20170420}, and the investigation of complex financial networks \cite{Denev.2015}.
In all of these cases, graphical models can reduce the network of interactions to its relevant parts to lighten the challenge of high-dimensionality. While domain experts can analyze and interpret the structure of interactions across features, we can use this information for more accurate model building. Using a sparse representation of the relevant parts of the model, it is possible to develop more efficient inference algorithms and accelerate sampling from the model. 
A popular approach to face this challenge is to recover the network from the data using \textit{undirected graphical models}. We call this task \textit{graph recovery}. 

An important class of undirected graphical models are \textit{Gaussian graphical models}.
There are numerous estimators for Gaussian graphical models, including those that account for high dimensionality, e.g.~the graphical lasso \cite{Yuan.2007, Banerjee.20070704, Friedman.2008}, SCAD \cite{Fan.2009}, and MCP \cite{Zhang.2010}, which are based on the idea of regularized maximum likelihood estimation, and various other approaches such as neighborhood regression \cite{Meinshausen.2006, Sun}, TIGER \cite{Liu.20120911}, and SCIO \cite{scio}. 
These estimators reduce the effective dimensionality of the model through a regularization term that is adjusted to the setting at hand with a regularization parameter.

In this paper we generalize the theoretical framework  of \citet{Chichignoud.20161108} 
and utilize the large body of preliminary theoretical work \cite{Ravikumar.20081121} to verify that we can apply this general scheme to the \glasso{}.
Important features of the resulting data-driven calibrated estimator are that it comes with a finite sample upper bound on the approximation error 
and is computationally efficient as it requires at most one graphical lasso solution path. 
We equip the estimator with a simple, theory-based threshold and observe a significant improvement over other methods in its graph recovery performance. 

The rest of the paper is organized as follows. 
Section~\ref{sec:glasso} briefly reviews Gaussian graphical models and presents the proposed estimator for graph recovery, which we call \textit{thresholded adaptive validation \glasso{}} (\thav{}).
Based on an empirical analysis of toy data sets presented in Section~\ref{simus}, we demonstrate that the \thav{} outperforms existing methods on the graph recovery tasks. We apply the \thav{} to real-world data to recover biological networks in Section~\ref{sec:application}. Finally, we conclude with a discussion and present ideas for future research in Section~\ref{sec:discussion}. The supplement contains theoretical results, proofs, and further simulations. The code for our experiments is provided and can further be accessed through our public git repository\footnote{\url{https://github.com/MikeLasz/thav.glasso}}.

\section{Thresholded Adaptive Validation for the Graphical Lasso} \label{sec:glasso}
We begin by giving a brief review of Gaussian graphical models and describing the \glasso{} optimization problem in Section~\ref{sec:intro}. In Section~\ref{sec:method} we apply the adaptive validation (AV) calibration scheme, which was originally proposed by \citet{Chichignoud.20161108} and which we generalize to general regularized optimization problems in Section~A.1 of the supplement, to the \glasso{}. We obtain finite sample results for the $\ell_\infty$-loss on the off-diagonals of the \glasso{}, and employ these bounds to motivate a thresholded \glasso{} approach. 

\subsection{Brief Review of Gaussian Graphical Models} \label{sec:intro}
An undirected graphical model expresses the conditional dependence structure between components of a multivariate random variable. More precisely, given a high-dimensional random variable $\bs{z}\in\mathbb{R}^d$, the undirected graphical model depicts for each pair of components $z_i,\, z_j$ of $\bs{z}$ if these are independent given the remaining $d-2$ components of $\bs{z}$ (i.e.~ $z_i \perp z_j \vert \bs{z}_{\backslash  \{i,j\}}$).
Formally, an undirected graphical model is defined as a pair $(\bs{z},  \mathcal{G})$, where $\mathcal{G}:= ( \mathcal{V},  \mathcal{E})$ is a graph with vertices $\mathcal{V}:= \{1,...,d\}$ and edge set $\mathcal{E}:= \{ (i,j) \in \mathcal{V} \times \mathcal{V}: \; z_i \not\perp z_j \vert \bs{z}_{\backslash \{i,j\}}\}$. 
It is well-known that in a Gaussian graphical model, i.e.~in the case that $\bs{z}\sim\mathcal{N}_d(\bs{0}_d,\,\Sigma)$, where $\Sigma$ is the positive definite covariance matrix, we can find an elegant characterization of the conditional dependency structure. It can be seen as a special case of the \textit{Hammersley-Clifford Theorem} \cite{Grimmett.1973, Besag.1974, Lauritzen.1996}: for any $i\neq j \in \mathcal{V}$ it holds that
\begin{equation}\label{hammersley}
    z_i \perp z_j \vert \bs{z}_{\backslash \{i,j\}} \quad \Leftrightarrow \quad \Theta_{ij} = 0 \enspace,
\end{equation}
where $\Theta:= \Sigma^{-1}$ is the so called \textit{precision matrix}. 
Hence, in order to estimate the  conditional dependence graph $\mathcal{G}$, one can build on an estimate $\hat{\Theta}$ of
the precision matrix $\Theta$ and define $\hat{\mathcal{E}}:= \{ (i,j) \in \mathcal{V}\times \mathcal{V}: \; \hat{\Theta}_{ij} \neq 0 \}$.

Given $n$ samples $\bs{z}^{(1)},\dots ,\, \bs{z}^{(n)}$ drawn independently from $\mathcal{N}_d(\bs{0}_d,\,\Sigma)$, an evident approach to estimate $\Theta$ is to employ maximum likelihood estimation. 
But it is well-known that its performance suffers in the high-dimensional setting where $n \approx d$ or even $n<d$, and that it does not exist in the latter setting \cite{Wainwright.2019}.
A typical approach to overcome the burdens that come with high-dimensionality is to assume a sparsity structure on the target, that is, to assume $\Theta$ to have many zero-entries. This does not only improve theoretical guarantees but also makes the conditional dependence graph more interpretable. Moreover, imposing a sparsity structure is in accordance with the scientific beliefs in typical application areas in which graphical models are being used \cite{Thieffry.1998, Jeong.2001}.
The probably most-frequently used sparsity encouraging estimation procedure for Gaussian graphical models is the \textit{graphical lasso} \cite{Yuan.2007} 
\begin{align}
    \hat{\Theta}_r = \argmin_{\Omega \in \Sd+} \biggl\{  \operatorname{tr}\biggl[ &\frac{1}{n} \sum_{i=1}^n \bigl( \bs{z}^{(i)}\bigr)^\top \bs{z}^{(i)} \Omega \biggr] \nonumber \\ &- \log\left[ \det [ \Omega ] \right] + r\Vert \Omega\Vert_{1,\operatorname{off}} \biggr\} \enspace, \label{glasso} 
\end{align}
where $\Sd+$ is the set of positive definite and symmetric matrices in $\mathbb{R}^{d\times d}$, $\operatorname{tr}$ denotes the trace, $r$ is a problem-dependent regularization parameter, and $\Vert \Omega \Vert_{1,\operatorname{off}} := \sum_{i\neq j}\vert \Omega_{ij} \vert$ denotes the $\ell_1$-norm of $\Omega\in \Sd+$ on its off-diagonal.
Of course, the performance of the estimator hinges on the choice of~$r$, and while general theoretical results for the \glasso{} exist (e.g. those presented by \citet{Zhuang18}), to the best of our knowledge there are none that allow for a sophisticated choice of $r$ for graph recovery tasks that occur in practice.

\subsection{Thresholded Adaptive Validation}\label{sec:method}
In this section, we transfer the AV technique proposed by \citet{Chichignoud.20161108} for the lasso to the graphical lasso. As can be seen from our derivation in Section~A.1 of the supplement, the technique can be applied to tune any general regularized optimization problem over a set $\mathbb{S}$ of the form 
\begin{equation}
\label{eq:generalOP}
    \hat{\Theta}_r \in \argmin_{\Omega \in \mathbb{S}} \bigl\{ f(\bs{Z}, \Omega) + r h(\Omega)\bigr\} \enspace,
\end{equation}
where $r$ is a real-valued regularization parameter, $f$ is a function measuring the fit of the estimator $\Omega$ given the observed data $\bs{Z}$, and $h$ is some regularization function depending only on $\Omega$. 
Comparing~\eqref{glasso} with~\eqref{eq:generalOP} shows that the \glasso{} belongs to this class of
regularized optimization problems. We can therefore apply the calibration scheme proposed by \citet{Chichignoud.20161108} and which we generalized in the supplement to obtain the following definition: 
\begin{defi}[\av{}] \label{avdefi}
Let $\mathcal{R}$ be a finite and nonempty set of regularization parameters. Then, the adaptive validation (AV) calibration scheme selects the regularization parameter according to
\begin{align*}
    \hat{r} := \min \bigl\{r\in \mathcal{R}: \;  \ell\bigl(\hat{\Theta}_{r^\prime}&,\hat{\Theta}_{r^{\prime\prime}}\bigr) \leq C (r^\prime + r^{\prime \prime}) \\ & \quad \quad
    \forall \,r^\prime,r^{\prime\prime} \in \mathcal{R}\cap [r, \infty) \bigr\} \enspace ,
\end{align*}
where $\ell:\mathcal{S}_d^+ \times \mathcal{S}_d^+ \rightarrow \mathbb{R}$ is the $\ell_\infty$-distance on the off-diagonals, $C\in \mathbb{R}$ is a constant (specified in the following), and  $\hat{\Theta}_{r^\prime},\, \hat{\Theta}_{r^{\prime\prime}}$ are the \glasso{} estimators~\eqref{glasso} using regularization parameter $r^\prime$ and $r^{\prime\prime}$, respectively. We call $\hat{\Theta}_{\hat{r}}$ (resulting from inserting $\hat{r}$ into~\eqref{glasso}) the \av{} estimator.
\end{defi}

The constant $C$ stems from an assumption, which the theory of \citet{Chichignoud.20161108} relies on, namely
 that there exists this constant and a class of events $(\mathcal{T}_r)_{r\in \mathcal{R}}$, which are increasing in $r$, such that conditioned on $\mathcal{T}_r$ it holds
\begin{equation}
    \label{asu}
    \ell(\Theta, \hat{\Theta}_r)\leq Cr \enspace .
\end{equation}
Particularly, we only require the existence of the set of events $(\mathcal{T}_r)_{r \in \mathcal{R}}$ for some fixed $C$ and do not need to have access to it.
As demonstrated by Theorem~8 in Section~A.2 of the supplement, we can show
based on the investigations of \citet{Ravikumar.20081121}
that this assumption holds true for the \glasso{}\footnote{Note that even though we obtain a class of events $\bigl( \mathcal{T}_r\bigr)_{r\in \mathcal{R}}$ building on a similar interpretation as \citet{Chichignoud.20161108}, we have to resort to a more involved primal-dual-witness construction to prove the validity of this upper bound.}.
Motivated by this, the smallest regularization parameter $r_\delta^\ast$ that enables us to apply~\eqref{asu} with probability $1-\delta$, for some $\delta\in (0,1]$, can be seen as a natural candidate for $r$: 
\begin{equation*}
    r_\delta^\ast := \argmin_{r \in \mathcal{R}}\left\{ \Prob\bigl( \mathcal{T}_r\bigr) \geq 1 - \delta\right\} \enspace .
\end{equation*}
However, $r_\delta^\ast$ is inaccessible in practice, since we usually cannot measure $\Prob(\mathcal{T}_r)$.
Nonetheless, the AV estimator $\hat{\Theta}_{\hat{r}}$ also results in a good approximation of the precision  matrix as guaranteed by the following theorem, which 
is based on the generalization of the Theorem 3 of \citet{Chichignoud.20161108} \footnote{The generalized version we derived corresponds to Theorem 3 in Section~A.1 of the supplement.} applied onto the \glasso{}. 
\begin{theorem}[Finite Sample Bound for the \av]\label{avtheo}
    Suppose that 
    $\hat{r}$ is the regularization parameter selected by the AV calibration scheme and $C$ is the constant from~\eqref{asu}. Then, for any $\delta \in (0,1]$, it holds that 
    \begin{equation}
        \hat{r} \leq r_{\delta}^\ast \quad \textrm{ and } \quad \ell\bigl(\Theta, \hat{\Theta}_{\hat{r}}\bigr) \leq 3Cr_{\delta}^\ast \label{avbound}
    \end{equation}
    with probability at least $1-\delta$.
\end{theorem}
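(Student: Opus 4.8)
The plan is to carry out a Lepski-type argument on the single high-probability event $\mathcal{T}_{r_\delta^\ast}$, exploiting the monotonicity of the events $(\mathcal{T}_r)_{r\in\mathcal{R}}$ together with two applications of the triangle inequality for $\ell$. Since $r_\delta^\ast$ is by definition the smallest $r\in\mathcal{R}$ with $\Prob(\mathcal{T}_r)\geq 1-\delta$, the event $\mathcal{T}_{r_\delta^\ast}$ occurs with probability at least $1-\delta$, and I would condition on it throughout; both displayed claims then hold deterministically on this event. The key consequence of monotonicity is that, because the events are increasing in $r$, we have $\mathcal{T}_{r_\delta^\ast}\subseteq\mathcal{T}_r$ for every $r\geq r_\delta^\ast$, so on $\mathcal{T}_{r_\delta^\ast}$ assumption~\eqref{asu} is valid \emph{simultaneously} for all $r\in\mathcal{R}\cap[r_\delta^\ast,\infty)$, i.e.\ $\ell(\Theta,\hat{\Theta}_r)\leq Cr$ for every such $r$.

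Next I would establish $\hat{r}\leq r_\delta^\ast$ by showing that $r_\delta^\ast$ itself satisfies the defining inequality of the AV scheme. For any $r^\prime, r^{\prime\prime}\in\mathcal{R}\cap[r_\delta^\ast,\infty)$ the triangle inequality combined with the previous step gives
\begin{equation*}
  \ell(\hat{\Theta}_{r^\prime},\hat{\Theta}_{r^{\prime\prime}})
  \leq \ell(\hat{\Theta}_{r^\prime},\Theta) + \ell(\Theta,\hat{\Theta}_{r^{\prime\prime}})
  \leq Cr^\prime + Cr^{\prime\prime}
  = C(r^\prime + r^{\prime\prime}) \enspace,
\end{equation*}
so $r_\delta^\ast$ lies in the set over which $\hat{r}$ takes its minimum, whence $\hat{r}\leq r_\delta^\ast$. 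In particular this set is nonempty on $\mathcal{T}_{r_\delta^\ast}$, so $\hat{r}$ is well defined there, which disposes of the first assertion in~\eqref{avbound}.

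Finally I would bound the loss at $\hat{r}$. Since $\hat{r}\leq r_\delta^\ast$, both indices lie in $\mathcal{R}\cap[\hat{r},\infty)$, so the defining property of $\hat{r}$, applied with $r^\prime=\hat{r}$ and $r^{\prime\prime}=r_\delta^\ast$, yields $\ell(\hat{\Theta}_{\hat{r}},\hat{\Theta}_{r_\delta^\ast})\leq C(\hat{r}+r_\delta^\ast)\leq 2Cr_\delta^\ast$. A second triangle inequality, together with $\ell(\Theta,\hat{\Theta}_{r_\delta^\ast})\leq Cr_\delta^\ast$ from~\eqref{asu}, then gives
\begin{equation*}
  \ell(\Theta,\hat{\Theta}_{\hat{r}})
  \leq \ell(\Theta,\hat{\Theta}_{r_\delta^\ast}) + \ell(\hat{\Theta}_{r_\delta^\ast},\hat{\Theta}_{\hat{r}})
  \leq Cr_\delta^\ast + 2Cr_\delta^\ast
  = 3Cr_\delta^\ast \enspace,
\end{equation*}
which is the asserted bound. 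The argument is short, and I expect the only genuine subtlety to be using the monotonicity in the correct direction so that every inequality holds on the \emph{single} event $\mathcal{T}_{r_\delta^\ast}$, with no union bound over the regularization path. The substantive work---verifying that assumption~\eqref{asu} actually holds for the \glasso{} via the primal-dual-witness construction---is deferred to Theorem~8 in the supplement and is not needed here.
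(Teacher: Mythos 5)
Your proposal is correct and follows essentially the same argument as the paper's proof of the general version (Theorem~3 in Section~A.1 of the supplement): condition on $\mathcal{T}_{r_\delta^\ast}$, use monotonicity to get $\ell(\Theta,\hat{\Theta}_r)\leq Cr$ simultaneously for all $r\geq r_\delta^\ast$, and apply the triangle inequality twice. The only cosmetic difference is that you show directly that $r_\delta^\ast$ lies in the feasible set of the AV minimization, whereas the paper phrases the same step as a proof by contradiction.
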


For simplicity of notation, let us denote the \av{} estimator by $\hat{\Theta}:=\hat{\Theta}_{\hat{r}}$ from here on.
The finite sample upper bound~\eqref{avbound} immediately implies that it holds with probability $1-\delta$ that 
\begin{enumerate}
    \item for any zero entry $\Theta_{ij}=0$ of the true precision matrix the corresponding  entry $\hat{\Theta}_{ij}$ of the AV estimate satisfies $\vert \hat{\Theta}_{ij}\vert \in [0, 3Cr_\delta^\ast]$;
    \item for any significant non-zero entry $\Theta_{ij}$ with $\vert \Theta_{ij}\vert > (3 + \lambda )Cr_\delta^\ast$, for some constant $\lambda$, the corresponding entry $\hat{\Theta}_{ij}$ of the AV estimate is also non-zero with $\vert \hat{\Theta}_{ij} \vert > \lambda Cr_\delta^\ast$.
\end{enumerate}
These observations suggest a strategy for efficient graph recovery: by including all edges $(i,j)$ to the edge set that satisfy $\vert \hat{\Theta}_{ij}\vert> \lambda Cr_\delta^\ast$, we make sure that we recover all significant entries (see 2.). Pursuing this strategy, we can also shrink the interval, in which \av{} missclassifies zero entries to $[\lambda C r_\delta^\ast,\, 3C r_\delta^\ast]$ (see 1.). 
However, as $r_\delta^\ast$ is inaccessible, we propose to replace it in the selection strategy by the AV regularization parameter, leading to the thresholded estimator defined in the following.
\begin{defi}[\thav{}]
    Let $\hat{\Theta}$ be the $\av{}$ estimator. Then, we define the thresholded adaptive validation \glasso{}  (\thav{}) estimator by 
    \begin{equation}\label{thresholdedglasso}
        \bigl( \hat{\Theta}^t \bigr)_{ij} := \bigl(\hat{\Theta}_{ij} \mathds{1}_{\{\vert \hat{\Theta}_{ij} \vert > t\}} \bigr)_{ij} \enspace,
    \end{equation}
    where $t:=\lambda C\hat{r}$ is the threshold, $\lambda\in (0,3]$, and $\mathds{1}_{A}$ is the indicator function over a set $A$. The resulting estimated edge set is then 
    \begin{align*}
        \hat{\mathcal{E}}:&= \bigl\{ (i,j) \in \mathcal{V} \times \mathcal{V}: \; \hat{\Theta}_{ij}^t \neq 0 \bigr\} \\ &= \bigl\{ (i,j)\in \mathcal{V} \times \mathcal{V}: \; \vert \hat{\Theta}_{ij}\vert > \lambda C \hat{r}\bigr\} \enspace.
    \end{align*}
\end{defi}
As we know from Theorem 2 that $\hat{r} \leq r_\delta^\ast$ with probability $1-\delta$, we can use the above observations to derive the following corollary that guarantees outstanding graph recovery properties of the \thav{}:
\begin{cor}[Finite Sample Graph Recovery] \label{supportrecov}
    Let $\hat{\Theta}$ be the \av{} estimator and $\hat{\Theta}^t$ the \thav{} estimator with $t=\lambda C\hat{r}$, where $C$ is the constant from Assumption~\eqref{asu} and $\lambda \in (0,3]$. Then, it holds with probability $1-\delta$ that 
    \begin{enumerate}
        \item for all $(i,j)\in\mathcal{V}$ such that $\Theta_{ij}=0$ it is ${\vert \hat{\Theta}_{ij}\vert \in [0,\, 3Cr_\delta^\ast]}$ and therefore
        \begin{equation*}
            (i,j)\in \hat{\mathcal{E}} \iff \vert \hat{\Theta}_{ij}\vert \in (\lambda C \hat{r},\, 3Cr_\delta^\ast] \enspace .
        \end{equation*}
        \item for all $(i,j)\in\mathcal{V}$ such that $\vert \Theta_{ij}\vert > (3+\lambda )Cr_\delta^\ast$ it is $(i,j)\in\hat{\mathcal{E}}$. 
    \end{enumerate}
\end{cor}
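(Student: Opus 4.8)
The plan is to condition throughout on the event from Theorem~\ref{avtheo}, which by that theorem occurs with probability at least $1-\delta$ and simultaneously delivers the two ingredients I will need: the inequality $\hat{r} \leq r_\delta^\ast$ controlling the data-driven threshold, and the $\ell_\infty$ deviation bound $\ell(\Theta, \hat{\Theta}) = \max_{i \neq j} \vert \Theta_{ij} - \hat{\Theta}_{ij}\vert \leq 3Cr_\delta^\ast$. Both statements of the corollary then follow deterministically on this event, so no further probabilistic argument is required beyond invoking Theorem~\ref{avtheo} once; the constant $C$ and level $\delta$ are inherited unchanged.

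For the first claim, I would fix a pair $(i,j)$ with $\Theta_{ij} = 0$ and simply read off $\vert \hat{\Theta}_{ij}\vert = \vert \hat{\Theta}_{ij} - \Theta_{ij}\vert \leq \ell(\Theta, \hat{\Theta}) \leq 3Cr_\delta^\ast$, which together with $\vert \hat{\Theta}_{ij}\vert \geq 0$ gives the membership $\vert \hat{\Theta}_{ij}\vert \in [0, 3Cr_\delta^\ast]$. The equivalence then follows by combining this upper bound with the defining characterization of the edge set, namely $(i,j) \in \hat{\mathcal{E}} \iff \vert \hat{\Theta}_{ij}\vert > \lambda C \hat{r}$: intersecting the half-line $(\lambda C \hat{r}, \infty)$ with the interval $[0, 3Cr_\delta^\ast]$ yields exactly $(\lambda C \hat{r}, 3Cr_\delta^\ast]$.

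For the second claim, I would fix a pair with $\vert \Theta_{ij}\vert > (3+\lambda)Cr_\delta^\ast$ and apply the reverse triangle inequality against the same $\ell_\infty$ bound, obtaining $\vert \hat{\Theta}_{ij}\vert \geq \vert \Theta_{ij}\vert - \ell(\Theta, \hat{\Theta}) > (3+\lambda)Cr_\delta^\ast - 3Cr_\delta^\ast = \lambda Cr_\delta^\ast$. To conclude $(i,j) \in \hat{\mathcal{E}}$ I still need to compare this lower bound against the data-driven threshold $\lambda C \hat{r}$ rather than the oracle threshold $\lambda Cr_\delta^\ast$. This is precisely where $\hat{r} \leq r_\delta^\ast$ from Theorem~\ref{avtheo} enters: since $\lambda, C > 0$ it gives $\lambda C \hat{r} \leq \lambda Cr_\delta^\ast < \vert \hat{\Theta}_{ij}\vert$, so the entry clears the threshold and lies in $\hat{\mathcal{E}}$.

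The only real subtlety, and the step I would flag as the crux, is this last substitution of $\hat{r}$ for $r_\delta^\ast$ in the threshold of claim~2: the recovery guarantee is phrased in terms of the inaccessible oracle quantity $r_\delta^\ast$, yet the estimator thresholds at the observable $\lambda C \hat{r}$, so the argument only closes because the two bounds of Theorem~\ref{avtheo} point in compatible directions. The deviation bound pushes $\vert \hat{\Theta}_{ij}\vert$ above $\lambda Cr_\delta^\ast$, while $\hat{r} \leq r_\delta^\ast$ simultaneously keeps the actual threshold below that same value, leaving a consistent gap. Everything else reduces to a routine application of the triangle inequality together with the definition of $\hat{\mathcal{E}}$.
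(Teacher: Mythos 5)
Your proposal is correct and follows essentially the same route as the paper: both parts condition on the event of Theorem~\ref{avtheo} and combine the bound $\ell(\Theta,\hat{\Theta})\leq 3Cr_\delta^\ast$ with $\hat{r}\leq r_\delta^\ast$ and the reverse triangle inequality. The only cosmetic difference is that the paper proves part~2 by contradiction (assuming $\hat{\Theta}^t_{ij}=0$ and deriving $\vert\Theta_{ij}-\hat{\Theta}_{ij}\vert>3Cr_\delta^\ast$), whereas you argue directly that $\vert\hat{\Theta}_{ij}\vert>\lambda Cr_\delta^\ast\geq\lambda C\hat{r}$; the two are contrapositives of one another.
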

The proof of Corollary~\ref{supportrecov} can be found in Section~A.3 of the supplement. As far as we know, there is no other theoretical result so far that justifies a specific choice for a threshold in a thresholded version of the \glasso{}. Moreover, the corollary offers a theoretical ground for balancing the tradeoff between false positive and false negative rate: 
while maintaining finite-sample guarantees, we can regulate $\lambda$ according to our needs to decrease the false negative rate (part $2$) at the cost of increasing the interval $(\lambda C \hat{r}, 3C r_\delta^\ast]$ in which \thav{} missclassifies negatives (part $1$).

Importantly, the \thav{} also comes with notable computational benefits, since the computations in Definition~\ref{avdefi} only require at most $1$ solution path. Using the \emph{glasso} R package \cite{Friedman.2008, Witten.2011}, we can efficiently compute the \thav{} as described by Algorithm~1 in Section~B of the supplement.

Finally, note that even though there exist theoretical bounds justifying~\eqref{asu} (see Section~A.2 of the supplement), they are usually too loose or bounded to restrictions that are hard to interpret and violated in practice. Thus, what we observe in practice is usually not~\eqref{asu}, but rather a more robust ``quantiled version'' of it:
\begin{equation*}
    \ell_{1-\alpha}(\Theta, \hat{\Theta}) \leq Cr \enspace,
\end{equation*}
where $\ell_{1-\alpha}(\Theta, \hat{\Theta})$ defines the $1-\alpha$ quantile of the set of absolute differences $\{\vert \Theta_{ij}-\hat{\Theta}_{ij} \vert\}_{ij}$. Under this assumption, one could derive the same theory for the quantiled version of the loss, i.e. by replacing all $\ell$ by $\ell_{1-\alpha}$ in this section (compare with the general theory in Section~A.1 of the supplement). However, in practice the results are very similar and our method is computationally less expensive.

\section{Simulation study}\label{simus}
In this section we compare the \thav{} to various other commonly used methods to estimate a Gaussian graphical model, which are the \stars{} \cite{Liu.2010b}, the scaled lasso \cite{Sun}, the TIGER \cite{Liu.20120911}, the regularized score matching estimator (rSME) tuned via eBIC \cite{rsme}, and the SCIO\footnote{The regularized score matching estimator (rSME) and the SCIO estimator solve the same optimization problem in the Gaussian setting.} tuned via CV and via the Bregman-criterion \cite{scio}\footnote{We have also evaluated RIC, which is the default \glasso{} calibration scheme in the \emph{huge} R package \cite{Zhao.2012}. However, we decided to exclude it from our simulation study due to bad results, computational instability, and a lack of theory.}.
We sample synthetic data from a Gaussian distribution $\mathcal{N}_d(\bs{0}_d, \, \Theta^{-1})$, whereby we adopt a similar precision matrix generation procedure from \citet{caballe2015selection} for sampling random and scale-free graphs.
A detailed description of the based generation process can be found in Section~C.1 of the supplement. 
We scale the data such that it is centered and has empirically unit variance.

If not stated differently, we use $t=C\hat{r}$ and $C=0.7$ in the following. We define the set of possible regularization parameters to be $\mathcal{R}:=\{0.05 + i(r_{\operatorname{max}} - 0.05)/40 : \; i\in \{1,...,40\}\}$, where $r_{\operatorname{max}}:= \max_{i\neq j} \vert \hat{\Sigma}_{ij}\vert $ is the largest off-diagonal entry in absolute value of the empirical covariance matrix\footnote{This is the smallest regularization parameter that estimates an empty graph.}. To enhance the robustness of our algorithm, we scale the graphical lasso estimators in Definition~\ref{avdefi} such that they have unit diagonal entries. If not specified, the results of all experiments are averaged over $25$ iterations and standard deviations are shown in parenthesis.
Besides the experiments presented in this paper, we present additional investigations and repeat all experiment in various settings in Section~C.3 of the supplement.
We provide the code for all experiments with the submission, which can further be accessed through the public git repository.
\paragraph{Performance in $F_1$-score}
\label{sec:simuf1}
\begin{table*}[t]
    \caption{Graph recovery performance for varying graphs and sample size. The bold numbers indicate the best score in each setting.}
    \label{f1simus}
    \vskip 0.15in
    \begin{center}
    \begin{small}
    \begin{sc}
    \begin{tabular}{lcccccccc}
        \toprule
         && \multicolumn{3}{c}{random} & \phantom{abc} & \multicolumn{3}{c}{scale-free}\\
          \cmidrule{3-5} \cmidrule{7-9} \\
         && $F_1$ & Precision & Recall & & $F_1$ & Precision & Recall\\
        \midrule
        $n=300,\, d=200$ &&&&&&&&\\
        oracle && 0.70 (0.13) & 0.60 (0.16) & 0.89 (0.03) &&  0.37 (0.12) & 0.37 (0.22) & 0.63 (0.23) \\ 
        StARS && 0.59 (0.14) & 0.44 (0.13) & 0.93 (0.09) && 0.29 (0.13) & 0.20 (0.10) & 0.65 (0.12) \\ 
        scaled lasso && 0.68 (0.02) & 0.52 (0.03) & 0.98 (0.01) && 0.40 (0.07) & 0.26 (0.05) & 0.84 (0.07) \\
        TIGER && 0.47 (0.09) & 0.31 (0.08) & $\bs{0.99}$ (0.01) && 0.34 (0.07) & 0.21 (0.05) & $\bs{0.87}$ (0.07) \\
        rSME (eBIC) && 0.64 (0.17) & 0.49 (0.16) & 0.98 (0.01) && 0.47 (0.23) & 0.42 (0.24) & 0.74 (0.14) \\
        scio (CV) && 0.19 (0.36) & 0.23 (0.41) & 0.17 (0.34) &&  0.15 (0.19) & 0.44 (0.50) & 0.09 (0.12) \\
        scio (Bregman) && 0.13 (0.17) & 0.11 (0.22) & 0.96 (0.16)  && 0.24 (0.18) & $\bs{0.51}$ (0.44) & 0.57 (0.37) \\
        thAV && $\bs{0.91}$ (0.03) & $\bs{0.90}$ (0.04) & 0.93 (0.05) && $\bs{0.54}$ (0.13) & 0.48 (0.19) & 0.70 (0.13) \\
        \midrule
        $n=200, \, d=300$ &&&&&&&&\\
        oracle && 0.70 (0.10) & 0.63 (0.14) & 0.81 (0.03) &&  0.29 (0.07) & 0.25 (0.15) & 0.47 (0.13) \\ 
        StARS && 0.54 (0.11) & 0.39 (0.11) & 0.93 (0.03) && 0.25 (0.07) & 0.17 (0.06) & 0.54 (0.10) \\ 
        scaled lasso && 0.65 (0.03) & 0.49 (0.02) & 0.94 (0.02) && $\bs{0.30}$ (0.04) & 0.20 (0.03) & 0.59 (0.08) \\
        TIGER && 0.45 (0.08) & 0.30 (0.07) & 0.96 (0.02) && 0.25 (0.05) & 0.15 (0.04) & 0.67 (0.09) \\
        rSME (eBIC) && 0.02 (0.00) & 0.01 (0.00) & $\bs{0.99}$ (0.01) && 0.01 (0.00) & 0.01 (0.00) & $\bs{0.95}$ (0.02) \\
        scio (CV) && 0.00 (0.00) & 0.00 (0.00) & 0.00 (0.00) &&  0.06 (0.09) & 0.36 (0.49) & 0.03 (0.05) \\
        scio (Bregman) && 0.26 (0.29) & 0.27 (0.35) & 0.86 (0.21)  && 0.16 (0.12) & $\bs{0.57}$ (0.47) & 0.36 (0.27)  \\
        thAV && $\bs{0.79}$ (0.09) & $\bs{0.73}$ (0.15) & 0.90 (0.04) && 0.28 (0.11) & 0.21 (0.14) & 0.60 (0.11) \\
        \midrule
        $n=400, \, d=200$ &&&&&&&&\\
        oracle && 0.74 (0.13) & 0.65 (0.17) & 0.91 (0.03) &&  0.42 (0.10) & 0.42 (0.22) & 0.67 (0.26) \\ 
        StARS && 0.63 (0.13) & 0.48 (0.14) & 0.96 (0.03) && 0.34 (0.12) & 0.23 (0.09) & 0.70 (0.14)  \\ 
        scaled lasso && 0.70 (0.03) & 0.54 (0.03) & 0.99 (0.01) && 0.44 (0.05) & 0.29 (0.04) & 0.90 (0.07) \\
        TIGER && 0.48 (0.09) & 0.32 (0.08) & 0.99 (0.01) &&  0.33 (0.05) & 0.20 (0.04) & $\bs{0.93}$ (0.06) \\
        rSME (eBIC) && 0.56 (0.20) & 0.41 (0.18) & $\bs{1.00}$ (0.00) && 0.54 (0.13) & 0.43 (0.15) & 0.82 (0.11) \\
        scio (CV) && 0.48 (0.45) & 0.52 (0.47) & 0.48 (0.46)  &&   0.14 (0.25) & 0.28 (0.45) & 0.10 (0.19) \\
        scio (Bregman) && 0.16 (0.16) & 0.12 (0.22) & 0.97 (0.13)  && 0.21 (0.18) & 0.35 (0.40) & 0.71 (0.36)  \\
        thAV && $\bs{0.93}$ (0.04) & $\bs{0.92}$ (0.06) & 0.95 (0.04) && $\bs{0.63}$ (0.13) & $\bs{0.59}$ (0.19) & 0.75 (0.14) \\
        \bottomrule
    \end{tabular}
    \end{sc}
    \end{small}
    \end{center}
    \vskip -0.1in
\end{table*}
Table~\ref{f1simus} shows the performance of the different methods for the graph recovery task. The performance is evaluated based on precision, recall, and the resulting $F_1$-score, which are defined in Section~C.3 of the supplement. 
The proposed \thav{} estimator not only clearly outperforms the baseline methods but it also has a noticeable advantage over the oracle \glasso{} estimator, which is the (non-thresholded) \glasso{} estimator that achieves maximal $F_1$-score among all regularization parameters. This implies that it is mandatory to apply thresholding on top of regularized optimization to obtain good graph recovery results with the \glasso{}. 
Remarkably, in estimating a random graph, we observe that the \thav{} always achieves a recall of above $0.9$ while maintaining good precision. This is in stark contrast to the other methods, which seem to overestimate the graphs resulting in a high recall but comparably low precision.
Moreover, the results indicate that scale-free graphs in general are much harder to estimate than random graphs. As it has already been reported in other works (see \citet{Liu.2011, tang2015learningscale-free} and references therein), the graphical lasso is not able to provide a good estimation of a scale-free graph because its regularization does not impose any preference for identifying hub-like structures.
Nevertheless, \thav{} remains superior to the other methods in terms of reaching the highest $F_1$-Score in most cases. Again, \thav{} can find a good balance between precision and recall, whereas methods such as StARS and TIGER are overestimating the graph, which results in comparably low precision. 

\begin{table*}[t]
    \caption{Graph recovery performance of the \thav{} with fixed sample size $n=500$ for large-scale examples. The results are based on $20$ iterations.}
    \label{table:largescale}
    \vskip 0.15in
    \begin{center}
    \begin{small}
    \begin{sc}
    \begin{tabular}{lcccccccc}
        \toprule
         && \multicolumn{3}{c}{random} & \phantom{abc} & \multicolumn{3}{c}{scale-free}\\
          \cmidrule{3-5} \cmidrule{7-9} \\
         && $F_1$ & Precision & Recall & & $F_1$ & Precision & Recall\\
        \midrule
          $d=600$ && 0.96 (0.03) & 0.97 (0.02) & 0.94 (0.05) && 0.43 (0.17) & 0.45 (0.29) & 0.61 (0.17) \\ 
          $d=700$ && 0.96 (0.02) & 0.98 (0.02) & 0.93 (0.04) && 0.42 (0.16) & 0.43 (0.25) & 0.53 (0.13) \\ 
          $d=800$ && 0.95 (0.02) & 0.97 (0.03) & 0.93 (0.04) && 0.40 (0.18) & 0.45 (0.29) & 0.52 (0.16) \\ 
          $d=900$ && 0.96 (0.02) & 0.97 (0.04) & 0.95 (0.03)  && 0.33 (0.17) & 0.33 (0.26) & 0.54 (0.16) \\  
          $d=1000$ && 0.96 (0.02) & 0.98 (0.01) & 0.93 (0.03) && 0.34 (0.17) & 0.36 (0.27) & 0.46 (0.15) \\ 
        \bottomrule
    \end{tabular}
    \end{sc}
    \end{small}
    \end{center}
    \vskip -0.1in
\end{table*}
In Table~\ref{f1simus}, it appears that the recovery performance drops with an increment of $d$, which makes sense since the number of parameters increases quadratically with $d$. However, in our next experiments (Table~\ref{table:largescale}), in which we investigate the \thav{} in the setting $d \in \{600, \dots , 1000\}$ and set $n=500$, we observe that this is surprisingly not the case when enough data, but still $n<d$, is available. The $F_1$-score for a random graph remains stable across all $d$ at an impressive value of $0.96$. In the case of a scale-free graph, the performance decays slowly, while maintaining a good trade-off between precision and recall. 
Note that the support recovery in the case $d=1000$ involves about $500\,000$ parameters. 

Moreover, the careful reader will actually realize that the proposed calibration scheme can also be employed to tune the rSME estimator. Because of the generality of our results from Section~A.1 we can employ existing results from \citet{rsme} to verify the validity of Assumption~\eqref{asu} for the rSME.
As it is shown in Section~C.4 of the supplement, the rSME calibrated with the thAV approach performs comparably to the thAV graphical lasso. This is an important observation as the rSME can be applied to estimate the conditional dependency structure of a pairwise interaction model, which is a broader model class than the class of Gaussian graphical models. 
Hence, the calibration technique and the underlying theory can naturally be extended to the non-Gaussian setting.

Furthermore, we repeat the empirical study with the modified \glasso{} proposed by \citet{Liu.2011}, which we calibrate and clip via the \thav{} technique. This estimator employs a power law regularization that encourages the appearance of nodes with a high degree and are thus better suited for scale-free graphs.
The experimental study is shown in Section~C.4 of the supplement.
\paragraph{Dependence on $C$}\label{sec:dependenceonc}
If we increase the constant $C$ in the \av{} calibration scheme, we decrease $\hat{r}$ (see Proposition 9 of the supplement) and therefore employ less regularization. Hence, the \av{} estimator is inherently related to the choice of $C$. 
We plot the performance of different \av{} estimators with varying thresholds in Figure~\ref{bestthreshold} and make two crucial observations. First, we see significant dissimilarities in the performance of the unthresholded \av{} estimators: because the calibrated regularization parameter ranges from $\hat{r}=0.23$ in the case $C=0.5$ to $\hat{r}=0.09$ in the case $C=0.8$, the $F_1$-score drops from approximately $0.70$ to $0.35$. Thus, the \av{} estimator's performance heavily depends on the choice of $C$. But after thresholding, and this is the second observation, the thresholded \av{} estimators' performance curves become very similar and reach almost the same peak. We can observe the same behaviour in the other settings, as it is shown in Section~C.3 of the supplement. Importantly, we also show in the supplement that we do not observe a similar performance peak if we threshold the unregularized optimization problem (setting $r=0$ in~\eqref{glasso}).
Hence, neither regularization via regularized optimization is sufficient for graph recovery, see the performance of the oracle estimator in Table~\ref{f1simus}, nor does unregularized thresholding yield to good results. Therefore we claim that it is necessary to apply both types of regularizations, as it is done by \thav{} and which additionally encourages stability in $C$.

To further investigate the stability of the \thav{} in $C$, we consider pairs of \thav{} estimators resulting from different choices of $C$, which we call $\hat{\Theta}_{C^\prime}^{t^\prime}$ and $\hat{\Theta}_{C^{\prime\prime}}^{t^\prime}$. 
Table~\ref{f1stab} reports the differences between these estimators by calculating $F_1(\hat{\Theta}_{C'}^{t'}, \hat{\Theta}_{C''}^{t'})$ for a random graph. 
We do not only achieve a high $F_1(\Theta, \, \hat{\Theta}_C^t)$ for any $C$, but also 
the different estimates are all very similar, i.e.~$F_1(\hat{\Theta}_{C^\prime}^t,\,\hat{\Theta}_{C^{\prime\prime}}^t) $ is always above $0.80$.
Therefore, we can confirm that the recovered graphs are stable in the choice of $C$.

\begin{figure*}
    \centering
    \includegraphics[width=\linewidth]{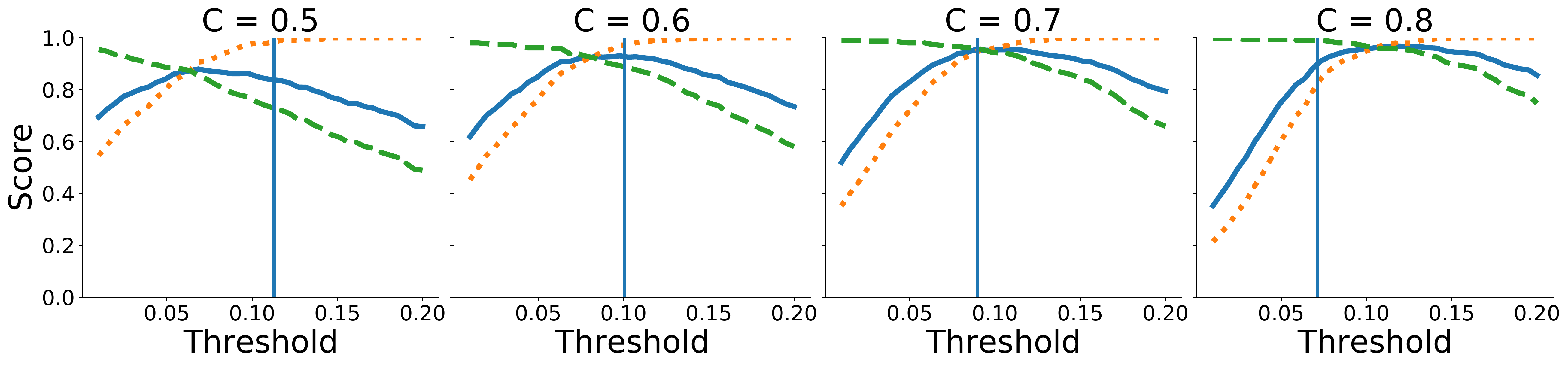}
    \caption{The $F_1$-score (blue, solid), precision (yellow, dotted), and recall (green, dashed) of a thresholded \av{} estimator for a random graph with $d=200$ based on $n=300$ samples for $C \in \{0.5,\, 0.6,\, 0.7,\, 0.8\}$ with varying thresholds. The resulting \av{} regularization parameters are $0.23, \, 0.17, \, 0.13, \, 0.09$, respectively. The vertical line depicts the proposed threshold $t=C\hat{r}$ corresponding to the \thav{} estimate.}
    \label{bestthreshold}
\end{figure*}

\begin{table}[t]
    \caption{Similarity $F_1(\hat{\Theta}_{C^\prime}^t, \hat{\Theta}_{C^{\prime \prime}}^t)$ for different choices of $C$ ($C^\prime$ and $C^{\prime \prime}$) for a random graph with $d=200$ using $n=300$ samples. The performance scores $F_1(\Theta, \hat{\Theta}_C^t)$ are $0.80 \,(0.06),\; 0.88 \, (0.04), \; 0.91\,(0.05), \; 0.85\,(0.10)$ for $C$ in $0.5, \; 0.6, \; 0.7, \; 0.8$, respectively.}
    \label{f1stab}
    \vskip 0.15in
    \begin{center}
    \begin{small}
    \begin{sc}
    \begin{tabular}{lcccccc}
        \toprule
        $C$ &&   0.6 & 0.7 & 0.8 \\
        \midrule
        0.5 & & 0.88 (0.05) & 0.82 (0.08) & 0.81 (0.10) \\ 
0.6 &&   1 & 0.93 (0.04) & 0.84 (0.09) \\ 
0.7 &&  - & 1 & 0.92 (0.05) \\ 
0.8 & & - & - & 1 \\ 

        \bottomrule
    \end{tabular}
    \end{sc}
    \end{small}
    \end{center}
    \vskip -0.1in
\end{table}

\paragraph{Varying graph density}
\begin{table}[t]
    \caption{$F_1$-score if we change the connection probability to $p\in \{2/d, \, 4/d\}$ in a random graph in various settings. The previous simulations employed $p=3/d$. The bold numbers indicate the best score in each setting.}
    \label{table:density}
    \vskip 0.15in
    \begin{center}
    \begin{small}
    \begin{sc}
    \begin{tabular}{lcccc}
        \toprule
         &&  $2/d$ & $4/d$ \\
        \midrule 
        \multicolumn{2}{l}{$n=300, \,d=200$} &&&\\
StARS && 0.55 (0.14) & 0.57 (0.13) \\ 
scaled lasso && 0.60 (0.03) & 0.73 (0.02) \\ 
TIGER && 0.39 (0.09) & 0.53 (0.08) \\ 
rSME (eBIC) && 0.51 (0.27) & 0.55 (0.23) \\ 
SCIO (CV) && 0.33 (0.40) & 0.22 (0.39) \\ 
SCIO (Bregman) && 0.23 (0.29) & 0.22 (0.23) \\ 
thAV && $\bs{0.91}$ (0.05) & $\bs{0.89}$ (0.04) \\ 
        \midrule
        \multicolumn{2}{l}{$n=200, \, d=300$} &&&\\
StARS && 0.49 (0.11) & 0.52 (0.09) \\ 
scaled lasso && 0.58 (0.03) & 0.68 (0.02) \\ 
TIGER && 0.35 (0.09) & 0.48 (0.10) \\ 
rSME (eBIC) && 0.01 (0.00) & 0.03 (0.00) \\ 
SCIO (CV) && 0.14 (0.26) & 0.20 (0.31) \\ 
SCIO (Bregman) && 0.31 (0.36) & 0.33 (0.31) \\ 
thAV && $\bs{0.87}$ (0.03) & $\bs{0.73}$ (0.11) \\ 
        \midrule
        \multicolumn{2}{l}{$n=400, \, d=200$}&&&\\
StARS && 0.61 (0.16) & 0.61 (0.12) \\ 
scaled lasso && 0.60 (0.03) & 0.74 (0.02) \\ 
TIGER && 0.38 (0.07) & 0.52 (0.06) \\ 
rSME (eBIC) && 0.55 (0.22) & 0.54 (0.14) \\ 
SCIO (CV) && 0.21 (0.39) & 0.35 (0.44) \\ 
SCIO (Bregman) && 0.14 (0.20) & 0.15 (0.01) \\ 
thAV && $\bs{0.94}$ (0.04) & $\bs{0.93}$ (0.03) \\ 
        \bottomrule
    \end{tabular}
    \end{sc}
    \end{small}
    \end{center}
    \vskip -0.1in
\end{table}

In this last experiment, we put emphasis on the adaptability of \thav{} on the density of the graph, i.e. the proportion of edges to the number of nodes in the graph. The graph's density of a random graph is controlled via $p$, the probability that a pair of nodes is being connected. Details can be found in Section~C.1 of the supplement. 
We observe from Table~\ref{table:density} that the $F_1$-score of the \thav{} estimator remains stable across all densities, whereas the other estimators tend to perform better for dense graphs. This is no surprise since we have seen in the previous experiments that the other estimators tend to overestimate the presence of edges in a graph as indicated by a high recall but low precision.
In all investigated settings, the \thav{} estimator outperforms the competing estimation procedures considerably.

\section{Applications}\label{sec:application}
Graphical model recovery plays a big role in understanding biological networks. In this section, we apply our procedure on $2$ open-source data sets to obtain sparse and interpretable graph structures. 
\paragraph{Recovering a Microbial Network}\label{sec:gut}
It is believed that the human microbiome plays a fundamental role in human health. Thus, the American Gut Project \cite{McDonald.2018} was launched to pave the way to find associations among the microbiome, but also to confirm associations between the microbiome and other aspects of human health, like psychiatric stability. 
In this example, we estimate the microbial network to enhance the understanding of the roles and the relations between the microbes. 
Since microbial datasets come with some technical problems, it is vital to preprocess the data.
We refer to the work of \citet{Kurtz.2015} and \citet{Yoon.2019} for details about the problems and a suitable preprocessing routine for microbial data.
We use the preprocessed amgut2.filt.phy data, which is included in the \emph{SpiecEasi} package \cite{Kurtz.2015}. The data set measures the abundance of microbial operational taxonomic units (OTUs) and consists of $n=296$ samples and $d=138$ different OTUs. We employ the nonparanormal transformation \cite{Liu.20090303} and estimate the microbial network using the \thav{}.
The \thav{} estimator returns a very sparse graph that identifies various clusters, see Section~D of the supplement. However, the estimator includes no interactions between different classes of microbes. To get insight about interactions across classes of microbes, we reduced the truncation parameter $\lambda$ to $0.5$. Note that the results of Corollary~\ref{supportrecov} are valid for each $\lambda \in (0, 3]$. The resulting graph is depicted in Figure~\ref{fig:gutthav}.

\begin{figure}[h!]
    \vskip 0.2in
    \centerline{
         \subfigure[]{\includegraphics[width=\columnwidth]{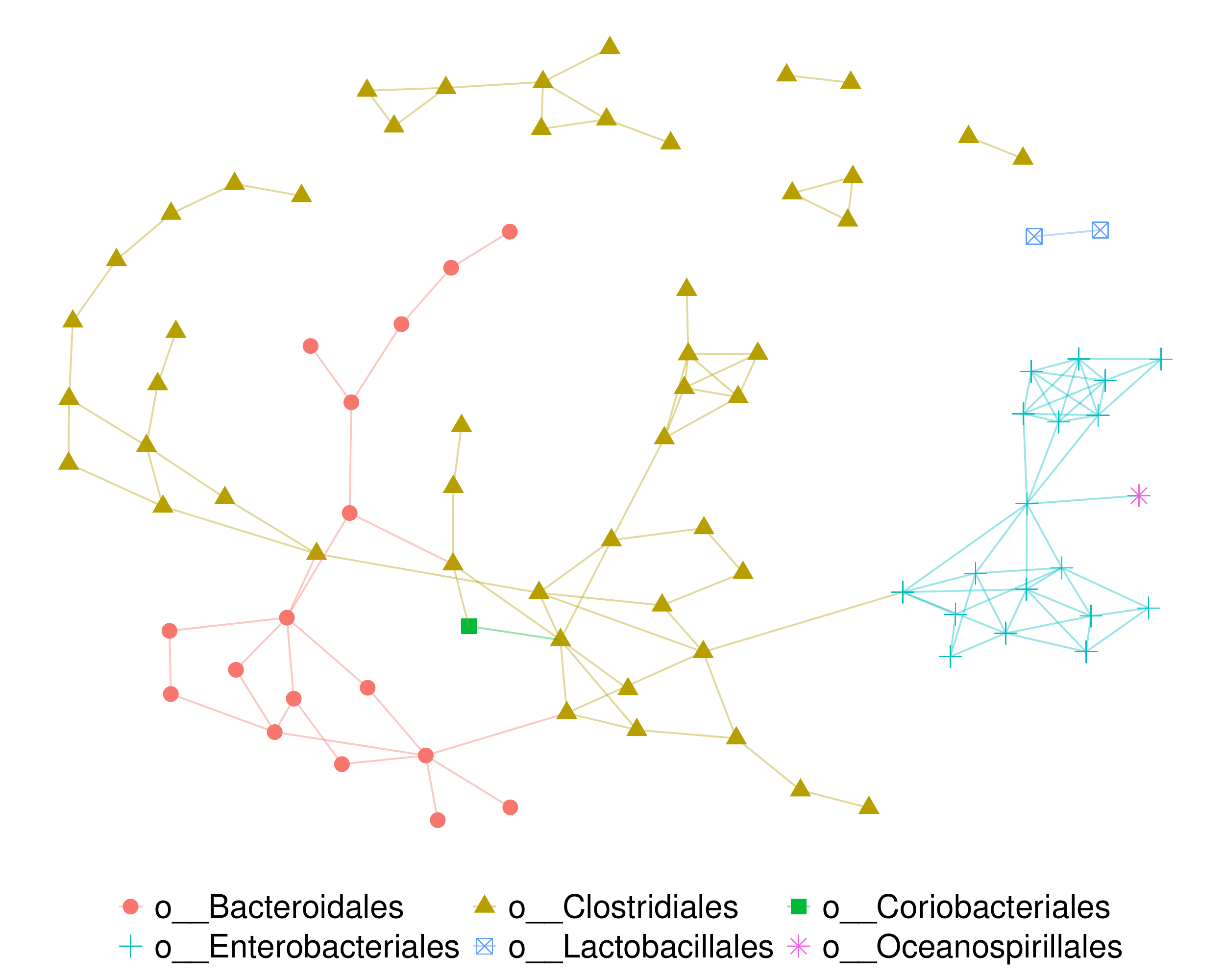}\label{fig:gutthav}}
         }
    \centerline{
        \subfigure[]{\includegraphics[width=1.4\columnwidth, height=0.7\columnwidth]{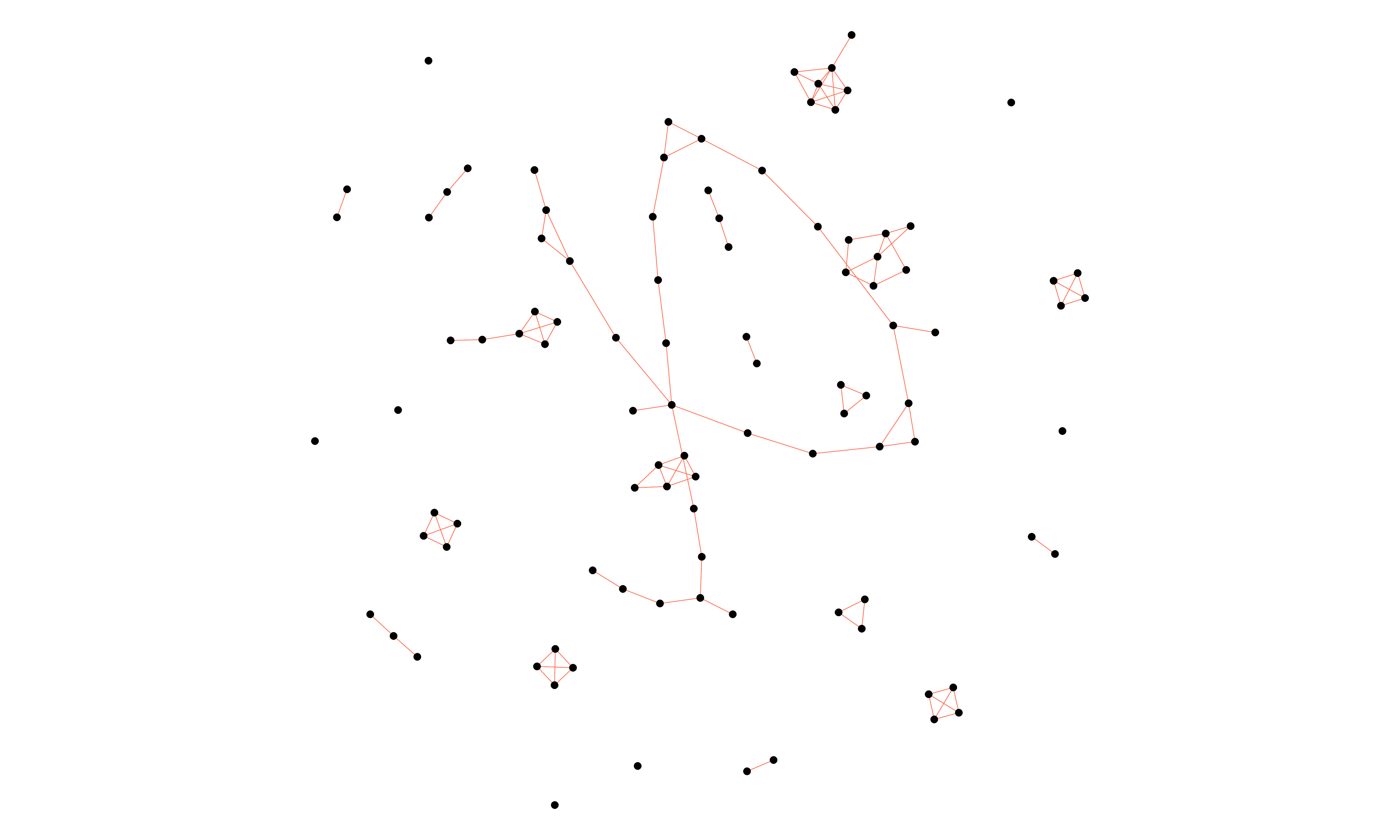}\label{fig:ribothav}}
        }
    \caption{The \thav{} based on the American Gut Data is shown in Figure~\ref{fig:gutthav}. To avoid too large graphics, we exclude isolated vertices. The color and the shape of a node imply the biological cluster 
    of each OTU. Figure~\ref{fig:ribothav} depicts the \thav{} applied on the \textit{ribloflavin} data.}
\end{figure}

\paragraph{Recovering a Gene Network}
In pharmacology, the vitamin \textit{riboflavin} is industrially produced using diverse microorganisms. 
Being able to fully understand the bacterias' genome, biologists promise to further optimize the \textit{riboflavin} production. 
The \textit{riboflavin} data set is provided by the DSM in Switzerland and contains $n=71$ samples and $d=4088$ gen expressions. The R package \emph{hdi} \cite{Dezeure.2015} provides this data set in its implementation.
We compare the results of the \thav{} (see Figure~\ref{fig:ribothav}) with those of \citet{Buhlmann.2014}, who analyzed the same data set using a neighborhood regression approach\footnote{Similar to \citet{Buhlmann.2014}, we shrink the data set by only considering the $100$ genes with the highest empirical variance and scale the data using the nonparanormal transformation.}, and observe that the \thav{}  returns a much sparser graph with more cluster-like structures. 
This does not only increase the interpretability of the graph but also imposes some tight connections between several genes within these clusters.

\section{Discussion and Conclusion} \label{sec:discussion}
Graphical models are a very popular framework for co-occurrence networks, and the graphical lasso is one of the most standard estimators in this framework. 
In this paper, we generalize the theoretical framework of \citet{Chichignoud.20161108}  for deriving a calibration scheme for the lasso and successfully transfer it to calibrate the \glasso{}. 
However, our empirical study reveals that \glasso{} estimation itself is not sufficient for effective support recovery, so an additional thresholding step becomes necessary.  
Our resulting calibration method comes with a finite sample result that allows us to derive a corollary suggesting how to choose a theoretically founded threshold in such a thresholded \glasso{} approach. The resulting estimator, which we call thresholded adaptive validation (\thav{}) estimator, 
provides a simple and fast implementation with finite sample guarantees on the recovery performance. 
To the best of our knowledge, this is the first thresholding methodology for the \glasso{} that comes with a practical, theory-based threshold.
Moreover, the \thav{} clearly outmatches existing graph recovery methods in our empirical analysis, showing both, a high recall but also a high precision in most settings. Other methods, which do not come with a practical threshold, tend to overestimate the graph. Thus, we would recommend the \thav{} as the method of choice for applications requiring an interpretable and sparse graph structure.

One shortcoming of the proposed procedure is that we replace the tuning parameter $r$ by a quantity $C$, and we even introduce $\lambda$, which defines our threshold. However, regarding $\lambda$, we derive a finite sample result for every $\lambda \in (0, 3]$. And secondly, the correspondence between $C$ and the \av{} regularization parameter leads to a threshold that regulates the impact of $C$ on the \thav{}, resulting in an estimator that is stable in $C$. In contrast, the calibration via the regularization parameter $r$ is not equipped with such a stability property.
The introduction of new hyperparameters can also not be seen as a disadvantage in comparison to related methods, which replace the regularization parameter by other hyperparameters as well.
For instance, the StARS calibration scheme for the \glasso{} introduces new parameters $N$ and $b$, which define the number of $N$ subsamples of size $b$, and the parameter $\beta$, which restricts the instability. TIGER introduces a new regularization parameter $\xi$ and the authors \cite{Liu.20120911} argue that the final problem is regularization-parameter-insensitive.

On the other hand, a major advantage of this calibration scheme is its generality. While we focus on the \glasso{} in this paper, the derived theoretical framework can also serve as a foundation for other thresholding approaches, which are not limited to Gaussian graphical modeling. 
For instance, the proposed method has the potential to effectively tune the rSME, which is a graph recovery method for the pairwise interaction model. Using the same primal dual-witness technique, the authors \cite{rsme} prove the assumption on which our theoretical framework is based (see~\eqref{asu}). Hence, we can derive the same theory for these types of estimators using the adaptive validation technique. We note that many empirical results regarding the rSME are rather limited to ROC curves, which conceal the regularization parameter selection. Moreover, we can also employ the framework for methods that aim to recover the support of specific types of graph topologies, such as particularly scale-free graphs \cite{Liu.2011}.


For future work, we hope to apply the proposed general framework to calibrate regularized optimization problems and equip these estimators with finite sample theoretical guarantees. These might include other applications of sparse precision matrix estimation such as high-dimensional discriminant analysis and portfolio allocation (see \citet{fan_overview} and references therein), but also applications beyond the scope of sparse precision matrix estimation.



\section*{Acknowledgement}
This work was supported by the Deutsche Forschungsgemeinschaft (DFG, German Research Foundation) under
Germany’s Excellence Strategy – EXC- 2092 CASA – 390781972. We thank Yannick Düren, Fang Xie, Mahsa Taheri,
Shih-Ting Huang, Ute Krämer, Björn Pietzenuk, and Lara Syllwasschy for their insightful comments. Finally, we also thank the anonymous reviewers for their careful reading and their useful suggestions.

\bibliographystyle{plainnat}
\bibliography{thav}

\renewcommand{\thesection}{\Alph{section}}
\onecolumn
\aistatstitle{Supplementary Files: \\
Thresholded Adaptive Validation:\\
Tuning the Graphical Lasso for Graph Recovery}

\aistatsauthor{Mike Laszkiewicz \And Asja Fischer \And  Johannes Lederer }

\aistatsaddress{ Department of Mathematics, Ruhr University
Bochum, Germany }

\setcounter{equation}{0}
\setcounter{figure}{0}
\setcounter{table}{0}
\renewcommand{\theequation}{S\arabic{equation}}
\renewcommand{\thefigure}{S\arabic{figure}}
\renewcommand{\bibnumfmt}[1]{[S#1]}
\renewcommand{\citenumfont}[1]{S#1}

\setcounter{section}{0}
\begin{samepage}
\section{Supplementary Theory}
This section provides theory and proofs that are used in the main paper.
First, following the theoretical framework proposed by \citet{Chichignoud.20161108}, we introduce a general calibration scheme for a broad class of regularized optimization problems in Section~\ref{sec:general}. Our general framework is based on a central assumption, which we prove for the \glasso{} in Section~\ref{sec:applyglasso}.

\subsection{A General Calibration Scheme for Regularized Optimization} \label{sec:general}
\end{samepage}
Regularizing an optimization problem has proved to serve as an useful concept in various settings to prevent overfitting (i.e. \citet{lasso}), increase numerical stability (i.e. \citet{ridge}), and because there arise attractive properties from a statistical perspective (i.e. \citet{Zhuang18, Wainwright.2019}). Further, various constrained optimization problems can be reframed as regularized unconstrained optimization problems \cite{Boyd.2009}, which are appealing from a computational perspective. However, many of these regularized optimization methods hinge in finding a suitable regularization parameter and hence it became common to fall back on heuristics or asymptotic methods. We generalize the idea in \citet{Chichignoud.20161108} to obtain a calibration scheme for a general optimization problem. The resulting estimator comes with a finite sample bound on its performance and is computationally cheap as it requires only $1$ solution path.

Consider an optimization problem of the form
\begin{equation}
    \label{generalsetting}
    \hat{\Theta}_r \in \argmin_{\Omega \in \mathbb{S}} \bigl\{ f(\bs{Z}, \Omega) + r h(\Omega)\bigr\} \enspace,
\end{equation}
where $\mathbb{S}$ is a collection of possible estimators, $r$ is a real-valued regularization parameter, $\bs{Z}$ is the observed data, $f$ is a function measuring the fit of the estimator $\Omega$ having observed data $\bs{Z}$, and $h$ is some regularization function depending only on the estimator $\Omega$. The optimization problem usually frames all quantities except of the regularization parameter, which is often tuned based on heuristics or personal intuition. Let us fix a symmetric loss function $\ell: \mathbb{S} \times \mathbb{S}\rightarrow \mathbb{R}^+$ that satisfies the triangle inequality. Our goal is then to calibrate $r$ such that we can bound the loss $\ell(\Theta,\hat{\Theta}_r)$ between the estimation target $\Theta$ and the estimator $\hat{\Theta}_r$.
%

%
We propose the following generalized version of the \av{} estimator \cite{Chichignoud.20161108}.
\begin{defi}[\av{}] \label{DefinitionAV}
Let $\mathcal{R}\subset \mathbb{R}$ be a finite and nonempty set of regularization parameters.
Then, the adaptive validation (AV) calibration scheme selects the regularization parameter according to
\begin{equation*}
    \hat{r} := \operatorname{min} \bigl\{ r \in \mathcal{R} : \ell\bigl(\hat{\Theta}_{r^\prime},\hat{\Theta}_{r^{\prime\prime}}\bigr) \leq k(r^\prime) + k(r^{\prime\prime}) \enspace \forall r^\prime,r^{\prime\prime} \in \mathcal{R} \cap [r, \infty)  \bigr\} \enspace ,
\end{equation*}
where $k:\mathcal{R}\rightarrow \mathbb{R}^+$ is a monotonly increasing and positive function (specified in the following) and $\hat{\Theta}_{r^\prime}, \hat{\Theta}_{r^{\prime\prime}}$ are the estimators \eqref{generalsetting} using regularization parameter $r^\prime$ and $r^{\prime\prime}$, respectively. We call $\hat{\Theta}_{\hat{r}}$ (resulting from inserting $\hat{r}$ into \eqref{generalsetting}) the \av{} estimator.
\end{defi}
In contrast to \citet{Chichignoud.20161108}, our general theory is not restricted to the setting of lasso regression and we modify the definition by incorporating $k$. The definition in \citet{Chichignoud.20161108} is limited to linear $k$. 
To allow for a general theory, we defer the restrictions from the specific type of optimization problem (again, \eqref{generalsetting} is very general) to the following assumption, on which our theory is based.
\begin{assumption}\label{assumptiongeneral}
    Let $\bigl( \mathcal{T}_r\bigr)_{r\in\mathcal{R}}$ be a class of events indexed by $r\in\mathcal{R}$.
    Assume that $\mathcal{T}_r$ is monotonly increasing in~$r$, that is, $
        \mathcal{T}_{r^{\prime}} \subset \mathcal{T}_{r^{\prime\prime}}$ for $r^\prime \leq r^{\prime\prime}$.
    Suppose further that, conditioned on $\mathcal{T}_r$, it holds 
    \begin{equation}
        \label{eq:asugeneral}
        \ell \bigl(\Theta,  \hat{\Theta}_r\bigr) \leq k(r)\enspace , 
    \end{equation}
    where $\hat{\Theta}_r$ is the regularized estimator \eqref{generalsetting} and $\Theta$ is the estimation target.
\end{assumption}
Hence, dependent on the setting at hand, Assumption~\ref{assumptiongeneral} defines the function $k$ that is used in the \av{} calibration scheme in Definition~\ref{DefinitionAV}. By deriving theory for arbitrary positive and monotonly increasing functions $k$, we want to emphasize that we can fit very general frameworks into the setting that is described by Assumption~\ref{assumptiongeneral} to obtain finite sample results from Theorem~\ref{avtheo}.

As in the main paper, we can define an optimal regularization parameter by the smallest regularization parameter $r_\delta^\ast$ that allows us to employ \eqref{eq:asugeneral} with probability $1-\delta$ for $\delta \in (0,1]$:
    \begin{equation}
        \label{oracletuning}
        r_\delta^\ast := \argmin_{r \in \mathcal{R}}\left\{ \Prob\bigl( \mathcal{T}_r\bigr) \geq 1 - \delta\right\}.
    \end{equation}
    
Since we do not know $\Prob(\mathcal{T}_r)$ in practice, we cannot access $r_\delta^\ast$ and must rely on approximations. The following theorem, which is a generalization of Theorem~$3$ by \citet{Chichignoud.20161108}, guarantees that the \av{} estimator comes with finite sample bounds on the approximation of the estimation target.
\begin{theorem}[Finite-Sample Bound for the \av]\label{avtheo_supp}
    Suppose that Assumption~\ref{assumptiongeneral} holds and that $\hat{r}$ is the regularization parameter selected by the AV method. Then, for any $\delta \in (0,1)$, it holds that 
    \begin{equation}
        \hat{r} \leq r_{\delta}^\ast \quad \textrm{ and } \quad \ell\bigl( \Theta , \hat{\Theta}_{\hat{r}} \bigr) \leq 3k(r_{\delta}^\ast)\label{avbound_supp}
    \end{equation}
    with probability at least $1-\delta$.
\end{theorem}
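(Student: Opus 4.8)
The plan is to prove the two claims in \eqref{avbound_supp} separately but with a shared reliance on Assumption~\ref{assumptiongeneral}, working throughout on the event $\mathcal{T}_{r_\delta^\ast}$, which by the definition of $r_\delta^\ast$ in \eqref{oracletuning} has probability at least $1-\delta$. First I would establish $\hat{r} \leq r_\delta^\ast$. The strategy is to show that $r_\delta^\ast$ itself satisfies the defining condition of the minimization problem for $\hat{r}$ in Definition~\ref{DefinitionAV}; since $\hat{r}$ is the \emph{smallest} such element of $\mathcal{R}$, this immediately gives $\hat{r} \leq r_\delta^\ast$. Concretely, fix any $r', r'' \in \mathcal{R} \cap [r_\delta^\ast, \infty)$. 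Because $\mathcal{T}_r$ is monotonly increasing in $r$, the event $\mathcal{T}_{r_\delta^\ast}$ is contained in both $\mathcal{T}_{r'}$ and $\mathcal{T}_{r''}$, so conditioned on $\mathcal{T}_{r_\delta^\ast}$ the bound \eqref{eq:asugeneral} holds simultaneously at both $r'$ and $r''$. Then the triangle inequality for $\ell$ yields
\begin{equation*}
    \ell\bigl(\hat{\Theta}_{r'}, \hat{\Theta}_{r''}\bigr) \leq \ell\bigl(\Theta, \hat{\Theta}_{r'}\bigr) + \ell\bigl(\Theta, \hat{\Theta}_{r''}\bigr) \leq k(r') + k(r'') \enspace,
\end{equation*}
which is exactly the admissibility condition for $r_\delta^\ast$. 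Hence $r_\delta^\ast$ lies in the feasible set of the minimization, and $\hat{r} \leq r_\delta^\ast$ on $\mathcal{T}_{r_\delta^\ast}$.

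Next I would bound the loss $\ell(\Theta, \hat{\Theta}_{\hat{r}})$. The idea is to route the comparison between the target $\Theta$ and the selected estimator $\hat{\Theta}_{\hat{r}}$ through the oracle estimator $\hat{\Theta}_{r_\delta^\ast}$, again using the triangle inequality:
\begin{equation*}
    \ell\bigl(\Theta, \hat{\Theta}_{\hat{r}}\bigr) \leq \ell\bigl(\Theta, \hat{\Theta}_{r_\delta^\ast}\bigr) + \ell\bigl(\hat{\Theta}_{\hat{r}}, \hat{\Theta}_{r_\delta^\ast}\bigr) \enspace.
\end{equation*}
The first term is at most $k(r_\delta^\ast)$ directly by \eqref{eq:asugeneral} on $\mathcal{T}_{r_\delta^\ast}$. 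For the second term, the key observation is that both $\hat{r}$ and $r_\delta^\ast$ lie in $\mathcal{R} \cap [\hat{r}, \infty)$ (using $\hat{r} \leq r_\delta^\ast$ from the first part), so they constitute a valid pair $(r', r'')$ in the defining constraint of $\hat{r}$. Applying that constraint gives $\ell(\hat{\Theta}_{\hat{r}}, \hat{\Theta}_{r_\delta^\ast}) \leq k(\hat{r}) + k(r_\delta^\ast)$, and since $k$ is monotonly increasing and $\hat{r} \leq r_\delta^\ast$, we have $k(\hat{r}) \leq k(r_\delta^\ast)$, so this term is at most $2k(r_\delta^\ast)$. Combining the two contributions yields $\ell(\Theta, \hat{\Theta}_{\hat{r}}) \leq 3k(r_\delta^\ast)$, which is the desired bound. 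All of this holds on $\mathcal{T}_{r_\delta^\ast}$, an event of probability at least $1-\delta$, completing the argument.

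The main subtlety — rather than a genuine obstacle — is the bookkeeping around the monotonicity of $(\mathcal{T}_r)_r$ and the careful verification that the pairs we plug into Definition~\ref{DefinitionAV} actually lie in the admissible ranges $\mathcal{R} \cap [r, \infty)$. In particular, the second part crucially reuses the conclusion $\hat{r} \leq r_\delta^\ast$ of the first part to certify that $(\hat{r}, r_\delta^\ast)$ is a legal pair in the constraint defining $\hat{r}$; getting the order of the two arguments right, and invoking monotonicity of $k$ at the correct step, is what makes the constant $3$ come out cleanly. There is also a minor point worth checking: the feasible set in Definition~\ref{DefinitionAV} must be nonempty so that the minimum defining $\hat{r}$ exists — but the first part already shows $r_\delta^\ast$ is feasible on $\mathcal{T}_{r_\delta^\ast}$, which resolves this. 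The generalization beyond \citet{Chichignoud.20161108} from linear $k$ to arbitrary monotone positive $k$ requires no new ideas here, since the argument only ever uses monotonicity and positivity of $k$, never its linearity.
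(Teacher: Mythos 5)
Your proposal is correct and follows essentially the same argument as the paper's proof: condition on $\mathcal{T}_{r_\delta^\ast}$, use monotonicity of $(\mathcal{T}_r)_r$ and the triangle inequality to get $\hat{r}\leq r_\delta^\ast$, then bound the loss by routing through $\hat{\Theta}_{r_\delta^\ast}$ and applying the defining constraint to the pair $(\hat{r}, r_\delta^\ast)$ together with monotonicity of $k$. The only cosmetic difference is that the paper establishes $\hat{r}\leq r_\delta^\ast$ by contradiction, whereas you argue directly that $r_\delta^\ast$ is feasible for the minimization; these are logically equivalent.
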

\begin{proof}
    Per definition of $r_\delta^\ast$, we know that $\Prob\bigl( \mathcal{T}_{r_\delta^\ast}\bigr) \geq 1-\delta$, hence, it is sufficient to show that \eqref{avbound_supp} holds conditioned on $\mathcal{T}_{r_\delta^\ast}$. \\
    Let 
    \begin{equation}
        \hat{r} := \operatorname{min} \bigl\{ r \in \mathcal{R} : \ell\bigl(\hat{\Theta}_{r^\prime},\hat{\Theta}_{r^{\prime\prime}}\bigr) \leq k(r^\prime) + k(r^{\prime\prime}) \enspace \forall r^\prime,r^{\prime\prime} \in \mathcal{R} \cap [r, \infty)  \bigr\} \label{avdefi_supp} \enspace,
    \end{equation}
    be the \av{}  regularization parameter.
    Our first goal is to prove that $\hat{r}\leq r_{\delta}^\ast$, which we will do by contradiction. For that, assume that $\hat{r} > r_\delta^\ast$. Then, there must exist $r^\prime, r^{\prime \prime} \in \mathcal{R} \cap [r_{\delta}^\ast,\, \infty)$ such that 
    \begin{equation}
    \ell\bigl( \hat{\Theta}_{r^{\prime}},  \hat{\Theta}_{r^{\prime \prime}} \bigr) > k(r^\prime) + k(r^{\prime \prime}) \label{contraproof}
    \end{equation}
    because otherwise $\hat{r}$ could not be a minimizer of the set in \eqref{avdefi_supp}.
    Per Assumption, we know that $\mathcal{T}_{r_\delta^\ast} \subset \mathcal{T}_{r^\prime}, \mathcal{T}_{r^{\prime\prime}}$ since $r^\prime, \, r^{\prime\prime} \geq r_{\delta}^\ast$. As we condition on $\mathcal{T}_{r_\delta^\ast}$, we know due to \eqref{eq:asugeneral} that 
    \begin{equation*}
    \ell \bigl( \Theta, \hat{\Theta}_{r^\prime} \bigr) \leq k(r^\prime) \quad \text{ and }\quad \ell \bigl( \Theta, \hat{\Theta}_{r^{\prime\prime}} \bigr) \leq k(r^{\prime \prime})\enspace. 
    \end{equation*}
    Using the triangle inequality, we conclude that 
    \begin{equation*}
    \ell\bigl( \hat{\Theta}_{r^\prime}, \hat{\Theta}_{r^{\prime \prime}} \bigr) \leq \ell \bigl( \hat{\Theta}_{r^\prime}, \Theta \bigr) + \ell \bigl( \Theta, \hat{\Theta}_{r^{\prime\prime}}\bigr) \leq k( r^\prime )+ k(r^{\prime \prime})\enspace.
    \end{equation*}
    However, this contradicts \eqref{contraproof}.  
    Consequently, it must hold that $\hat{r} \leq r_\delta^\ast$. \\
    To prove the loss bound in \eqref{avbound_supp}, we observe that it is $ \hat{r},r_\delta^\ast \in \mathcal{R}\cap [\hat{r},\infty)$ as previously shown. Therefore, we know per definition of the AV regularization parameter $\hat{r}$ that 
    \begin{align}
     \ell\bigl( \hat{\Theta}_{\hat{r}}, \hat{\Theta}_{r_\delta^\ast}\bigr)  \leq k(\hat{r}) + k(r_\delta^\ast) \enspace.  \label{eq:hatast}
    \end{align}
    Finally, using the triangle inequality once more, we find 
    \begin{align*}
    \ell\bigl( \Theta , \hat{\Theta}_{\hat{r}} \bigr) &\leq \ell \bigl( \Theta , \hat{\Theta}_{r_\delta^\ast} \bigr) + \ell \bigl( \hat{\Theta}_{\hat{r}}, \hat{\Theta}_{r_\delta^\ast}\bigr)  \\
    &\leq k(r_\delta^\ast) + k(\hat{r}) + k(r_\delta^\ast)  \\
    & \leq 3k(r_\delta^\ast) \enspace,
    \end{align*}
    where we used \eqref{eq:hatast} in the second inequality and $\hat{r} \leq r_\delta^\ast$ and the monotonicity of $k$ in the third inequality.
\end{proof}

In summary, the \av{} calibration scheme as defined in Definition~\ref{DefinitionAV} is a calibration scheme for very general optimization problems \eqref{generalsetting}. We impose almost no restrictions on the optimization problem \eqref{generalsetting} itself but defer the restrictions to Assumption~\ref{assumptiongeneral}. But once we can prove \eqref{eq:asugeneral}, which is of a very general form, we are immediately able to obtain finite sample results as stated in Theorem~\ref{avtheo_supp}. 

\subsection{Adaptive Validation for the Graphical Lasso} \label{sec:applyglasso}
In this section, we apply the proposed calibration scheme from the previous section to calibrate the \glasso{} regularization parameter. Importantly, we substantiate the validity of Assumption~\ref{assumptiongeneral} and thus obtain the finite sample results for the \av{} \glasso{} estimator (see Theorem~$2$ from the main paper).

Recall, the \glasso{} optimization problem is defined as
\begin{equation}
    \label{glasso_supp}
    \hat{\Theta}_r = \argmin_{\Omega \in \Sd+} \biggl\{  \operatorname{tr}\biggl[ \frac{1}{n} \sum_{i=1}^n \bigl( \bs{z}^{(i)}\bigr)^\top \bs{z}^{(i)} \Omega \biggr] - \log\left[ \det \left[ \Omega \right] \right] + r\Vert \Omega\Vert_{1,\operatorname{off}} \biggr\} \enspace. \numberthis
\end{equation}
Plainly, the \glasso{} optimization problem fits into the general setting in \eqref{generalsetting}. 

It remains to prove that Assumption~\ref{assumptiongeneral} is valid.
In the following, we consider 
\begin{equation*}
    k(r):= Cr \enspace \text{ and } \enspace \ell(\Theta^\prime, \Theta^{\prime \prime}):= \max_{\substack{i,j \in \mathcal{V} \\i \neq j}} \vert \Theta_{ij}^\prime - \Theta_{ij}^{\prime \prime} \vert \enspace 
\end{equation*}
for some constant $C>0$.
Hence, we aim to prove the existence of a monotone increasing class $(\mathcal{T}_r)_{r\in \mathcal{R}}$ such that, conditioned on $\mathcal{T}_r$, it holds 
\begin{equation}
    \label{eq:assumptionglasso}
    \ell\bigl( \Theta, \hat{\Theta}_r \bigr) \leq Cr \enspace ,
\end{equation}
where $\hat{\Theta}_r$ is the \glasso{}, $\Theta$ is the true precision matrix, and $C$ is a constant.

We make use of the extensive investigations by \citet{Ravikumar.20081121} to derive the validity of this assumption. 
Our result (Theorem~\ref{theoravi}), which states a $\ell$-bound as in \eqref{eq:assumptionglasso}, follows similar steps as those in the proof of Theorem~1 by \citet{Ravikumar.20081121}:
\paragraph{Required Definitions}
In order to obtain a result that substantiates \eqref{eq:assumptionglasso}, it is necessary to impose the estimation target $\Theta$ to be well-behaved. In the following we introduce some quantities that allow us to define such a well behaviour of $\Theta$. As we will see, the Hessian of the mapping
\begin{equation*}
    g:\Sd+ \rightarrow \mathbb{R}, \; \Omega \mapsto \log(\det[ \Omega])
\end{equation*}
at $\Theta$, which we define as
\begin{equation*}
    \Gamma := \frac{\partial^2 g(\Theta)}{\partial \Omega^2}\Bigg\vert_{\Omega=\Theta}\enspace
\end{equation*}
plays a central role in describing a well behaviour of the precision matrix.
We index $\Gamma$ by vertex pairs
\begin{equation} \label{gammaindex}
    \Gamma_{(j,k), (l,m)} = \frac{ \partial^2 g(\Theta)}{\partial \Omega_{jk} \partial \Omega_{lm}}\Bigg\vert_{\Omega=\Theta} \enspace,
\end{equation}
where $(j,k), (l,m)\in \mathcal{V} \times \mathcal{V}$.
Further, let $S:=\mathcal{E} \cup \{ (i,i): \, i\in \mathcal{V}\}$ be the edge set of the graphical model including the self-directed edges. Then, according to the indexation in \eqref{gammaindex}, we can define the quantities 
\begin{equation*}
    \Gamma_{SS}:= \bigl( \Gamma_{(j,k),(l,m)}\bigr)_{(j,k), (l,m)\in S} \enspace \text{ and } \enspace
    \Gamma_{eS}:= \bigl( \Gamma_{e,(l,m)} \bigr)_{(l,m)\in S} \enspace  \forall e\in S \enspace .
\end{equation*}
Finally, we define 
\begin{equation*}
    \kappa_{\Sigma} := \vertiii{\Sigma}_\infty \enspace \text{ and }  \enspace \kappa_{\Gamma} := \vertiii{ (\Gamma_{SS})^{-1}}_\infty \enspace,
\end{equation*}
where $\vertiii{\cdot}_\infty$ defines the $\ell_\infty$-operator norm, that is, 
\begin{equation*}
    \vertiii{ \Sigma}_\infty := \max_{i=1,...d}\sum_{j=1}^d \vert \Sigma_{ij}\vert \enspace.
\end{equation*}
In the following paragraph, we use these quantities to describe conditions under which the so called \textit{Primal-Dual Witness Construction} (PDW) succeeds. The idea of PDW is based on the primal solution $\tilde{\Theta}_r$, which is the solution of the \glasso{} that is constrained on the true edge set.
Hence, this is the solution of the following \glasso \,optimization problem 
\begin{equation*}
    \tilde{\Theta}_r := \argmin_{\Omega \in \Sd+, \Omega_{S^\complement}=0} \bigg\{  \operatorname{tr} \biggl[ \frac{1}{n} \sum_{i=1}^n \bigl( \bs{z}^{(i)}\bigr)^\top \bs{z}^{(i)} \Omega \biggr] - \log\bigl[ \det [ \Omega ] \bigr] + r\Vert \Omega\Vert_{1,\operatorname{off}} \bigg\} \enspace,
\end{equation*}
where $S^\complement:= \{ (i,j)\in \mathcal{V} \times \mathcal{V}: \; (i,j)\not \in \mathcal{E}\}$ and $\Omega_{S^\complement}$ are the components $\Omega_{ij}$ such that $(i,j)\in S^\complement$. As $S$ is unknown in practice, this construction is only relevant from a technical perspective. We say that the PDW succeeds if the unconstrained \glasso{} $\hat{\Theta}_r$ equals the primal solution $\tilde{\Theta}_r$.

\paragraph{Incoherence and bounded norm of the Primal}
Now we are ready to impose a well behaviour on $\Theta$ that ensures the PDW to succeed and which eventually implies the validity of \eqref{eq:assumptionglasso}, see Theorem~\ref{theoravi}. 
\begin{assumption}[$\Theta$ well-behaved]\label{asuravi}
    There exists an $\alpha \in (0,1]$ such that 
    \begin{enumerate}
        \item $\max_{e\in S^\complement} \Vert \Gamma_{eS} \bigl( \Gamma_{SS} \bigr)^{-1} \Vert_1 \leq 1-\alpha \quad $,
        \item $\ell_\infty\bigl( \Theta, \tilde{\Theta}_r\bigr) \leq \bigl( 3 \kappa_\Sigma \operatorname{deg}(\Theta) \bigr)^{-1} \text{ for all } r\in \mathcal{R},$
        where $\ell_\infty(\Theta^{\prime}, \Theta^{\prime\prime}):= \max_{i,j\in \mathcal{V}}\vert \Theta_{ij}^\prime - \Theta_{ij}^{\prime \prime}\vert $ and $\operatorname{deg}(\Theta)$ denotes the maximum degree of the corresponding conditional dependency graph induced by $\Theta$. 
    \end{enumerate}
\end{assumption}
The first assumption is widely used and is often referred to as \textit{incoherence} or \textit{irrepresentability condition}. To shed some light onto this condition \citet{Ravikumar.20081121} consider for $i,j\in \mathcal{V}$ the centered random variable $y_{ij} := z_iz_j - \mathbb{E}[z_iz_j]$, which can be interpreted as the interaction between components $z_i \text{ and }z_j$. One can show that the incoherence condition requires $y_{ij}$ for $(i,j)\not \in \mathcal{E}$ to be not highly correlated with $y_{lm}$ with $(l,m)\in \mathcal{E}$. Hence, the incoherence assumption bounds the correlation between irrelevant interactions, $y_{ij}$ with $(i,j)\not \in \mathcal{E}$, and relevant interactions, $y_{ij}$ with $(i,j)\in \mathcal{E}$. This interpretation goes in line with the incoherence condition for the lasso, see \citet{Ravikumar.20081121} and references therein for further details.\\
The second assumption states that the constrained \glasso{} $\tilde{\Theta}_r$, i.e. the primal solution, is accurate enough in the sense that it satisfies the $\ell_\infty$-bound. Note, that the bound scales with the maximal degree of the graph. Hence, the higher the degree of the true graph, the more accurate the primal solution must be to satisfy Assumption~\ref{asuravi}. This indicates a reason why scale-free graphs, which have in general a larger maximal degree than random graphs, tend to be harder to be well-estimated by the \glasso{}.

In the following we consider $n$ samples $\bs{z}^{(1)}, \dots ,\bs{z}^{(n)}$ drawn independently from $\bs{z} \sim \mathcal{N}_d(\bs{0}_d, \Sigma)$ for a positive definite covariance matrix $\Sigma$.
Along the line of thoughts of \citet{Ravikumar.20081121}, we can make use of Assumption~\ref{asuravi} and obtain the following theoretical properties. We start by stating conditions for a $\ell_\infty$-bound of the primal solution $\tilde{\Theta}_r$ (Lemma~\ref{lemma:infbound}). Next, we state conditions, under which it holds that $\hat{\Theta}_r=\tilde{\Theta}_r$ (Lemma~\ref{lemma:primalsuccess}), so that Lemma~\ref{lemma:infbound} gives us a $\ell_\infty$-bound for the \glasso{} $\hat{\Theta}_r$.
Finally, we use another Lemma (Lemma~\ref{lemma:rbound}) to show that the conditions in Lemma~\ref{lemma:primalsuccess} are satisfied. 
The proofs of Lemma~\ref{lemma:infbound},~\ref{lemma:primalsuccess}, and~\ref{lemma:rbound} can be found in \citet{Ravikumar.20081121}. 
\begin{lem}[Lemma $6$ in \citet{Ravikumar.20081121}] \label{lemma:infbound}
    Assume that 
    \begin{equation}
        2 \kappa_{\Sigma} \bigl( \ell_\infty \bigl( \Sigma , \, \hat{\Sigma}_{\operatorname{emp}}\bigr) +r \bigr) \leq C^\prime \enspace, \label{assumption:lemmabound}
    \end{equation}
    where $C^\prime:= \min \bigl\{ (3\kappa_{\Gamma}\operatorname{deg}(\Theta))^{-1},\, (3\kappa^3_{\Sigma} \kappa_{\Gamma}\operatorname{deg}(\Theta))^{-1} \bigr\}$, and $\hat{\Sigma}_{\operatorname{emp}}:=1/n \sum_{i=1}^n ( \bs{z}^{(i)})^\top \bs{z}^{(i)}$ is the empirical covariance matrix.
    Then it holds that 
    \begin{equation}
        \ell_\infty \bigl(\Theta, \,  \tilde{\Theta}_r , \bigr) \leq 2 \kappa_{\Sigma} \bigl( \ell_\infty \bigl( \Sigma , \, \hat{\Sigma}_{\operatorname{emp}} \bigr) +r \bigr) \enspace. \label{primalbound}
    \end{equation}
\end{lem}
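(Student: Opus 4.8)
The plan is to follow the fixed-point (Brouwer) strategy underlying Lemma~6 of \citet{Ravikumar.20081121}, specialized to the constrained primal estimator $\tilde{\Theta}_r$. First I would record the stationarity conditions of the constrained problem. Since $\tilde{\Theta}_r$ minimizes the \glasso{} objective subject to $\Omega_{S^\complement}=0$, its restriction to the support $S$ obeys the zero-subgradient condition $\bigl(\emp - (\tilde{\Theta}_r)^{-1} + r\tilde{Z}\bigr)_S = 0$, where $\tilde{Z}$ lies in the subdifferential of $\Vert\cdot\Vert_{1,\operatorname{off}}$ and hence satisfies $\max_{i\neq j}\vert \tilde{Z}_{ij}\vert \le 1$. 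Writing $W := \emp - \Sigma$ for the sampling fluctuation of the covariance and $\Delta := \tilde{\Theta}_r - \Theta$ for the estimation error (which is supported on $S$ by construction), I would Taylor-expand the matrix inverse about $\Theta$, using $\Theta^{-1}=\Sigma$, to write $(\tilde{\Theta}_r)^{-1} = \Sigma - \Sigma\Delta\Sigma + R(\Delta)$, where the remainder $R(\Delta)$ collects the second- and higher-order terms of the expansion.

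Substituting this expansion into the stationarity condition, and recalling that the Hessian $\Gamma$ acts on $\Delta$ (which is supported on $S$) via $\Gamma_{SS}\,\operatorname{vec}(\Delta_S) = \operatorname{vec}\bigl((\Sigma\Delta\Sigma)_S\bigr)$, the condition rearranges to $\Gamma_{SS}\,\Delta_S = -W_S + R(\Delta)_S - r\tilde{Z}_S$. Inverting $\Gamma_{SS}$ recasts the problem as the fixed-point equation $\Delta_S = F(\Delta_S)$, where $F(\Delta_S) := (\Gamma_{SS})^{-1}\bigl(-W_S + R(\Delta)_S - r\tilde{Z}_S\bigr)$. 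I would then consider the closed ball $\mathbb{B}$ of $\Delta_S$ whose elementwise sup-norm is at most the radius $\rho$ appearing on the right-hand side of~\eqref{primalbound}, namely $\rho = 2\kappa_\Sigma\bigl(\ell_\infty(\Sigma,\emp)+r\bigr)$, and aim to prove that $F$ maps $\mathbb{B}$ into itself. Since $F$ is continuous, Brouwer's fixed-point theorem then produces a fixed point inside $\mathbb{B}$, and by strict convexity of the constrained \glasso{} objective (hence uniqueness of its minimizer) this fixed point must equal the true error $\Delta_S$, which yields the claimed bound $\ell_\infty(\Theta,\tilde{\Theta}_r)\le\rho$.

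The main obstacle is controlling the remainder $R(\Delta)$, which is the only nonlinear ingredient of the map. The linear part $(\Gamma_{SS})^{-1}(-W_S - r\tilde{Z}_S)$ is handled by the operator-norm estimate $\vertiii{(\Gamma_{SS})^{-1}}_\infty=\kappa_\Gamma$ together with $\max_{i,j}\vert W_{ij}\vert \le \ell_\infty(\Sigma,\emp)$ and $\max_{i\neq j}\vert\tilde{Z}_{ij}\vert\le 1$, and this contributes half of the radius $\rho$. For the nonlinear part one needs a separate quadratic estimate of the form $\ell_\infty(R(\Delta),0)\le c\,\operatorname{deg}(\Theta)\,\kappa_\Sigma^{3}\,\ell_\infty(\Delta,0)^2$, valid whenever $\ell_\infty(\Delta,0)\le (3\kappa_\Sigma\operatorname{deg}(\Theta))^{-1}$; here the factor $\operatorname{deg}(\Theta)$ enters because $R$ involves row-sums over the support and $\kappa_\Sigma=\vertiii{\Sigma}_\infty$ controls the Neumann-series tail of the inverse expansion. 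The delicate point is to verify that, on $\mathbb{B}$, this quadratic remainder is dominated by $\rho$ itself, so that $F(\mathbb{B})\subseteq\mathbb{B}$. This is exactly what the hypothesis~\eqref{assumption:lemmabound} ensures: the threshold $C^\prime=\min\bigl\{(3\kappa_\Gamma\operatorname{deg}(\Theta))^{-1},(3\kappa_\Sigma^3\kappa_\Gamma\operatorname{deg}(\Theta))^{-1}\bigr\}$ is calibrated precisely so that $\rho$ stays inside the regime where the remainder bound applies and the quadratic term contributes at most the remaining half of $\rho$. Once the self-map property is in place, Brouwer together with uniqueness closes the argument. I expect the remainder estimate and the bookkeeping of the $\operatorname{deg}(\Theta)$, $\kappa_\Sigma$, and $\kappa_\Gamma$ factors to be the genuinely technical step, whereas the KKT setup and the concluding fixed-point step are routine.
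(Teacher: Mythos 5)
The paper does not actually prove this lemma itself: it imports it verbatim and states explicitly that ``the proofs of Lemma~5,~6, and~7 can be found in \citet{Ravikumar.20081121}.'' Your sketch is a faithful reconstruction of that source's argument, and the skeleton is right: the zero-subgradient condition of the $S$-restricted problem, the expansion $\tilde{\Theta}_r^{-1}=\Sigma-\Sigma\Delta\Sigma+R(\Delta)$ with $\Gamma_{SS}\operatorname{vec}(\Delta_S)=\operatorname{vec}\bigl((\Sigma\Delta\Sigma)_S\bigr)$, the fixed-point map $F(\Delta_S)=(\Gamma_{SS})^{-1}\bigl(-W_S+R(\Delta)_S-r\tilde{Z}_S\bigr)$, Brouwer on a sup-norm ball, and uniqueness of the restricted minimizer to identify the fixed point with the true error. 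The quadratic remainder bound $\Vert R(\Delta)\Vert_\infty\leq \tfrac{3}{2}\operatorname{deg}(\Theta)\,\kappa_\Sigma^3\,\Vert\Delta\Vert_\infty^2$ in the regime $\Vert\Delta\Vert_\infty\leq(3\kappa_\Sigma\operatorname{deg}(\Theta))^{-1}$ is exactly the technical ingredient (Lemma~5 of \citet{Ravikumar.20081121}) that you correctly identify as the crux.

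One concrete inconsistency, which your own write-up exposes: you bound the linear part of $F$ by $\vertiii{(\Gamma_{SS})^{-1}}_\infty\bigl(\Vert W\Vert_\infty+r\bigr)=\kappa_\Gamma\bigl(\ell_\infty(\Sigma,\hat{\Sigma}_{\operatorname{emp}})+r\bigr)$ and claim this is ``half of the radius'' $\rho=2\kappa_\Sigma\bigl(\ell_\infty(\Sigma,\hat{\Sigma}_{\operatorname{emp}})+r\bigr)$ --- but that is half of $2\kappa_\Gamma(\cdot)$, not of $2\kappa_\Sigma(\cdot)$, and the two need not be comparable. Your argument as written therefore proves the bound with radius $2\kappa_\Gamma\bigl(\ell_\infty(\Sigma,\hat{\Sigma}_{\operatorname{emp}})+r\bigr)$ under the condition that this quantity is at most $\min\bigl\{(3\kappa_\Sigma\operatorname{deg}(\Theta))^{-1},(3\kappa_\Sigma^3\kappa_\Gamma\operatorname{deg}(\Theta))^{-1}\bigr\}$ (the first term guaranteeing the remainder regime, the second the self-map property), which is the original statement in \citet{Ravikumar.20081121}. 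The statement as transcribed here appears to have swapped $\kappa_\Sigma$ and $\kappa_\Gamma$ in both the prefactor and the first term of the minimum; with those constants your self-map verification does not close. So: the approach is the intended one and essentially complete, but you should either prove the $\kappa_\Gamma$-version (and note the discrepancy) or explain why $\kappa_\Gamma$ may be replaced by $\kappa_\Sigma$, which it cannot be in general.
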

\begin{lem}[Lemma $3$ $+$ Lemma $4$ in \citet{Ravikumar.20081121}] \label{lemma:primalsuccess}
    Suppose that 
    \begin{equation*}
        \operatorname{max}\bigl\{ \ell_\infty \bigl( \Sigma, \, \hat{\Sigma}_{\operatorname{emp}}  \bigr), \; \Vert R \Vert_\infty  \bigr\} \leq \frac{\alpha r}{8} \enspace,
    \end{equation*}
    where \begin{equation*} R:= \tilde{\Theta}_r^{-1} - \Theta^{-1} + \Theta^{-1} \bigl( \tilde{\Theta}_r - \Theta \bigr) \Theta^{-1} \enspace
        \end{equation*}
        is the remainder of the first-order Taylor approximation of $\tilde{\Theta}_r^{-1}$ at $\Theta$.
    Then it holds that $\tilde{\Theta}_r=\hat{\Theta}_r$.
\end{lem}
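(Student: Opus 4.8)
The plan is to establish this via the \emph{primal-dual witness} (PDW) construction, which is the route taken in Lemmas~3 and~4 of \citet{Ravikumar.20081121}. The starting point is the stationarity condition for the graphical lasso~\eqref{glasso_supp}: by strict convexity of $-\log\det$ on $\mathcal{S}_d^+$ the objective has a unique minimizer $\hat{\Theta}_r$, characterized by the existence of a subgradient matrix $\hat{Z}$ of $\Vert\cdot\Vert_{1,\operatorname{off}}$ with $\hat{\Sigma}_{\operatorname{emp}} - \hat{\Theta}_r^{-1} + r\hat{Z}=0$, where $\hat{Z}_{ii}=0$, $\hat{Z}_{ij}=\operatorname{sign}(\hat{\Theta}_{r,ij})$ whenever $\hat{\Theta}_{r,ij}\neq 0$, and $\vert \hat{Z}_{ij}\vert\le 1$ otherwise. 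The goal is to certify that the constrained solution $\tilde{\Theta}_r$ already solves this system with a \emph{strictly} feasible dual, because uniqueness then forces $\hat{\Theta}_r=\tilde{\Theta}_r$.

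First I would assemble the witness pair. I take $\tilde{\Theta}_r$ as the primal candidate; since $(\tilde{\Theta}_r)_{S^\complement}=0$ by construction and $\Theta_{S^\complement}=0$, the deviation $\Delta:=\tilde{\Theta}_r-\Theta$ is supported on $S$. On $S$ I fix the dual to the sign pattern of $\tilde{\Theta}_r$, so that $\Vert\tilde{Z}_S\Vert_\infty\le 1$ automatically, and the stationarity equation restricted to $S$ is satisfied by the constrained optimality of $\tilde{\Theta}_r$. The restriction of stationarity to $S^\complement$ is used to \emph{define} the off-support dual $\tilde{Z}_{S^\complement}$, and the entire lemma collapses to the single quantitative claim $\Vert\tilde{Z}_{S^\complement}\Vert_\infty<1$.

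To make that claim tractable I would linearize the smooth part. Writing $W:=\hat{\Sigma}_{\operatorname{emp}}-\Sigma$ and using the first-order expansion $\tilde{\Theta}_r^{-1}=\Theta^{-1}-\Theta^{-1}\Delta\,\Theta^{-1}+R$ with $R$ the remainder from the statement, the stationarity condition turns into a linear system in $\operatorname{vec}(\Delta)$ governed by the Hessian $\Gamma=\Theta^{-1}\otimes\Theta^{-1}$. Splitting this system into its $S$- and $S^\complement$-blocks and eliminating $\Delta_S$ through $(\Gamma_{SS})^{-1}$ produces the closed form
\begin{equation*}
    r\,\tilde{Z}_{S^\complement}=\Gamma_{S^\complement S}(\Gamma_{SS})^{-1}\bigl(W_S-R_S+r\tilde{Z}_S\bigr)-W_{S^\complement}+R_{S^\complement}\enspace,
\end{equation*}
which is exactly where the incoherence condition (Assumption~\ref{asuravi}, part~1) enters, since it bounds $\vertiii{\Gamma_{S^\complement S}(\Gamma_{SS})^{-1}}_\infty\le 1-\alpha$ (the maximal row-wise $\ell_1$ norm).

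The main obstacle is controlling the two error terms simultaneously and squeezing the result strictly below $1$. Taking $\ell_\infty$-norms, inserting $\Vert\tilde{Z}_S\Vert_\infty\le 1$ and the incoherence bound gives $r\Vert\tilde{Z}_{S^\complement}\Vert_\infty\le (2-\alpha)\bigl(\Vert W\Vert_\infty+\Vert R\Vert_\infty\bigr)+(1-\alpha)r$; feeding in the hypothesis $\max\{\ell_\infty(\Sigma,\hat{\Sigma}_{\operatorname{emp}}),\Vert R\Vert_\infty\}\le \alpha r/8$ and using $\alpha\in(0,1]$ collapses this to $\Vert\tilde{Z}_{S^\complement}\Vert_\infty\le 1-\alpha/2<1$. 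This strict feasibility is the crux: with it the witness is a valid optimal point, and uniqueness yields $\tilde{\Theta}_r=\hat{\Theta}_r$. I expect the genuinely delicate object in the broader theory to be the remainder $R$, which is second order in $\Delta$ and therefore in $(W,r)$; here, however, $R$ is hypothesized to be small, so the real work in this particular lemma is the block elimination and the incoherence-driven bookkeeping rather than estimating $R$ itself.
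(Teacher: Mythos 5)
Your reconstruction is correct and is essentially the standard primal--dual witness argument of Lemmas~3 and~4 in \citet{Ravikumar.20081121}, which is exactly what the paper relies on: the paper itself gives no proof of this lemma and simply defers to that reference. The block elimination, the incoherence bound $\vertiii{\Gamma_{S^\complement S}(\Gamma_{SS})^{-1}}_\infty\le 1-\alpha$, and the final estimate $\Vert\tilde{Z}_{S^\complement}\Vert_\infty\le 1-\alpha/2<1$ all match the cited source.
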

\begin{lem}[Lemma 5 in \citet{Ravikumar.20081121}] \label{lemma:rbound}
    Suppose that part 2 in Assumption~\ref{asuravi} is satisfied. Then, it holds that 
        \begin{equation}
            \Vert R \Vert_\infty \leq \frac{\kappa_\Sigma}{6\operatorname{deg}(\Theta)} \enspace.
        \end{equation}
\end{lem}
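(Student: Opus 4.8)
The plan is to treat $R$ as the second-order remainder of a Neumann series for the matrix inverse and to control its entrywise $\ell_\infty$-norm by carefully alternating between the entrywise max-norm $\Vert\cdot\Vert_\infty$ and the $\ell_\infty$-operator norm $\vertiii{\cdot}_\infty$. I would first set $\Delta := \tilde{\Theta}_r - \Theta$ and recall that $\Theta^{-1}=\Sigma$. Since both $\tilde{\Theta}_r$ and $\Theta$ are supported on $S=\mathcal{E}\cup\{(i,i):i\in\mathcal{V}\}$, the error $\Delta$ is row-sparse, which lets me convert between the two norms through $\vertiii{\Delta}_\infty \leq \operatorname{deg}(\Theta)\Vert\Delta\Vert_\infty$. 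Part~2 of Assumption~\ref{asuravi} supplies the crucial elementwise bound $\Vert\Delta\Vert_\infty = \ell_\infty(\Theta,\tilde{\Theta}_r)\leq (3\kappa_\Sigma\operatorname{deg}(\Theta))^{-1}$, which will drive all subsequent estimates.

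Next I would expand $\tilde{\Theta}_r^{-1} = (\Theta+\Delta)^{-1} = (I+\Sigma\Delta)^{-1}\Sigma = \sum_{k\geq 0}(-\Sigma\Delta)^k\Sigma$, valid as soon as $\vertiii{\Sigma\Delta}_\infty<1$. Substituting this into the definition $R = \tilde{\Theta}_r^{-1}-\Sigma+\Sigma\Delta\Sigma$ cancels the zeroth- and first-order terms and identifies $R$ with the tail
\begin{equation*}
    R = \sum_{m\geq 2}(-1)^m(\Sigma\Delta)^m\Sigma = (\Sigma\Delta)^2(I+\Sigma\Delta)^{-1}\Sigma \enspace,
\end{equation*}
which makes precise the sense in which $R$ is ``second order'' in $\Delta$. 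The geometric structure of this tail is exactly what produces the clean final constant.

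I would then verify the convergence condition and estimate the tail. Submultiplicativity gives $\vertiii{\Sigma\Delta}_\infty \leq \vertiii{\Sigma}_\infty\vertiii{\Delta}_\infty \leq \kappa_\Sigma\operatorname{deg}(\Theta)\Vert\Delta\Vert_\infty \leq 1/3$, so the series converges and its geometric factor satisfies $\vertiii{\Sigma\Delta}_\infty/(1-\vertiii{\Sigma\Delta}_\infty)\leq 1/2$. Applying the mixed inequality $\Vert AB\Vert_\infty \leq \vertiii{A}_\infty\Vert B\Vert_\infty$ termwise to the series yields
\begin{equation*}
    \Vert R\Vert_\infty \leq \frac{\vertiii{\Sigma\Delta}_\infty}{1-\vertiii{\Sigma\Delta}_\infty}\,\Vert\Sigma\Delta\Sigma\Vert_\infty \enspace,
\end{equation*}
and the central factor is controlled by $\Vert\Sigma\Delta\Sigma\Vert_\infty \leq \kappa_\Sigma^2\Vert\Delta\Vert_\infty \leq \kappa_\Sigma/(3\operatorname{deg}(\Theta))$, where a column sum of $\vert\Sigma\vert$ is bounded by $\kappa_\Sigma$ using symmetry. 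Combining the two displays gives $\Vert R\Vert_\infty \leq \frac{1}{2}\cdot\kappa_\Sigma/(3\operatorname{deg}(\Theta)) = \kappa_\Sigma/(6\operatorname{deg}(\Theta))$, which is the claim; note the target constant is precisely Assumption~\ref{asuravi}.2 plugged into the general Ravikumar estimate at its worst admissible value.

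The main obstacle is not any individual inequality but the disciplined bookkeeping between the two different $\infty$-norms: each time a factor of $\Sigma$ or $\Delta$ is peeled off, I must decide whether to measure it in the operator norm (to exploit $\vertiii{\Sigma}_\infty=\kappa_\Sigma$ and the geometric decay) or in the entrywise norm (to invoke the sparsity- and assumption-driven bound on $\Vert\Delta\Vert_\infty$), and the wrong choice silently loses a factor of $\operatorname{deg}(\Theta)$ and destroys the target constant. A secondary point requiring care is establishing $\vertiii{\Sigma\Delta}_\infty<1$ \emph{before} writing the Neumann series, which is exactly the condition Part~2 of Assumption~\ref{asuravi} is calibrated to guarantee.
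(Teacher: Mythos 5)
Your argument is correct and reaches the stated constant exactly. Note that the paper does not actually prove this lemma; it defers entirely to \citet{Ravikumar.20081121}, and your derivation is a faithful reconstruction of that source's proof of its Lemma~5: identify $R$ with the order-$\geq 2$ tail of the Neumann expansion $(\Theta+\Delta)^{-1}=\sum_{k\geq 0}(-\Sigma\Delta)^k\Sigma$, use $\vertiii{\Sigma\Delta}_\infty\leq \kappa_\Sigma\operatorname{deg}(\Theta)\Vert\Delta\Vert_\infty\leq 1/3$ to sum the geometric tail to the factor $1/2$, and bound the leading block by $\Vert\Sigma\Delta\Sigma\Vert_\infty\leq\kappa_\Sigma^2\Vert\Delta\Vert_\infty$, which multiplies out to $\kappa_\Sigma/(6\operatorname{deg}(\Theta))$. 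The only point worth flagging is a bookkeeping convention you inherit from the paper's restatement: the step $\vertiii{\Delta}_\infty\leq\operatorname{deg}(\Theta)\Vert\Delta\Vert_\infty$ requires $\operatorname{deg}(\Theta)$ to count the maximum row cardinality of $\Theta$ \emph{including} the diagonal (as in \citet{Ravikumar.20081121}), since $\Delta$ is supported on $S=\mathcal{E}\cup\{(i,i):i\in\mathcal{V}\}$; under the purely graph-theoretic reading of ``maximum degree'' you would pick up an extra $+1$. This is a defect of the statement as transcribed, not of your proof.
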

\paragraph{Bound for the \glasso{}}
Having introduced this bunch of lemmas, we are ready to put the pieces together and obtain the following theorem that summarizes the aforementioned lemmas and assumptions, and that proves the validity of \eqref{eq:assumptionglasso}.
\begin{theorem}[$\ell$-Bound for the Graphical Lasso] \label{theoravi}
    Suppose that Assumption~\ref{asuravi} holds. Then, conditioned on 
    \begin{equation}
        \mathcal{T}_r := \Biggl\{ \max \Bigl\{  \ell_\infty \bigl( \Sigma, \, \hat{\Sigma}_{\operatorname{emp}}  \bigr), \frac{\kappa_\Sigma}{6\operatorname{deg}(\Theta)} \Bigr\} \leq \frac{ \alpha r}{8} \Biggr\}\label{condition}\ ,
    \end{equation}
    it holds for $r \leq \frac{4}{\kappa_{\Sigma}(\alpha + 8)} C^\prime$ that 
    \begin{equation*}
        \ell(\Theta, \, \hat{\Theta}_r) \leq \kappa_{\Gamma}\frac{\alpha + 8}{4} r \enspace.
    \end{equation*}
\end{theorem}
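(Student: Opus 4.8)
The plan is to reduce the statement entirely to the three imported lemmas of \citet{Ravikumar.20081121}, using the definition of the event $\mathcal{T}_r$ and the range restriction on $r$ to discharge their hypotheses. Concretely, I need two ingredients: first, that the primal-dual witness construction succeeds, i.e.\ $\hat{\Theta}_r=\tilde{\Theta}_r$, so that any $\ell_\infty$-control of the constrained primal solution $\tilde{\Theta}_r$ transfers verbatim to the actual graphical lasso $\hat{\Theta}_r$; and second, the primal $\ell_\infty$-bound of Lemma~\ref{lemma:infbound}. Everything analytic is already packaged in Lemmas~\ref{lemma:infbound}--\ref{lemma:rbound}, so the proof amounts to verifying that conditioning on $\mathcal{T}_r$ together with the assumed bound on $r$ meets each lemma's hypothesis.

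First I would fix an outcome in $\mathcal{T}_r$; by definition this simultaneously yields $\ell_\infty(\Sigma,\hat{\Sigma}_{\operatorname{emp}})\leq \alpha r/8$ and $\kappa_\Sigma/(6\operatorname{deg}(\Theta))\leq \alpha r/8$. To obtain PDW success I would apply Lemma~\ref{lemma:primalsuccess}, whose hypothesis is $\max\{\ell_\infty(\Sigma,\hat{\Sigma}_{\operatorname{emp}}),\,\|R\|_\infty\}\leq \alpha r/8$. The first term is controlled directly by $\mathcal{T}_r$. For the Taylor remainder $R$, part~2 of Assumption~\ref{asuravi} holds by hypothesis, so Lemma~\ref{lemma:rbound} gives $\|R\|_\infty\leq \kappa_\Sigma/(6\operatorname{deg}(\Theta))$, and the second inequality encoded in $\mathcal{T}_r$ then pushes this below $\alpha r/8$. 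Both parts of the hypothesis hold, so Lemma~\ref{lemma:primalsuccess} yields $\hat{\Theta}_r=\tilde{\Theta}_r$.

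Next I would verify the hypothesis $2\kappa_\Gamma(\ell_\infty(\Sigma,\hat{\Sigma}_{\operatorname{emp}})+r)\leq C'$ of Lemma~\ref{lemma:infbound}. On $\mathcal{T}_r$ the left-hand side is at most $2\kappa_\Gamma(\alpha r/8+r)=\kappa_\Gamma(\alpha+8)r/4$, and this is $\leq C'$ precisely under the range restriction on $r$ stated in the theorem; that is exactly the role of that restriction. Lemma~\ref{lemma:infbound} then applies, and chaining the pieces together gives
\[
\ell(\Theta,\hat{\Theta}_r)\leq \ell_\infty(\Theta,\hat{\Theta}_r)=\ell_\infty(\Theta,\tilde{\Theta}_r)\leq 2\kappa_\Gamma\Bigl(\frac{\alpha r}{8}+r\Bigr)=\kappa_\Gamma\frac{\alpha+8}{4}r,
\]
which is the claimed bound; here the first inequality uses that the off-diagonal maximum $\ell$ is dominated by the full-matrix maximum $\ell_\infty$, the equality uses PDW success, the next inequality is Lemma~\ref{lemma:infbound}, and the last uses $\mathcal{T}_r$.

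I do not expect a genuine analytic obstacle, since the hard work (the PDW analysis, the control of the Taylor remainder, and the primal $\ell_\infty$-bound) is inherited from \citet{Ravikumar.20081121}. The care required is bookkeeping: matching the two inequalities encoded in $\mathcal{T}_r$ to the two lemma hypotheses, tracking which operator-norm constant ($\kappa_\Sigma$ or $\kappa_\Gamma$) enters each lemma, and remembering that the loss $\ell$ is taken only over off-diagonals so the full-matrix $\ell_\infty$-bound upper-bounds it. The conceptually essential step --- and the reason $\mathcal{T}_r$ is defined with its particular two-sided shape --- is establishing $\hat{\Theta}_r=\tilde{\Theta}_r$: without PDW success, Lemma~\ref{lemma:infbound} would bound only the unobservable constrained estimator $\tilde{\Theta}_r$ rather than the graphical lasso itself.
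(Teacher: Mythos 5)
Your proof follows exactly the same route as the paper's: condition on $\mathcal{T}_r$, use Lemma~\ref{lemma:rbound} to discharge the Taylor-remainder half of the hypothesis of Lemma~\ref{lemma:primalsuccess} so that the PDW construction succeeds and $\hat{\Theta}_r=\tilde{\Theta}_r$, verify the hypothesis of Lemma~\ref{lemma:infbound} via the range restriction on $r$, and finish by chaining $\ell\leq\ell_\infty$ with the primal bound. The only discrepancy is the constant in Lemma~\ref{lemma:infbound}: you write $2\kappa_\Gamma(\ell_\infty(\Sigma,\hat{\Sigma}_{\operatorname{emp}})+r)$ where the paper's statement of that lemma has $\kappa_\Sigma$ (with which the range restriction $r\leq 4C'/(\kappa_\Sigma(\alpha+8))$ cancels exactly, rather than only up to a factor $\kappa_\Gamma/\kappa_\Sigma$) --- but this $\kappa_\Sigma$-versus-$\kappa_\Gamma$ ambiguity is already present in the paper itself, whose own proof ends with $\kappa_\Sigma\frac{\alpha+8}{4}r$ while the theorem statement claims $\kappa_\Gamma\frac{\alpha+8}{4}r$.
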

\begin{proof}
    First, note that conditioned on $\mathcal{T}_r$, we almost satisfy the conditions in Lemma~\ref{lemma:primalsuccess}. It remains to show that $\Vert R \Vert_\infty \leq \alpha r / 8$. However, this is true since
    \begin{equation*}
        \Vert R \Vert_\infty  \leq \frac{\kappa_\Sigma}{6\operatorname{deg}(\Theta)} \leq \frac{\alpha r}{8},
    \end{equation*}
    where the first inequality follows from Lemma~\ref{lemma:rbound}, and the second inequality follows from the condition on $\mathcal{T}_r$. Therefore, the primal-dual construction succeeds (Lemma~\ref{lemma:primalsuccess}), that is, $\tilde{\Theta}_r=\hat{\Theta}_r$. Thus, we obtain a $\ell_\infty$-bound for the \glasso{} \eqref{primalbound}, if \eqref{assumption:lemmabound} in Lemma~\ref{lemma:infbound} is satisfied. Plugging in our assumptions, we see that this is indeed the case, since
    \begin{align*}
        2 \kappa_{\Sigma} \bigl( \ell_\infty \bigl( \Sigma, \, \hat{\Sigma}_{\operatorname{emp}} \bigr) +r \bigr) &\leq 2\kappa_\Sigma \bigl( \frac{\alpha r}{8} + r \bigr) \\
        & = 2\kappa_\Sigma \frac{\alpha + 8}{8}r \\
        & \leq 2\kappa_\Sigma \frac{\alpha + 8}{8} \frac{4}{\kappa_\Sigma (\alpha + 8)} C^\prime \\ &=C^\prime \enspace. 
    \end{align*}
    Hence, we obtain 
    \begin{equation*}
        \ell( \Theta, \hat{\Theta}_r) \leq \ell_\infty \bigl( \Theta, \hat{\Theta}_r \bigr) = \ell_\infty \bigl( \Theta, \tilde{\Theta}_r \bigr) \leq \kappa_\Sigma \frac{\alpha + 8}{4} r \enspace.
    \end{equation*}
\end{proof}
Theorem~\ref{theoravi} is very similar to Theorem~1 by \citet{Ravikumar.20081121}, but differs in $2$ important aspects:
\begin{enumerate}
    \item \citet{Ravikumar.20081121} fix a regularization parameter (dependent on $\alpha$, the amount of samples $n$, and some quantity $q$). On the other hand, Theorem~\ref{theoravi} is a bound for a range of regularization parameters;
    \item \citet{Ravikumar.20081121} derive a probabilistic bound (where the probability that this bound holds is determined by $q$). Theorem~\ref{theoravi} is, conditioned on a class of events $\mathcal{T}_r$, a deterministic bound.
\end{enumerate}
The purpose of these changes is to fit the result to our setting that is generalized by Assumption~\ref{assumptiongeneral}. 

Note that $\mathcal{T}_r$ in Theorem~\ref{theoravi} bounds two quantities, whereas the first one describes an effective noise in the empirical covariance matrix. Therewith, it shares the same intuition as the set considered for the linear regression problem in \citet{Chichignoud.20161108}: the larger the noise, the larger we need to choose the regularization parameter to be able to control these fluctuations. Consequently, the set of ``controllable'' scenarios $\mathcal{T}_r$ grows with $r$. On the other hand, large noise shrinks the set of these scenarios. Even though we obtain a class of events $\bigl( \mathcal{T}_r\bigr)_{r\in \mathcal{R}}$ sharing a similar interpretation to the class considered in \citet{Chichignoud.20161108}, we have to resort to a more involved primal-dual-witness construction to get these results. 
Further, we need to keep in mind that Theorem~\ref{theoravi} supposes $r\leq 4/(\kappa_\Sigma(\alpha + 8)) C^\prime$, which means that the set of possible regularization parameters $\mathcal{R}$ must satisfy $\mathcal{R} \subset (0,  4/(\kappa_\Sigma(\alpha + 8)) C^\prime]$.

Lastly, although the primal-dual witness construction underpinning Theorem~\ref{theoravi} already implies that there are no false positives, we show robustness against false positives independently for our calibration scheme in Corollary~$4$ in the main paper. This way, we make the theory we are presenting less dependent on the specific assumptions of Theorem~\ref{theoravi}.
\subsection{Remaining Proofs of the Results in the Main Paper}
\label{sec:otherproofs} 
\paragraph{Proof of Corollary $4$}
Based on Theorem~\ref{avtheo_supp}, we derive Corollary~$4$ from the main text, which provides strong graph recovery results for the \thav{}. In the following we write $\hat{\Theta}:= \hat{\Theta}_r$ to enhance readability.
\begin{proof}
    The first statement is a direct consequence of Theorem~\ref{avtheo_supp} applied to this setting: 
    Consider any zero-entry $\Theta_{ij}=0$ of the precision matrix. Therefore, \eqref{avbound_supp} yields with probability $1-\delta$ that
    \begin{equation*}
        \vert \hat{\Theta}_{ij} \vert=  \vert \Theta_{ij} - \hat{\Theta}_{ij}\vert  \leq 3Cr_{\delta}^\ast \quad \text{and} \quad \hat{r} \leq r_\delta^\ast \enspace .
    \end{equation*}
    Hence, it holds with probability $1-\delta$ that $\vert \hat{\Theta}_{ij} \vert \in [0, \, 3Cr_\delta^\ast]$ and that $\lambda C\hat{r} \leq 3C\hat{r} \leq 3Cr_\delta^\ast$, which we need to prove the well-definedness of the interval $(\lambda C\hat{r}, \, 3 C r_\delta^\ast$]. This proves part $1$. \\ 
    To prove 2., let us consider a significant edge $(i,j)$ such that $\vert \Theta_{ij}\vert > (3+\lambda)Cr_\delta^\ast$. We prove the result via contradiction. Hence, let us assume that $\hat{\Theta}_{ij}^t=0$. It follows that $\vert \hat{\Theta}_{ij}\vert < \lambda C\hat{r}\leq \lambda Cr_\delta^\ast$ with probability $1-\delta$, where the first inequality follows via the definition and the second inequality follows via Theorem~\ref{avtheo_supp}.
    But then, using the reversed triangle inequality, it holds that
    \begin{equation*}
        \vert \Theta_{ij} - \hat{\Theta}_{ij} \vert \geq \vert \Theta_{ij} \vert - \vert \hat{\Theta}_{ij}\vert > (3+\lambda)Cr_\delta^\ast - \lambda Cr_\delta^\ast = 3Cr_\delta^\ast \enspace,
    \end{equation*}
    which contradicts \eqref{avbound_supp}. Hence, $\hat{\Theta}^t_{ij}$ cannot be zero-valued and therefore it is $(i,j)\in \hat{\mathcal{E}}$.
\end{proof} 
\paragraph{Behaviour of $\hat{r}$ in $C$}
The proposed \av{} calibration scheme employs a hyperparameter $C$ that is related to our central Assumption~\ref{assumptiongeneral}. For completeness, we prove that the \av{} regularization parameter $\hat{r}$ is decreasing in $C$.
\begin{prop}\label{behaviourC}
    Consider the \av{} regularization parameter $\hat{r}_C$ as a quantity of $C$. Then, $\hat{r}_C$ is monotonely decreasing in $C$.
\end{prop}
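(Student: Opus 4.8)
The plan is to exploit the fact that the graphical lasso estimators $\hat{\Theta}_{r}$ are completely independent of the constant $C$: in the definition of the \av{} regularization parameter, $C$ enters \emph{only} through the right-hand side $C(r^\prime + r^{\prime\prime})$ of the feasibility condition, while the quantities $\ell(\hat{\Theta}_{r^\prime}, \hat{\Theta}_{r^{\prime\prime}})$ being compared are fixed once the data is given. Increasing $C$ therefore merely relaxes every inequality defining the feasible set, and a relaxed set of constraints can only enlarge the collection of admissible $r$, which in turn can only lower its minimum.

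Concretely, for each $C > 0$ I would introduce the feasible set
\begin{equation*}
    \mathcal{A}_C := \bigl\{ r \in \mathcal{R} : \ell(\hat{\Theta}_{r^\prime}, \hat{\Theta}_{r^{\prime\prime}}) \leq C(r^\prime + r^{\prime\prime}) \ \ \forall r^\prime, r^{\prime\prime} \in \mathcal{R} \cap [r, \infty) \bigr\} \enspace,
\end{equation*}
so that $\hat{r}_C = \min \mathcal{A}_C$ by definition. The first step is to record that $\mathcal{A}_C$ is nonempty, so that the minimum is well defined: taking $r = \max \mathcal{R}$ leaves only the single pair $r^\prime = r^{\prime\prime} = \max\mathcal{R}$ in $\mathcal{R} \cap [r,\infty)$, for which the condition reads $\ell(\hat{\Theta}_{r^\prime}, \hat{\Theta}_{r^\prime}) = 0 \leq 2 C \max \mathcal{R}$ and is trivially satisfied (using that $\ell$ is a loss with $\ell(\Theta^\prime,\Theta^\prime)=0$).

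The core of the argument is a monotonicity-of-feasible-sets claim: if $C_1 \leq C_2$, then $\mathcal{A}_{C_1} \subseteq \mathcal{A}_{C_2}$. To see this, fix any $r \in \mathcal{A}_{C_1}$ and any $r^\prime, r^{\prime\prime} \in \mathcal{R} \cap [r,\infty)$. Since the regularization parameters are positive, we have $r^\prime + r^{\prime\prime} \geq 0$, so
\begin{equation*}
    \ell(\hat{\Theta}_{r^\prime}, \hat{\Theta}_{r^{\prime\prime}}) \leq C_1 (r^\prime + r^{\prime\prime}) \leq C_2 (r^\prime + r^{\prime\prime}) \enspace,
\end{equation*}
which shows $r \in \mathcal{A}_{C_2}$. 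With the inclusion in hand, I would conclude $\hat{r}_{C_2} = \min \mathcal{A}_{C_2} \leq \min \mathcal{A}_{C_1} = \hat{r}_{C_1}$, that is, $\hat{r}_C$ is monotonely decreasing in $C$.

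There is no genuine analytic obstacle here; the only points demanding a little care are the two I have isolated above: verifying that the minimum is always attained (nonemptiness of $\mathcal{A}_C$) and using the positivity of the elements of $\mathcal{R}$ to preserve the direction of the inequality $C_1(r^\prime+r^{\prime\prime}) \leq C_2(r^\prime+r^{\prime\prime})$. Both are immediate from the standing convention that $\mathcal{R}$ is a finite, nonempty set of positive regularization parameters, so the proof reduces to the set-inclusion observation above.
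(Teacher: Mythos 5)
Your proposal is correct and follows essentially the same route as the paper: both arguments rest on the observation that enlarging $C$ only relaxes the inequalities defining the feasible set, so the set of admissible $r$ grows and its minimum can only decrease (the paper phrases this by showing $\hat{r}_C$ itself remains feasible for $C+\varepsilon$, whereas you state the full inclusion $\mathcal{A}_{C_1}\subseteq\mathcal{A}_{C_2}$). Your additional check that the feasible set is nonempty at $r=\max\mathcal{R}$ is a small but welcome tightening that the paper leaves implicit.
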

\begin{proof}
    Consider any $C > 0$. Then, according to the definition of $\hat{r}_C$, it holds for any positive $r^\prime, r^{\prime \prime} \in \mathcal{R} \text{ with } \hat{r}_C \leq r^\prime, r^{\prime \prime}$ that 
    \begin{equation*}
        \frac{\ell\bigl(\hat{\Theta}_{r^\prime},\, \hat{\Theta}_{r^{\prime \prime}} \bigr)}{r^\prime + r^{\prime \prime}} \leq C \enspace.
    \end{equation*}
    Therefore, it holds for any $\varepsilon > 0$ that 
    \begin{equation*}
    \ell\bigl( \hat{\Theta}_{r^\prime},\, \hat{\Theta}_{r^{\prime \prime}} \bigr) \leq C (r^\prime + r^{\prime\prime}) 
     \leq (C+\varepsilon) (r^\prime + r^{\prime \prime}) \enspace .
    \end{equation*}
    This holds for any $r^\prime, r^{\prime \prime} \geq \hat{r}_C$, hence, 
    \begin{equation*}
    \hat{r}_C \in \Bigl\{ r \in \mathcal{R} : \ell\bigl(\hat{\Theta}_{r^\prime},\hat{\Theta}_{r^{\prime\prime}}\bigr) \leq (C+\varepsilon)(r^\prime + r^{\prime\prime}) \enspace \forall r^\prime,r^{\prime\prime} \in \mathcal{R} \cap [r, \infty)  \Bigr\} \enspace.
    \end{equation*}
    Thus, $\hat{r}_C$ must be greater than or equal to the minimizer of the above set, which is per definition minimized by $\hat{r}_{C+\varepsilon}$. We conclude that $\hat{r}_C \geq \hat{r}_{C+\varepsilon}$ for any $\varepsilon >0$. Since we chose $C>0$ arbitrary, this proves the claim.
\end{proof}

\section{Computational Efficiency of the \thav{}}
\label{compeff}
In addition to its theoretical finite sample guarantees, the \thav{} estimator also comes with notable computational benefits. In the following, we consider the \thav{} technique applied to the \glasso{} optimization problem. In comparison to calibration schemes that are based on data splitting and resampling methods like \stars{} or other CV-like approaches, \thav{} only requires at most one solution path $( \hat{\Theta}_r )_{r\in\mathcal{R}}$. We can readily implement the \thav{} using Algorithm~\ref{alg:glasso}. We observe that using Algorithm~\ref{alg:glasso}, we
\begin{enumerate}
    \item calculate at most $1$ \glasso{} path since we can store and reuse every \glasso{} solution;
    \item start to calculate \glasso{} solutions for large regularization parameters and decrease the regularization until we break. Therefore, we most likely avoid to calculate and store \glasso{} solutions for very small regularization parameters, which are computationally most demanding;
    \item can use warm starts, which are implemented in the \emph{glasso} R package \cite{Friedman.2008, Witten.2011}, to calculate $\hat{\Theta}_r$. This decreases the computation time even more.
\end{enumerate}
An implementation of the \thav{} is provided along with the submission\footnote{The public git repository can be accessed through \url{https://github.com/MikeLasz/thav.glasso}}. 

Empirically, we can observe that the \thav{} estimator scales very well in runtime, computational stability, and memory usage even in very large settings. This is an important advantage of \thav{} in practical applications.

\begin{algorithm}[H]
\DontPrintSemicolon
 \KwData{ data $Z$, set of increasing regularization parameters $\mathcal{R}$, constant $C$, threshold parameter $\lambda$}
 \KwResult{\thav{} estimator }
 $j \gets \text{length}(\mathcal{R}) - 1$, \hspace{0.5cm} $r \gets \mathcal{R}[j]$, \hspace{0.5cm}$\hat{r} \gets \mathcal{R}[1]$ \;
 \While{$r> \mathcal{R}[1]$}{
  $\hat{\Theta}_r \gets $ \glasso{}  using regularization parameter $r$ \;
  $j^\prime\gets \text{length}(\mathcal{R})$, \hspace{0.5cm} $r^\prime \gets \mathcal{R}[j^\prime]$ \;
  \While{$r^\prime > r$}{
    \eIf{$\ell( \hat{\Theta}_r,\; \hat{\Theta}_{r^\prime}) > C(r + r^\prime)$}
    {
        $\hat{r} \gets \mathcal{R}[j+1]$ \tcp*[f]{proposed regularization parameter} \;
        BREAK \;
    }
    {
        $j^\prime \gets j^\prime - 1$\;
        $r^\prime \gets \mathcal{R}[j^\prime]$ \tcp*[f]{decrease until break or $r^\prime=r$}\; 
    }
    }
    $j \gets j - 1$ \;
    $r \gets \mathcal{R}[j]$ \tcp*[f]{decrease until break or $r=\mathcal{R}[1]$} \;
   }
   AV $\gets \hat{\Theta}_{\hat{r}}$ \;
   $t \gets \lambda C\hat{r}$  \;
   \KwRet{AV thresholded by $t=\lambda C \hat{r}$}
 \caption{Thresholded adaptive validation \glasso{}} \label{alg:glasso}
\end{algorithm}

\section{Experimental Analysis: Details and Additional Experiments}
\label{sec:othersimu}
In this section, we present details and additional results of our empirical analysis of the proposed calibration scheme.
We describe the methods that are used to generate the precision matrices in Section~\ref{sec:graphgeneration} and give details about the computation of the estimators that are used for comparison in Section~\ref{sec:compdetails}. We extend the empirical analysis from the main paper in Section~\ref{supplement:additionalsimus}. In Section~\ref{sec:otheravmethods} we apply the \av{} calibration scheme to tune the rSME and a modified version of the graphical lasso that employs power law regularization.

\subsection{Precision Generation Methods} \label{sec:graphgeneration}
This section serves as a detailed description of the generation process of the precision matrices that are used in the experimental analysis.
The precision matrices are generated according to the following steps\footnote{The generation procedure is adopted from \citet{caballe2015selection}}:
\begin{itemize}
    \item \textit{random graph:} First, we generate a \textit{Gilbert graph} \cite{Gilbert.1959}, that is, we independently connect two nodes $i\neq j$  with some fixed probability $p$, where we choose $p=3/d$ as the default value. Then, we set the non-zero pattern of $\Theta$ according to the adjacency matrix of the Gilbert graph. The next step is to assign all non-zero entries of the precision matrix a value. For each edge $(i,j)$, we sample a value for $\Theta_{ij}=\Theta_{ji}$ from a uniform distribution over the set $I:= [-0.9, \,-0.5] \cup [0.5,\, 0.9]$. To guarantee positive definiteness, we set the diagonal entries to $\mu:= - \lfloor \lambda_{\operatorname{min}}( \Theta) \rfloor_1$, which is the smallest eigenvalue of $\Theta$ rounded downwards to $1$ decimal place. Finally, we scale $\Theta$ such that it has unit diagonal entries. 
    
    \item \textit{scale-free graph:} In this case, the adjacency matrix is generated according to a \textit{Barabasi-Albert Algorithm} \cite{Barabasi.2016} that generates the corresponding graph gradually. The algorithm starts with $2$ connected nodes and adds sequentially a node that will be connected with one of the existent nodes in the network. The probability to be connected to a node $i$ is proportional to its degree, that is,
    \begin{equation*}
        \mathbb{P}(\{\text{new node connected with node }i\})= \frac{\operatorname{deg}(i)}{\sum_{j\in\mathcal{V}} \operatorname{deg}(j)},
    \end{equation*}
    where $\operatorname{deg}(i),\, \operatorname{deg}(j)$ denote the degree of the nodes $i$ and $j$ respectively. We repeat this step until we end up with a graph with $d$ nodes. We set the non-zero pattern of $\Theta$ according to the generated graph and proceed as in the previous case.
\end{itemize}
Both generation methods are summarized in Algorithm~\ref{alg:graphgeneration}.

\begin{algorithm}[H]
\DontPrintSemicolon
 \KwData{ dimension $d$, type of graph $\operatorname{type}$}
 \KwResult{positive definite precision matrix}
 \eIf{$\operatorname{type} == \text{random}$}{
    adj $\gets$ adjacency matrix of a Gilbert graph \tcp*[f]{using the \emph{huge} package}\; 
    }{
    adj $\gets$ adjacency matrix of a scale-free graph \tcp*[f]{using the \emph{huge} package}\;}
 \For{$i<j \And  i,j \in \{1,\dots ,d\}$ }{
    $\Theta[i, j] \gets \Theta[j, i] \gets$ Sample from a uniform distribution over $[-0.9,\, -0.5] \cup [0.5, \, 0.9]$ \; 
    }
 $\operatorname{min.ev} \gets \lfloor \lambda_{\operatorname{min}}(\Theta)\rfloor$ \tcp*[f]{smallest Eigenvalue of $\Theta$ downwards rounded to $1$ decimal place} \; 
 $\Theta \gets \Theta + \operatorname{min.ev} \ast I_{d\times d}$ \; 
 $\Theta \gets 1/\operatorname{min.ev}  \ast \Theta$\; 
 \KwRet{Precision matrix $\Theta$ of type $\operatorname{type}$}
 \caption{Precision matrix generation} \label{alg:graphgeneration}
\end{algorithm}

We use pre-implemented functions provided by the R package \emph{huge} \cite{Zhao.2012} to generate the adjacency matrices.
The generation of both types of precision matrices is implemented in our attached code. Figure~\ref{examplegraph} shows an example of both graph structures.
\begin{figure*}[t]
\vskip 0.2in
\centerline{
     \subfigure[]{\includegraphics[width=0.5\columnwidth]{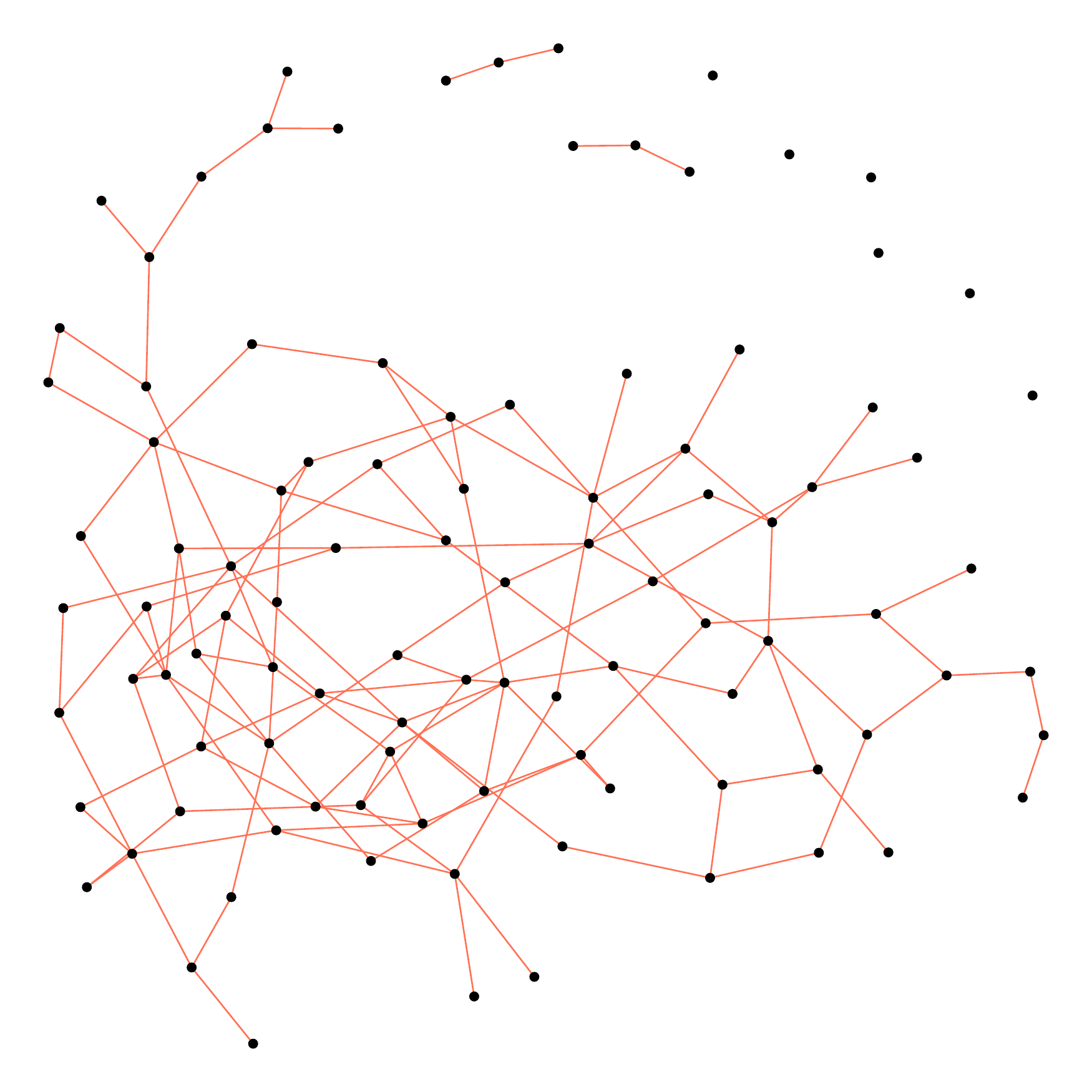}}
     \subfigure[]{\includegraphics[width=0.5\columnwidth]{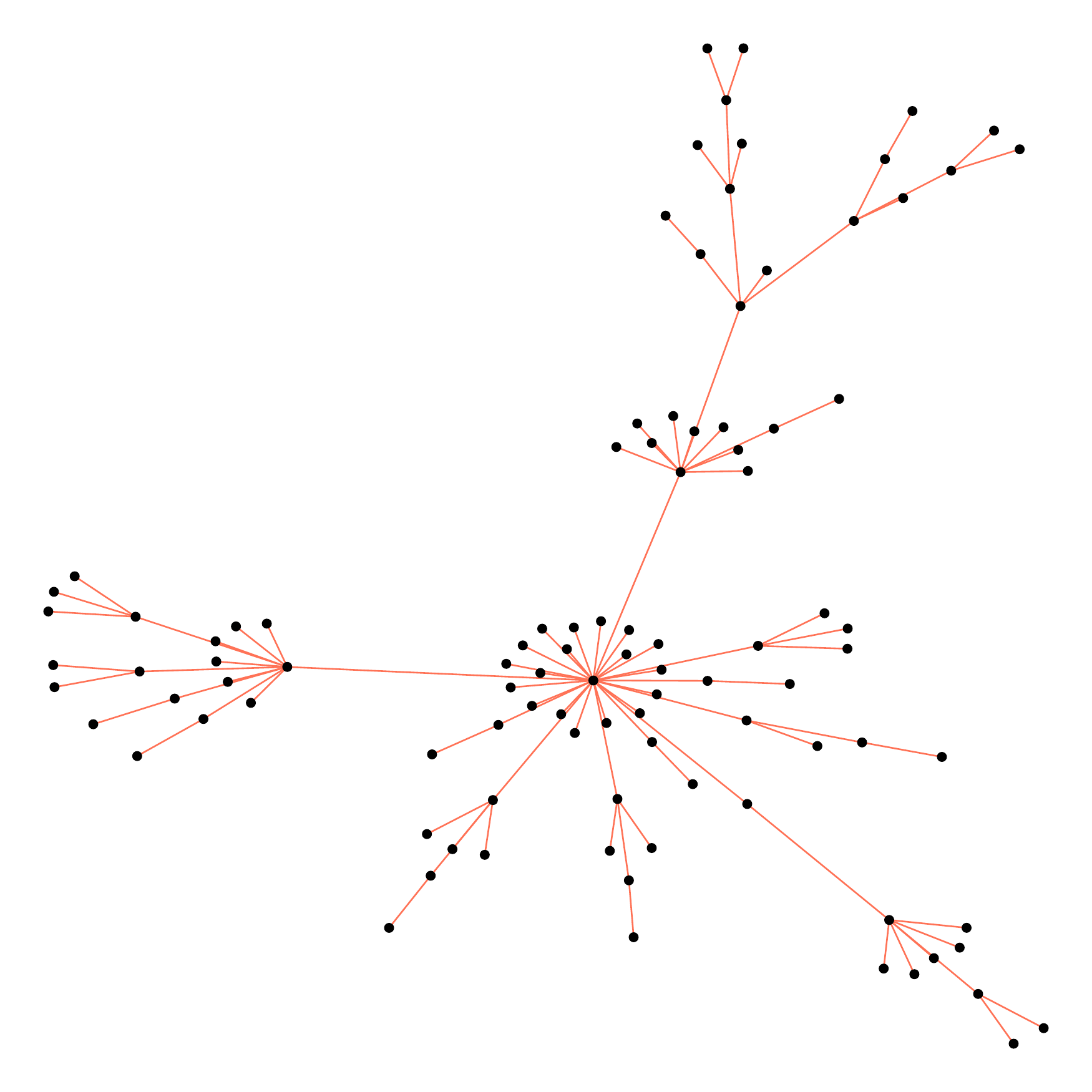}}}
     
\caption{An example of the $2$ graph types that are used in our simulations. Both graphs have $d=100$ nodes. Graph~(a) represents a random graph with $p=3/100$, and graph~(b) is a scale-free graph according to the described \textit{Barabasi-Albert Algorithm}. \label{examplegraph}} \vskip -0.2in
\end{figure*}

Precision matrices generated by these methods include values $\Theta_{ij}$ that can be either positive or negative, and vary in absolute value. The absolute values of the generated scale-free graphs tend to be smaller than those of the generated random graphs, which is demonstrated in Figure~\ref{fig:weights}. 

\subsection{Computational Details} \label{sec:compdetails}
For computing the \glasso{} calibrated via \stars{} and for computing the TIGER, we employ the R package \emph{huge} \cite{Zhao.2012}. As suggested by \citet{Liu.20120911}, we set the regularization parameter for the TIGER to $\xi=\sqrt{2}/\pi$. The scaled lasso estimator is calculated using the R package \emph{scalreg} \cite{scalreg}. To obtain the rSME and the SCIO, we employ the \emph{genscore} package \cite{genscore} and the \emph{scio} package \cite{scio}, respectively. An implementation of the sf-glasso is provided with our submission, which is computed using a reweighted $\ell_1$-method proposed by \citet{Liu.2011}. We use $C=0.5$ and $\lambda=2$ to calibrate the rSME, and $C=1.5$ and $\lambda=1.5$ to calibrate the sf-glasso using \thav{}.
If not stated differently, we kept all the default settings.
\subsection{Additional Experiments}
\label{supplement:additionalsimus}
We evaluate the graph recovery performance based on the $F_1$-score, the precision, and the recall of the estimator. These quantities are defined as follows:
\begin{align*}
    \textrm{precision}\bigl(\mathcal{E},\,  \hat{\mathcal{E}}\bigr) &:= \frac{ \vert \hat{\mathcal{E}} \cap \mathcal{E}\vert}{\vert \hat{ \mathcal{E}}\vert }\in [0,1],\\
    \textrm{recall}\bigl(\mathcal{E},\,  \hat{\mathcal{E}}\bigr) &:= \frac{ \vert \hat{\mathcal{E}} \cap \mathcal{E}\vert}{\vert \mathcal{E}\vert }\in[0,1],
\end{align*}
where $\mathcal{E},\; \hat{\mathcal{E}}$ are the edge sets obtained from the non-zero patterns of $\Theta$ and some estimate $\hat{\Theta}$, respectively.
While precision and recall behave very similar to the number of false positives and the number of false negatives, respectively, they also put the size of the edge sets into relation. The $F_1$-score puts both measures, precision and recall, into relation 
\begin{align*}
    F_1\bigl(\mathcal{E},\,  \hat{\mathcal{E}}\bigr)&:= \Biggl( \frac{ \operatorname{precision}\bigl(\mathcal{E},\,  \hat{\mathcal{E}}\bigr)^{-1} + \operatorname{recall}\bigl(\mathcal{E},\,  \hat{\mathcal{E}}\bigr)^{-1}}{2}\Biggr)^{-1} \in [0,1].
\end{align*}
In the best case, where we correctly assign all edges, that is, if $\hat{\mathcal{E}}=\mathcal{E}$, we get a $F_1$-score equal to $1$. The worst $F_1$-score is $0$, if either precision or recall is equal to $0$.

In the following, we complement the simulation study from the main text. Similar to the main paper, we average the experiments over $25$ iterations (if not stated differently) and standard deviations are presented in parenthesis.
\begin{figure*}
\vskip 0.15in
    \includegraphics[width=\columnwidth]{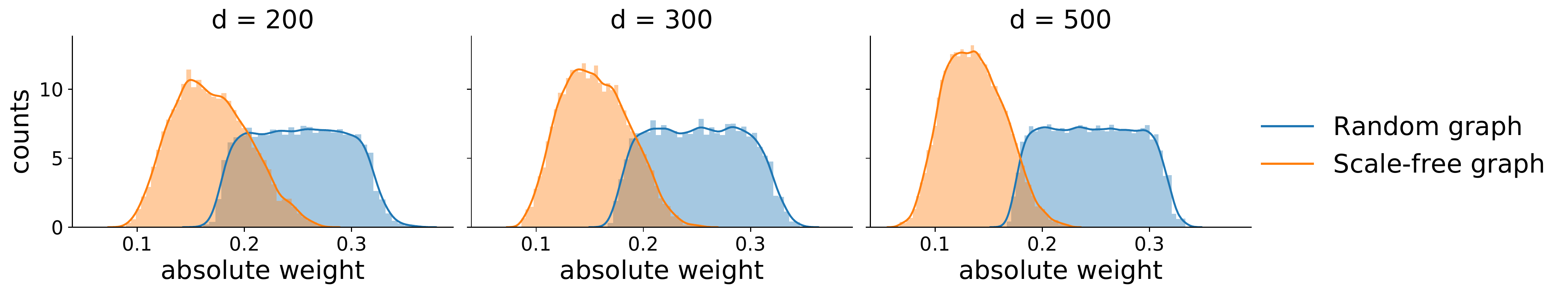}
    \caption{Histograms of the absolute valued non-zero weights $\vert \Theta_{ij}\vert$ of $50$ scale-free and $50$ random graphs generated by the procedure described in Section~\ref{sec:graphgeneration}. We considered graphs with
    $d\in \{200, 300, 500\}$ nodes.}\label{fig:weights}
\end{figure*}
\begin{table*}
    \caption{Graph recovery performance for varying graphs and sample size. The bold numbers indicate the best score in each setting.}
    \label{f1simus_supp}
    \vskip 0.15in
    \begin{center}
    \begin{small}
    \begin{sc}
    \begin{tabular}{lcccccccc}
        \toprule
         && \multicolumn{3}{c}{random} & \phantom{abc} & \multicolumn{3}{c}{scale-free}\\
          \cmidrule{3-5} \cmidrule{7-9} \\
         && $F_1$ & Precision & Recall & & $F_1$ & Precision & Recall\\
        \midrule
        $n=300,\, d=100$ &&&&&&&&\\
        oracle && 0.67 (0.11) & 0.55 (0.14) & 0.88 (0.06) &&  0.41 (0.11) & 0.46 (0.28) & 0.67 (0.30) \\ 
        StARS && 0.57 (0.12) & 0.42 (0.10) & 0.91 (0.07) && 0.32 (0.11) & 0.21 (0.09) & 0.75 (0.10) \\ 
        scaled lasso && 0.70 (0.03) & 0.54 (0.03) & 0.99 (0.01) && 0.48 (0.04) & 0.32 (0.04) & $\bs{0.97}$ (0.03) \\ 
        TIGER && 0.54 (0.08) & 0.37 (0.08) & 0.98 (0.01) && 0.40 (0.07) & 0.25 (0.06) & $\bs{0.97}$ (0.03) \\ 
        rSME (eBIC) && 0.41 (0.17) & 0.27 (0.13) & $\bs{1.00}$ (0.00) &&  0.37 (0.18) & 0.24 (0.13) & 0.96 (0.04)  \\ 
        scio (CV) && 0.46 (0.45) & 0.46 (0.45) & 0.46 (0.46)  &&   0.25 (0.29) & 0.45 (0.48) & 0.19 (0.24) \\ 
        scio (Bregman) && 0.19 (0.08) & 0.13 (0.15) & 0.98 (0.11) && 0.31 (0.23) & 0.34 (0.32) & 0.77 (0.28) \\ 
        thAV && $\bs{0.90}$ (0.06) & $\bs{0.87}$ (0.08) & 0.93 (0.05) &&  $\bs{0.66}$ (0.10) & $\bs{0.53}$ (0.13) & 0.91 (0.05)  \\ 
        \midrule
        $n=200, \, d=200$ &&&&&&&&\\
        oracle && 0.69 (0.11) & 0.61 (0.16) & 0.83 (0.03) &&  0.35 (0.08) & 0.34 (0.19) & 0.54 (0.19) \\ 
        StARS && 0.56 (0.11) & 0.41 (0.11) & 0.93 (0.03) &&  0.30 (0.08) & 0.20 (0.07) & 0.62 (0.11) \\ 
        scaled lasso && 0.66 (0.03) & 0.50 (0.03) & 0.96 (0.02) &&  0.37 (0.05) & 0.25 (0.04) & 0.71 (0.09) \\ 
        TIGER && 0.48 (0.08) & 0.32 (0.07) & $\bs{0.97}$ (0.01) &&  0.32 (0.07) & 0.20 (0.06) & $\bs{0.76}$ (0.09) \\ 
        rSME (eBIC) && 0.60 (0.30) & 0.51 (0.27) & 0.90 (0.19) &&  $\bs{0.43}$ (0.17) & 0.48 (0.23) & 0.55 (0.19) \\ 
        scio (CV) && 0.12 (0.25) & 0.22 (0.41) & 0.09 (0.21) &&  0.04 (0.10) & 0.16 (0.37) & 0.03 (0.06) \\ 
        scio (Bregman) && 0.45 (0.28) & 0.43 (0.36) & 0.87 (0.18) && 0.37 (0.12) & $\bs{0.87}$ (0.08) & 0.26 (0.14) \\ 
        thAV && $\bs{0.84}$ (0.07) & $\bs{0.79}$ (0.12) & 0.91 (0.06)  &&  0.39 (0.15) & 0.31 (0.19) & 0.69 (0.10) \\ 
        \midrule
        $n=300, \, d=300$ &&&&&&&&\\
        oracle && 0.76 (0.10) & 0.69 (0.14) & 0.87 (0.03) && 0.30 (0.08) & 0.29 (0.24) & 0.60 (0.20) \\ 
        StARS && 0.61 (0.09) & 0.45 (0.10) & 0.96 (0.03) &&  0.23 (0.09) & 0.15 (0.07) & 0.56 (0.10) \\ 
        scaled lasso && 0.68 (0.02) & 0.52 (0.02) & 0.98 (0.01) &&  0.34 (0.05) & 0.22 (0.03) & 0.73 (0.09) \\ 
        TIGER && 0.44 (0.10) & 0.29 (0.08) & $\bs{0.99}$ (0.01) &&  0.27 (0.07) & 0.17 (0.06) & $\bs{0.78}$ (0.11) \\ 
        rSME (eBIC) && 0.72 (0.11) & 0.58 (0.13) & 0.97 (0.02) && 0.32 (0.21) & 0.32 (0.23) & 0.63 (0.23) \\ 
        scio (CV) && 0.25 (0.39) & 0.35 (0.47) & 0.23 (0.36) &&  0.09 (0.13) & 0.36 (0.49) & 0.05 (0.07) \\ 
        scio (Bregman) && 0.38 (0.33) & 0.31 (0.33) & 0.98 (0.05) && 0.30 (0.06) & $\bs{0.92}$ (0.09) & 0.18 (0.04) \\ 
        thAV && $\bs{0.91}$ (0.04) & $\bs{0.90}$ (0.08) & 0.94 (0.03) &&  $\bs{0.42}$ (0.14) & 0.36 (0.18) & 0.62 (0.13) \\ 
        \bottomrule
    \end{tabular}
    \end{sc}
    \end{small}
    \end{center}
    \vskip -0.1in
\end{table*}
\paragraph{Performance in $F_1$-score}
We extend the simulations from the main paper to further graphs and number of samples, and compare the $F_1$-score, precision, and recall of the oracle, \thav{}, \stars{}, scaled lasso, TIGER, rSME using eBIC, SCIO using CV, and SCIO using a Bregman criterion. The results of the experiments in Table~\ref{f1simus_supp} are in accordance with the results from the main paper: the \thav{} estimator is superior to the other methods in $F_1$-score (in every but one setting) and achieves often a high precision. Again, the results for recovering a scale-free graph are worse than the results for recovering a random graph.
Figures~\ref{fig:exarecovery1}~--~\ref{fig:exarecovery6} show examples\footnote{We depict the true graph, the oracle \glasso, the \thav{}, and the $3$ best estimators among the remaining methods (ranked by $F_1$-score).} of the graph recovery performance of various methods in different settings. We can clearly observe that \thav{} returns a much sparser and more interpretable graph than the other methods. In the case $d=300 \text{ and } n=200$, no method is able to recover the graph structure of a scale-free graph, see Figure~\ref{fig:exarecovery5}.

\begin{table}[t]
    \caption{Similarity $F_1(\hat{\Theta}_{C^\prime}^t, \hat{\Theta}_{C^{\prime \prime}}^t)$ between \thav{} solutions using different $C$ ($C^\prime$ and $C^{\prime \prime}$) for a random graph.}
    \label{table:stabc}
    \begin{center}
    \begin{small}
    \begin{sc}
    \subfigure[$n=200, \, d=300$. The performance scores $F_1(\Theta, \hat{\Theta}^t_C)$ are $0.75\, (0.06), \, 0.83 \, (0.05) , \,  0.80 \, (0.08), \, 0.75 \, (0.15)$ for \mbox{$C\in\{0.5, \, 0.6, \, 0.7, \, 0.8\}$}, respectively.]{
    \begin{tabular}{lcccc}
        \toprule
        $C$ & & $0.6$ & $0.7$ & $0.8$ \\
        \midrule
        0.5 && 0.84 (0.05) & 0.68 (0.12) & 0.51 (0.21) \\ 
0.6 && 1 & 0.85 (0.09) & 0.67 (0.18) \\ 
0.7 &&  - & 1 & 0.79 (0.16) \\ 
0.8 && - & - & 1 \\ 
        \bottomrule
    \end{tabular}
    }
    \hspace{0.2in}
    \subfigure[$n=400,\, d=200$. The performance scores $F_1(\Theta, \hat{\Theta}^t_C)$ are $0.81 \, (0.04),\, 0.91 \, (0.04),\, 0.94 \, (0.02), \, 0.94\, (0.03)$ for \mbox{$C\in\{0.5, \, 0.6, \, 0.7, \, 0.8\}$}, respectively.]{
    \begin{tabular}{lcccc}
        \toprule
        $C$ & & $0.6$ & $0.7$ & $0.8$ \\
        \midrule
        0.5 && 0.87 (0.06) & 0.81 (0.07) & 0.80 (0.09) \\ 
0.6 && 1 & 0.95 (0.03) & 0.91 (0.05) \\ 
0.7 && - & 1 & 0.94 (0.06) \\ 
0.8 && - & - & 1 \\ 
        \bottomrule
    \end{tabular}
    }
    \end{sc}
    \end{small}
    \end{center}
    \vskip -0.1in
\end{table}

\paragraph{Dependence on $C$}
In this section we show further results demonstrating that the performance of the \thav{} estimator does not depend decisively on the specific value of $C$. Supplementing the results from Table~3 in the main paper, we investigate the difference in $F_1$-score between \thav{} estimators using different choices for $C$ (see Table~\ref{table:stabc}). We obtain a decent similarity among all estimates and, importantly, the overall $F_1(\Theta, \hat{\Theta})$ is always very high so that \thav{} outperforms competing methods for any choice of $C$\footnote{(as can be seen by comparing the results with the experiments depicted in Table~1 in the main paper)}. Figures~\ref{fig:exadifferentC}~--~\ref{fig:exadifferentClast} show some examples of recovered graphs employing the \thav{} with different $C$. We observe that they look all very similar and we obtain a good $F_1$-score, independently of the selected $C$.
 
Figure~\ref{bestthreshold_supp} supplements the investigations in Figure~1 from the main text. Again, we can observe that the (unthresholded) \av{} estimator is heavily dependent on the chosen $C$. However after thresholding, all estimators, except of the scale-free graph with $d=300$ using $n=200$ samples, reach a very similar performance plateau. An evident question would be, if it is then even necessary to apply regularization via $r$ instead of employing solely thresholding on the unregularized solution. But note that the unregularized \glasso{}, which is the maximum likelihood estimator, only exists in the setting $d<n$. This is a large restriction in the high-dimensional case. Secondly, we cannot obtain similarly good results for the unregularized thresholded estimator for any threshold, as can be seen in Figure~\ref{fig:mlethresholding}. Hence, we claim that it is necessary to combine both regularizations, regularization via regularized optimization and also regularization via thresholding, to obtain good graph recovery results. 

\subsection{Calibrating other methods via thAV} \label{sec:otheravmethods}
\begin{table*}[t]
    \caption{Graph recovery performance for varying graphs and sample size. The bold numbers indicate the best score in each setting.}
    \label{f1_otherav}
    \vskip 0.15in
    \begin{center}
    \begin{small}
    \begin{sc}
    \begin{tabular}{lcccccccc}
        \toprule
         && \multicolumn{3}{c}{random} & \phantom{abc} & \multicolumn{3}{c}{scale-free}\\
          \cmidrule{3-5} \cmidrule{7-9} \\
         && $F_1$ & Precision & Recall & & $F_1$ & Precision & Recall\\
        \midrule
        $n=300,\, d=200$ &&&&&&&&\\
        rSME  && $\bs{0.90}$ (0.03) & $\bs{0.93}$ (0.04) & $\bs{0.87}$ (0.07) &&  $\bs{0.54}$ (0.13) & 0.49 (0.19) & $\bs{0.66}$ (0.14) \\ 
        sf-glasso  && 0.70 (0.24) & 0.84 (0.18) & 0.73 (0.31) && 0.52 (0.18) & $\bs{0.82}$ (0.17) & 0.45 (0.22) \\
        \midrule
        $n=200, \, d=300$ &&&&&&&&\\
        rSME && $\bs{0.81}$ (0.06) & $\bs{0.82}$ (0.14) & $\bs{0.83}$ (0.10) &&  0.27 (0.09) & 0.19 (0.07) & $\bs{0.51}$ (0.15) \\ 
        sf-glasso  && 0.61 (0.19) & 0.73 (0.16) & 0.63 (0.25) && $\bs{0.31}$ (0.12) & $\bs{0.77}$ (0.24) & 0.24 (0.17) \\
        \midrule
        $n=400, \, d=200$ &&&&&&&&\\
       rSME && $\bs{0.90}$ (0.08) & $\bs{0.96}$ (0.04) & $\bs{0.86}$ (0.12) && $\bs{0.65}$ (0.14) & 0.63 (0.17) & $\bs{0.70}$ (0.15) \\ 
        sf-glasso  && 0.69 (0.27) & 0.79 (0.27) & 0.79 (0.33) && 0.62 (0.17) & $\bs{0.92}$ (0.07) & 0.49 (0.18) \\ 
        \bottomrule
    \end{tabular}
    \end{sc}
    \end{small}
    \end{center}
    \vskip -0.1in
\end{table*}
In Section~\ref{sec:general} we have derived a very general foundation for the calibration of regularized optimization problems. In the following empirical study, we tune $2$ other optimization problems for graphical modeling employing the proposed methodology. First, we use the \thav{} technique to calibrate the regularized score matching estimator (rSME) \cite{rsme}, which can be employed to recover the conditional dependency structure of a pairwise interaction model. That is, we assume $\bs{z}$ to have a log-probability density according to
\begin{equation} \label{pairwise_interaction_model}
    \log p(\bs{z}) = \sum_{1 \leq i \leq j \leq n} \Theta_{ij} t_{ij}(z_i, z_j) - \Psi(\Theta) + b(\bs{z})
\end{equation}
for sufficient statistics $t_{ij}$, $\Psi$ is a function depending on $\Theta$, and $b$ is the base measure.
The model class defined by \eqref{pairwise_interaction_model} includes the Gaussian model, but is much broader.
Applying the Hammersley-Clifford theorem, we can derive that similarly to the Gaussian setting it is
\begin{equation*}
    z_i \perp z_j \vert \bs{z}_{\backslash \{i,j\}} \quad \Leftrightarrow \quad \Theta_{ij} = 0.
\end{equation*}
The rSME is motivated by the score matching loss \cite{hyvarinen2005estimation} and can be reformulated as a the solution of the following optimization problem: 
\begin{equation} \label{rsme_optproblem}
    \hat{\Theta}_r = \argmin_{\Theta \in \operatorname{Sym}} \frac{1}{2} \operatorname{vec}\bigl(\Theta\bigr)^\top \Gamma(\bs{z}) \operatorname{vec} \bigl( \Theta\bigr) + g(\bs{z})^\top \operatorname{vec}\bigl(\Theta\bigr) + c(\bs{z}) + r \Vert \Theta \Vert_1,
\end{equation}
where $\operatorname{Sym}$ is the set of symmetric matrices in $\mathbb{R}^{D \times D}$, $\operatorname{vec}(\Theta) \in \mathbb{R}^{D^2}$ defines the columnwise stacked version of $\Theta$, $\Gamma$ is a symmetric $D^2 \times D^2$ block-diagonal matrix with blocks of size $D \times D$, and $g, c$ are functions mapping $\bs{z}$ to $\mathbb{R}^{D^2}$. For details, we refer to Lemma 2 by \citet{rsme}. Clearly, \eqref{rsme_optproblem} is of the same form as \eqref{generalsetting} and Theorem~1 in \citet{rsme} justifies\footnote{Using $\mathcal{T}_r:= \bigl\{ \frac{3(2-\alpha)}{\alpha} \max \{ c_\Theta, \varepsilon_1, \varepsilon_2 \} < r \bigr\}$ and $k(r) := \frac{c_\Gamma}{2-\alpha} r$.} Assumption~\ref{assumptiongeneral}.
Hence, we can derive similar results for the rSME tuned via the AV technique and threshold similarly to the \thav{} \glasso{}. 

The second type of estimators that we tune via AV is the \glasso{} equipped with a power law regularization\footnote{We set $\beta=0$ in the definition of the regularization \cite{Liu.2011}.} \cite{Liu.2011}
\begin{equation}
    \label{powerlawregularization_glasso}
    \hat{\Theta}_r \argmin_{\Omega \in \mathcal{S}_d^+} \biggl\{
        \tr \Bigl[ \frac{1}{n} \sum_{i=1} \bigl( \bs{z}^{(i)}\bigr)^\top \bs{z}^{(i)}\Omega \Bigl] - \log[ \det[\Omega]] + r \sum_{j=1}^D \log( \Vert \Omega_{i \cdot} \Vert_1)
    \biggr\},
\end{equation}
where $\Omega_{i \cdot}$ is the $i$th row of $\Omega$. This estimator is designed to learn scale-free graphs in a Gaussian graphical model. In comparison to the rSME, there is no theory about \eqref{powerlawregularization_glasso} that justifies Assumption~\ref{assumptiongeneral}. Nonetheless, we applied our calibration scheme for the \glasso{} with power law regularization (sf-glasso).

The results of the experiments for both estimators are shown in Table~\ref{f1_otherav} (see Section~\ref{sec:compdetails} for computational details). Due to the thresholding, both methods are able to find a good balance between precision and recall.
Surprisingly, even though the rSME can be applied to a much broader setting, i.e. in a pairwise interaction model~\eqref{pairwise_interaction_model}, it performs comparably to the \thav{} \glasso{} if tuned via the \thav{} technique. Hence, \thav{} rSME might be a promising candidate for an application of the general \thav{} scheme to the non-Gaussian setting.
The \thav{} sf-glasso performs weaker and is less stable. But remarkably, sf-glasso achieves a good precision in recovering a scale-free graph, while maintaining a decent recall.
Figure~\ref{fig:otherav_rand} and Figure~\ref{fig:otherav_sf} show some recovered graphs of \thav{} rSME and \thav{} sf-glasso.

\section{Supplement of the Applications}
We present the \thav{} estimator with threshold $t=C\hat{r}$, applied to the amgut2.filt.phy data from Section~4 in the main paper, in Figure~\ref{fig:gutthav2}. We decided to lower the threshold ($t=0.5C\hat{r}$) in the main paper, because the graph does not include information about interactions across different classes of microbes. But, as already mentioned in the main paper, Corollary~$4$ from the main paper is valid for any $\lambda \in (0,3]$. 

\begin{figure*}
 \centerline{
     \subfigure[]{  \includegraphics[width=0.45\columnwidth, height=0.37\columnwidth ]{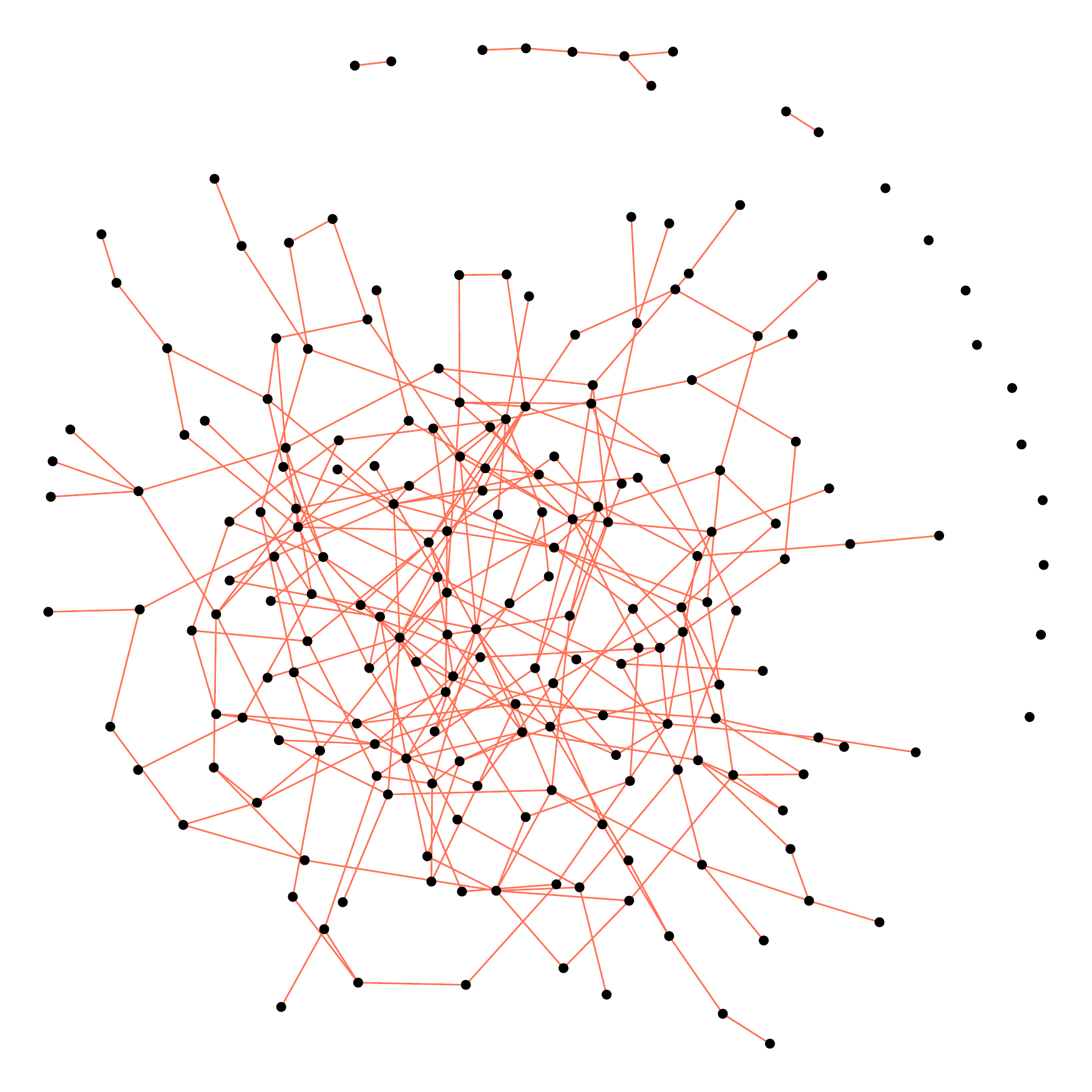}}
     \subfigure[]{  \includegraphics[width=0.45\columnwidth, height=0.37\columnwidth]{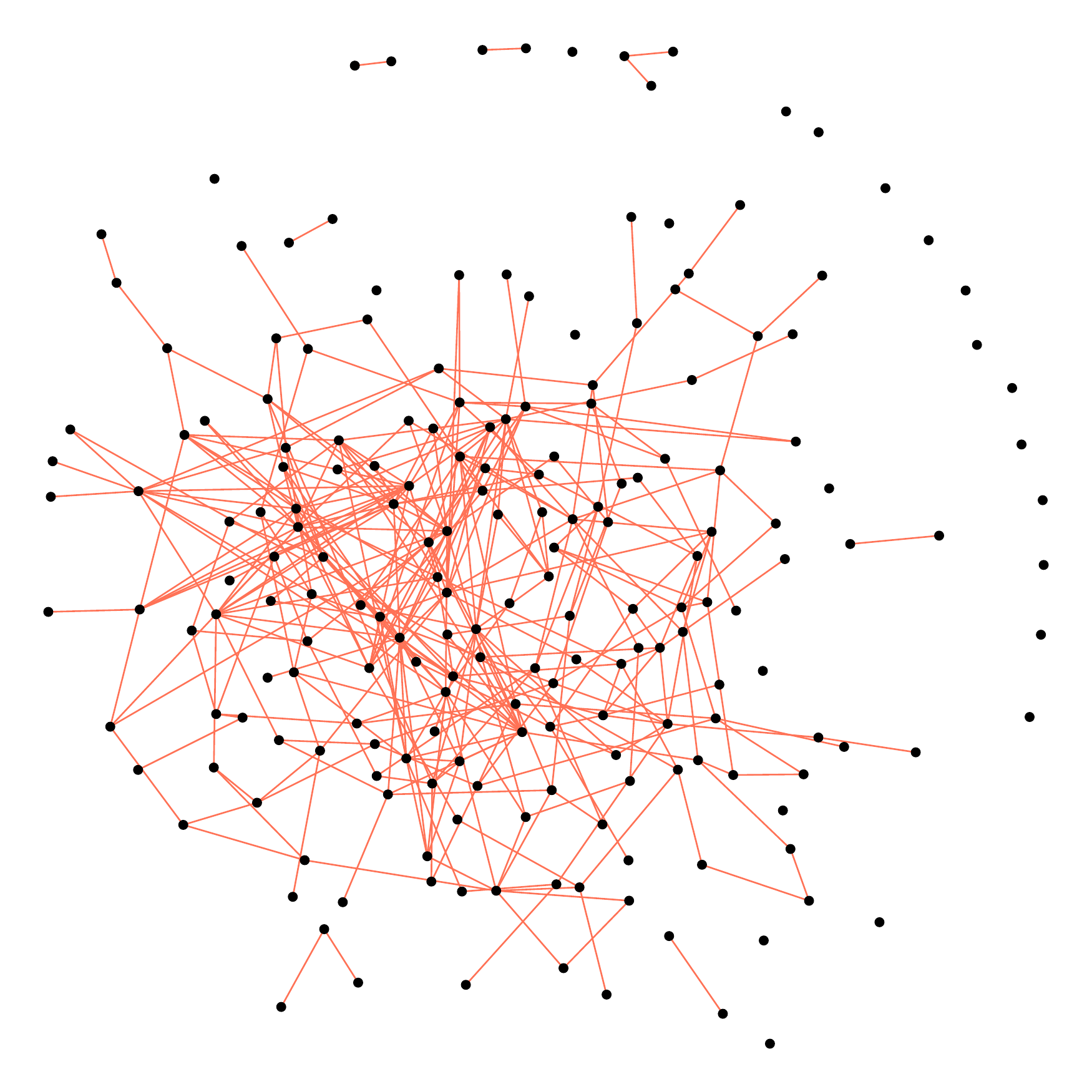} }}
 \centerline{    
     \subfigure[]{  \includegraphics[width=0.45\columnwidth, height=0.37\columnwidth]{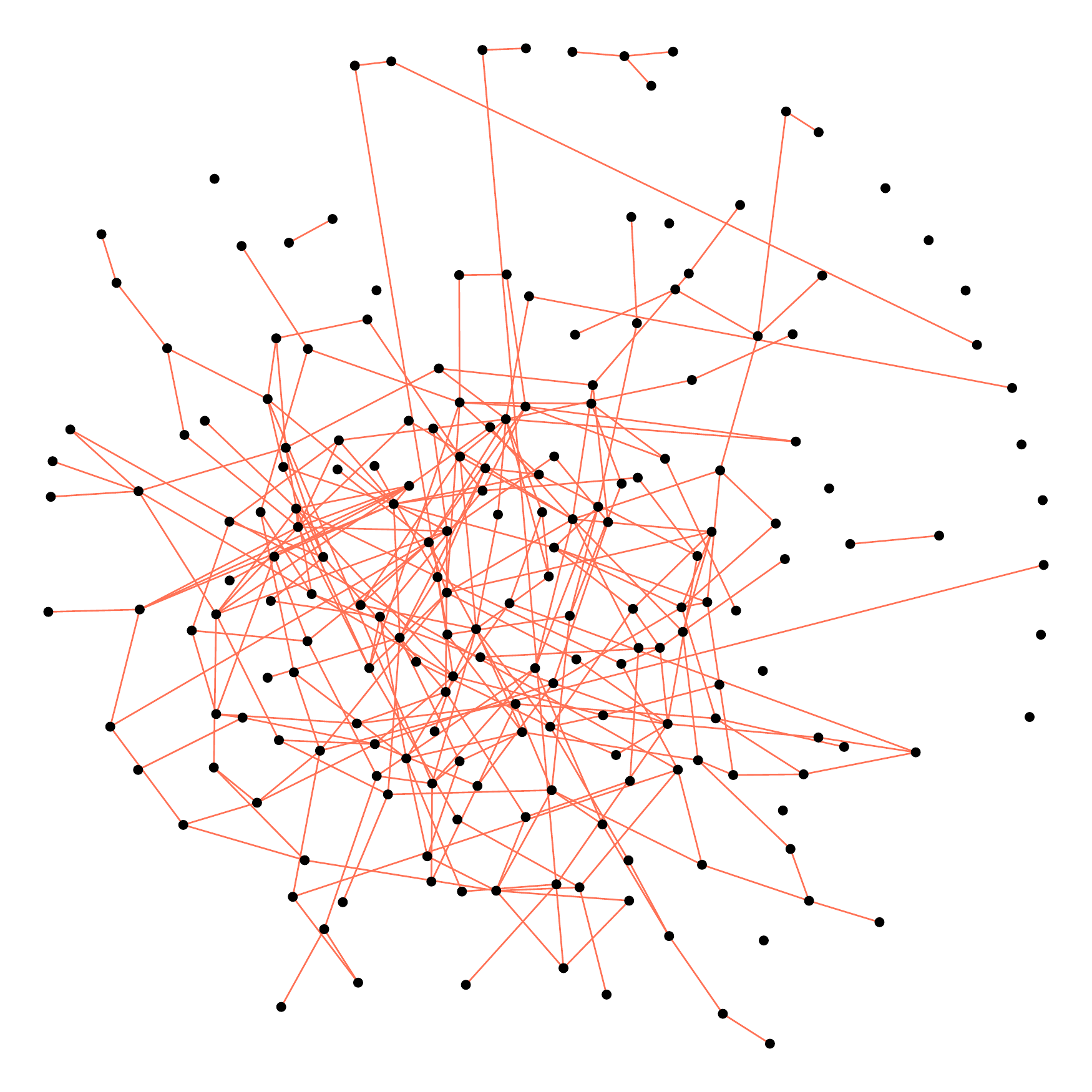}}
     \subfigure[]{  \includegraphics[width=0.45\columnwidth, height=0.37\columnwidth]{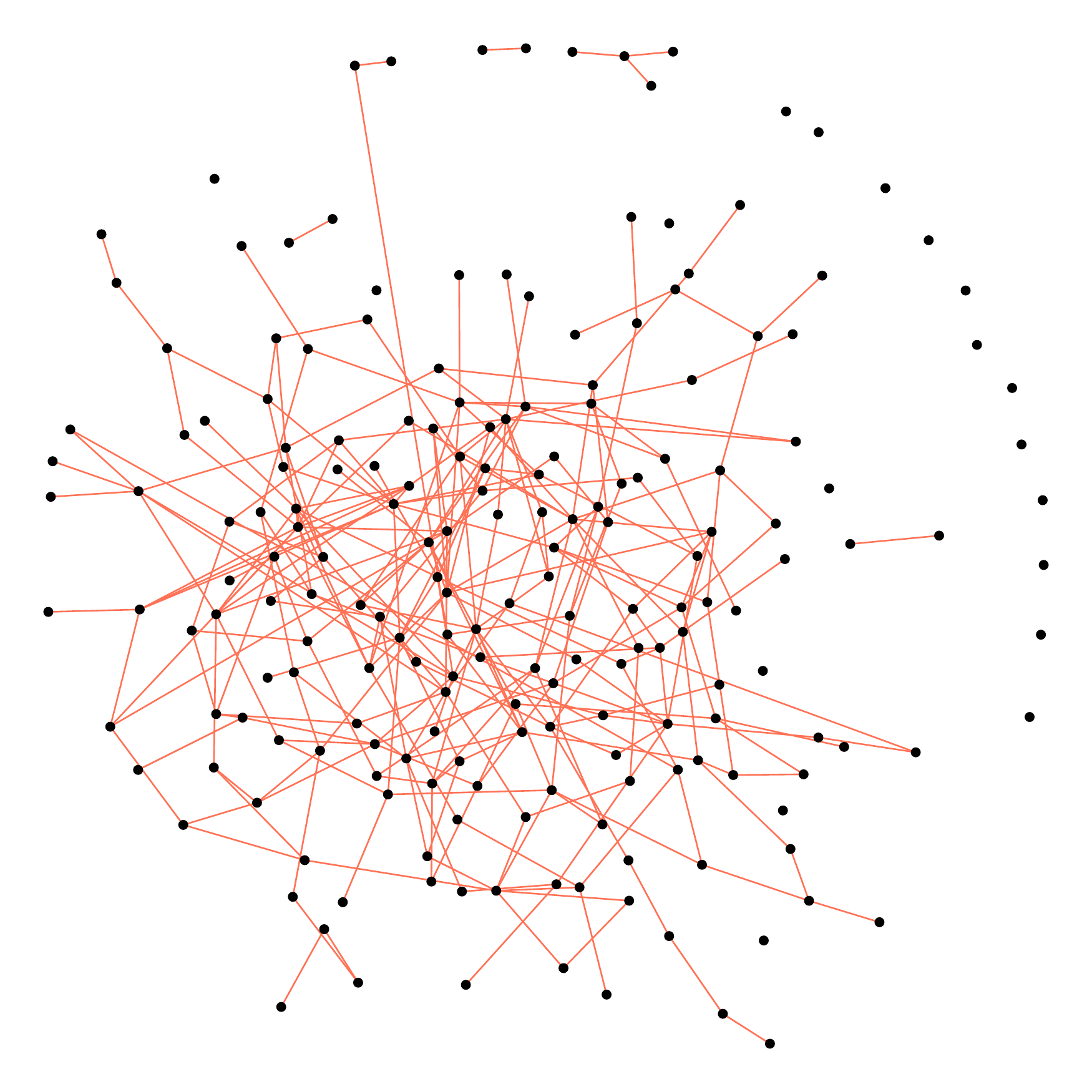}}}
 \centerline{
     \subfigure[]{  \includegraphics[width=0.45\columnwidth, height=0.37\columnwidth]{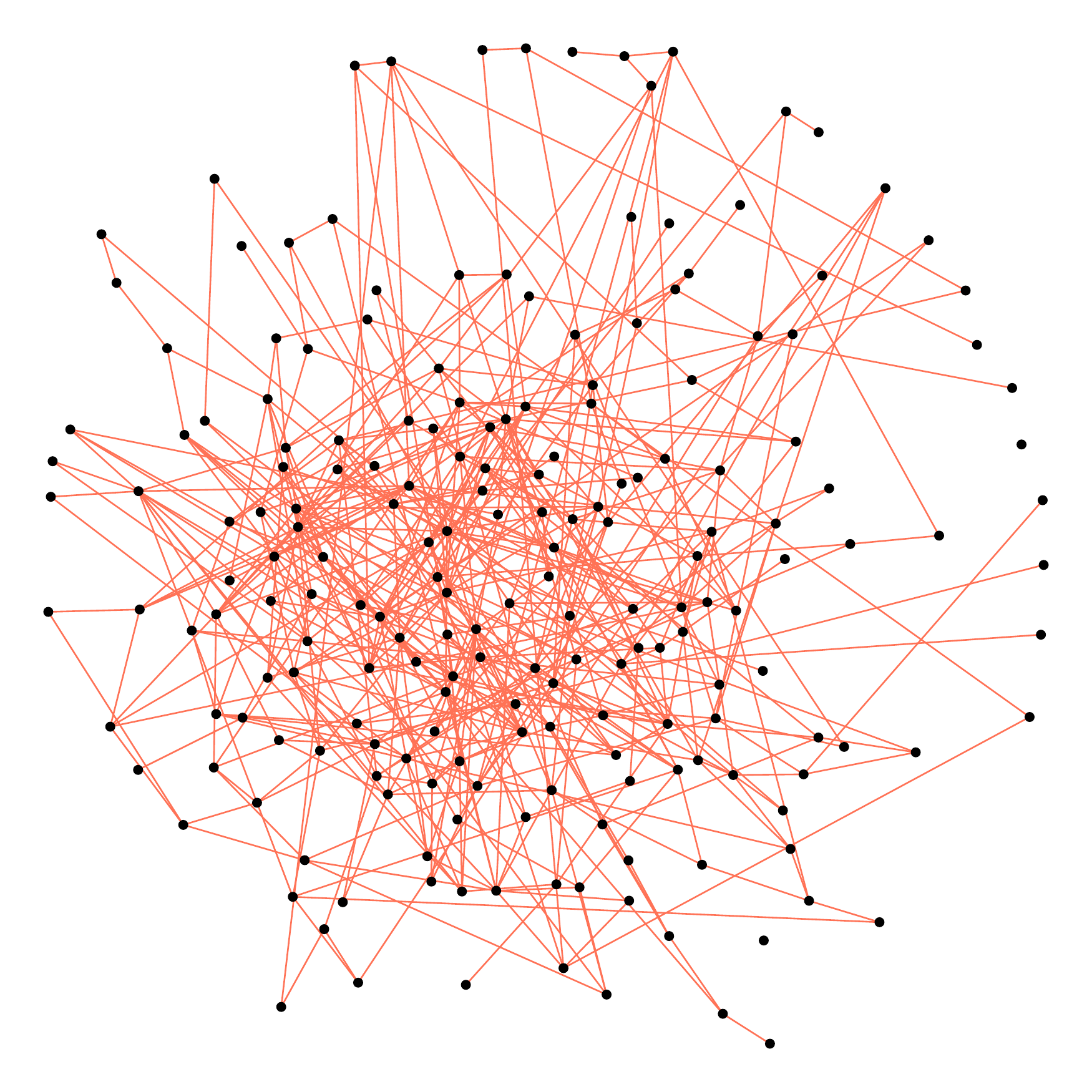}}
     \subfigure[]{  \includegraphics[width=0.45\columnwidth, height=0.37\columnwidth]{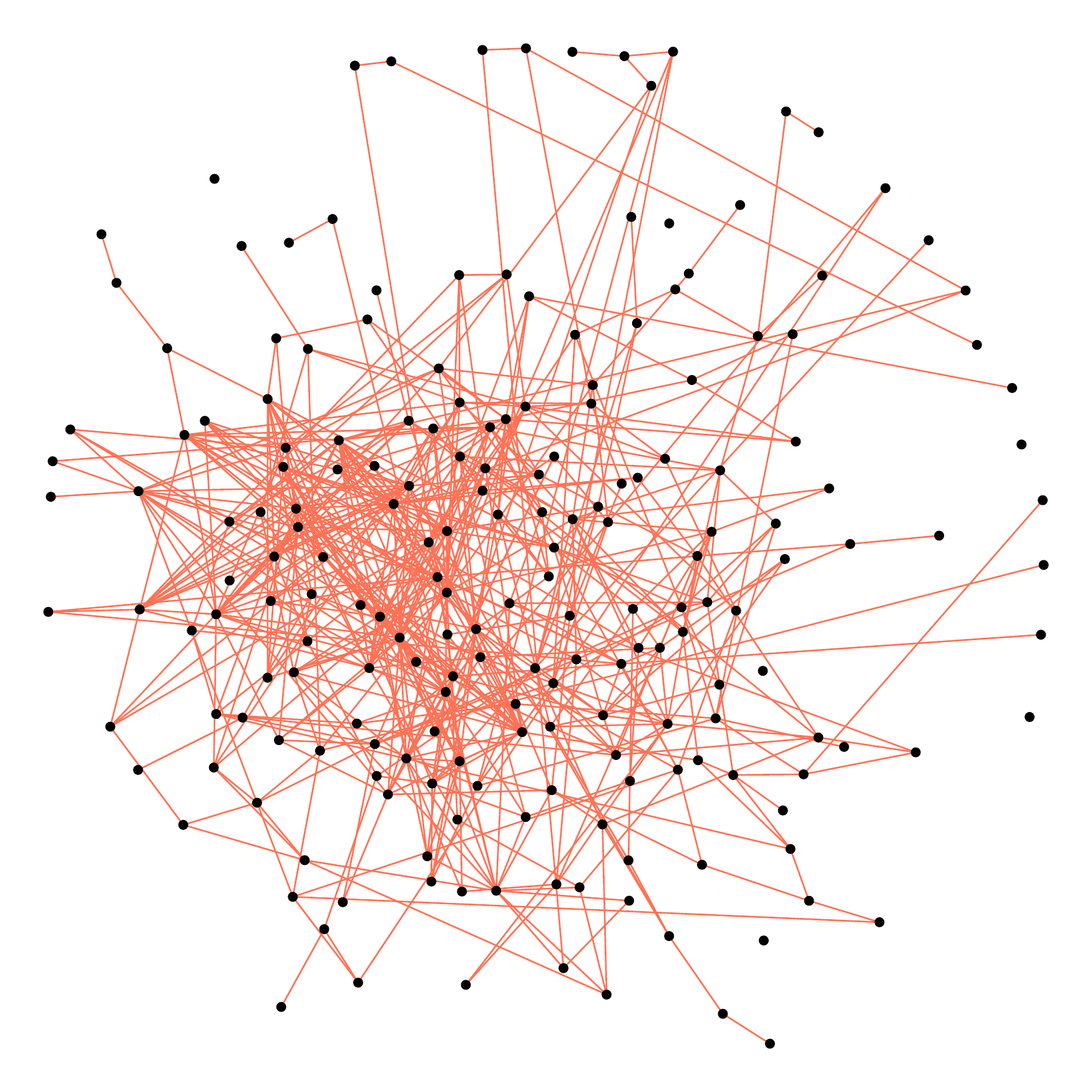}}}
     \caption{Examples of graph recovery for a random graph of size $d=200$ and $n=300$ samples,
     where (a) depicts the true graph, 
     (b) is the \oracle{}  ($F_1=0.79$), 
     (c) is the \thav{} ($F_1=0.89$), 
     (d) is the SCIO (CV) ($F_1=0.89$), 
     (e) is the rSME ($F_1=0.72$), and 
     (f) is the StARS ($F_1=0.65$).}
     \label{fig:exarecovery1}
 \end{figure*}

 \begin{figure*}
 \centerline{
     \subfigure[]{  \includegraphics[width=0.45\columnwidth, height=0.37\columnwidth]{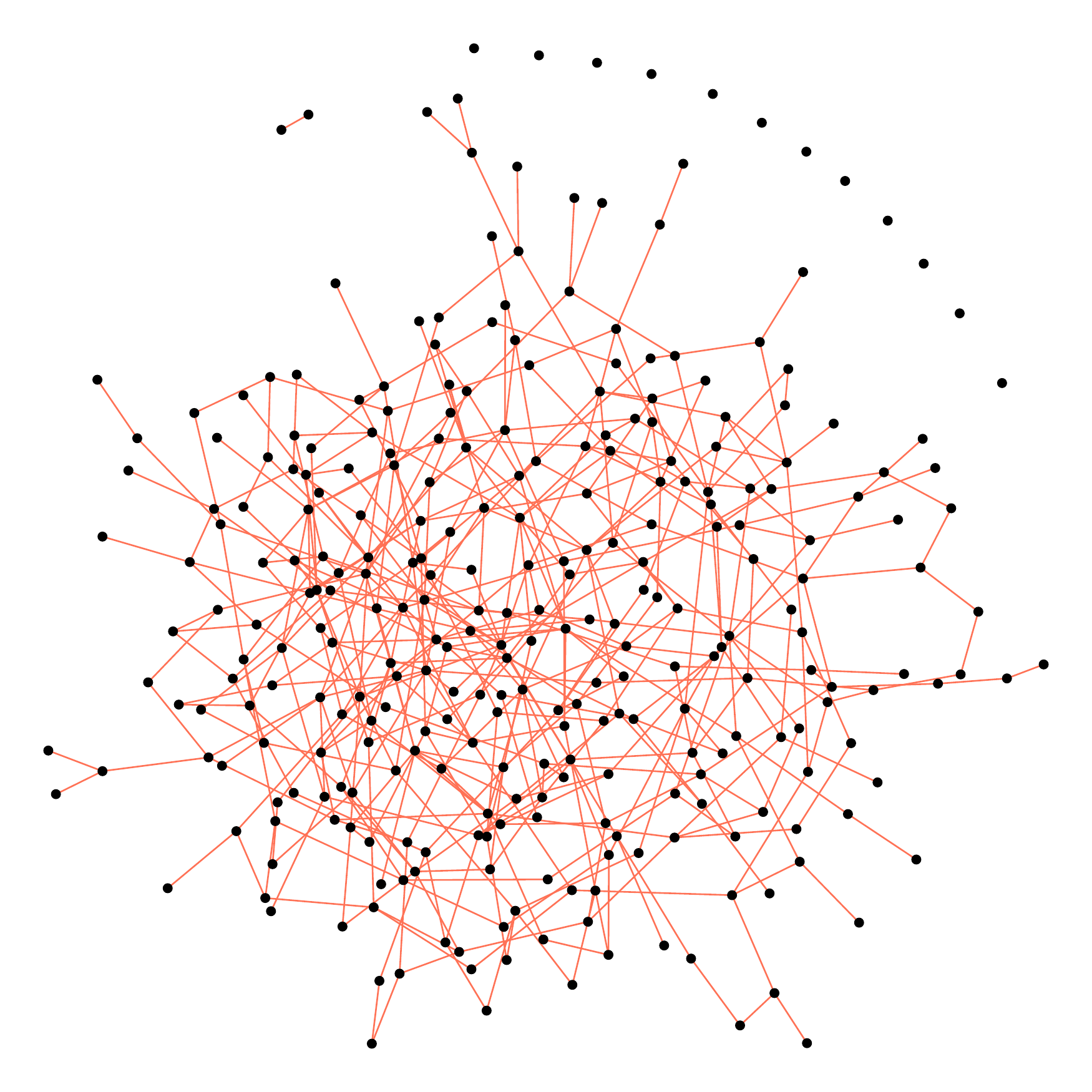}}
     \subfigure[]{  \includegraphics[width=0.45\columnwidth, height=0.37\columnwidth]{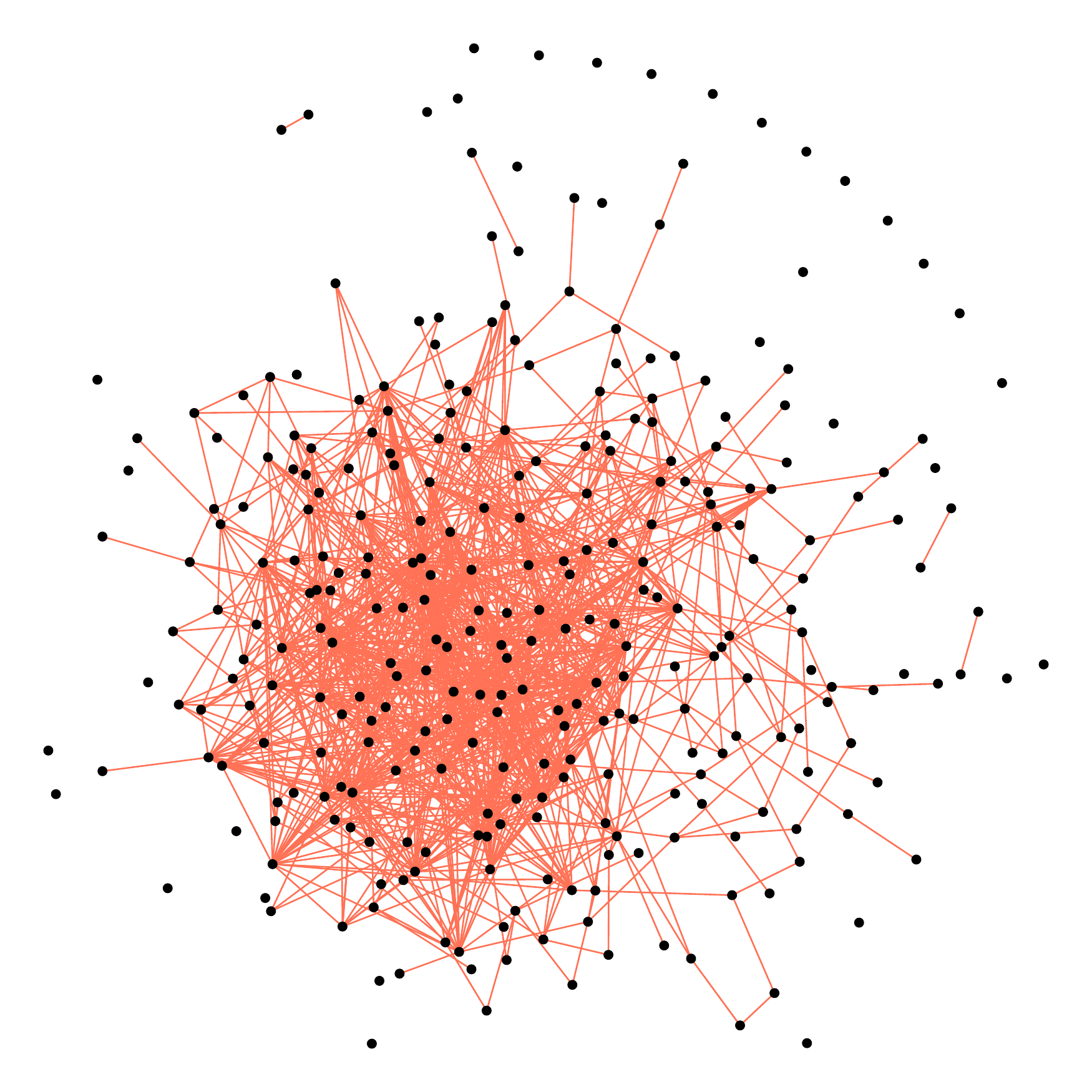}}}
 \centerline{    
     \subfigure[]{  \includegraphics[width=0.45\columnwidth, height=0.37\columnwidth]{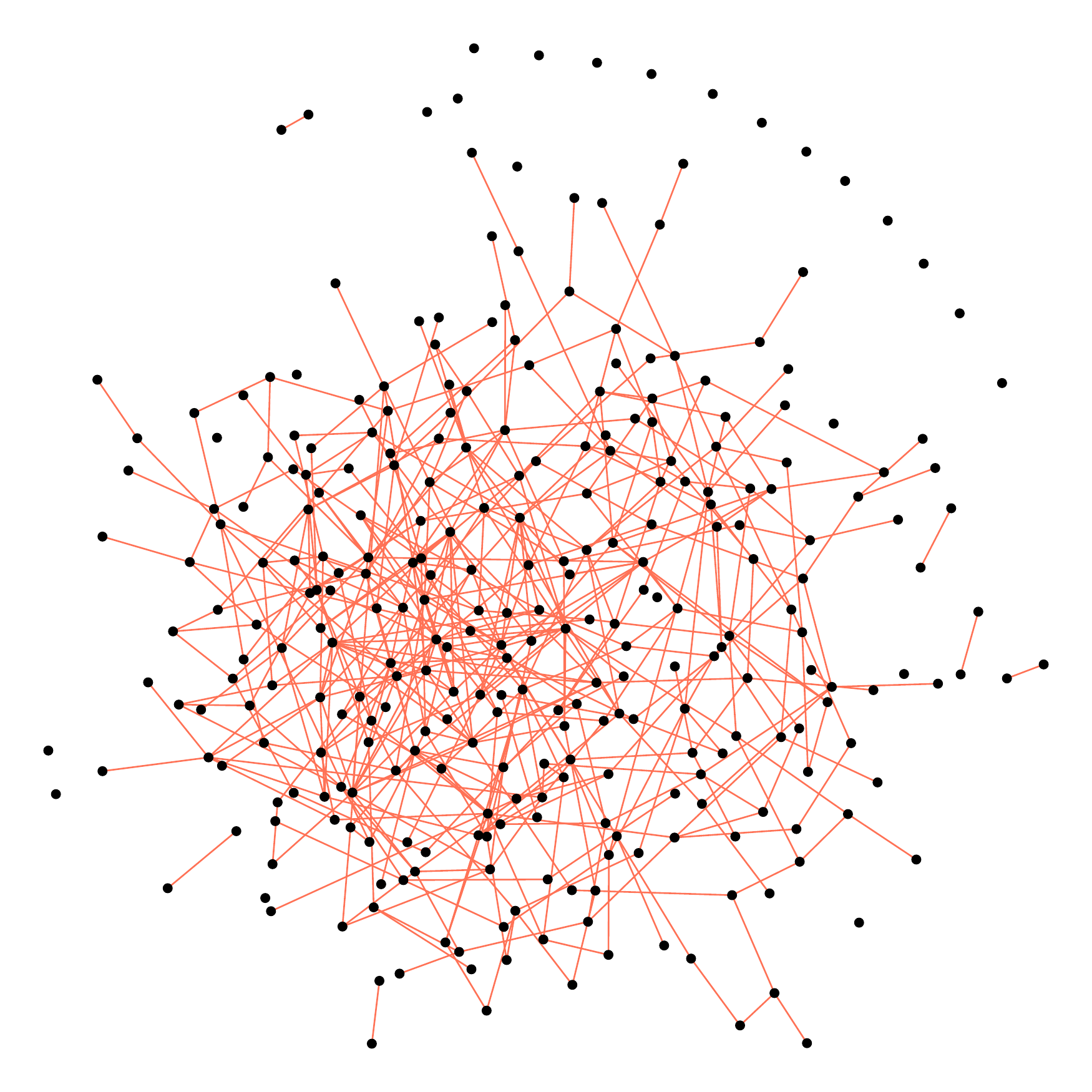}}
     \subfigure[]{  \includegraphics[width=0.45\columnwidth, height=0.37\columnwidth]{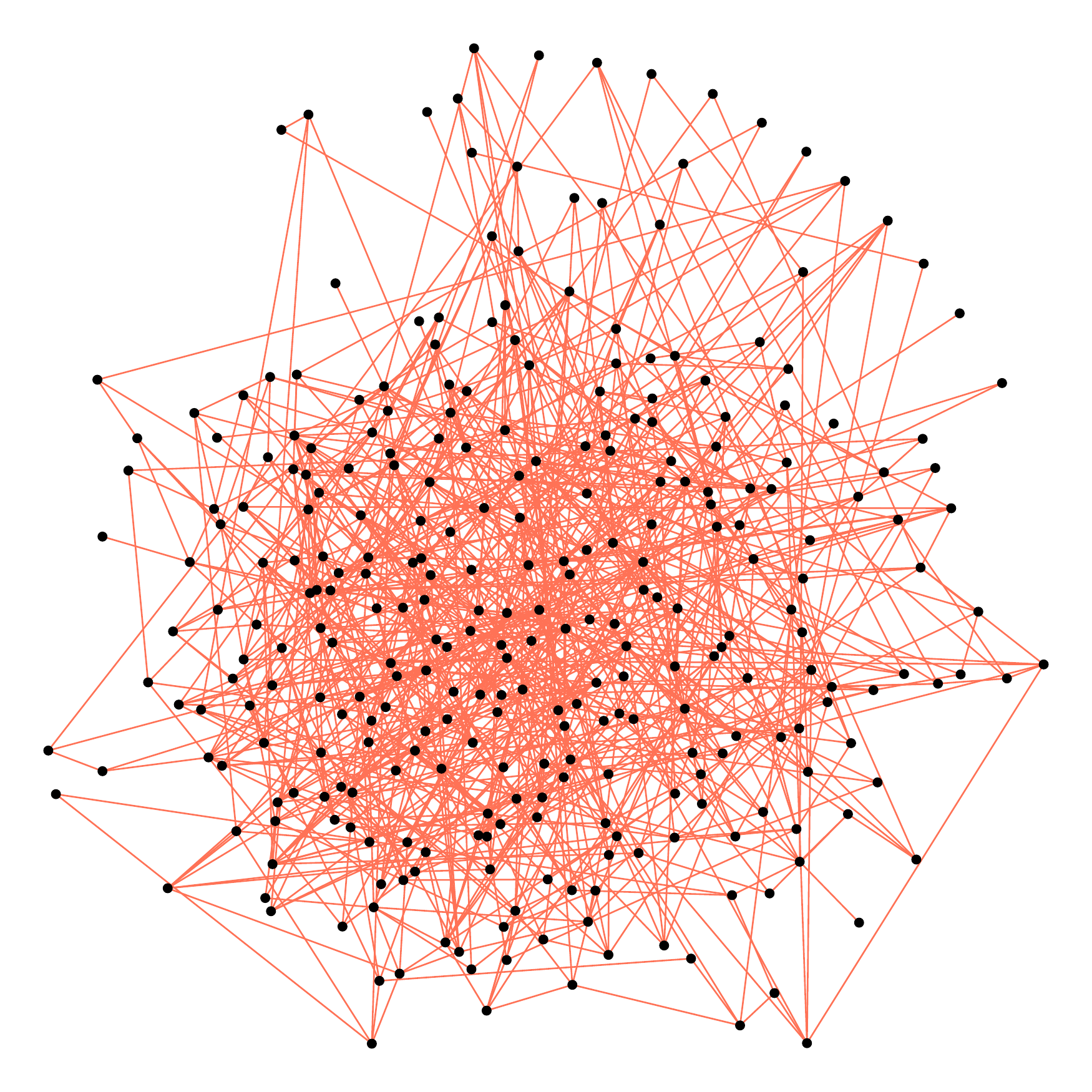}}}
 \centerline{
     \subfigure[]{  \includegraphics[width=0.45\columnwidth, height=0.37\columnwidth]{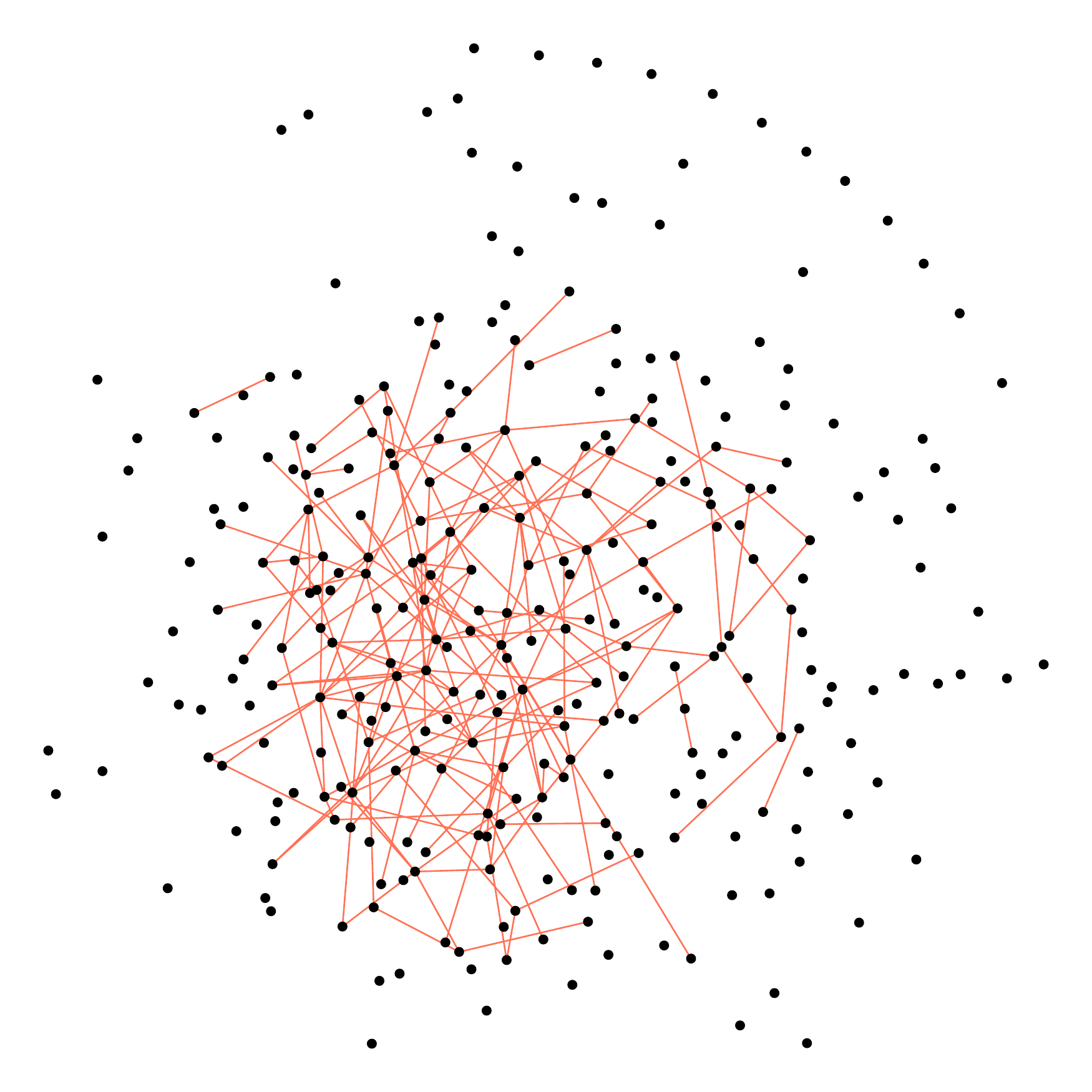}}
     \subfigure[]{  \includegraphics[width=0.45\columnwidth, height=0.37\columnwidth]{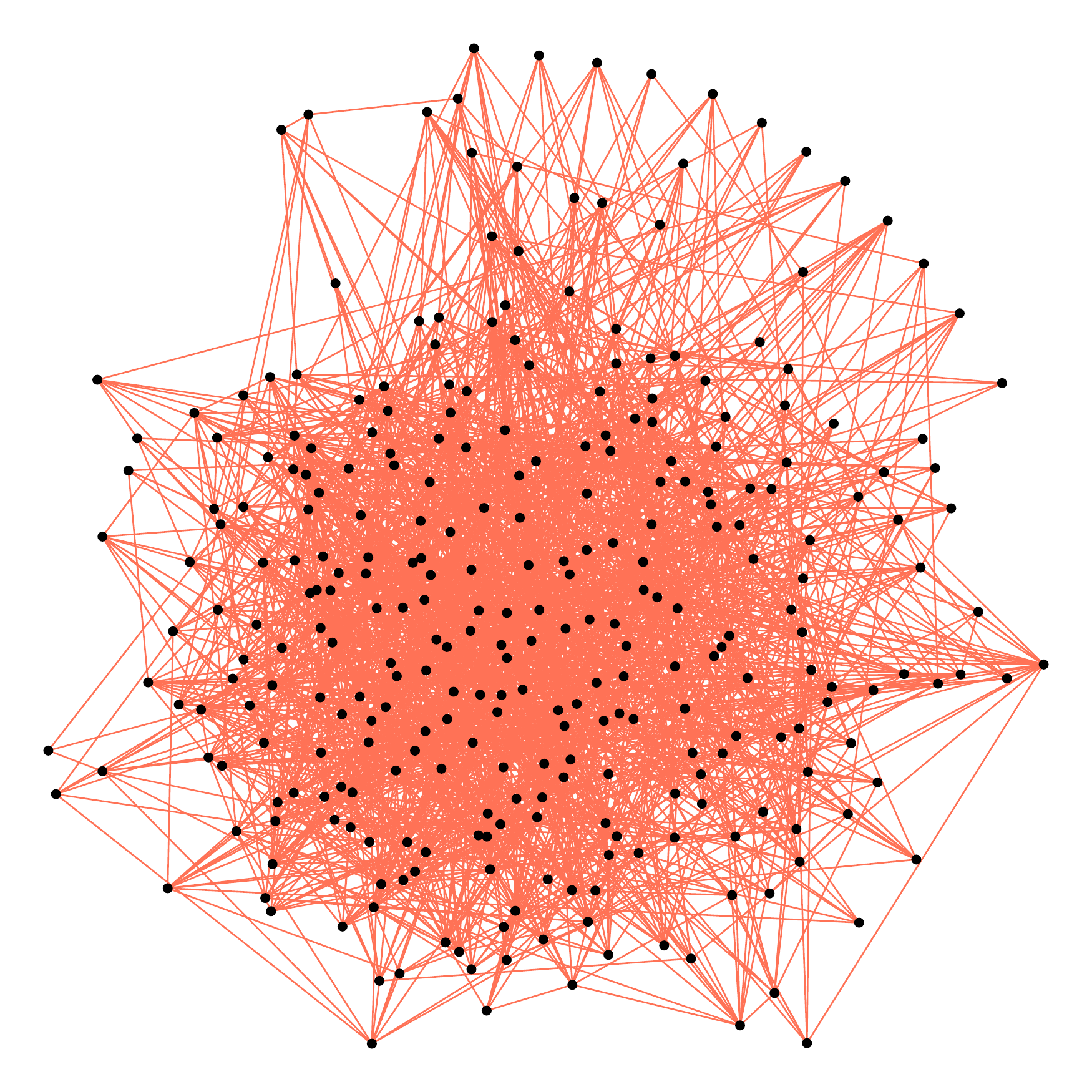}}}
     \caption{Examples of graph recovery for a random graph of size $d=300$ and $n=200$ samples,
     where (a) depicts the true graph, 
     (b) is the \oracle{}  ($F_1=0.49$), 
     (c) is the \thav{} ($F_1=0.86$), 
     (d) is the scaled lasso ($F_1=0.66$), 
     (e) is the SCIO (Bregman) ($F_1=0.54$), and 
     (f) is the TIGER ($F_1=0.39$).}
     \label{fig:exarecovery2}
 \end{figure*}

 \begin{figure*}
 \centerline{
     \subfigure[]{  \includegraphics[width=0.45\columnwidth, height=0.37\columnwidth]{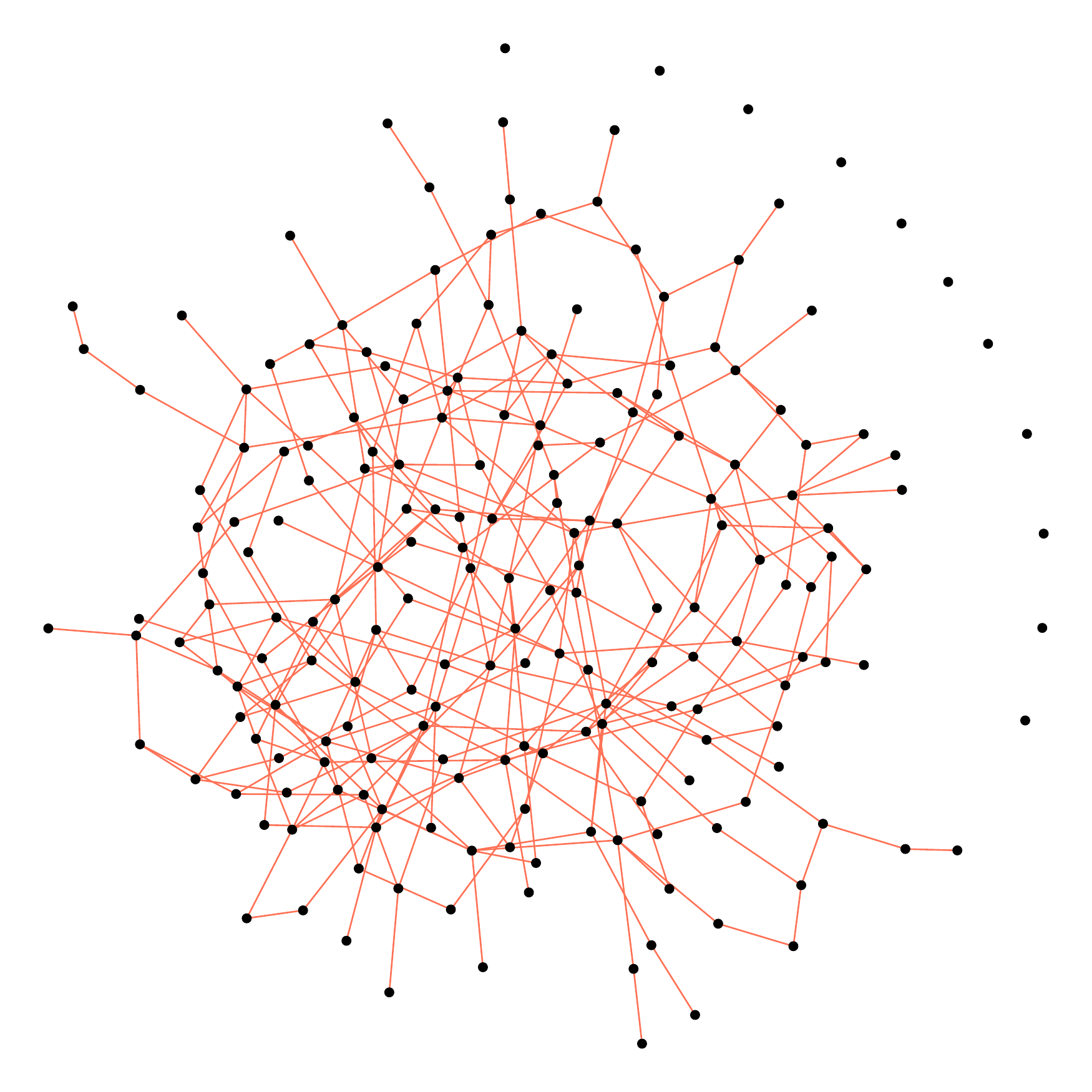}}
     \subfigure[]{  \includegraphics[width=0.45\columnwidth, height=0.37\columnwidth]{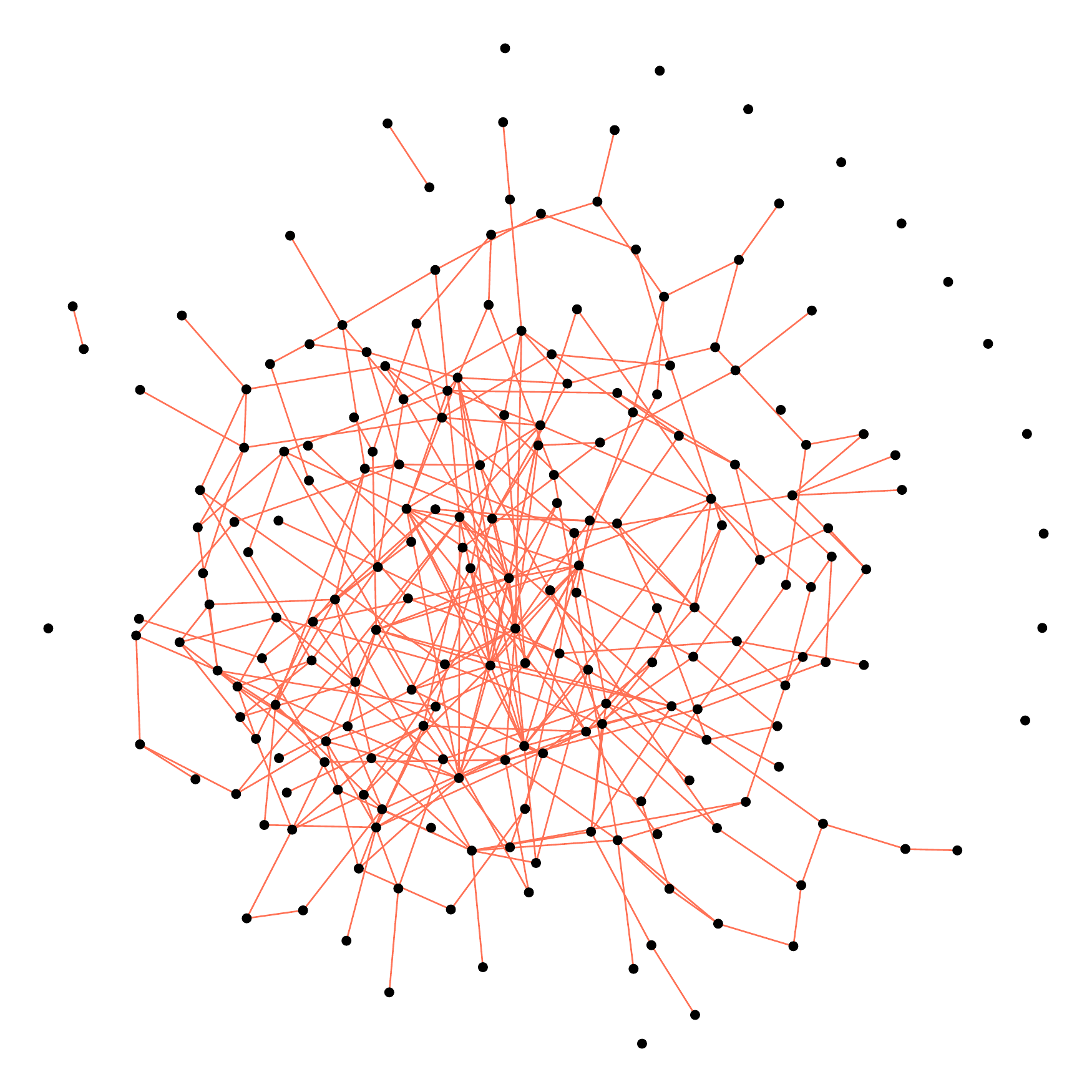} }}
 \centerline{    
     \subfigure[]{  \includegraphics[width=0.45\columnwidth, height=0.37\columnwidth]{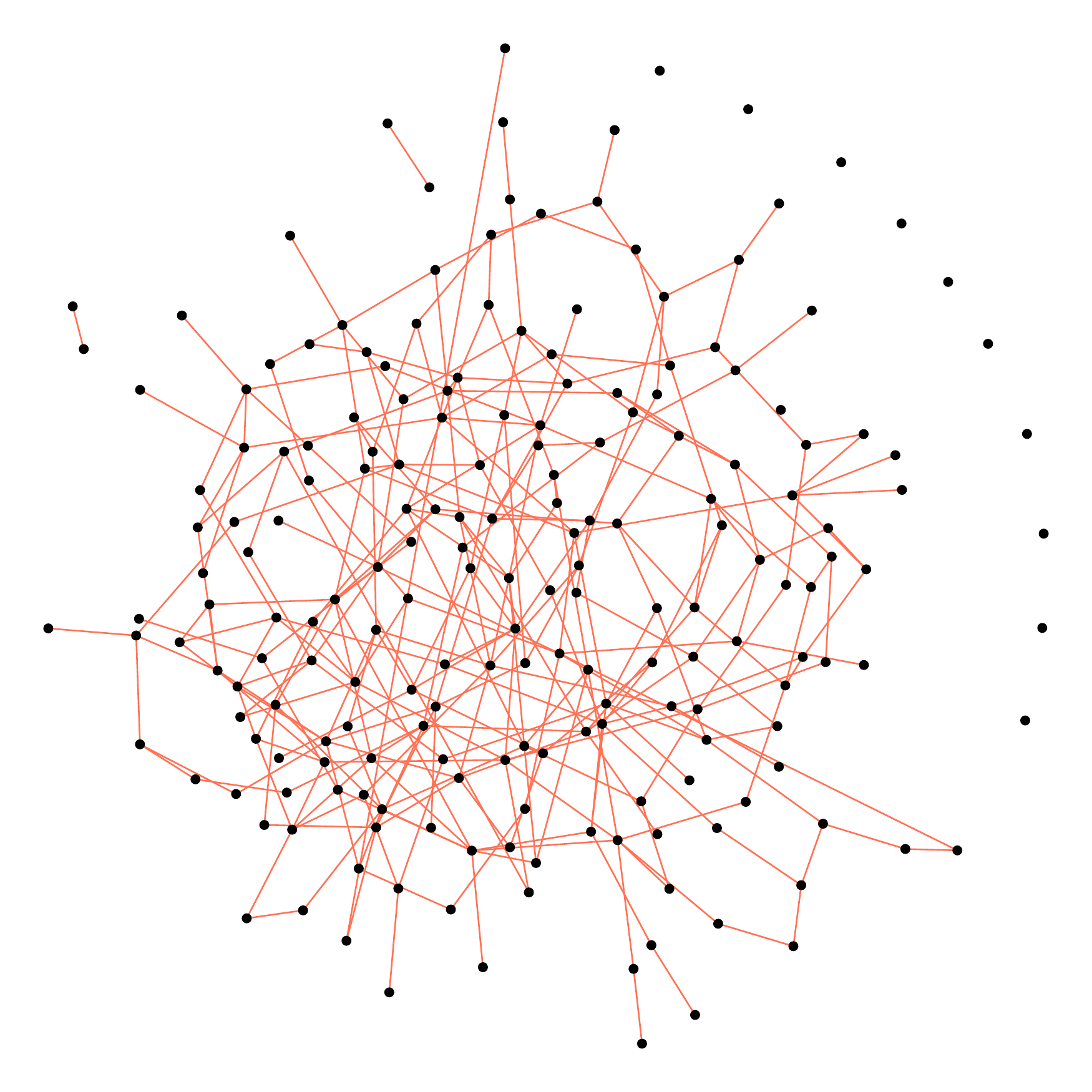}}
     \subfigure[]{  \includegraphics[width=0.45\columnwidth, height=0.37\columnwidth]{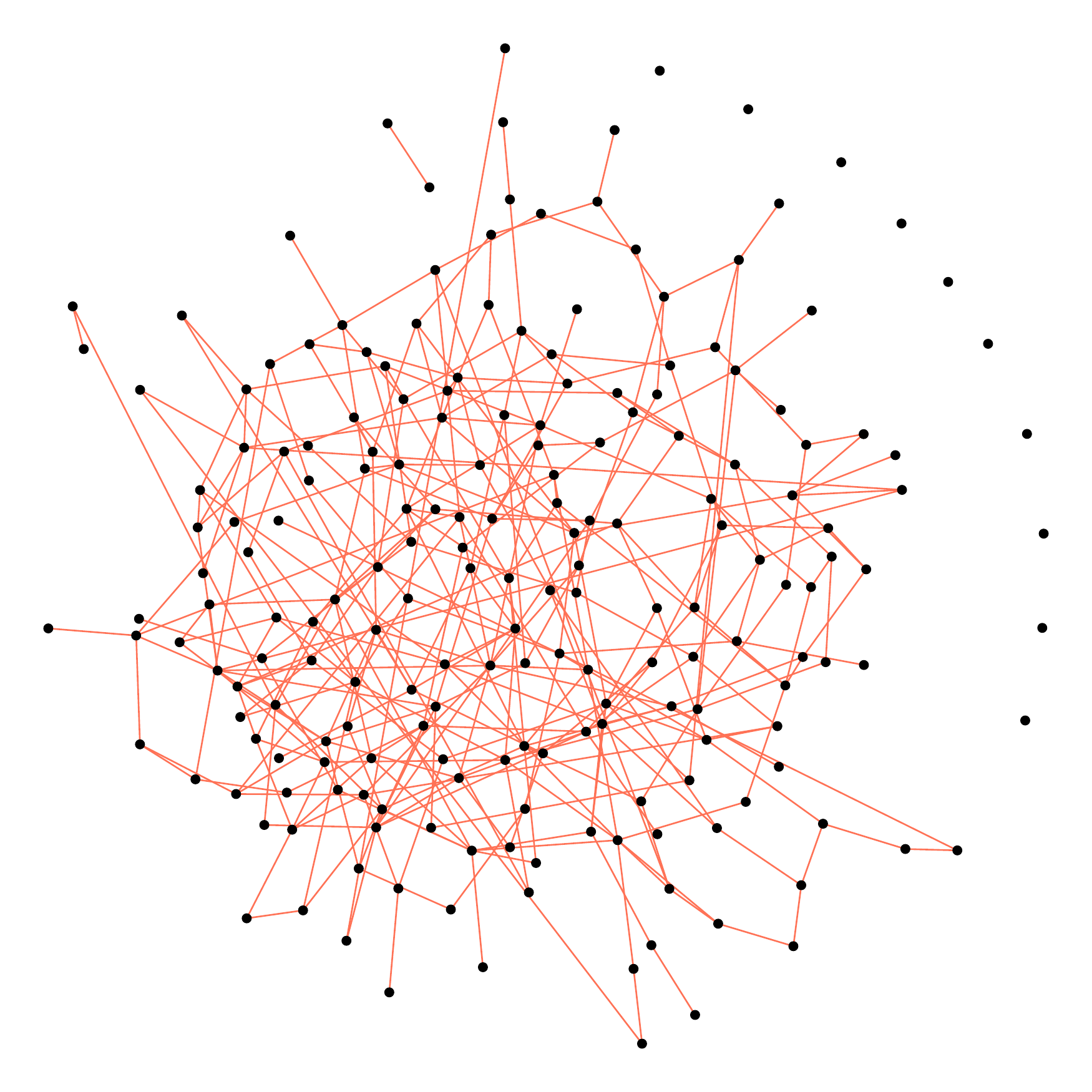}}}
 \centerline{
     \subfigure[]{  \includegraphics[width=0.45\columnwidth, height=0.37\columnwidth]{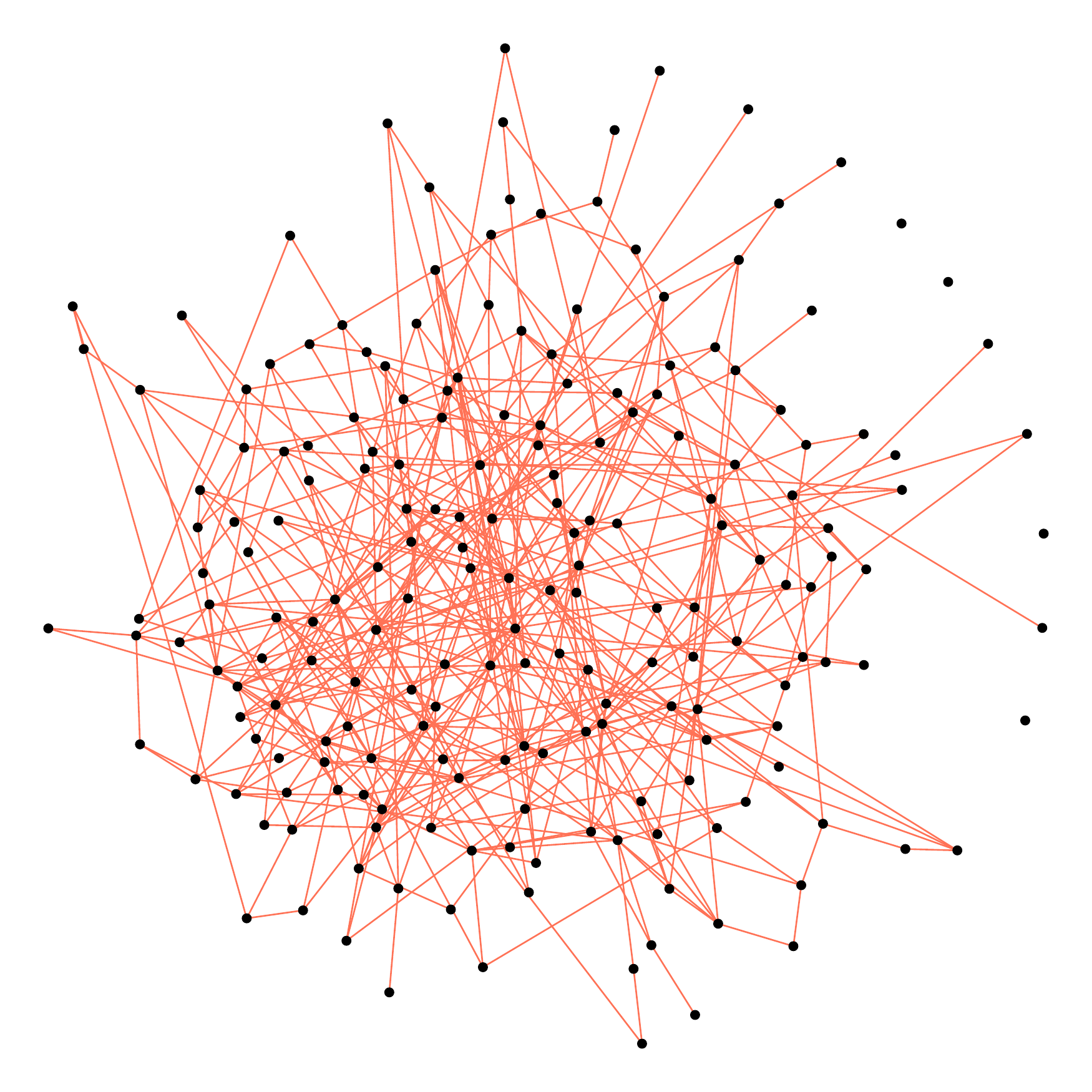}}
     \subfigure[]{  \includegraphics[width=0.45\columnwidth, height=0.37\columnwidth]{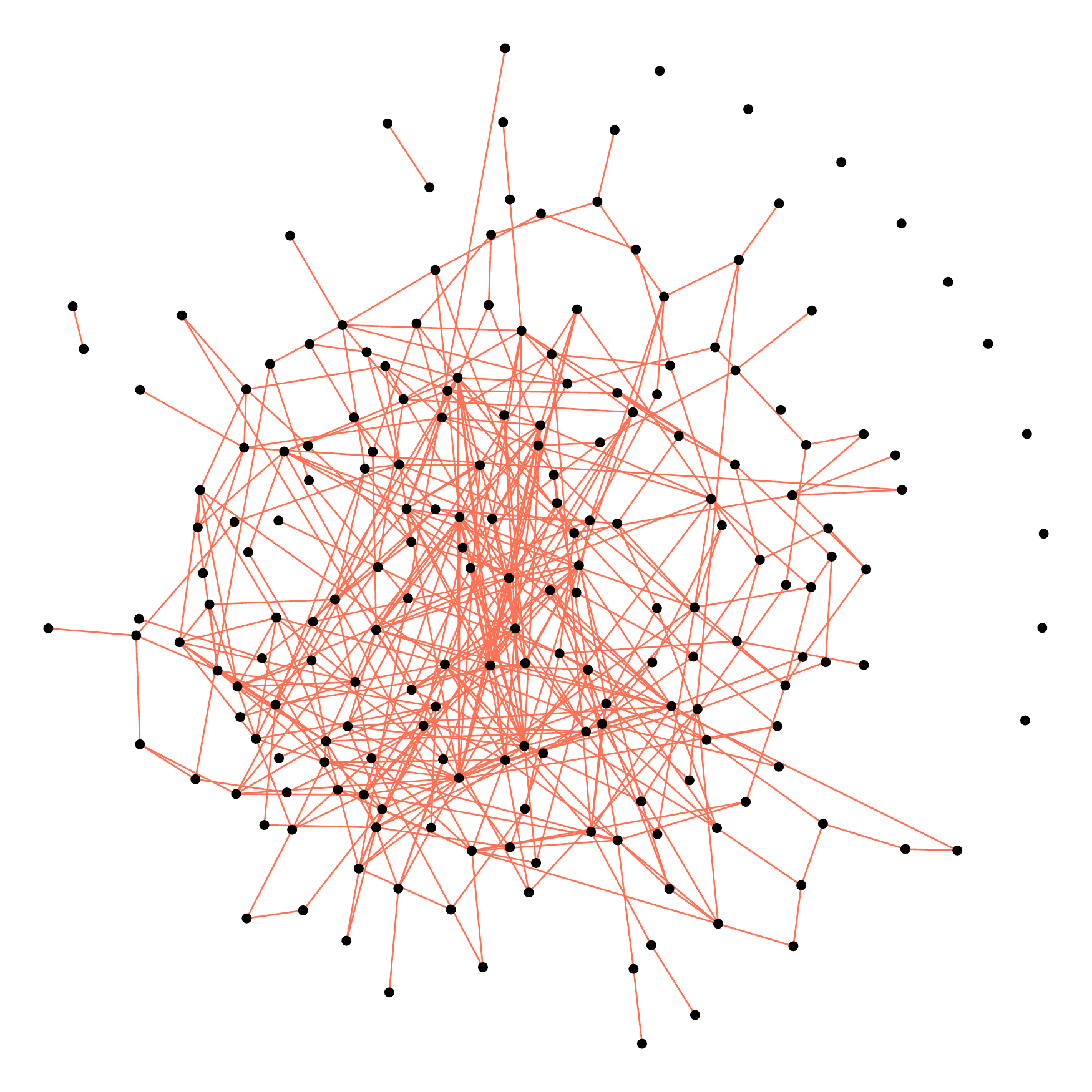}}}
     \caption{Examples of graph recovery for a random graph of size $d=200$ and $n=400$ samples,
     where (a) depicts the true graph, 
     (b) is the \oracle{}  ($F_1=0.86$), 
     (c) is the \thav{} ($F_1=0.96$), 
     (d) is the SCIO (CV) ($F_1=0.92$), 
     (e) is the rSME ($F_1=0.78$), and 
     (f) is the \stars{} ($F_1=0.76$).}
     \label{fig:exarecovery3}
 \end{figure*}

 \begin{figure*}
 \centerline{
     \subfigure[]{  \includegraphics[width=0.45\columnwidth, height=0.37\columnwidth]{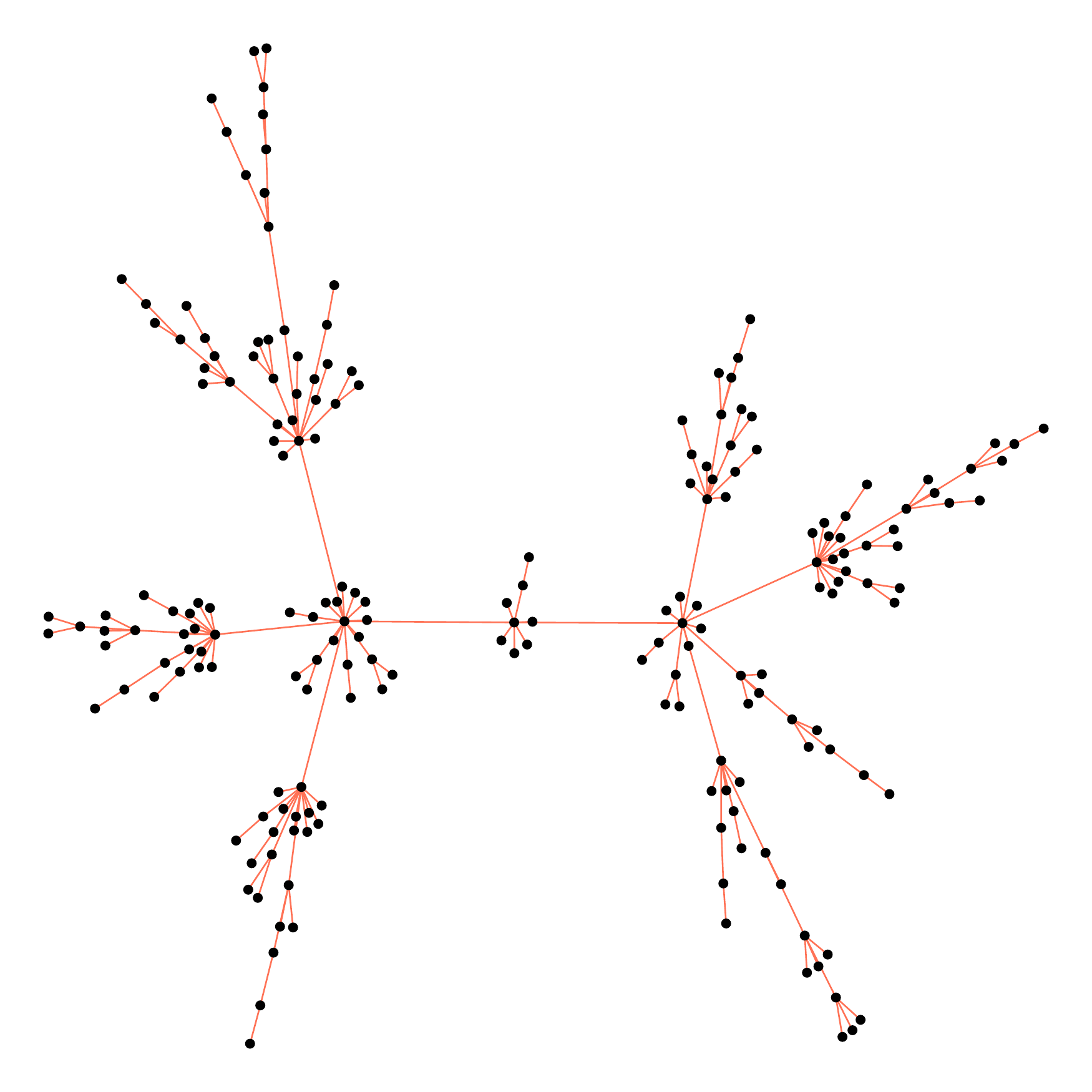}}
     \subfigure[]{  \includegraphics[width=0.45\columnwidth, height=0.37\columnwidth]{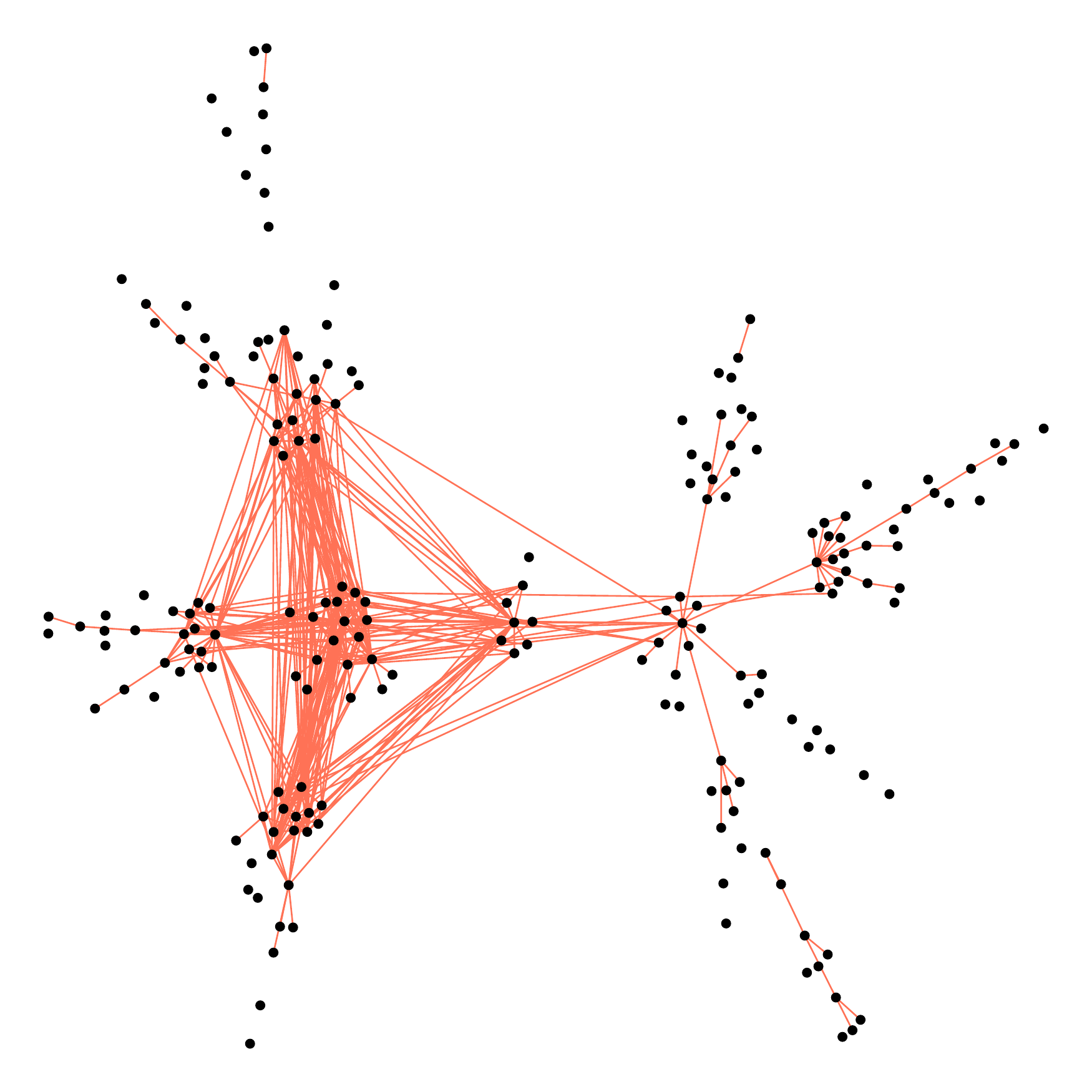} }}
 \centerline{    
     \subfigure[]{  \includegraphics[width=0.45\columnwidth, height=0.37\columnwidth]{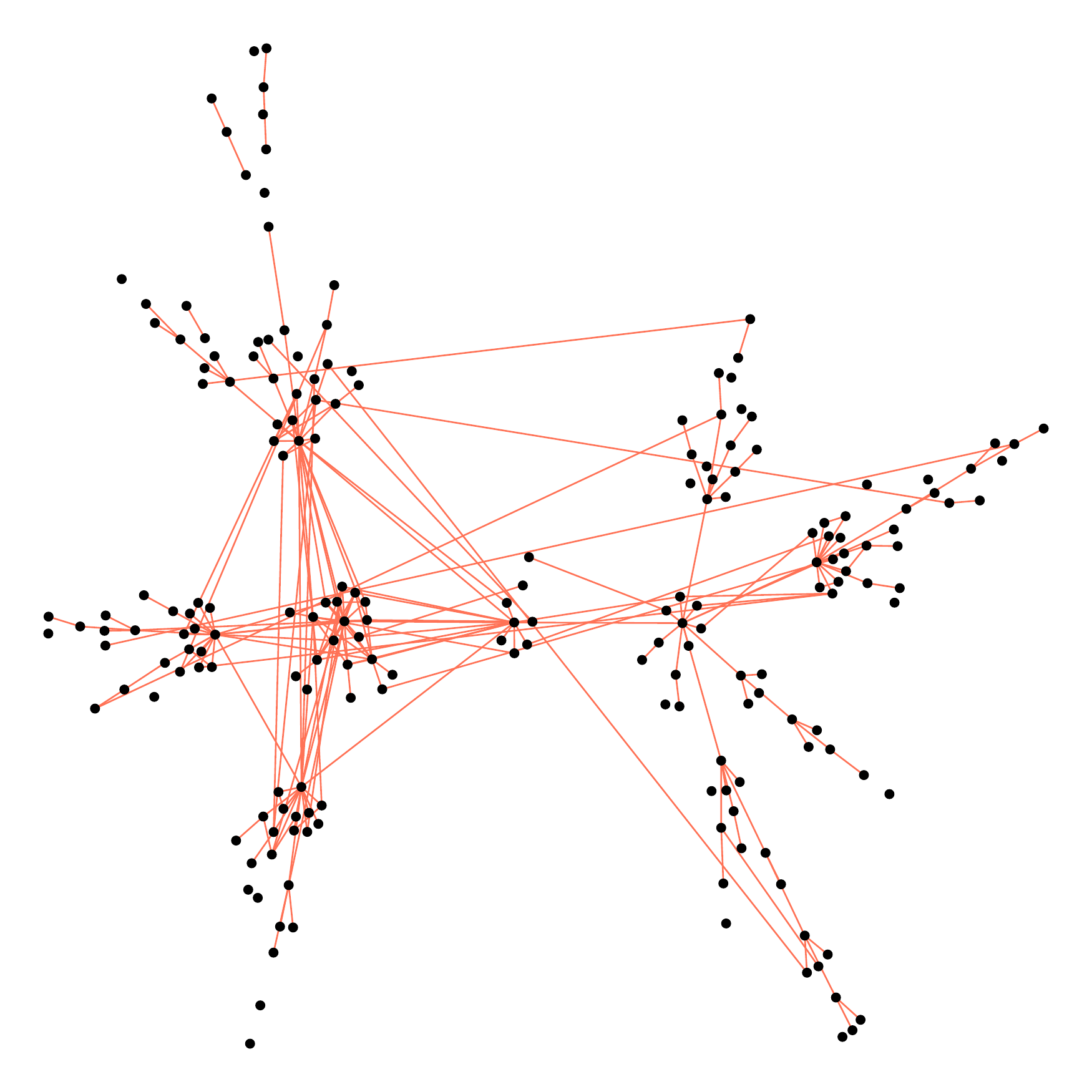}}
     \subfigure[]{  \includegraphics[width=0.45\columnwidth, height=0.37\columnwidth]{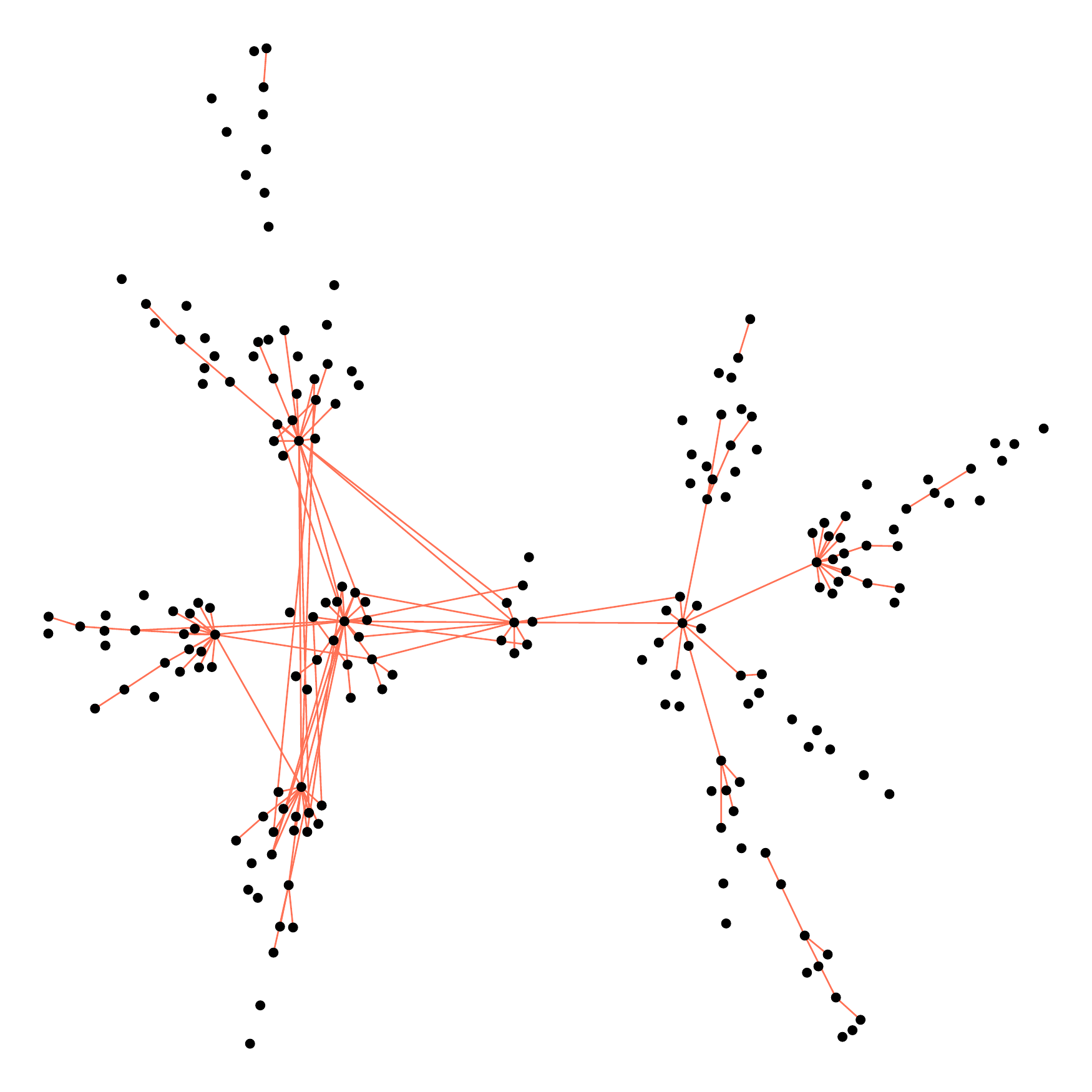}}}
 \centerline{
     \subfigure[]{  \includegraphics[width=0.45\columnwidth, height=0.37\columnwidth]{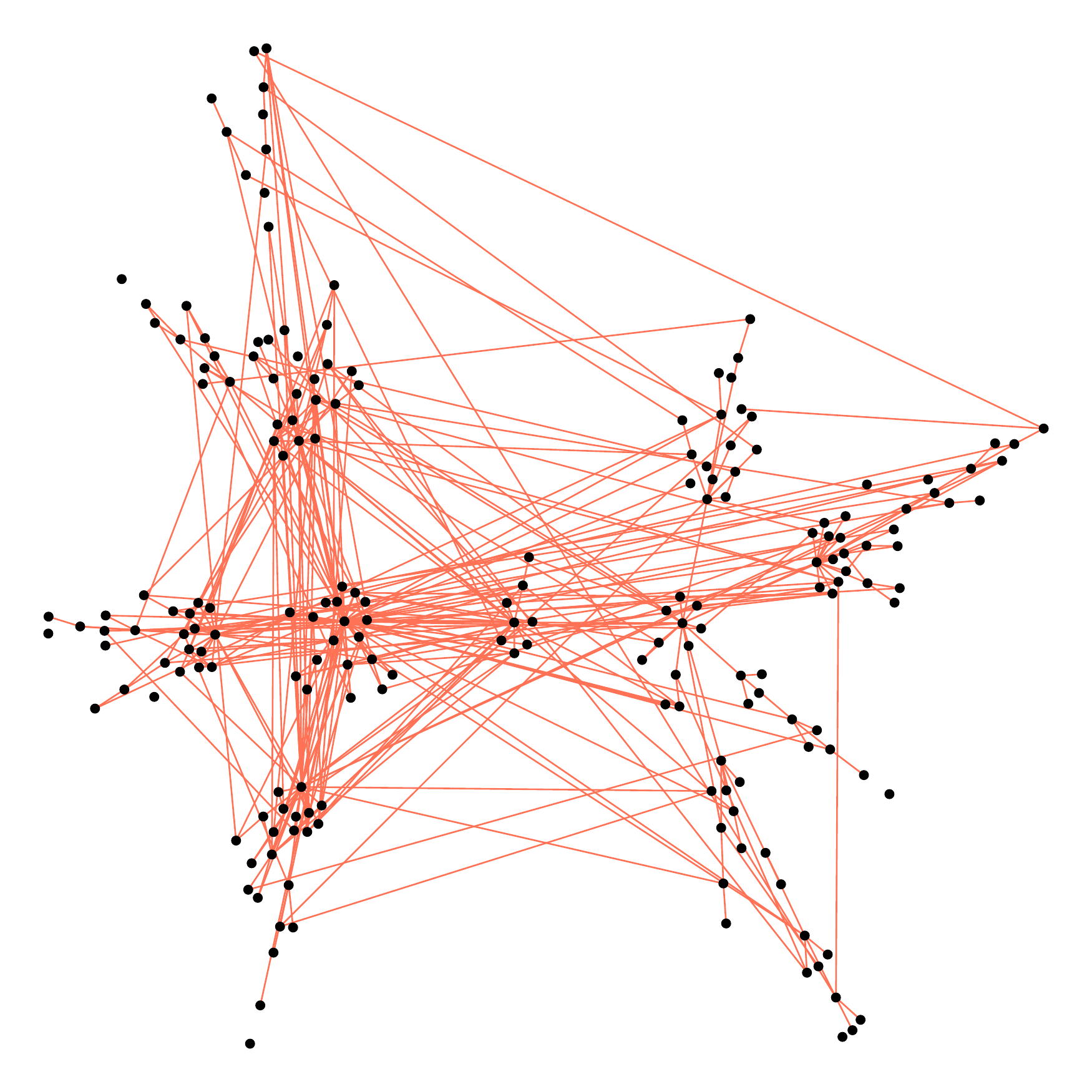}}
     \subfigure[]{  \includegraphics[width=0.45\columnwidth, height=0.37\columnwidth]{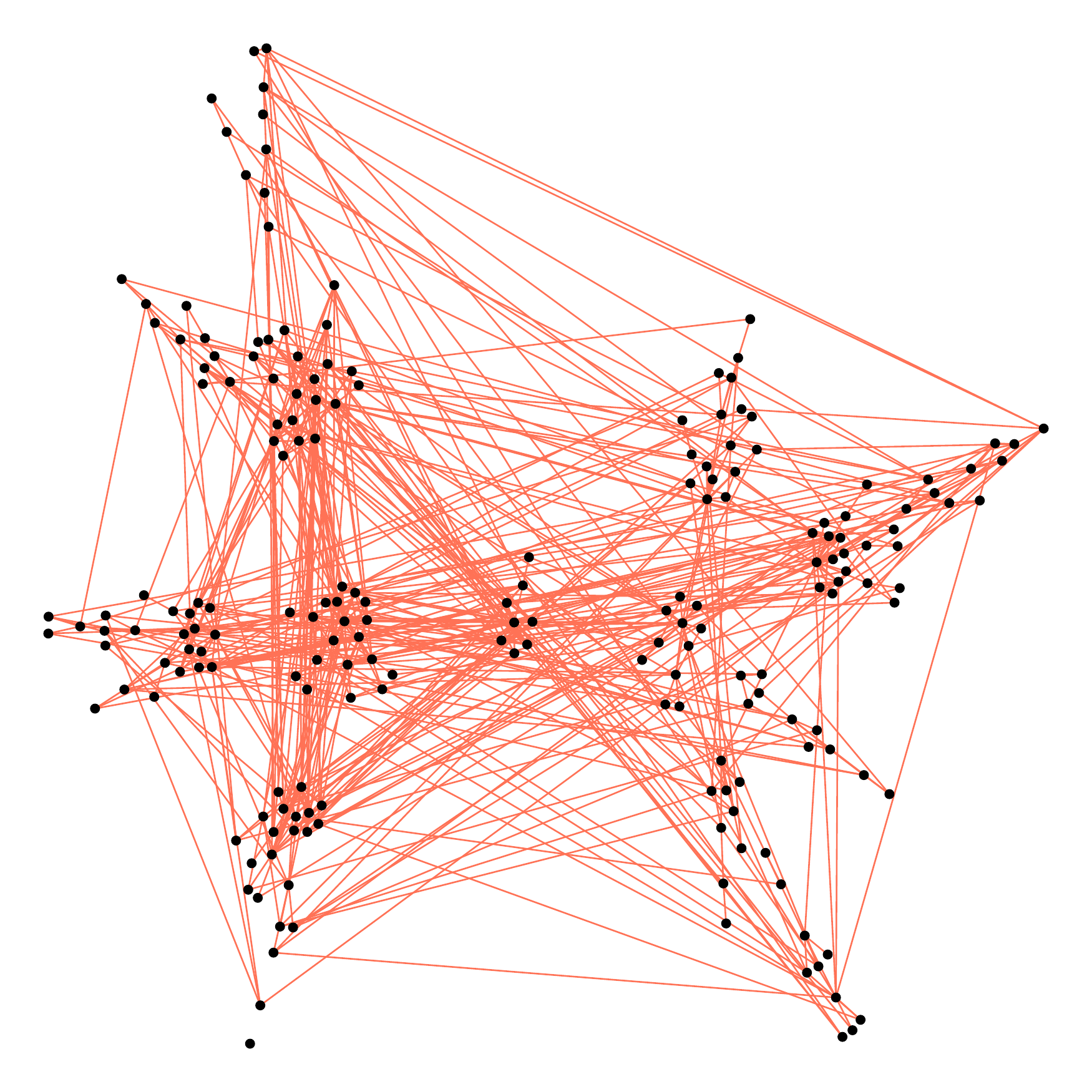}}}
     \caption{
     Examples of graph recovery for a scale-free graph of size $d=200$ and $n=300$ samples,
     where 
     (a) depicts the true graph, 
     (b) is the \oracle{}  ($F_1=0.42$), 
     (c) is the \thav{} ($F_1=0.73$), 
     (d) is the SCIO (Bregman) ($F_1=0.68$), 
     (e) is the rSME ($F_1=0.59$), and 
     (f) is the TIGER ($F_1=0.50$).}
     \label{fig:exarecovery4}
 \end{figure*}
\begin{figure*}
 \centerline{
     \subfigure[]{  \includegraphics[width=0.45\columnwidth, height=0.37\columnwidth]{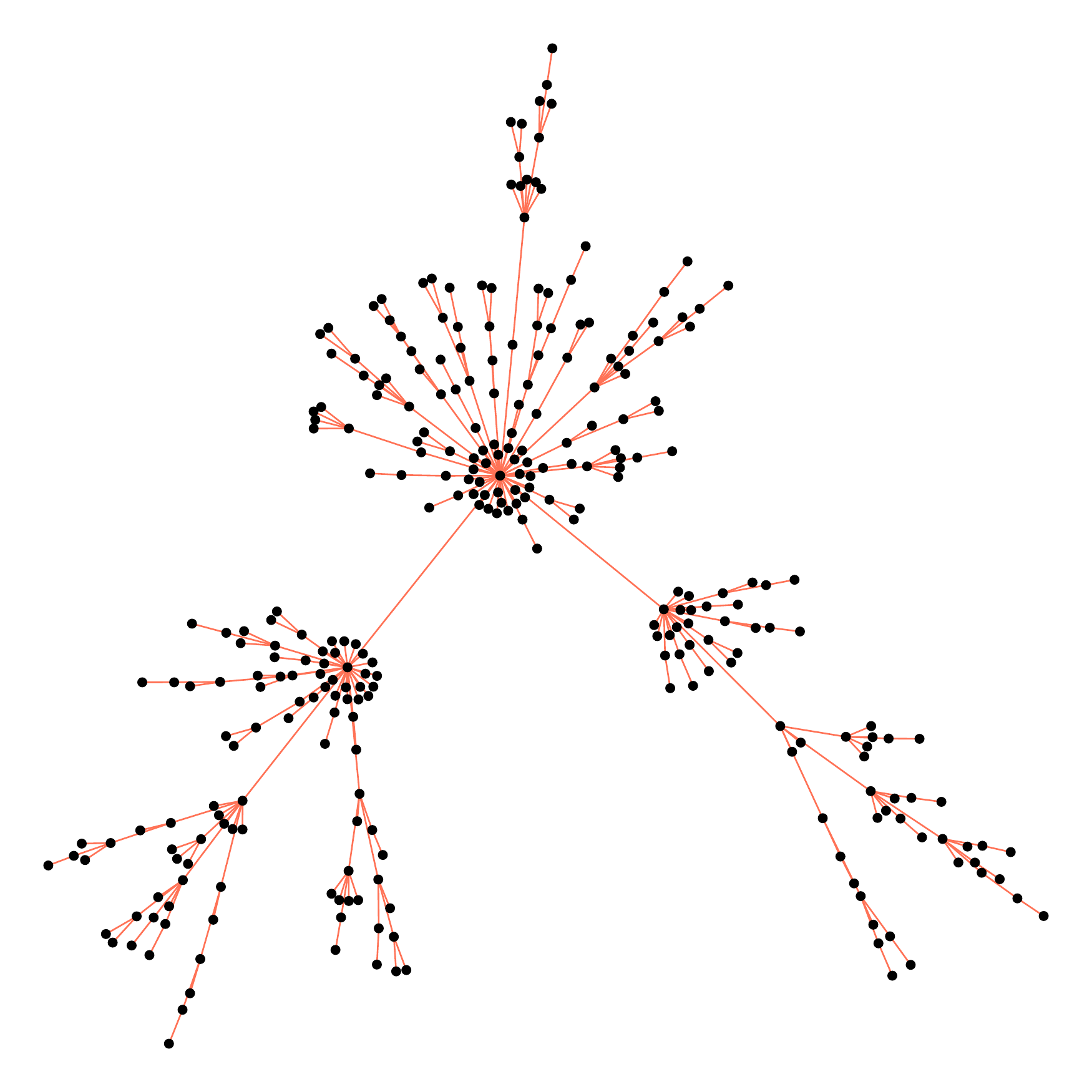}}
     \subfigure[]{  \includegraphics[width=0.45\columnwidth, height=0.37\columnwidth]{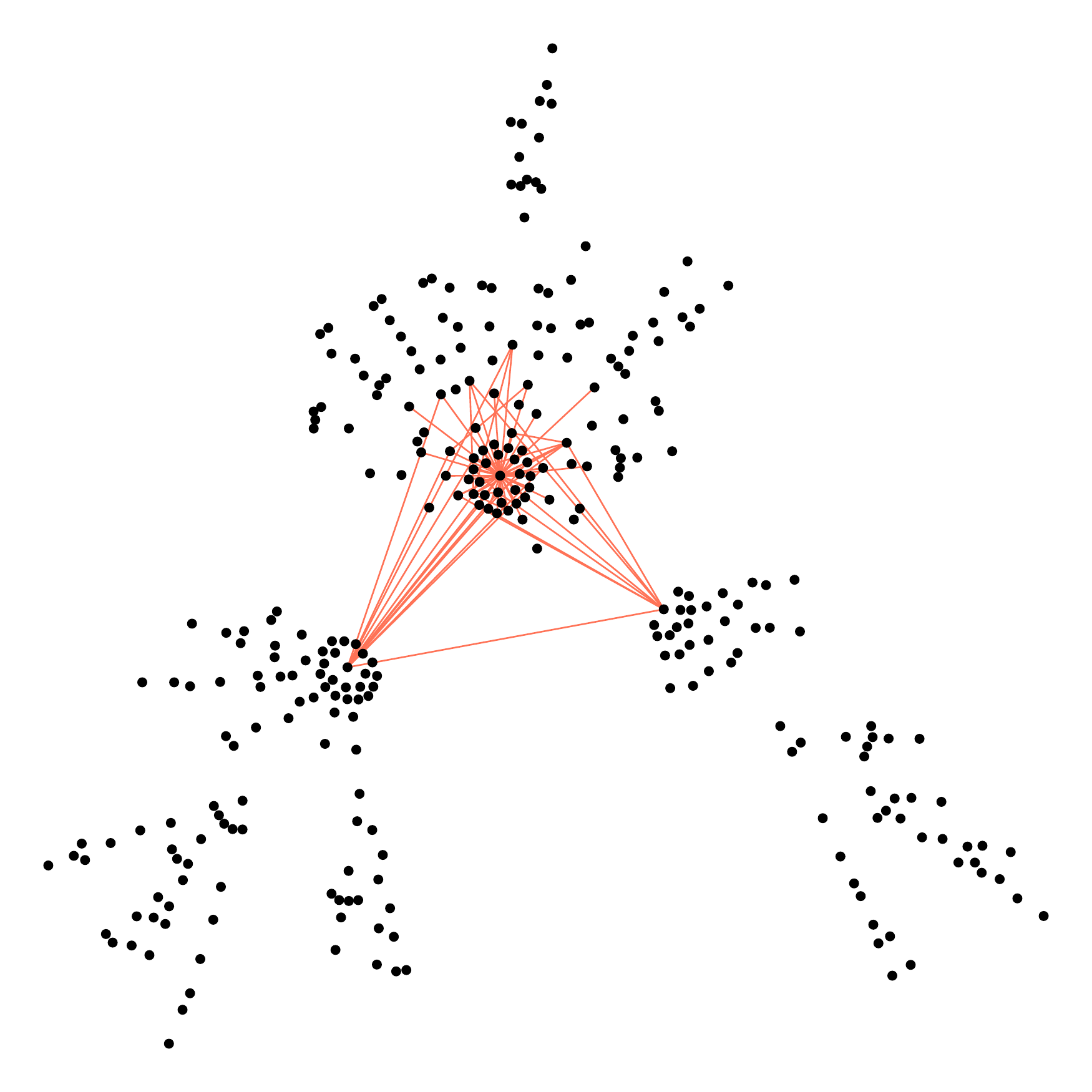}}}
 \centerline{    
     \subfigure[]{  \includegraphics[width=0.45\columnwidth, height=0.37\columnwidth]{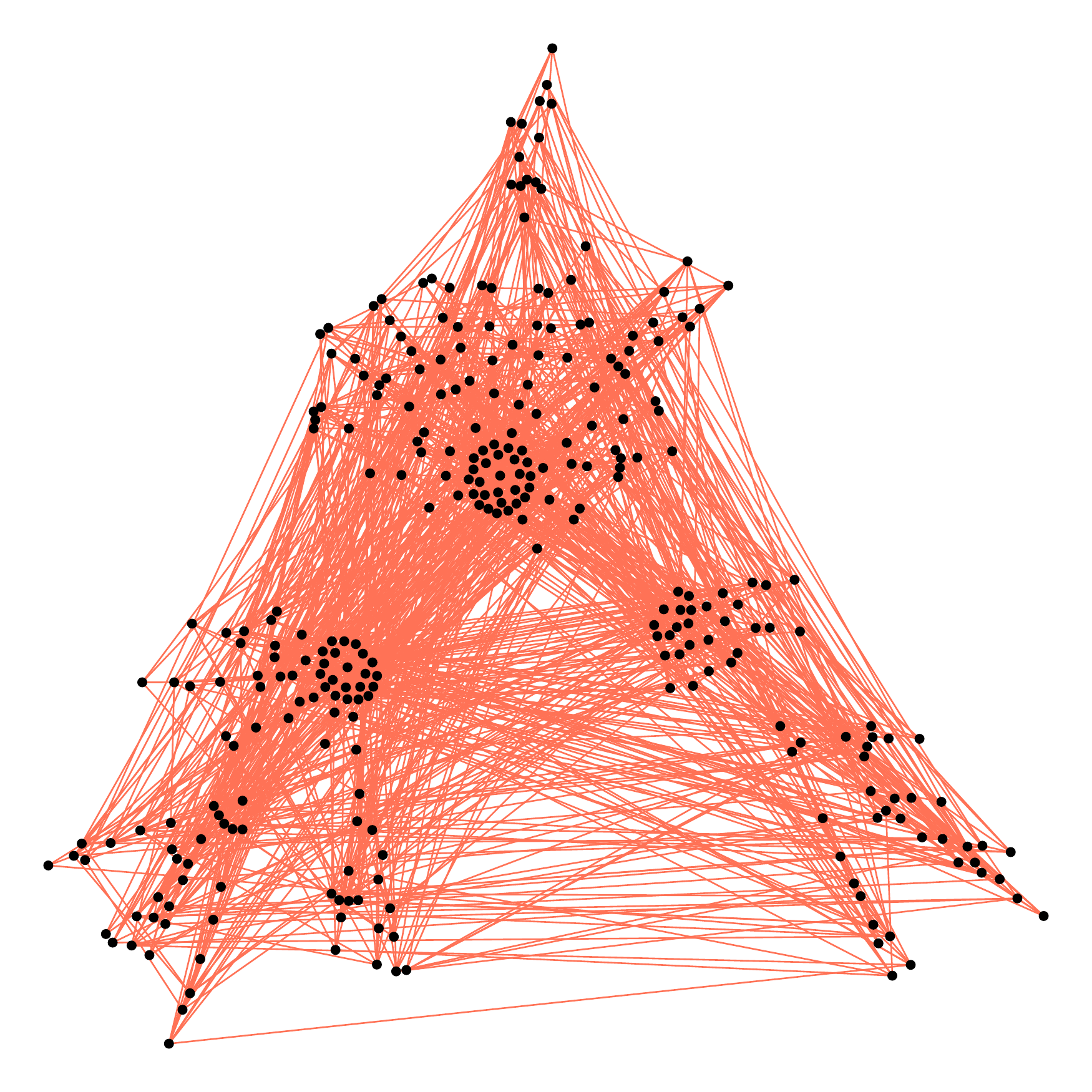}}
     \subfigure[]{  \includegraphics[width=0.45\columnwidth, height=0.37\columnwidth]{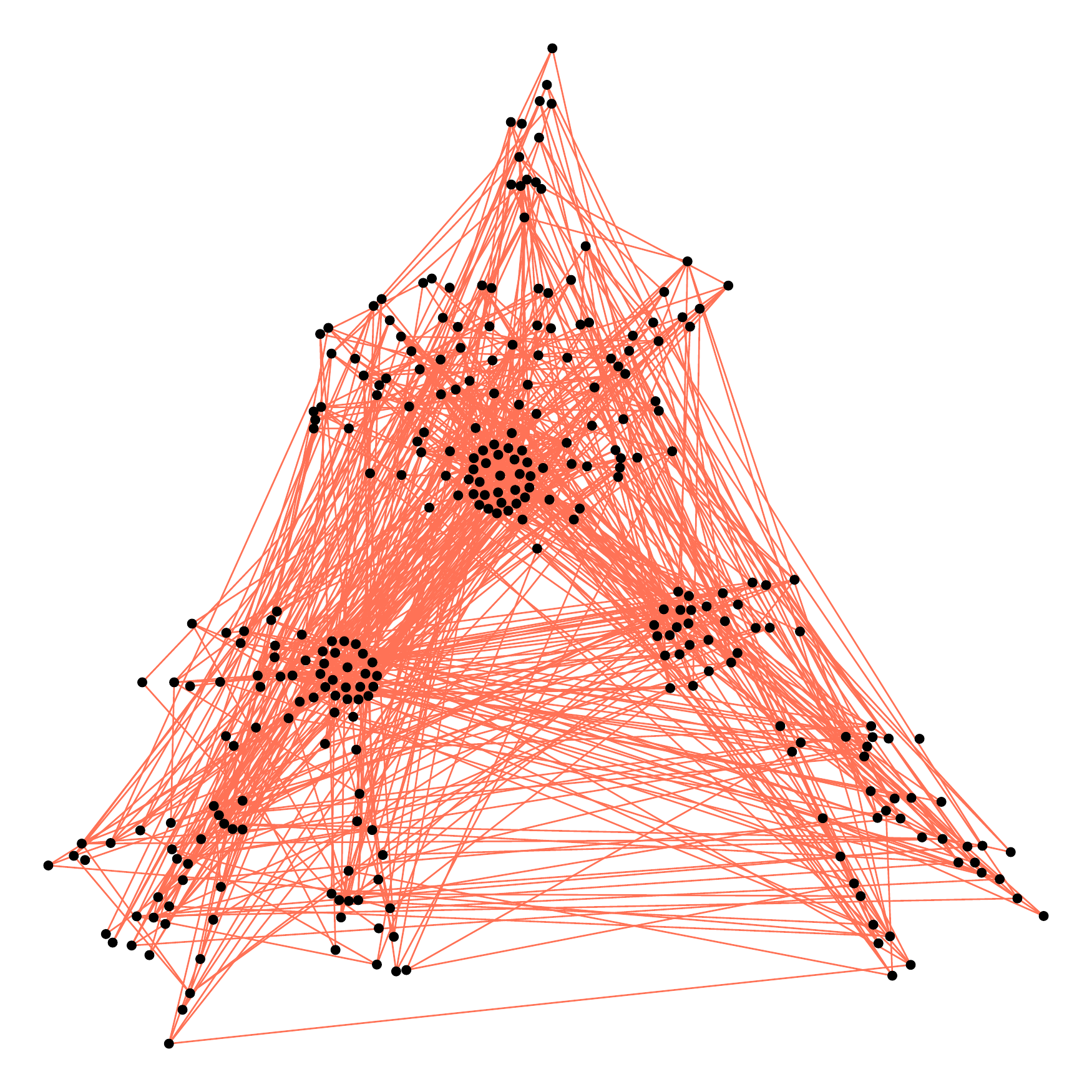}}}
 \centerline{
     \subfigure[]{  \includegraphics[width=0.45\columnwidth, height=0.37\columnwidth]{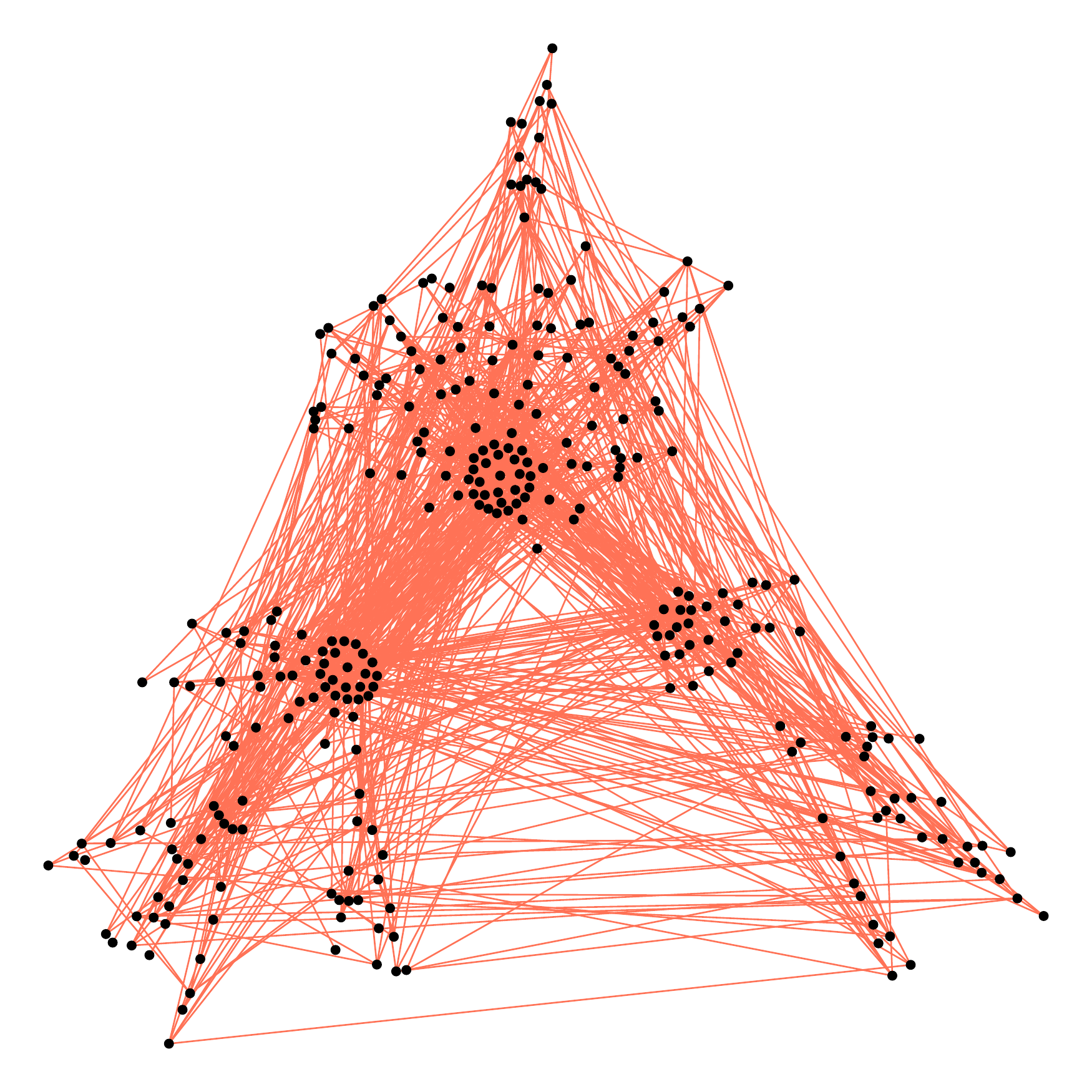}}
     \subfigure[]{  \includegraphics[width=0.45\columnwidth, height=0.37\columnwidth]{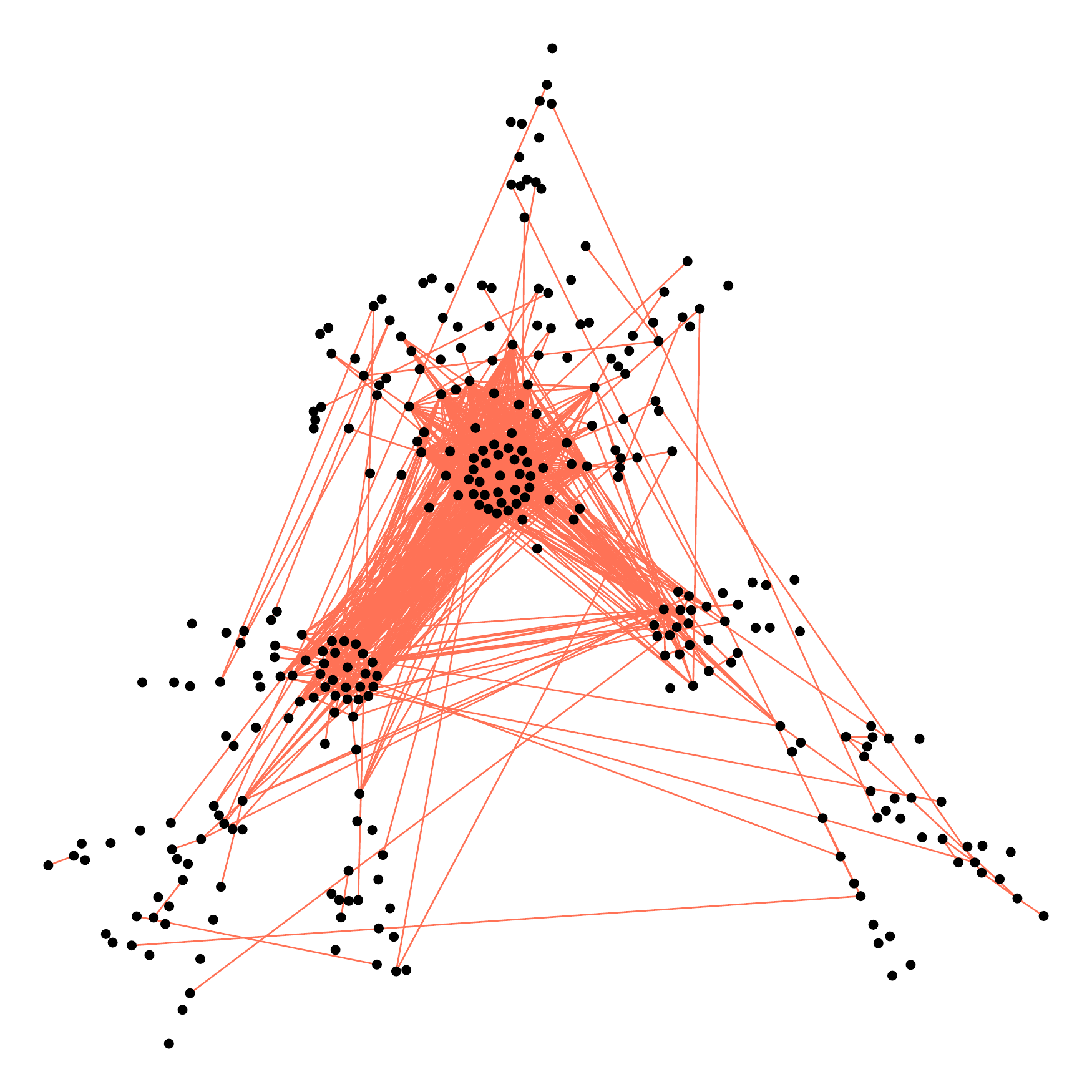}}}
     \caption{
     Examples of graph recovery for a scale-free graph of size $d=300$ and $n=200$ samples,
     where 
     (a) depicts the true graph, 
     (b) is the \oracle{}  ($F_1=0.25$), 
     (c) is the \thav{} ($F_1=0.23$), 
     (d) is the scaled lasso ($F_1=0.27$), 
     (e) is the TIGER ($F_1=0.25$), and 
     (f) is the \stars{}  ($F_1=0.19$).}
     \label{fig:exarecovery5}
 \end{figure*}
\begin{figure*}
 \centerline{
     \subfigure[]{  \includegraphics[width=0.45\columnwidth, height=0.37\columnwidth]{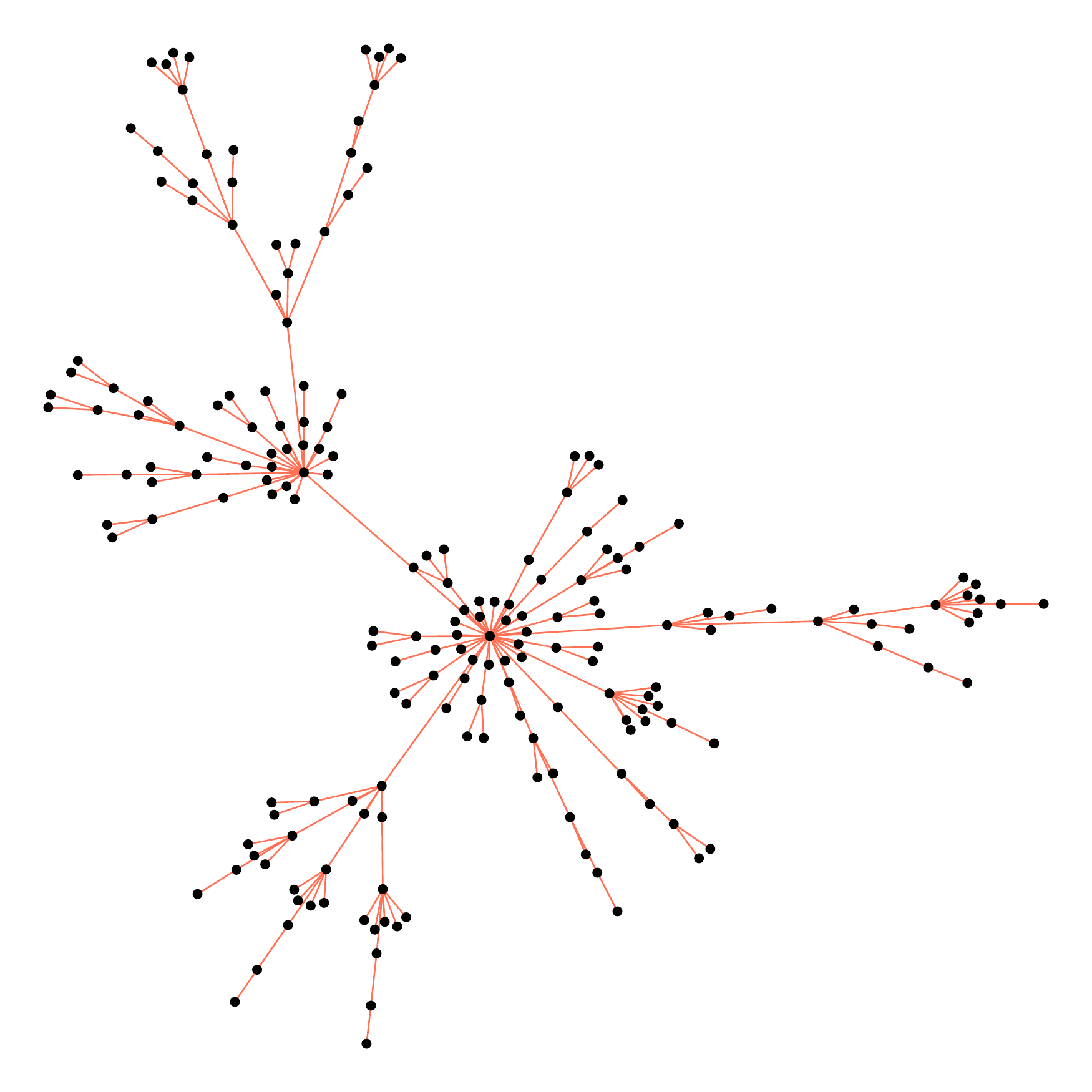}}
     \subfigure[]{  \includegraphics[width=0.45\columnwidth, height=0.37\columnwidth]{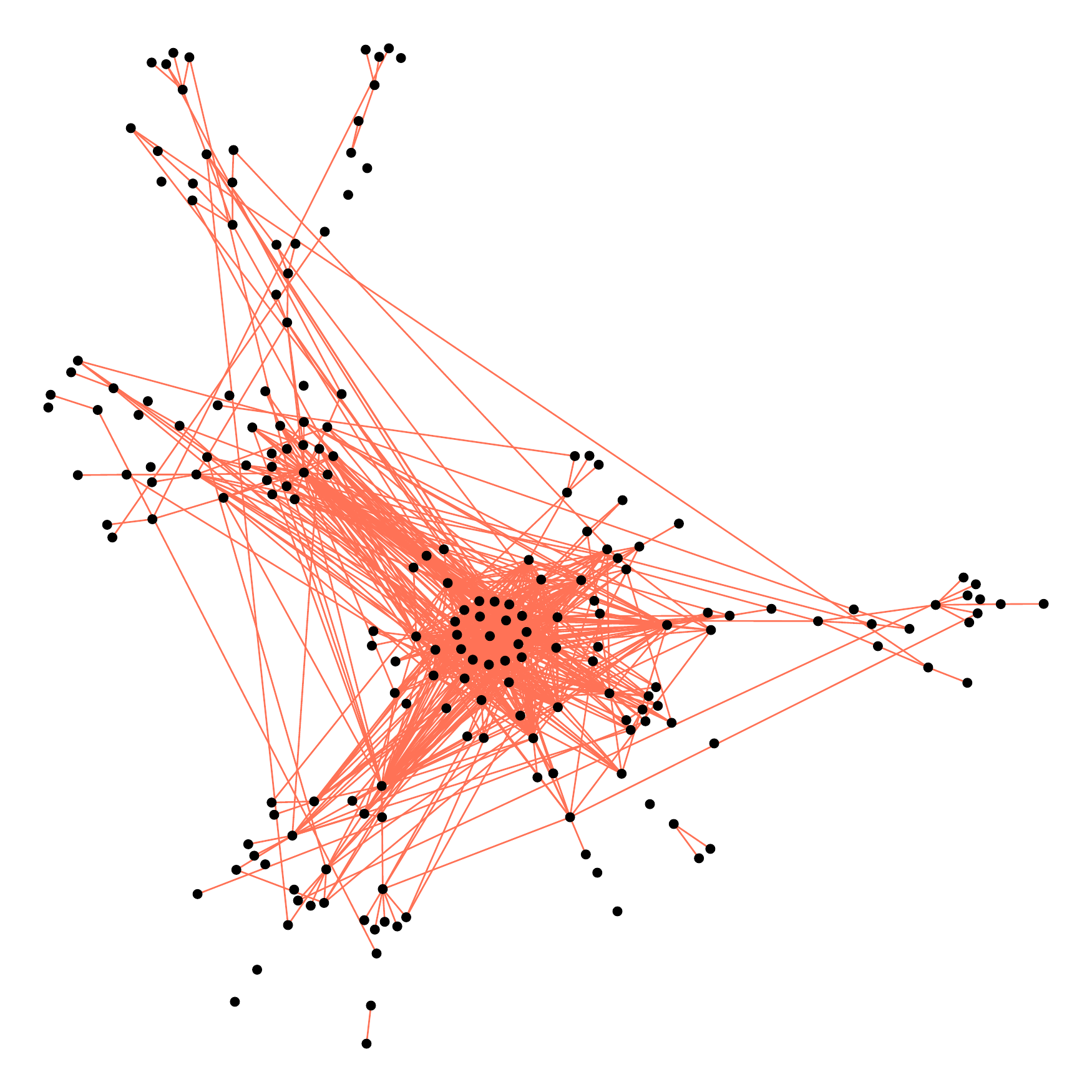} }}
 \centerline{    
     \subfigure[]{  \includegraphics[width=0.45\columnwidth, height=0.37\columnwidth]{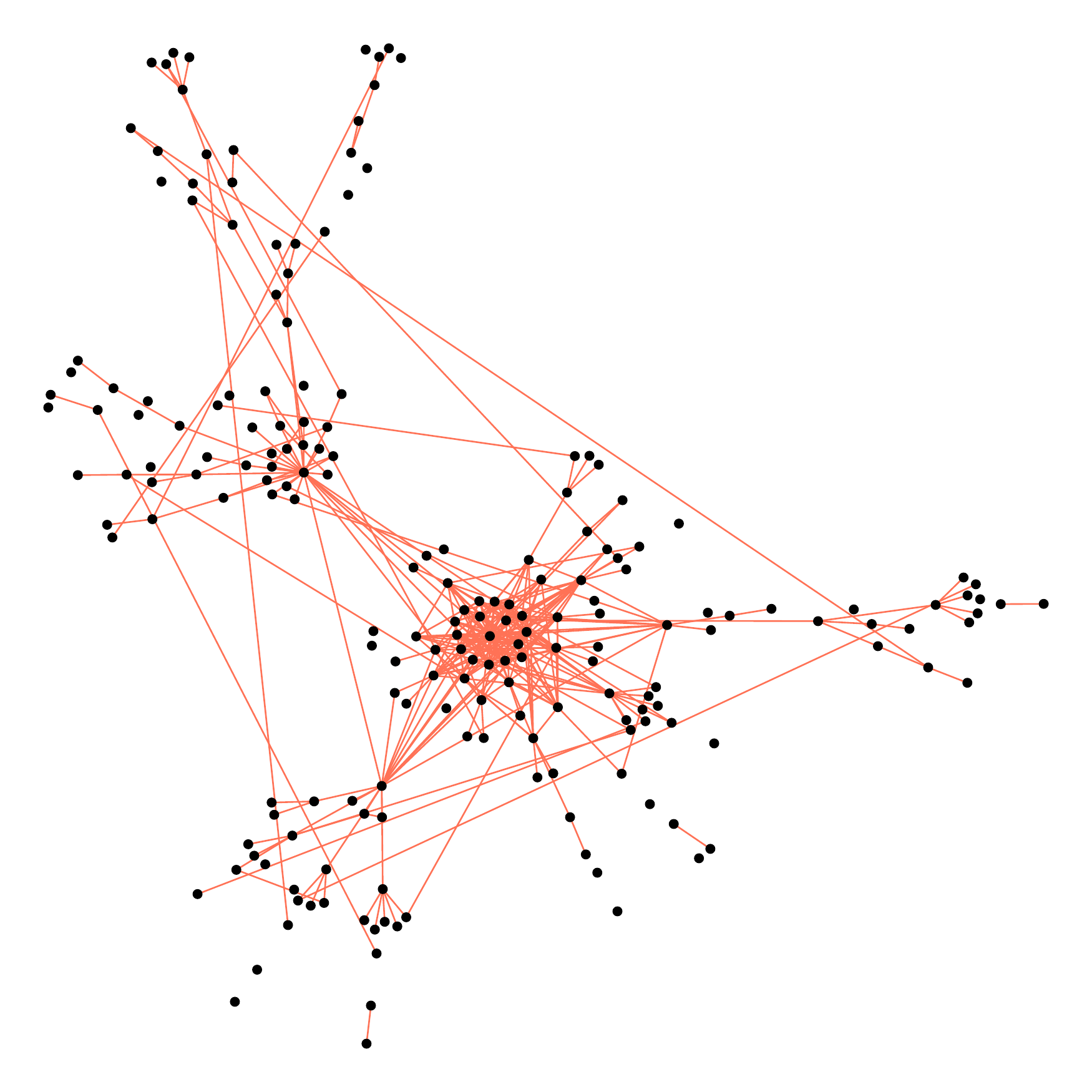}}
     \subfigure[]{  \includegraphics[width=0.45\columnwidth, height=0.37\columnwidth]{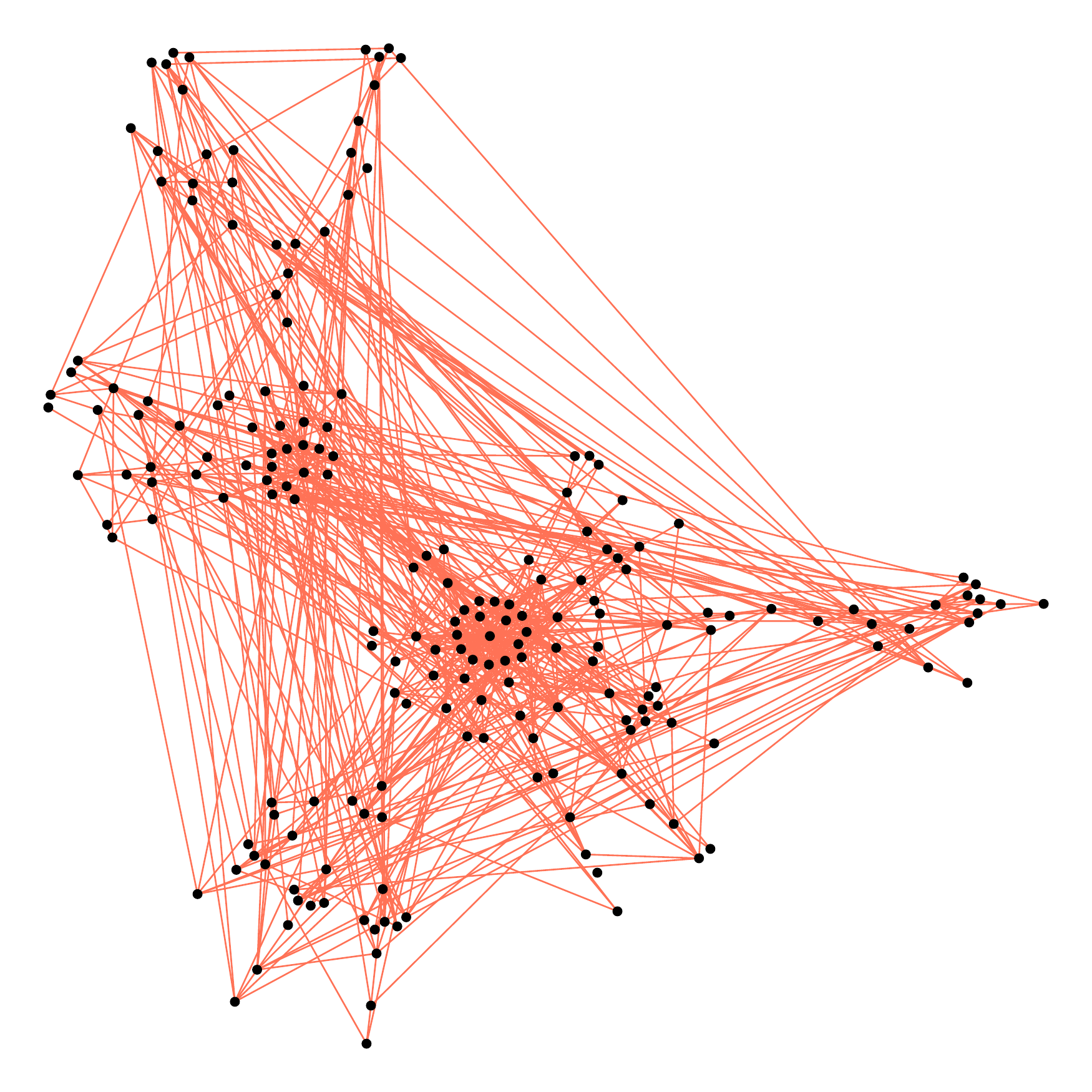}}}
 \centerline{
     \subfigure[]{  \includegraphics[width=0.45\columnwidth, height=0.37\columnwidth]{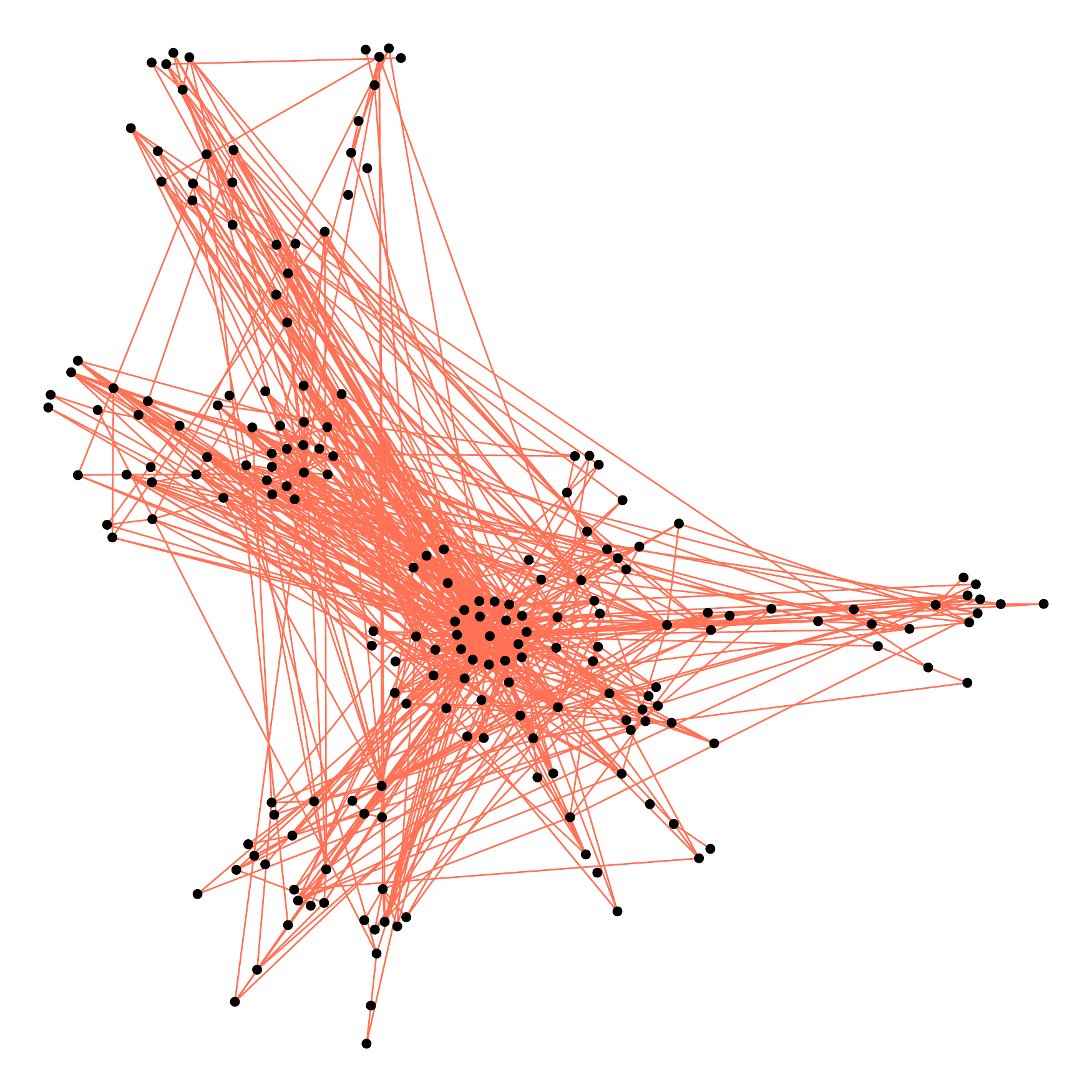}}
     \subfigure[]{  \includegraphics[width=0.45\columnwidth, height=0.37\columnwidth]{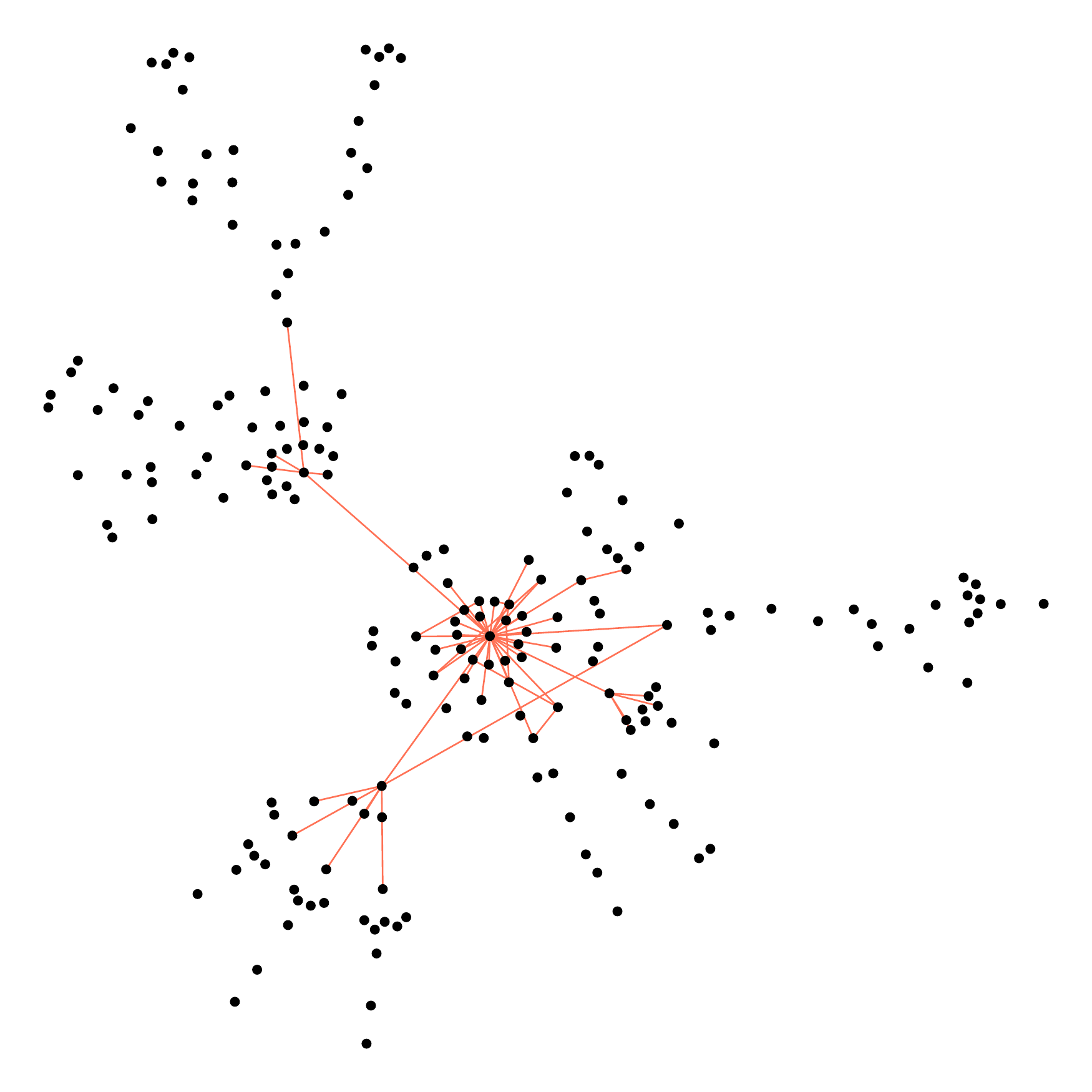}}}
     \caption{
     Examples of graph recovery for a scale-free graph of size $d=200$ and $n=400$ samples,
     where 
     (a) depicts the true graph, 
     (b) is the \oracle{}  ($F_1=0.34$), 
     (c) is the \thav{} ($F_1=0.60$), 
     (d) is the scaled lasso ($F_1=0.43$), 
     (e) is the rSME ($F_1=0.42$), and 
     (f) is the SCIO (Bregman) ($F_1=0.38$).}
     \label{fig:exarecovery6}
 \end{figure*}
 
 \begin{figure*}
     \centerline{
     \subfigure[]{
     \includegraphics[width=0.45\columnwidth]{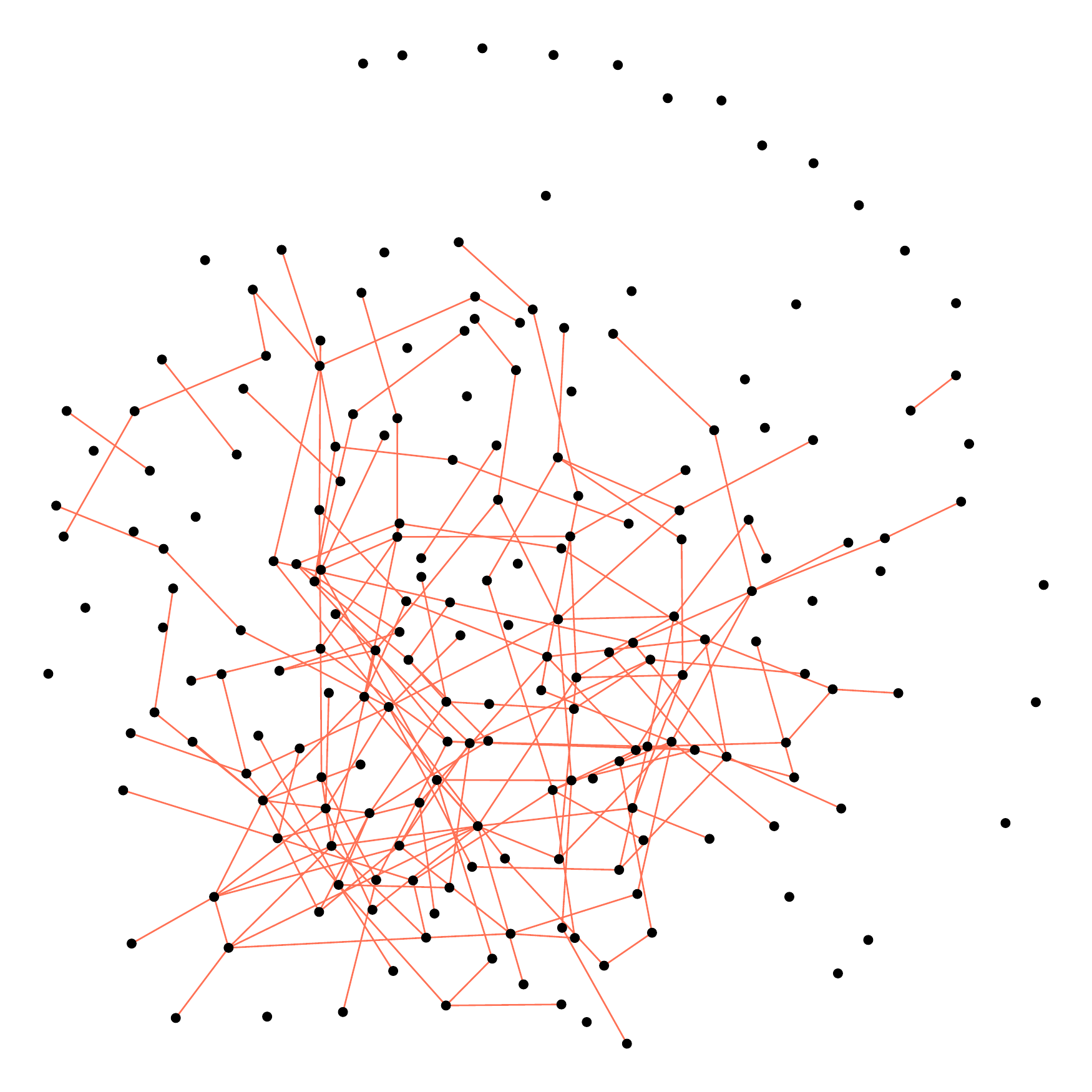}}
     \subfigure[]{
     \includegraphics[width=0.45\columnwidth]{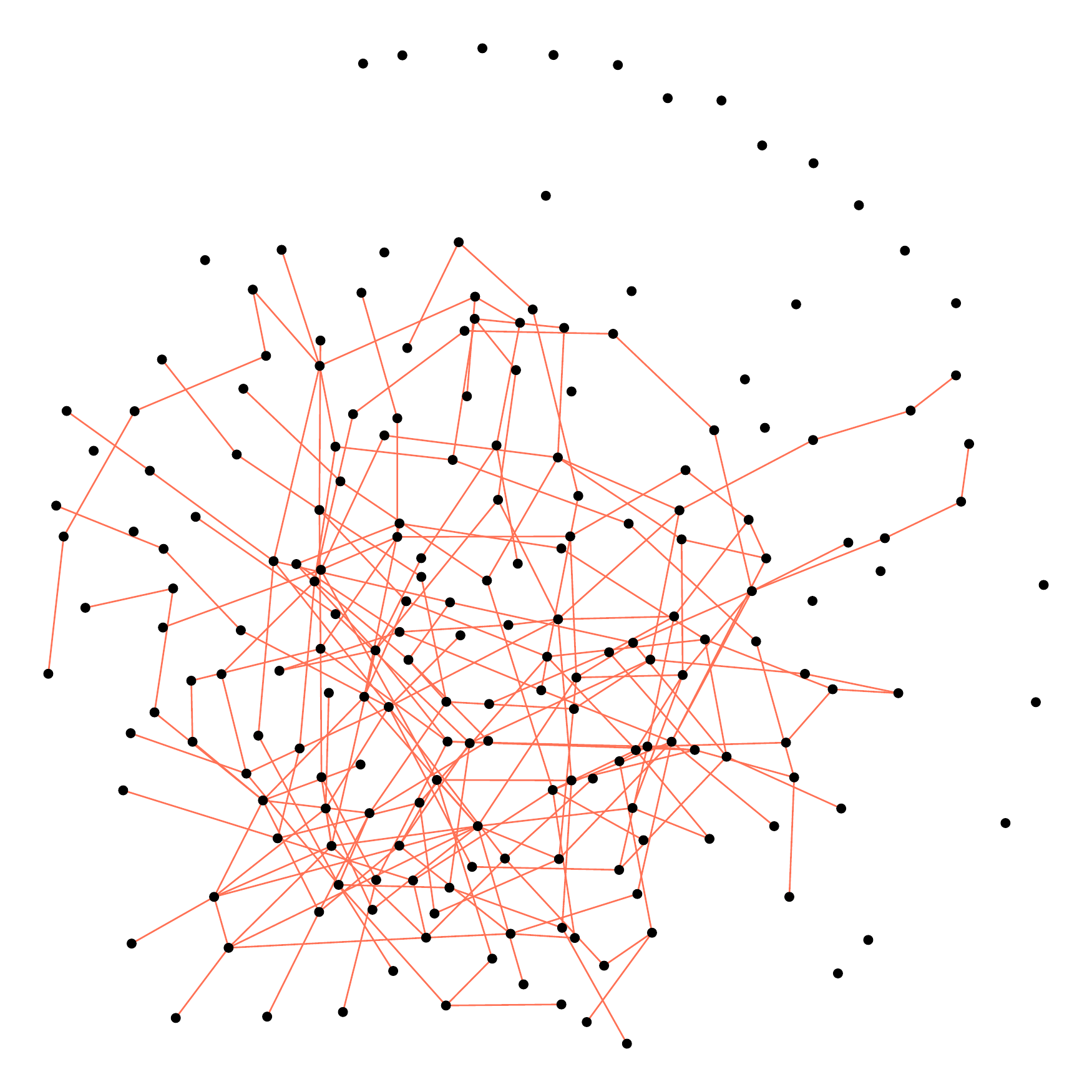}}
     }
     \centerline{
     \subfigure[]{
     \includegraphics[width=0.45\columnwidth]{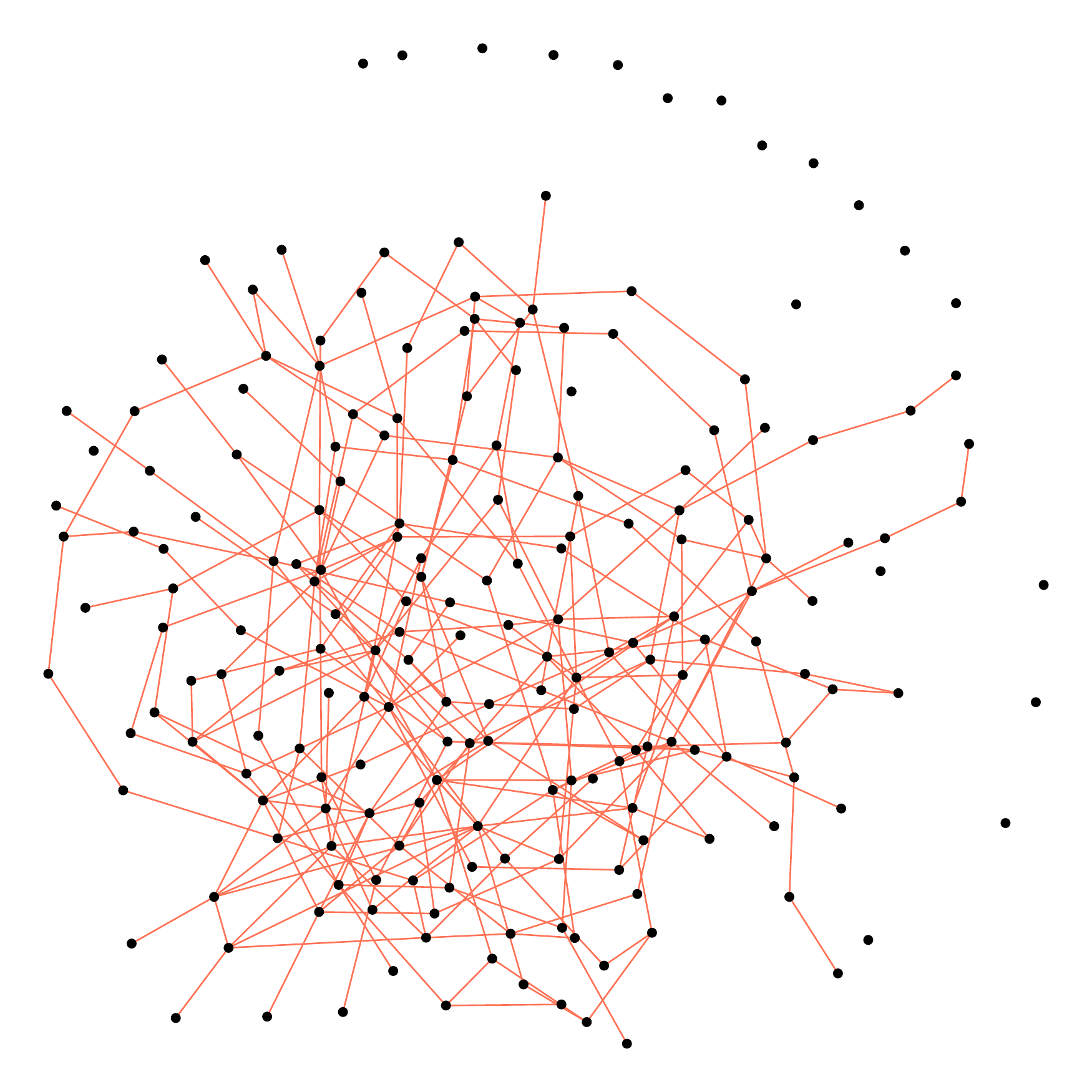}}
     \subfigure[]{
     \includegraphics[width=0.45\columnwidth]{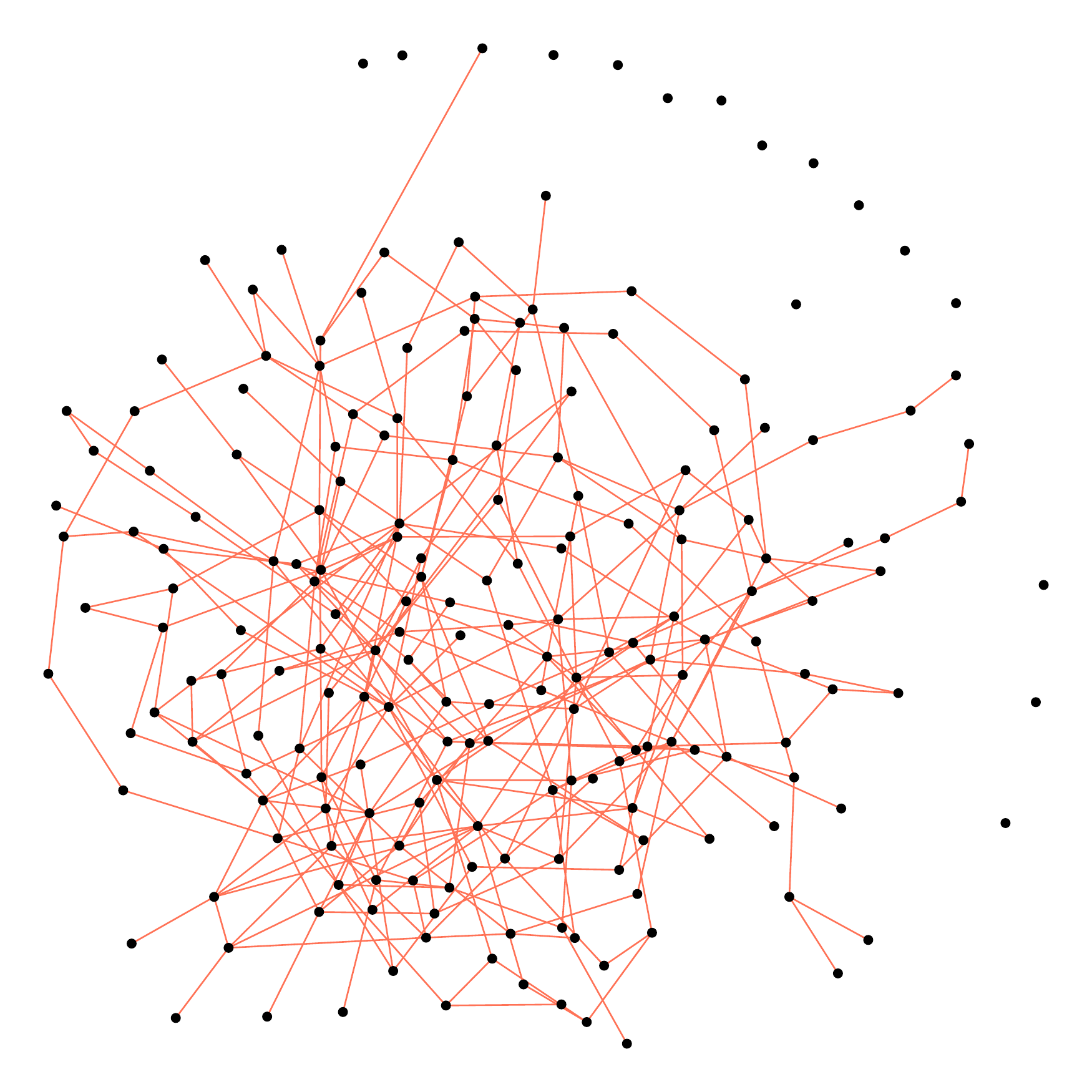}}
     }
     \caption{Recovered graphs by the \thav{} estimator using different values of $C$ for a random graph of size $d=200$ and $n=300$ samples. The graphs in (a) to (d) depict the resulting graphs using $C\in\{ 0.5,\; 0.6, \; 0.7, \, 0.8\}$, respectively. The corresponding $F_1$-scores are $0.78,\; 0.88, \; 0.94, \; 0.95$, respectively.}
     \label{fig:exadifferentC}
 \end{figure*}

 \begin{figure*}
     \centerline{
     \subfigure[]{
     \includegraphics[width=0.45\columnwidth]{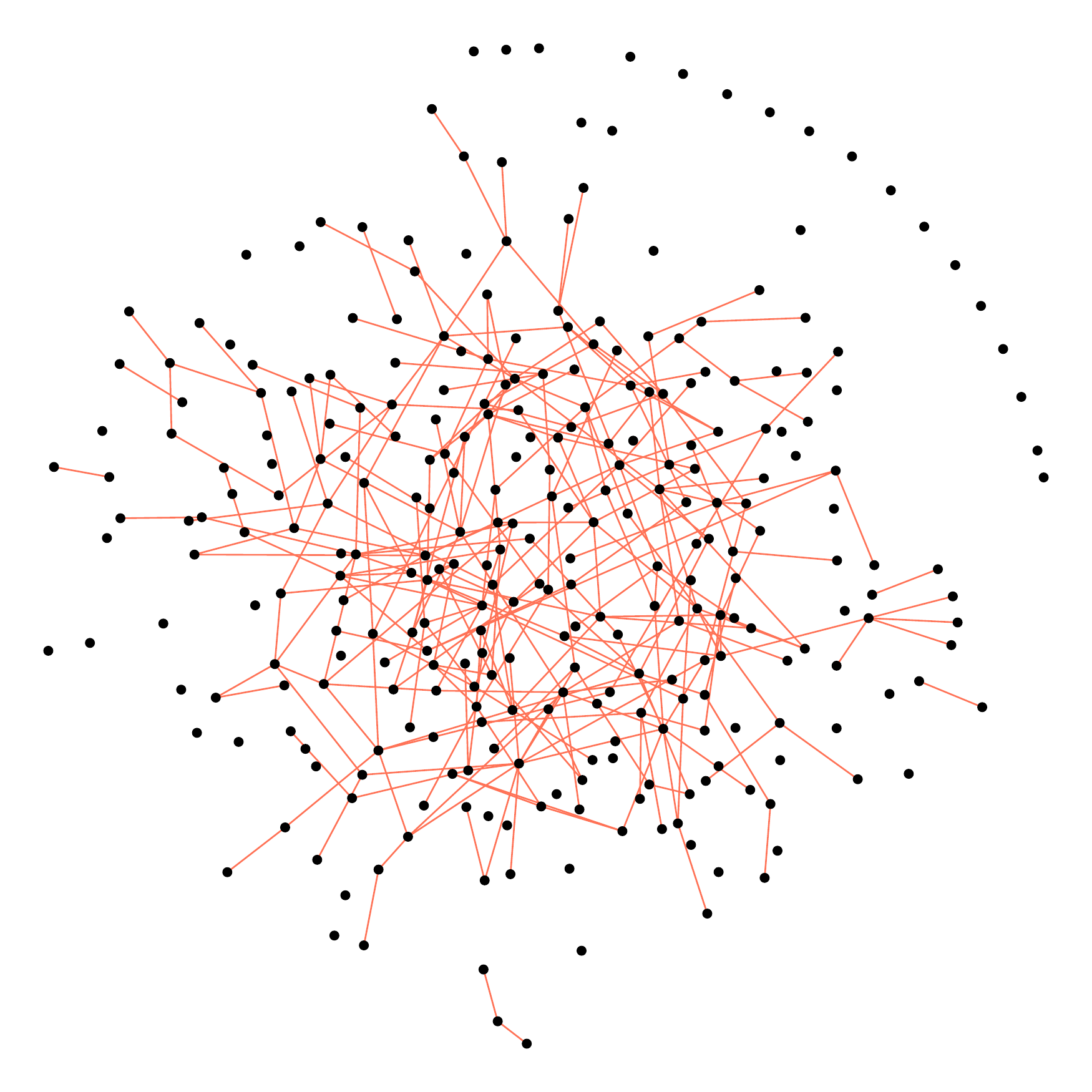}}
     \subfigure[]{
     \includegraphics[width=0.45\columnwidth]{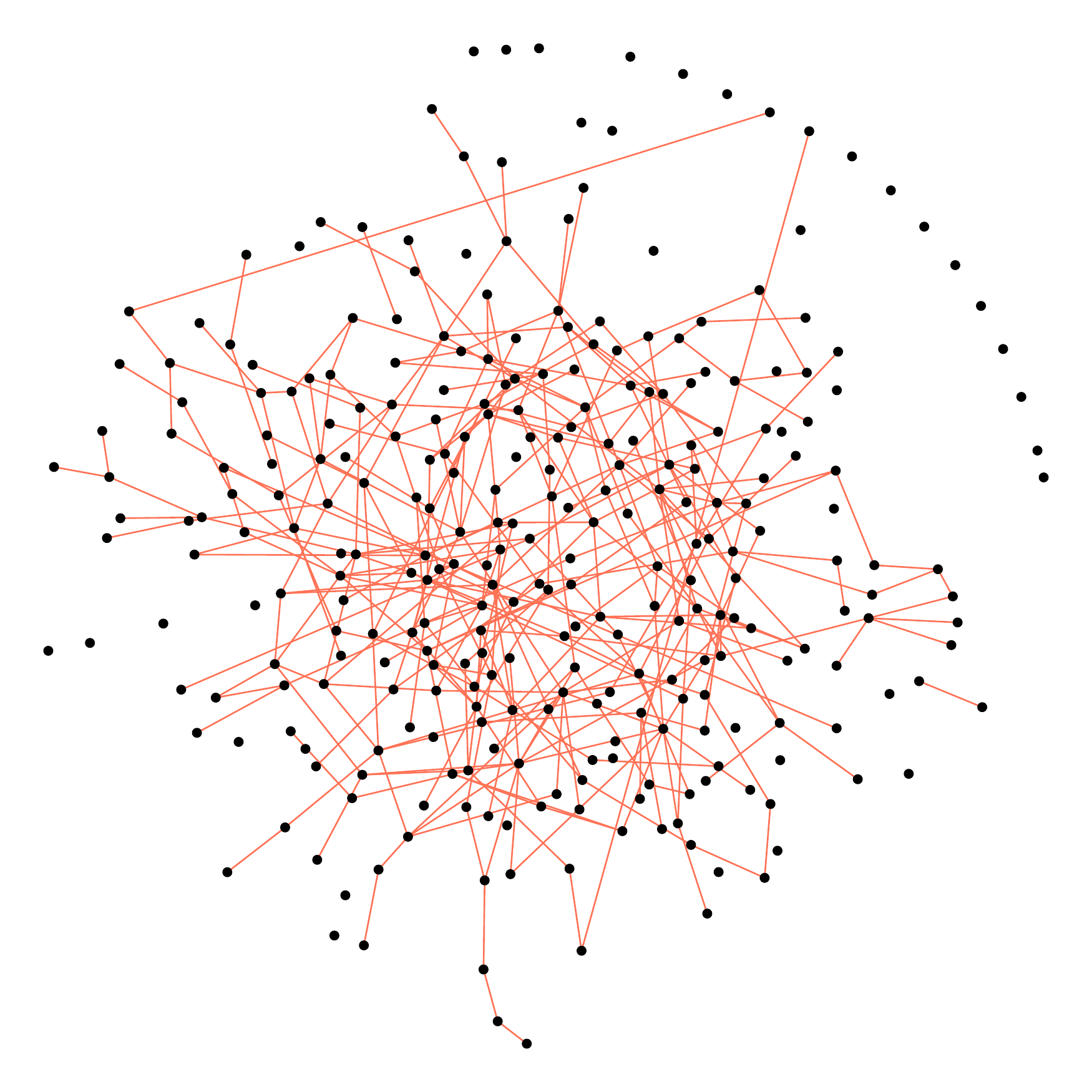}}
     }
     \centerline{
     \subfigure[]{
     \includegraphics[width=0.45\columnwidth]{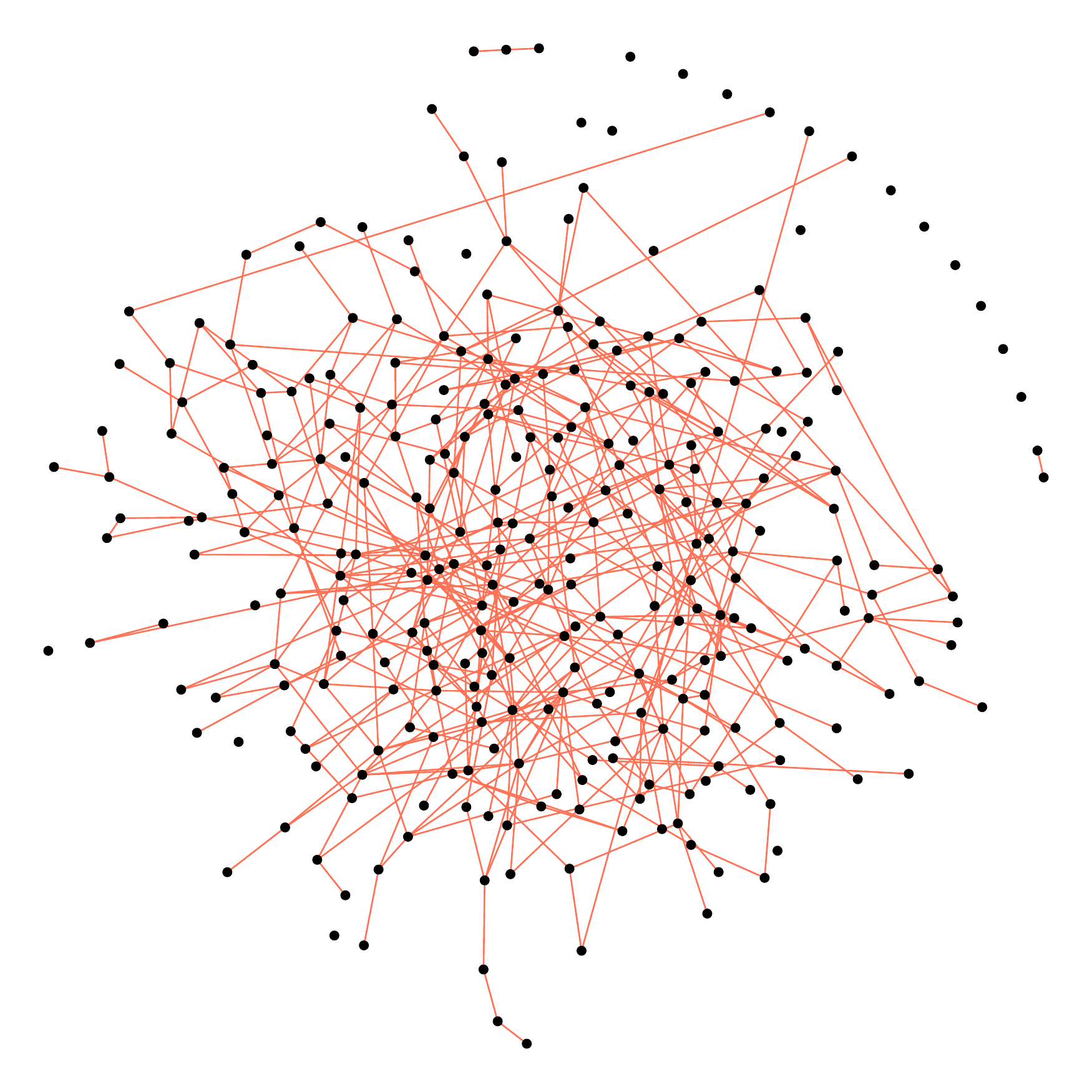}}
     \subfigure[]{
     \includegraphics[width=0.45\columnwidth]{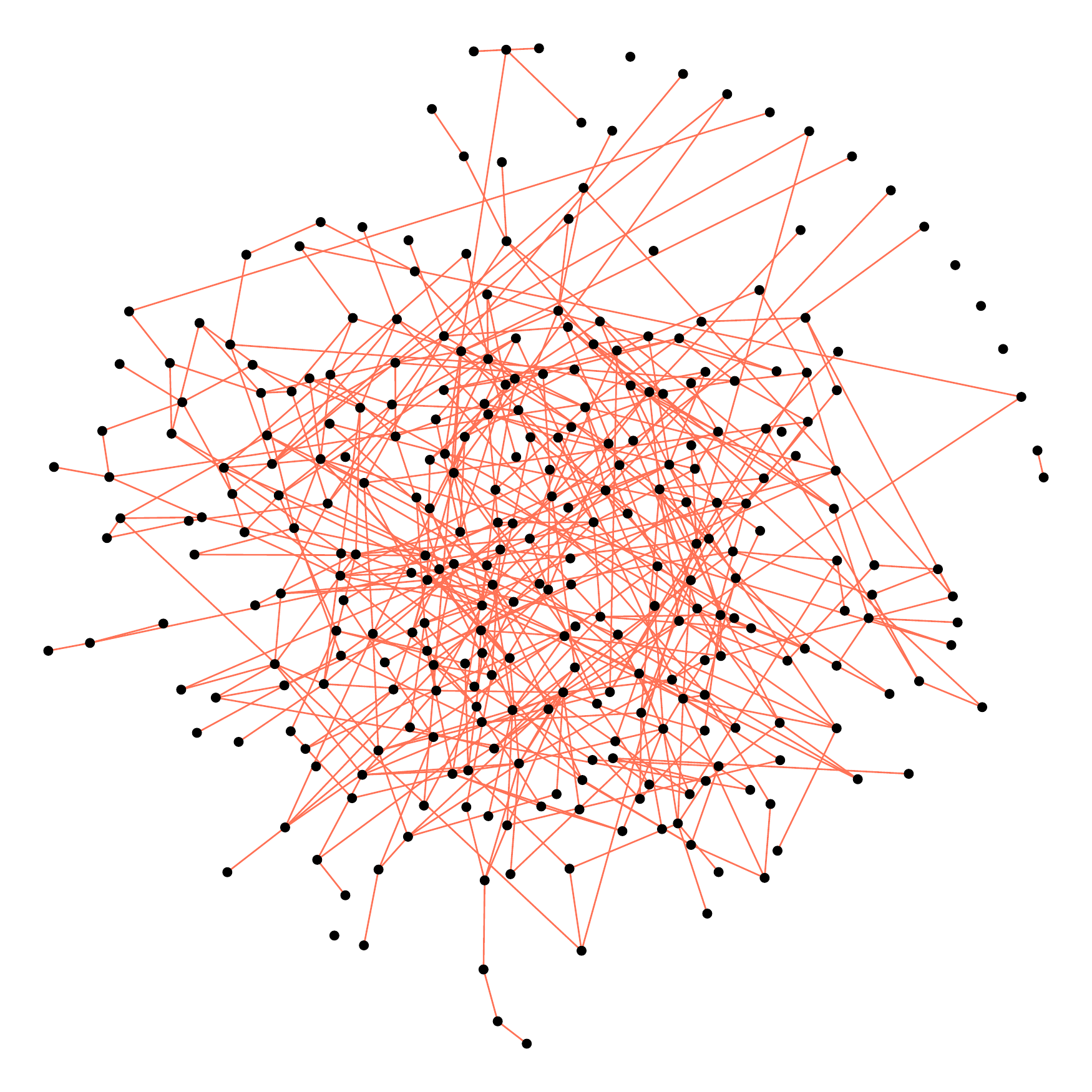}}
     }
     \caption{Recovered graphs by the \thav{} estimator using different values of $C$ for a random graph of size $d=300$ and $n=200$ samples. The graphs in (a) to (d) depict the resulting graphs using $C \in \{ 0.5,\; 0.6, \; 0.7, \, 0.8\} $, respectively. The corresponding $F_1$-scores are $0.74,\; 0.82, \; 0.86, \; 0.84$, respectively.}
 \end{figure*}

 \begin{figure*}
     \centerline{
     \subfigure[]{
     \includegraphics[width=0.45\columnwidth]{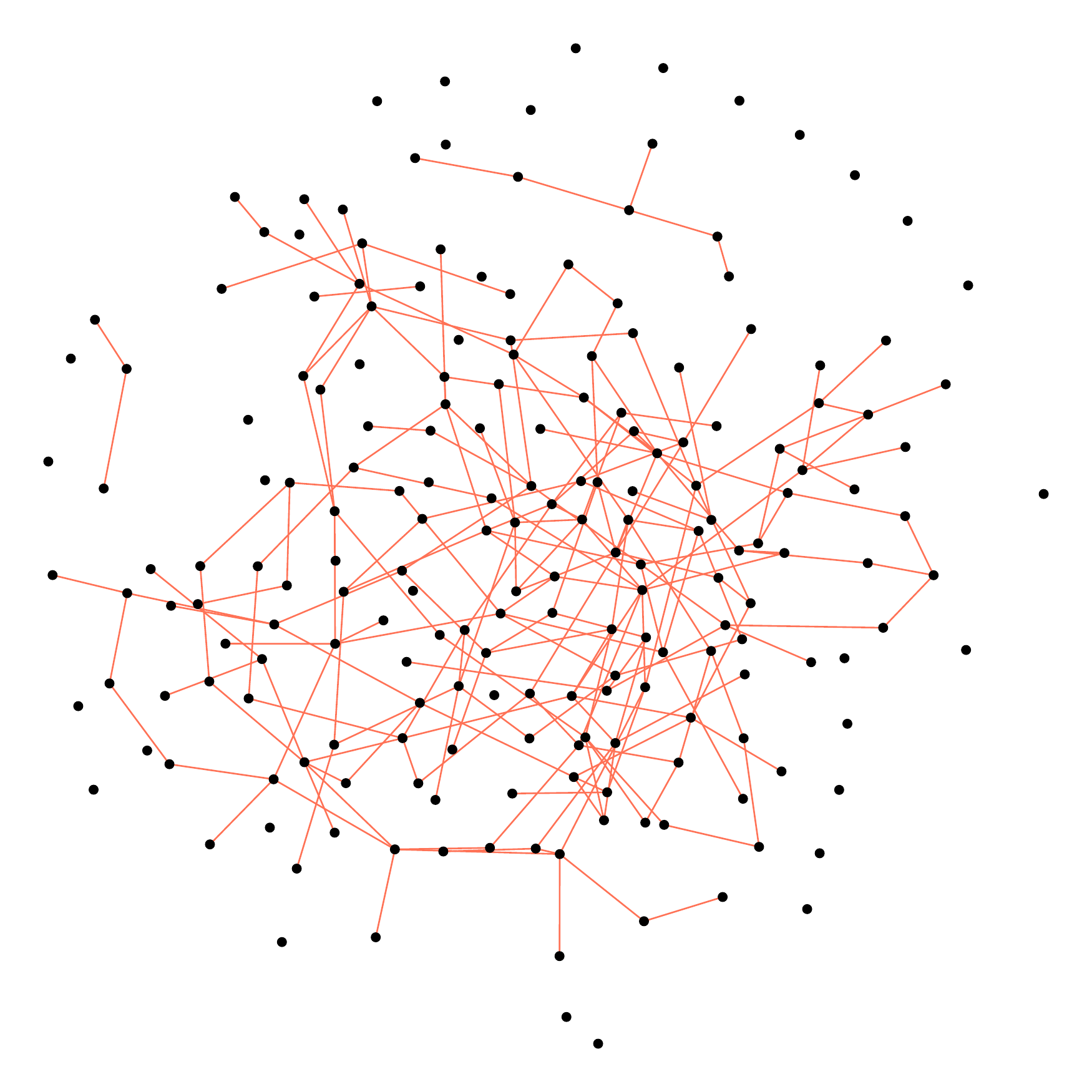}}
     \subfigure[]{
     \includegraphics[width=0.45\columnwidth]{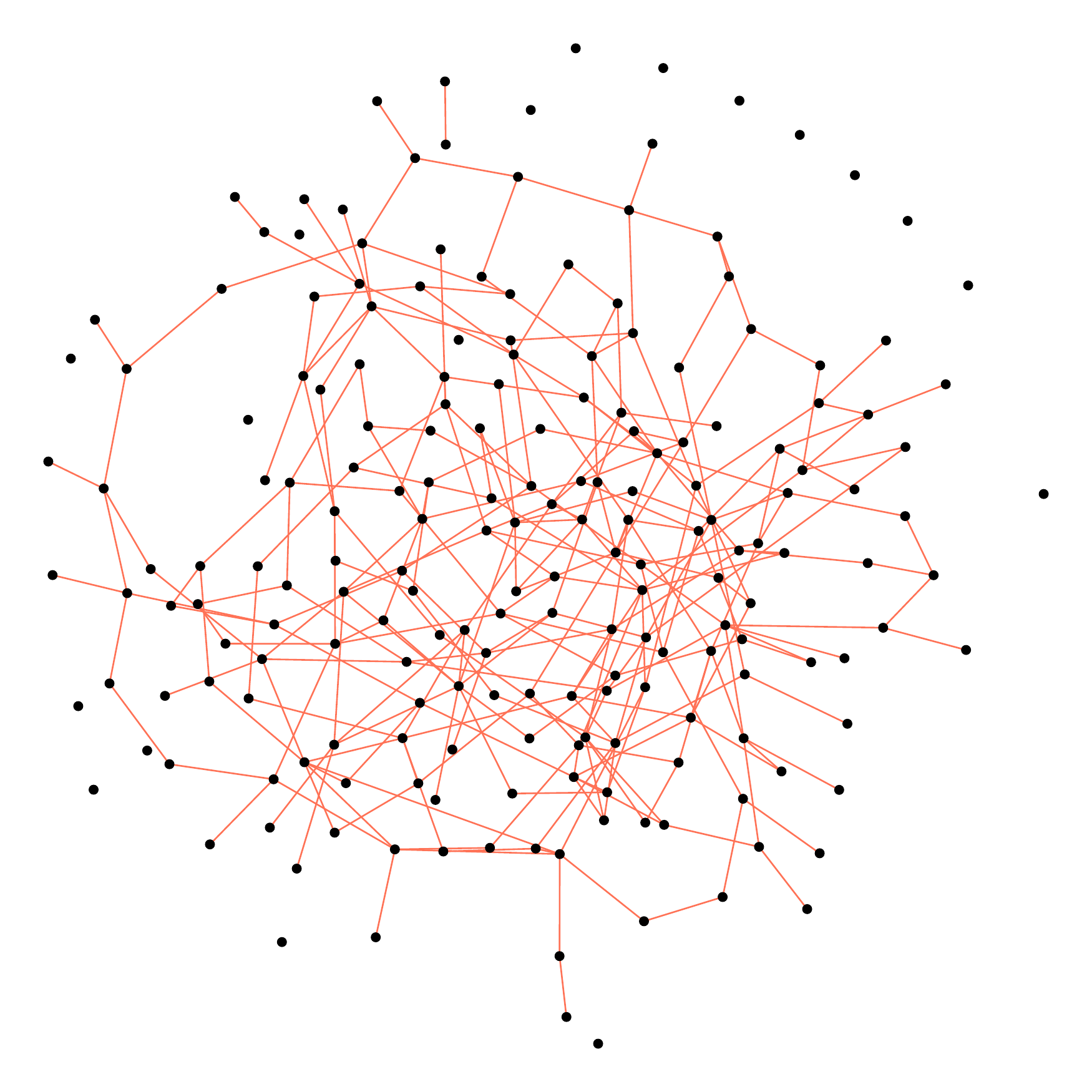}}
     }
     \centerline{
     \subfigure[]{
     \includegraphics[width=0.45\columnwidth]{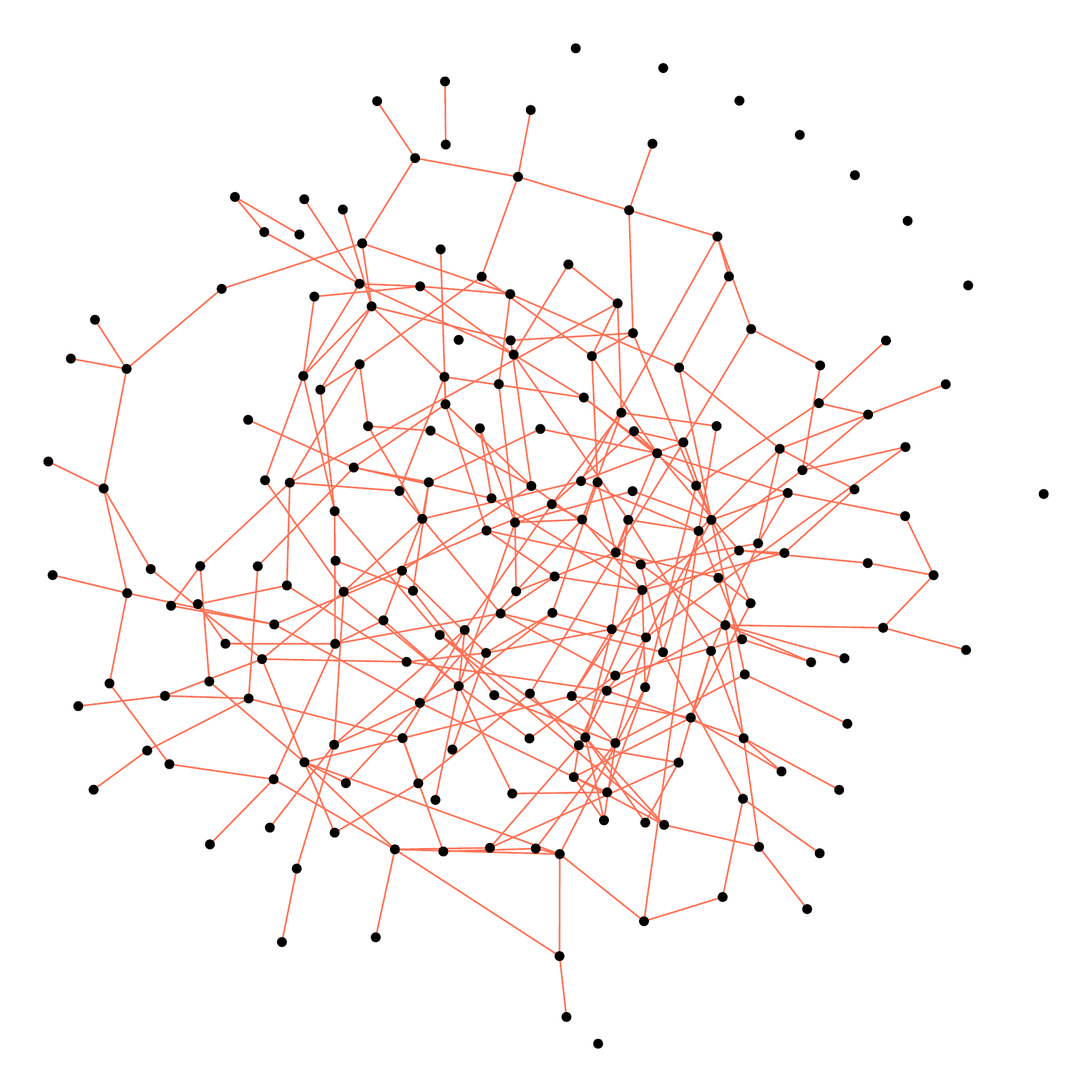}}
     \subfigure[]{
     \includegraphics[width=0.45\columnwidth]{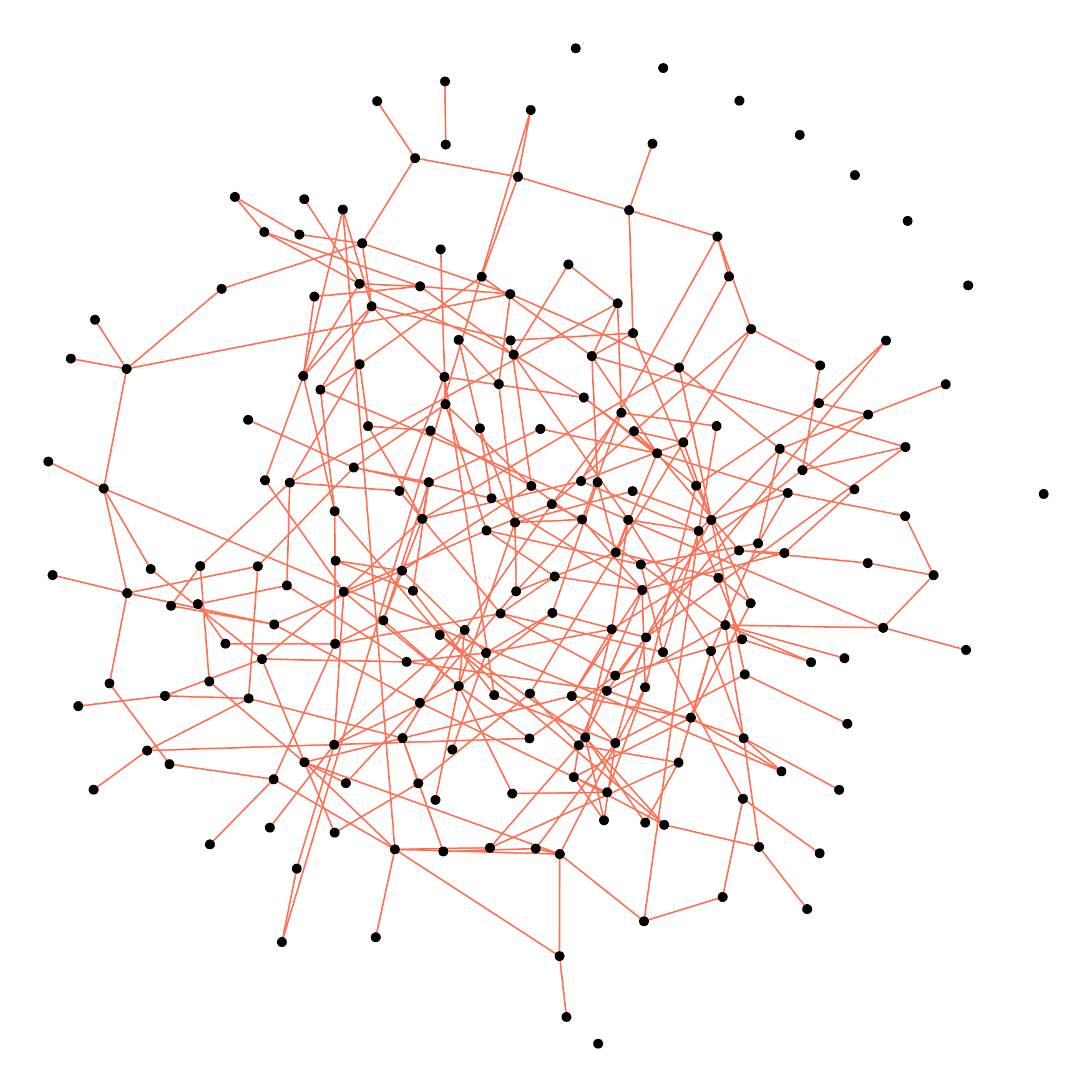}}
     }
     \caption{Recovered graphs by the \thav{} estimator using different values of $C$ for a random graph of size $d=200$ and $n=400$ samples. The graphs in (a) to (d) depict the resulting graphs using $C\in\{0.5,\; 0.6, \; 0.7, \, 0.8\}$, respectively. The corresponding $F_1$-scores are $0.82,\; 0.94, \; 0.97, \; 0.96$, respectively.}
 \end{figure*}

 \begin{figure*}
     \centerline{
     \subfigure[]{
     \includegraphics[width=0.45\columnwidth]{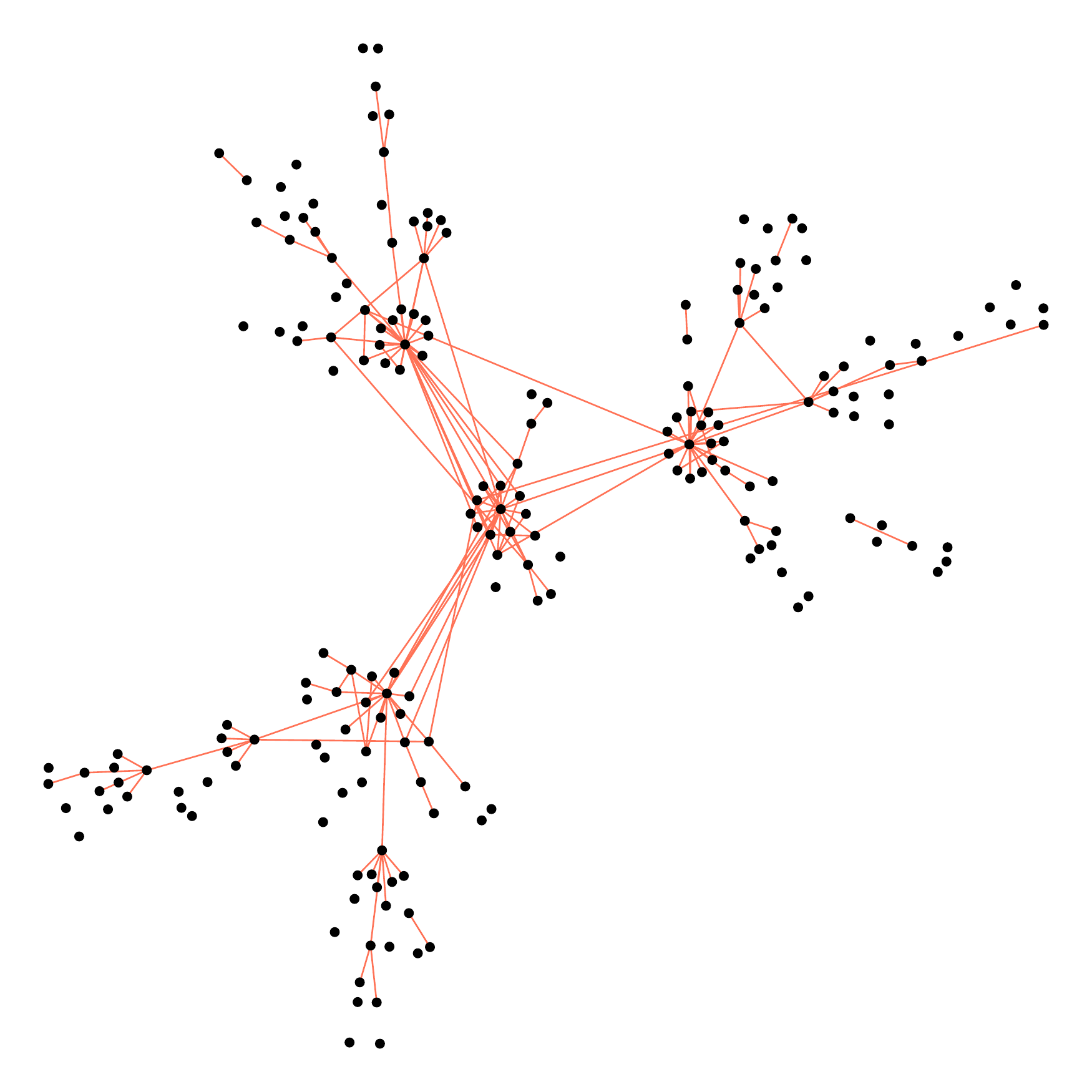}}
     \subfigure[]{
     \includegraphics[width=0.45\columnwidth]{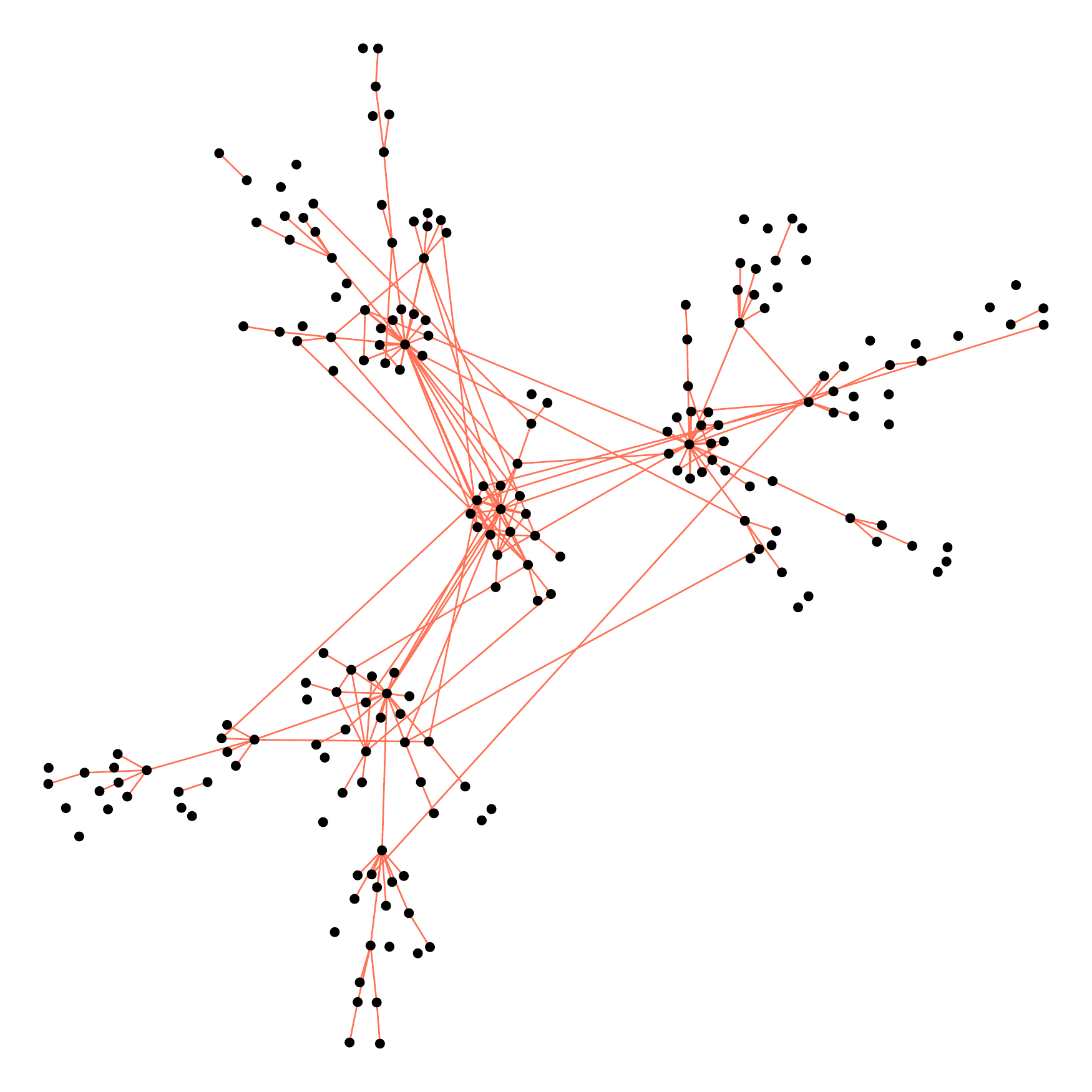}}
     }
     \centerline{
     \subfigure[]{
     \includegraphics[width=0.45\columnwidth]{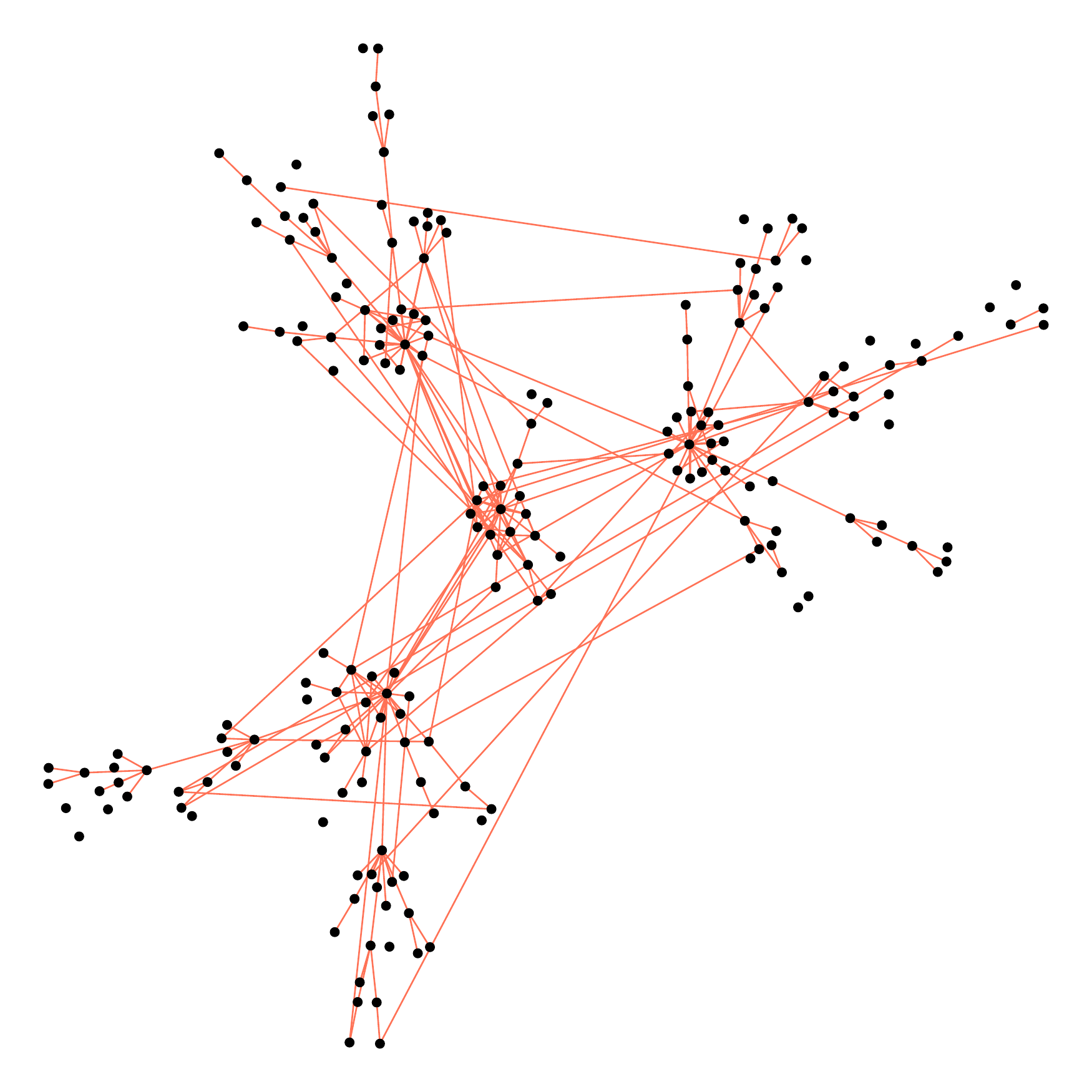}}
     \subfigure[]{
     \includegraphics[width=0.45\columnwidth]{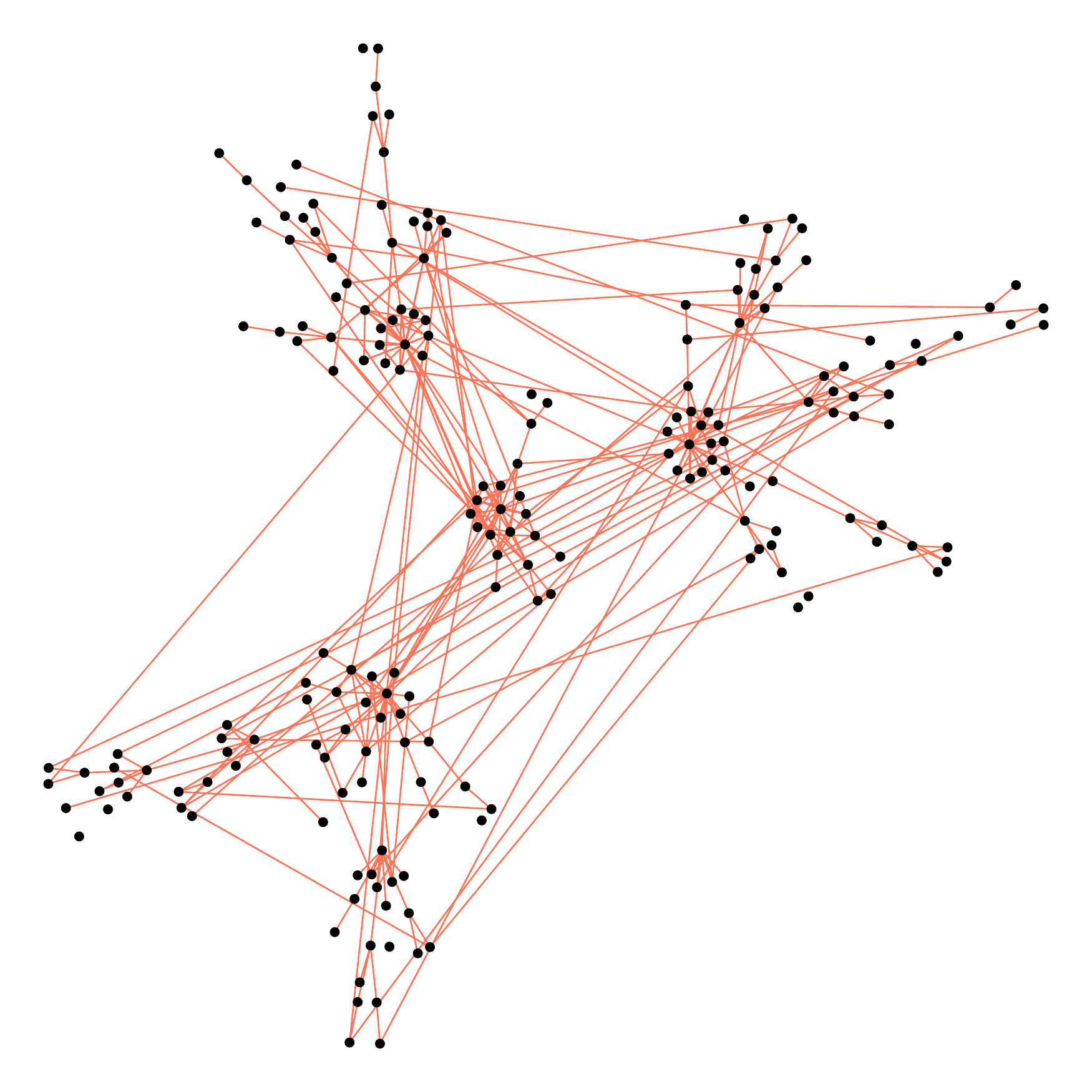}}
     }
     \caption{Recovered graphs by the \thav{} estimator using different values of $C$ for a scale-free graph of size $d=200$ and $n=300$ samples. The graphs in (a) to (d) depict the resulting graphs using $C\in\{0.5,\; 0.6, \; 0.7, \, 0.8\}$, respectively. The corresponding $F_1$-scores are $0.69,\; 0.73, \; 0.75, \; 0.71$, respectively.}
 \end{figure*}

 \begin{figure*}
     \centerline{
     \subfigure[]{
     \includegraphics[width=0.45\columnwidth]{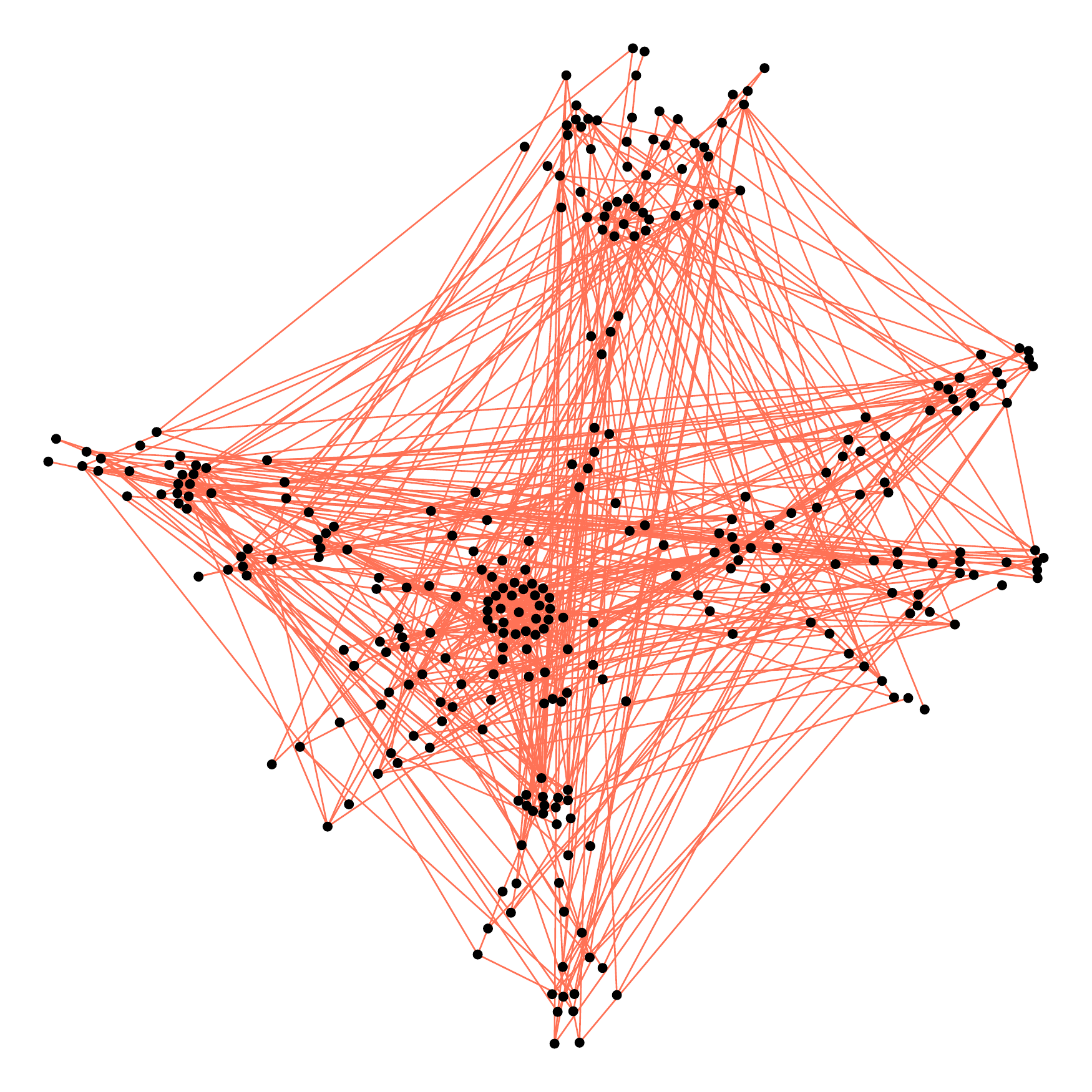}}
     \subfigure[]{
     \includegraphics[width=0.45\columnwidth]{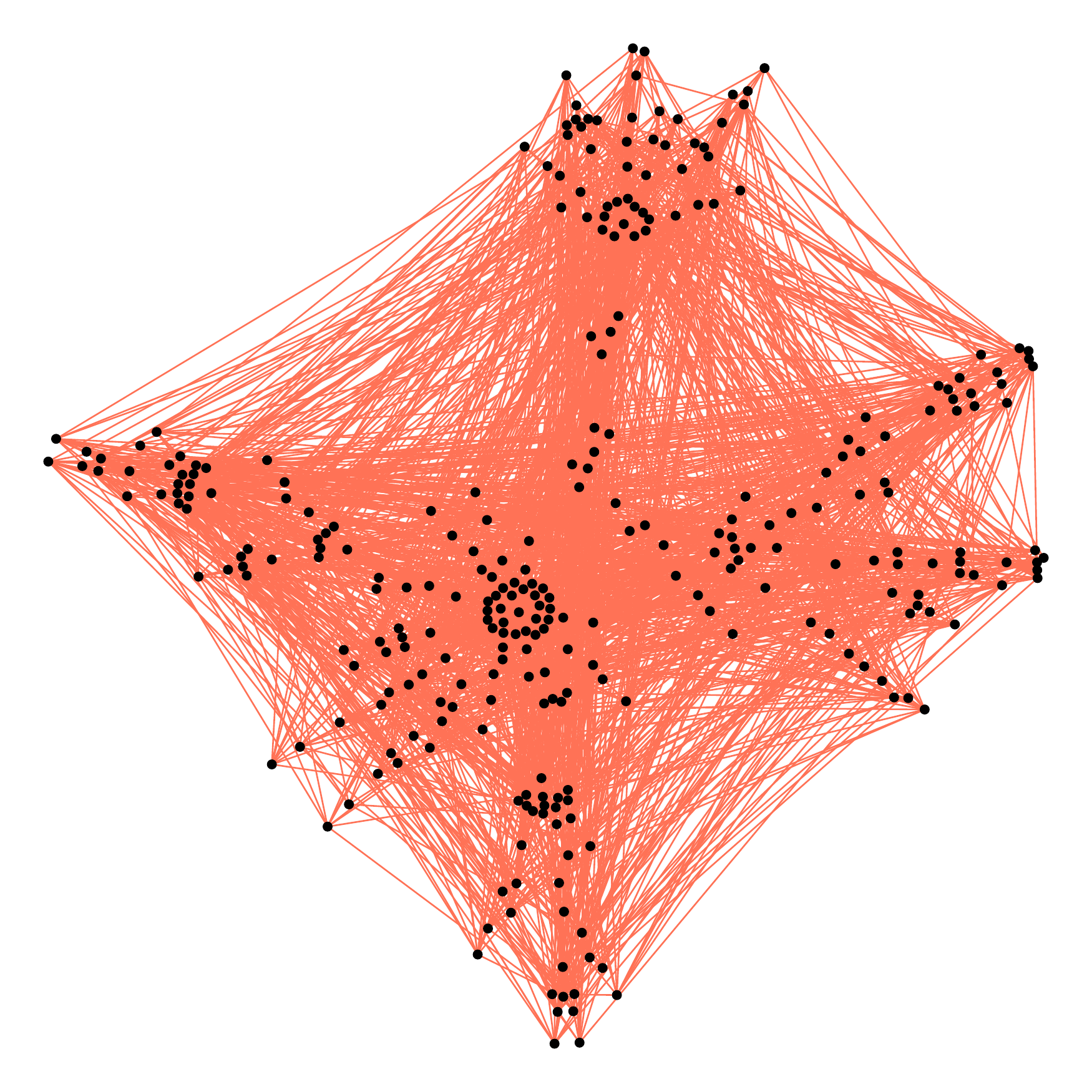}}
     }
     \centerline{
     \subfigure[]{
     \includegraphics[width=0.45\columnwidth]{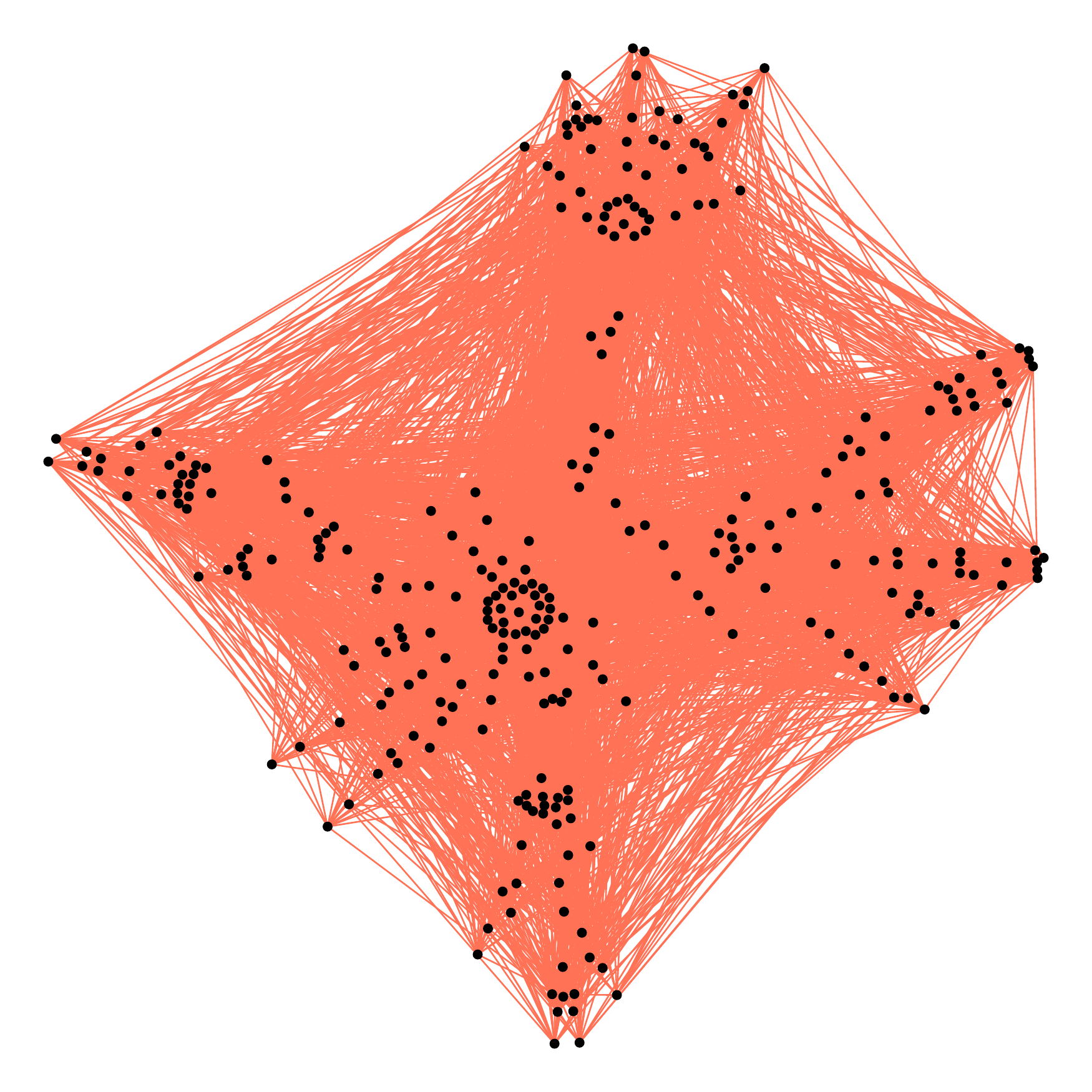}}
     \subfigure[]{
     \includegraphics[width=0.45\columnwidth]{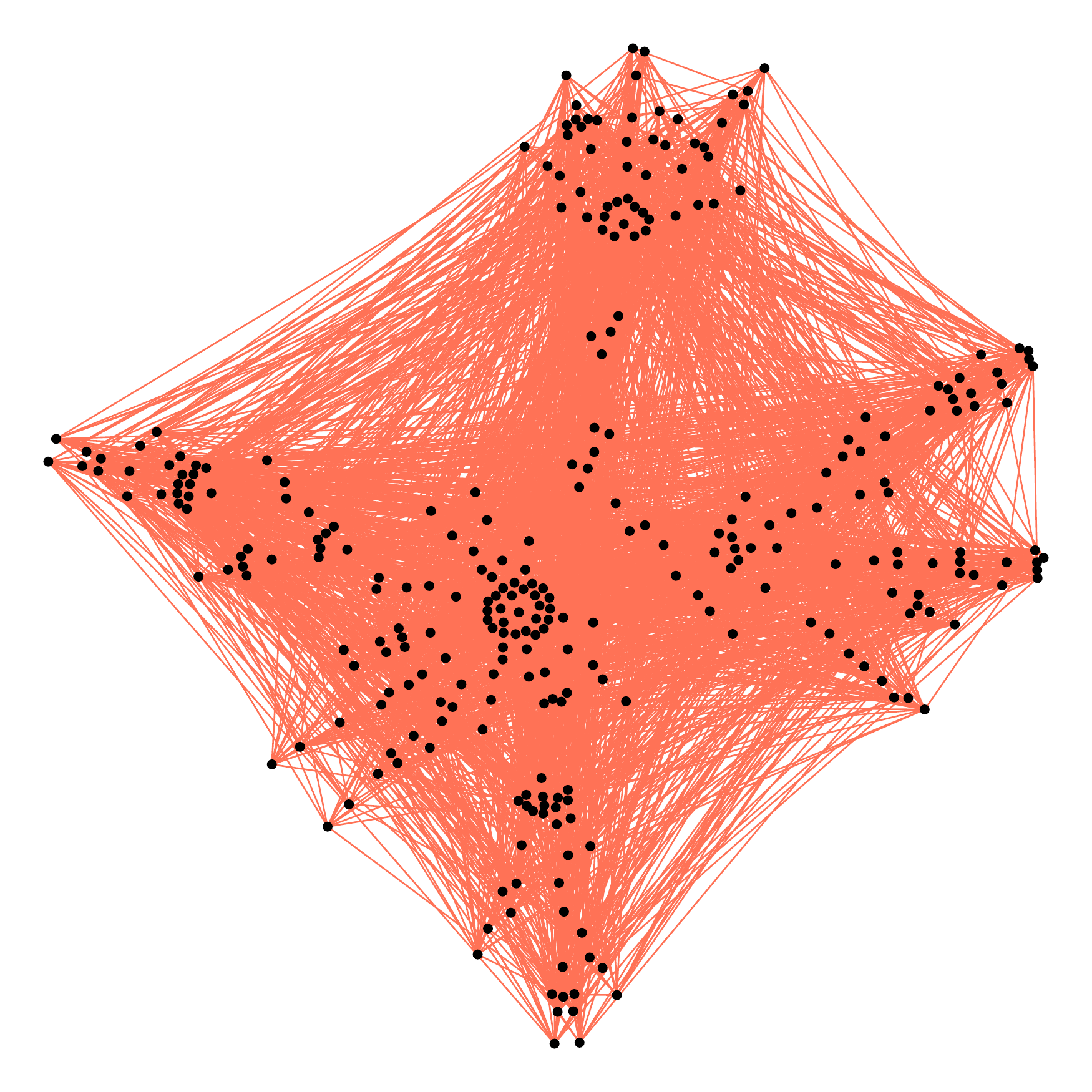}}
     }
     \caption{Recovered graphs by the \thav{} estimator using different values of $C$ for a scale-free graph of size $d=300$ and $n=200$ samples. The graphs in (a) to (d) depict the resulting graphs using $C\in\{0.5,\; 0.6, \; 0.7, \, 0.8\}$, respectively. The corresponding $F_1$-scores are $0.30,\; 0.17, \; 0.11, \; 0.13$, respectively.}
 \end{figure*}

 \begin{figure*}
     \centerline{
     \subfigure[]{
     \includegraphics[width=0.45\columnwidth]{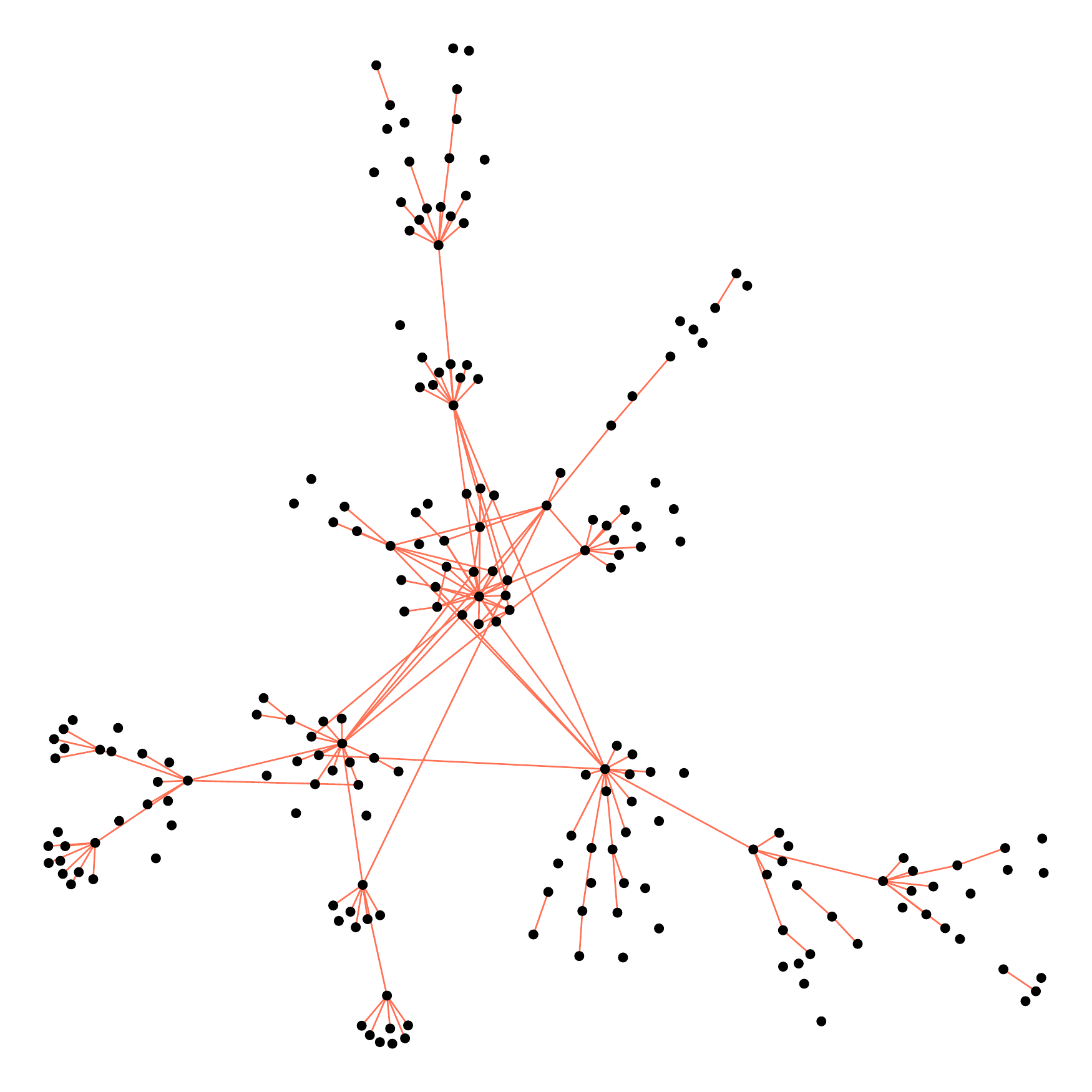}}
     \subfigure[]{
     \includegraphics[width=0.45\columnwidth]{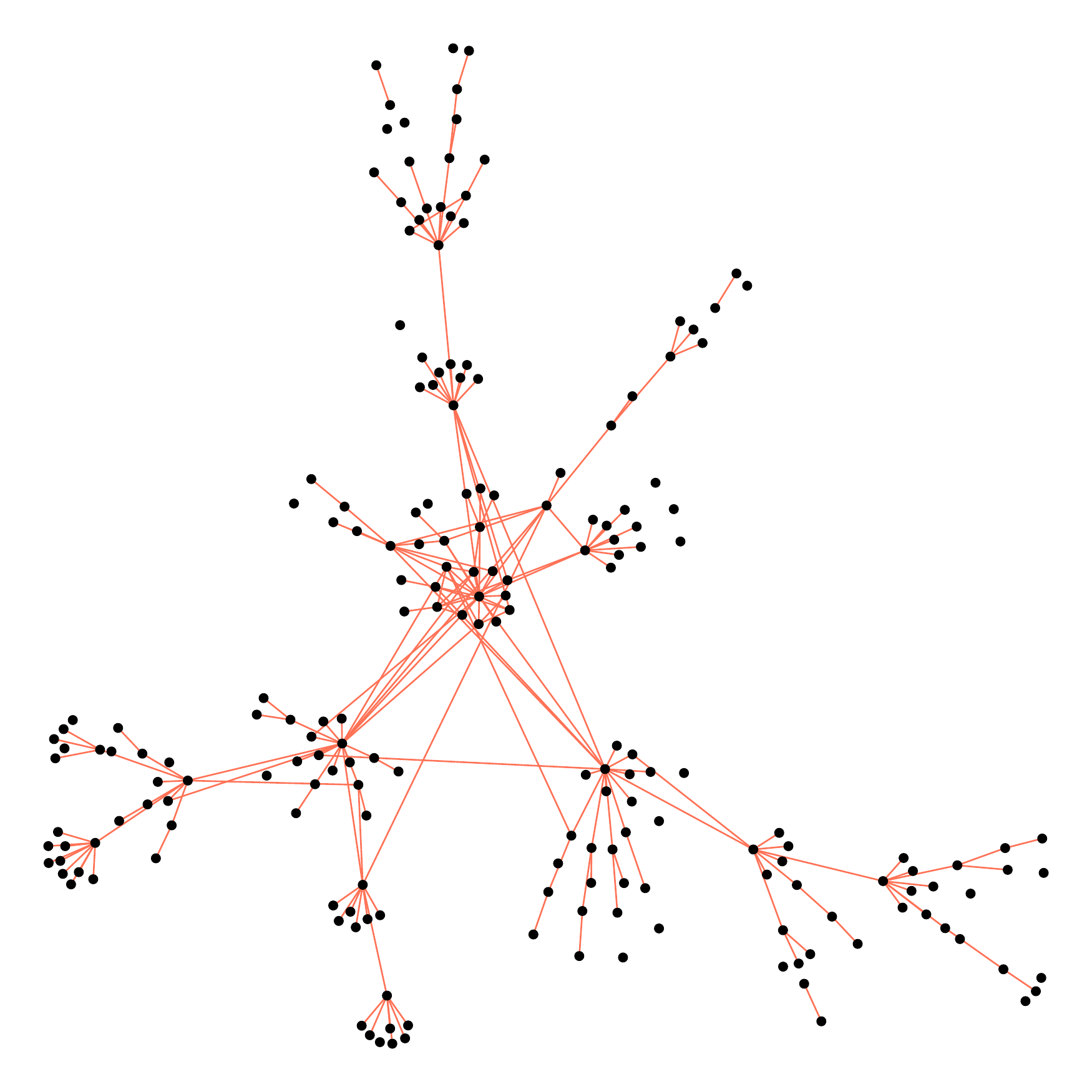}}
     }
     \centerline{
     \subfigure[]{
     \includegraphics[width=0.45\columnwidth]{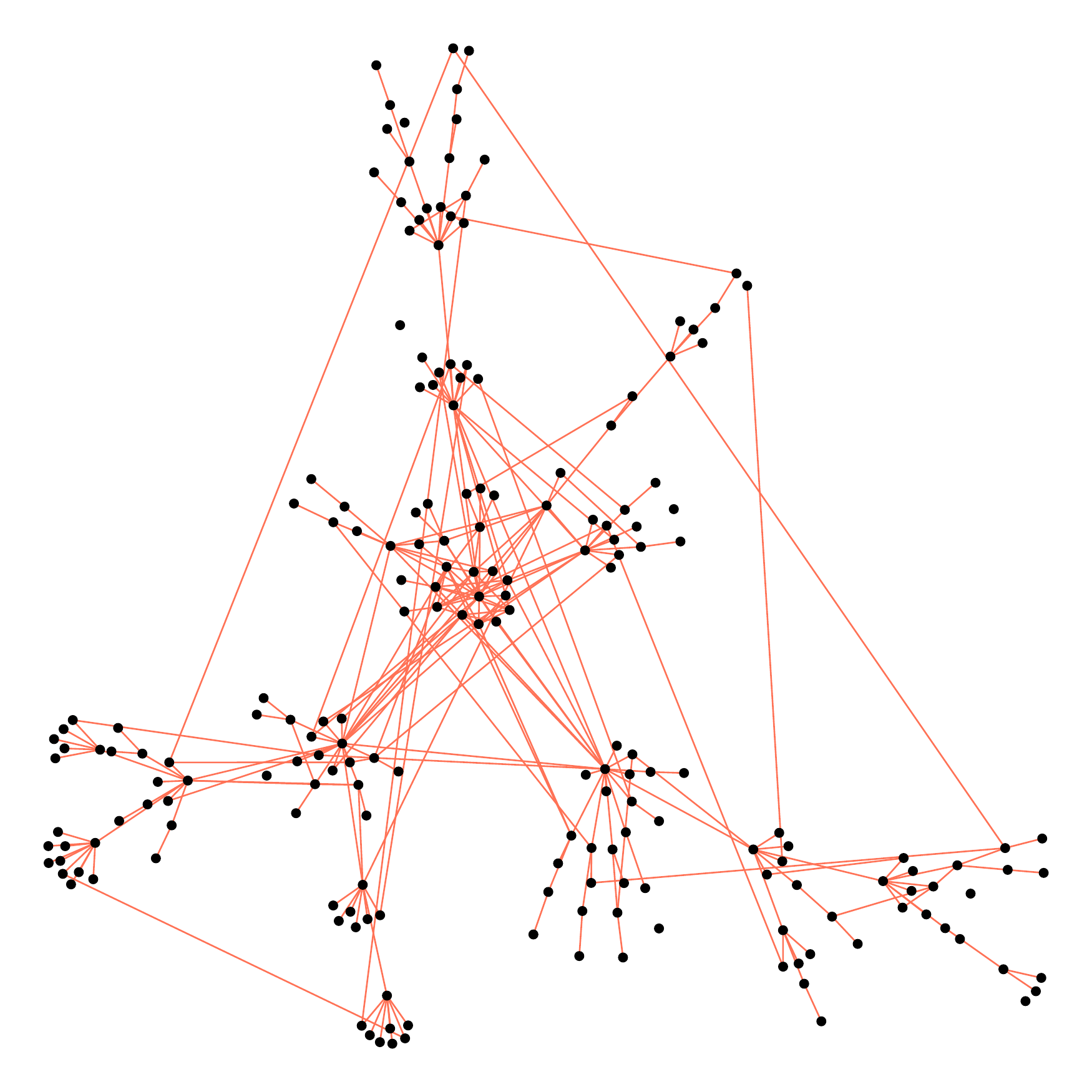}}
     \subfigure[]{
     \includegraphics[width=0.45\columnwidth]{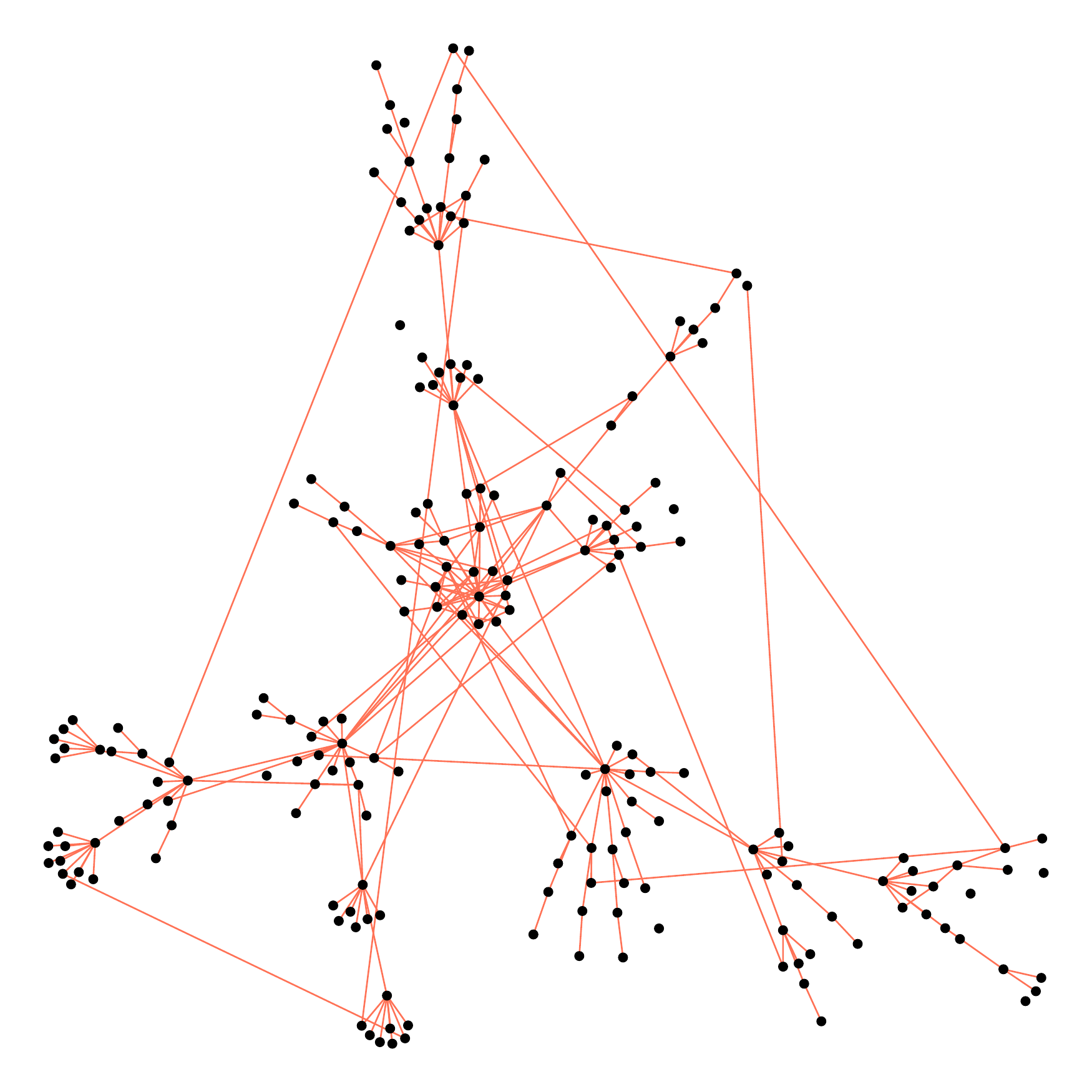}}
     }
     \caption{Recovered graphs by the \thav{} estimator using different values of $C$ for a random graph of size $d=200$ and $n=400$ samples. The graphs in (a) to (d) depict the resulting graphs using $C\in\{0.5,\; 0.6, \; 0.7, \, 0.8\}$, respectively. The corresponding $F_1$-scores are $0.75,\; 0.83, \; 0.80, \; 0.85$, respectively.}
     \label{fig:exadifferentClast}
 \end{figure*}
 \begin{figure}
    \centering
    \includegraphics[width=\linewidth]{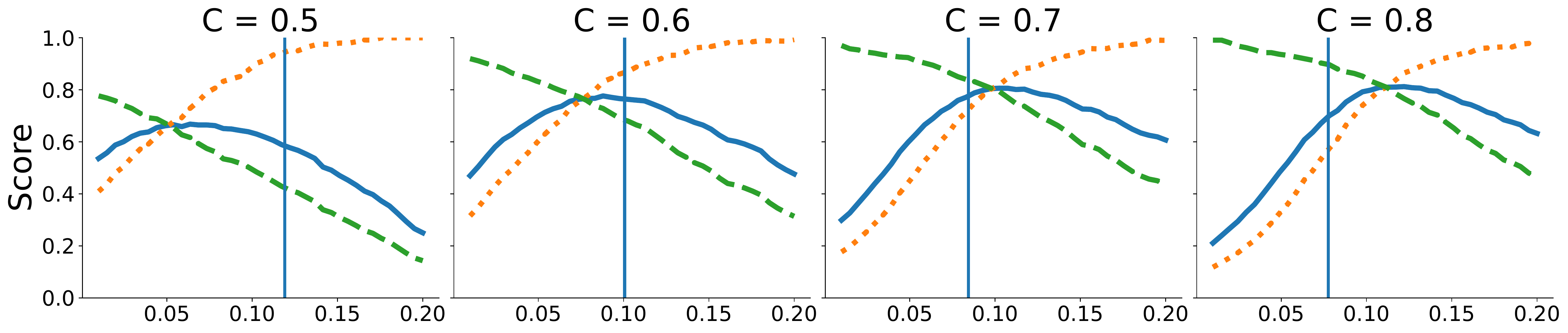}
    \includegraphics[width=\linewidth]{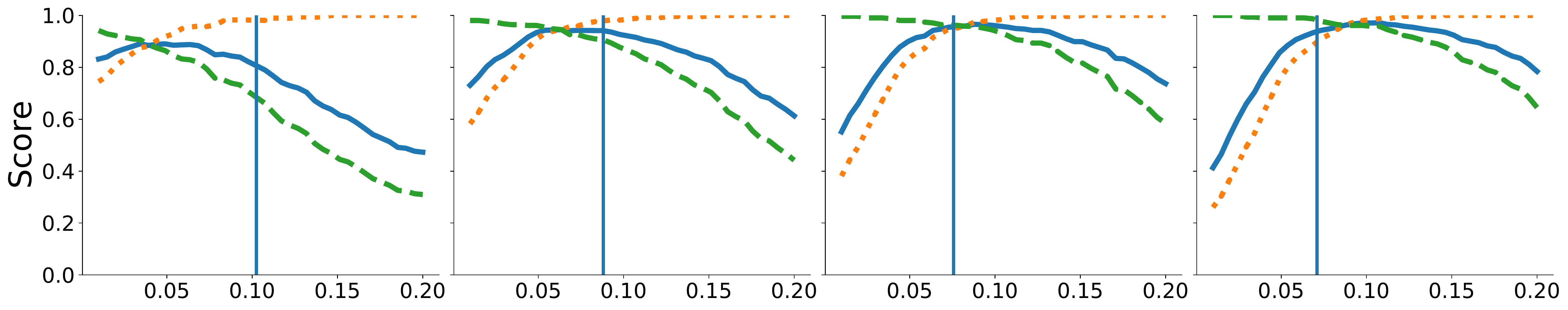}
    \includegraphics[width=\linewidth]{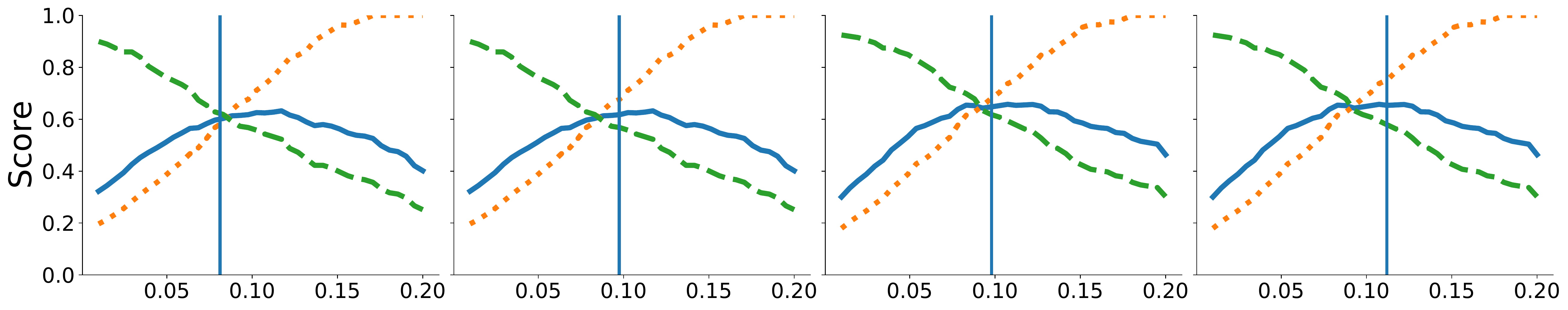}
    \includegraphics[width=\linewidth]{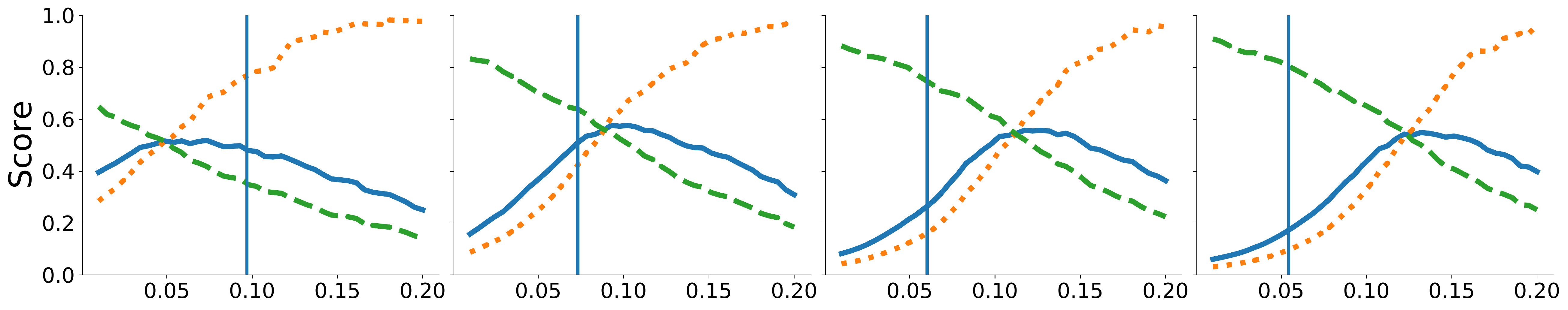}
    \includegraphics[width=\linewidth]{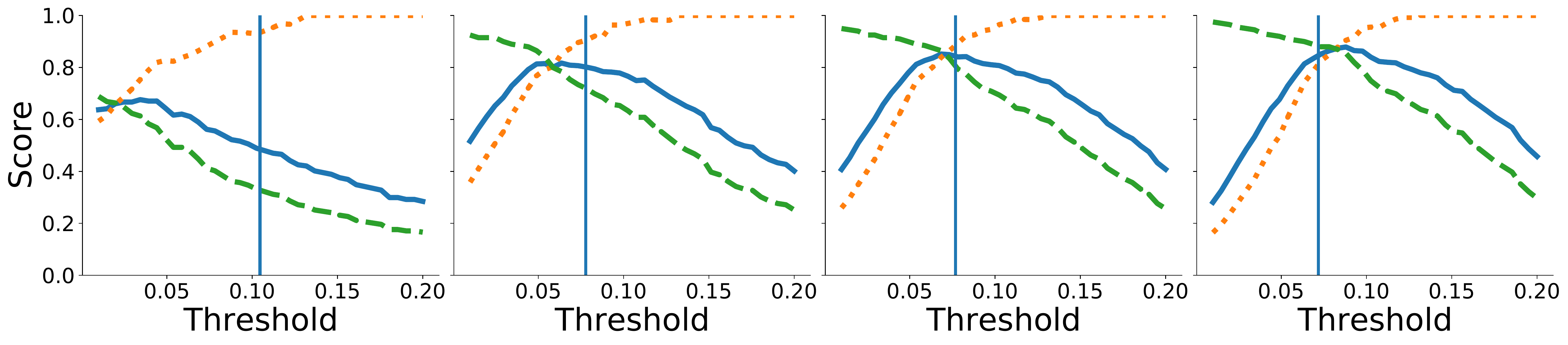}
    
    \caption{The $F_1$-score (blue, solid), precision (yellow, dotted), and recall (green, dashed) of a thresholded \av{} estimator in various settings. From top to bottom, we consider the settings $n=200,\, d=300$, and $n=400, \, d=200$ for a random graph, $n=300, \, d=200$, and $n=200,\,  d=300$, and $n=400,\,  d=200$ for a scale-free graph in dependence of the thresholds. The vertical line depicts the proposed threshold $t=C\hat{r}$ corresponding to the \thav{} estimator.}
    \label{bestthreshold_supp}
\end{figure}

\begin{figure*}
    \centerline{
    \subfigure[Random graph]{\includegraphics[width=\linewidth]{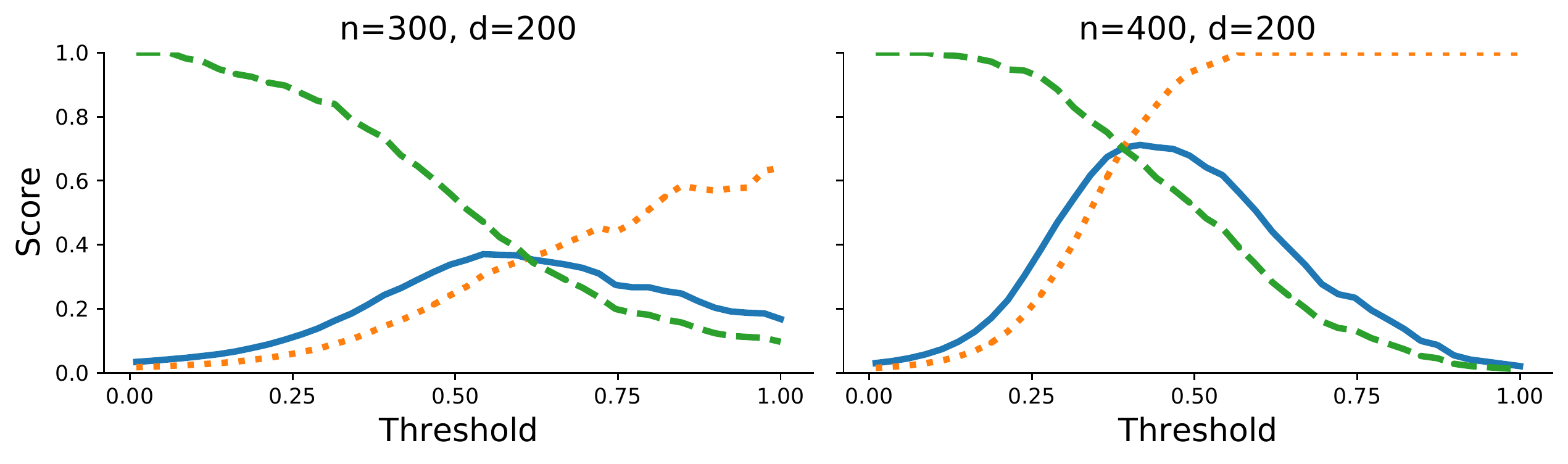}}
    }
    \centerline{
    \subfigure[Scale-free graph]{\includegraphics[width=\linewidth]{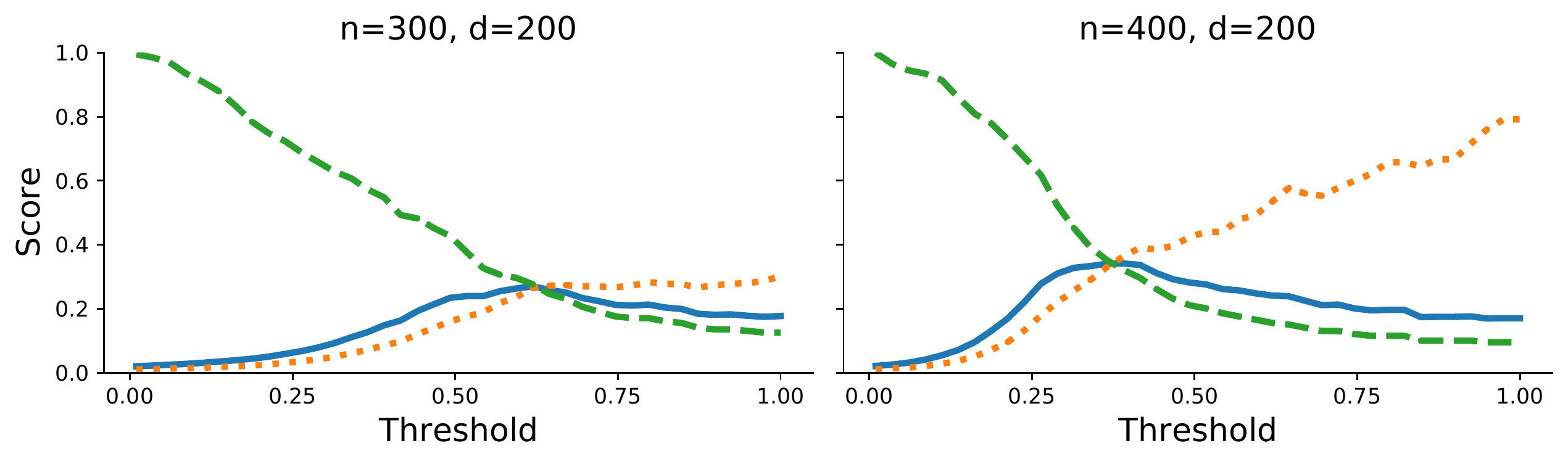}}
    }
    
    \caption{The $F_1$-score (blue, solid), precision (yellow, dotted), and recall (green, dashed) of a thresholded unregularized estimator in various settings. Note that the unregularized estimator does not exist in the case $d>n$, hence we exclude the setting $d=300,\, n=200$ from this experiment. }
    \label{fig:mlethresholding}
\end{figure*}

\begin{figure*}
 \centerline{
 \subfigure[$n=300, \, d=200.$ The resulting $F_1$-scores are $0.86$ for the \thav{} rSME and $0.84$ for the \thav{} sf-glasso.]{
    \includegraphics[width=0.3\columnwidth]{plots_AISTATS/random/true_graphn300d200.pdf}  \includegraphics[width=0.3\columnwidth]{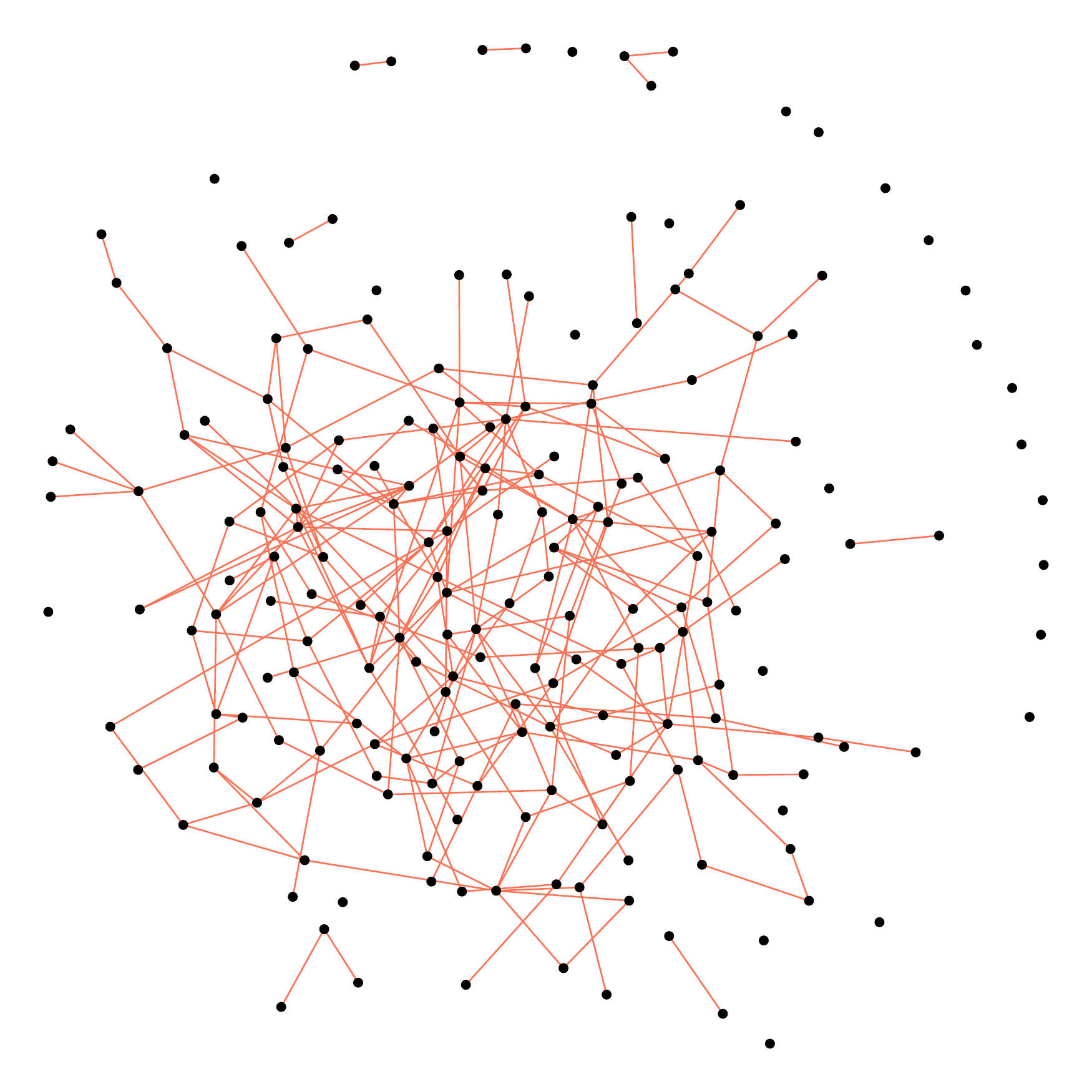} \includegraphics[width=0.3\columnwidth]{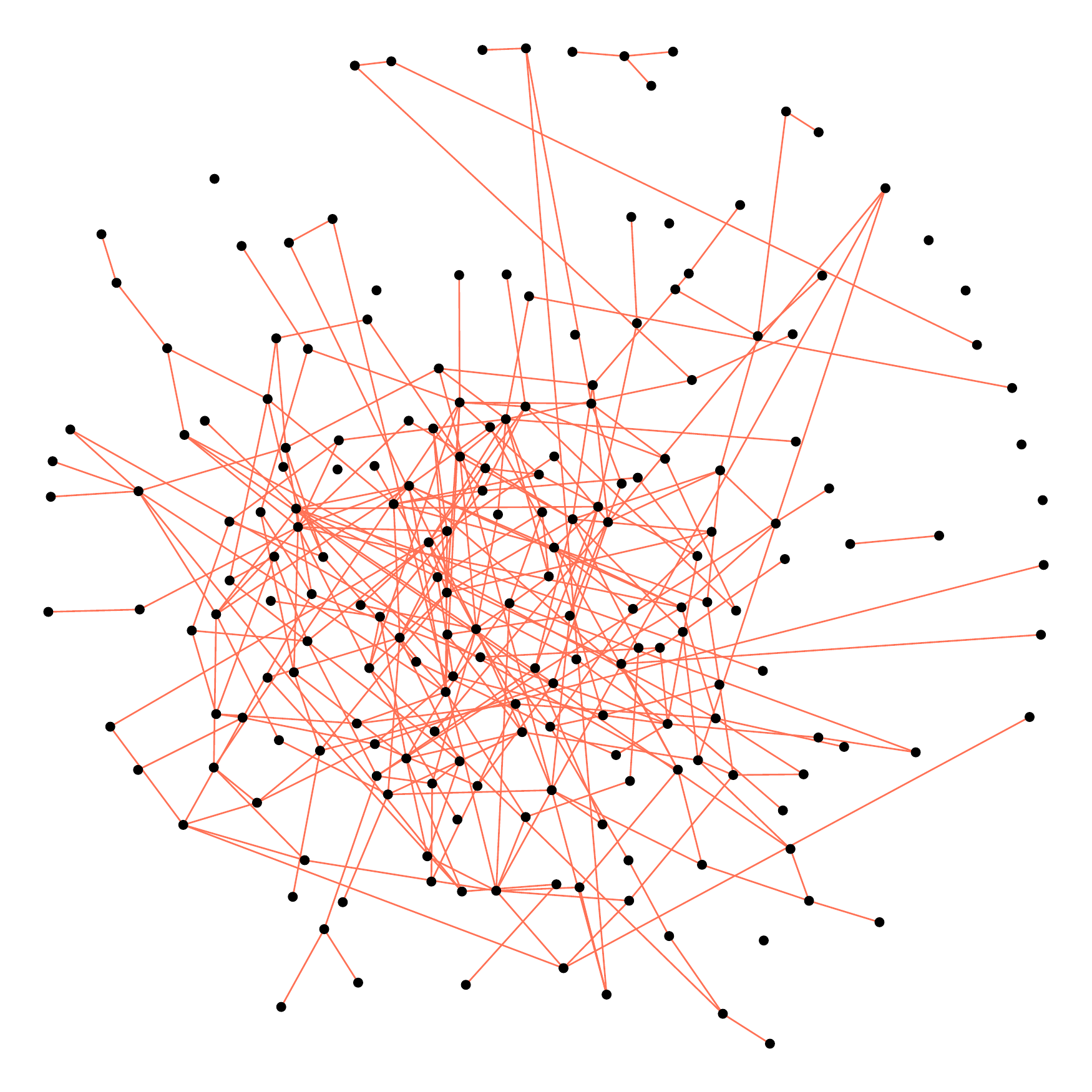}  }}
\centerline{
\subfigure[$n=200, \, d=300.$ The resulting $F_1$-scores are $0.82$ for the \thav{} rSME and $0.12$ for the \thav{} sf-glasso.]{
     \includegraphics[width=0.3\columnwidth]{plots_AISTATS/random/true_graphn200d300.pdf}  \includegraphics[width=0.3\columnwidth]{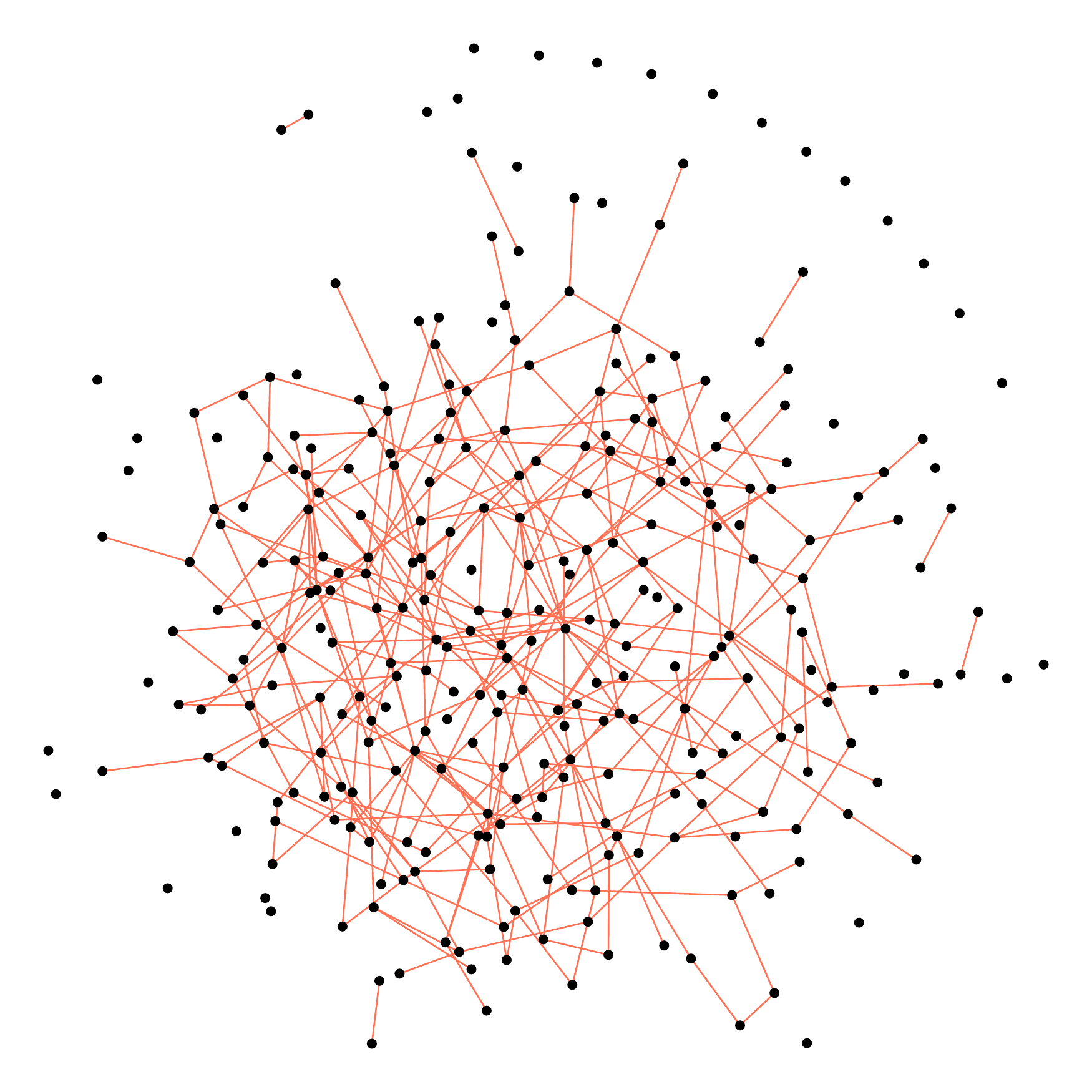} \includegraphics[width=0.3\columnwidth]{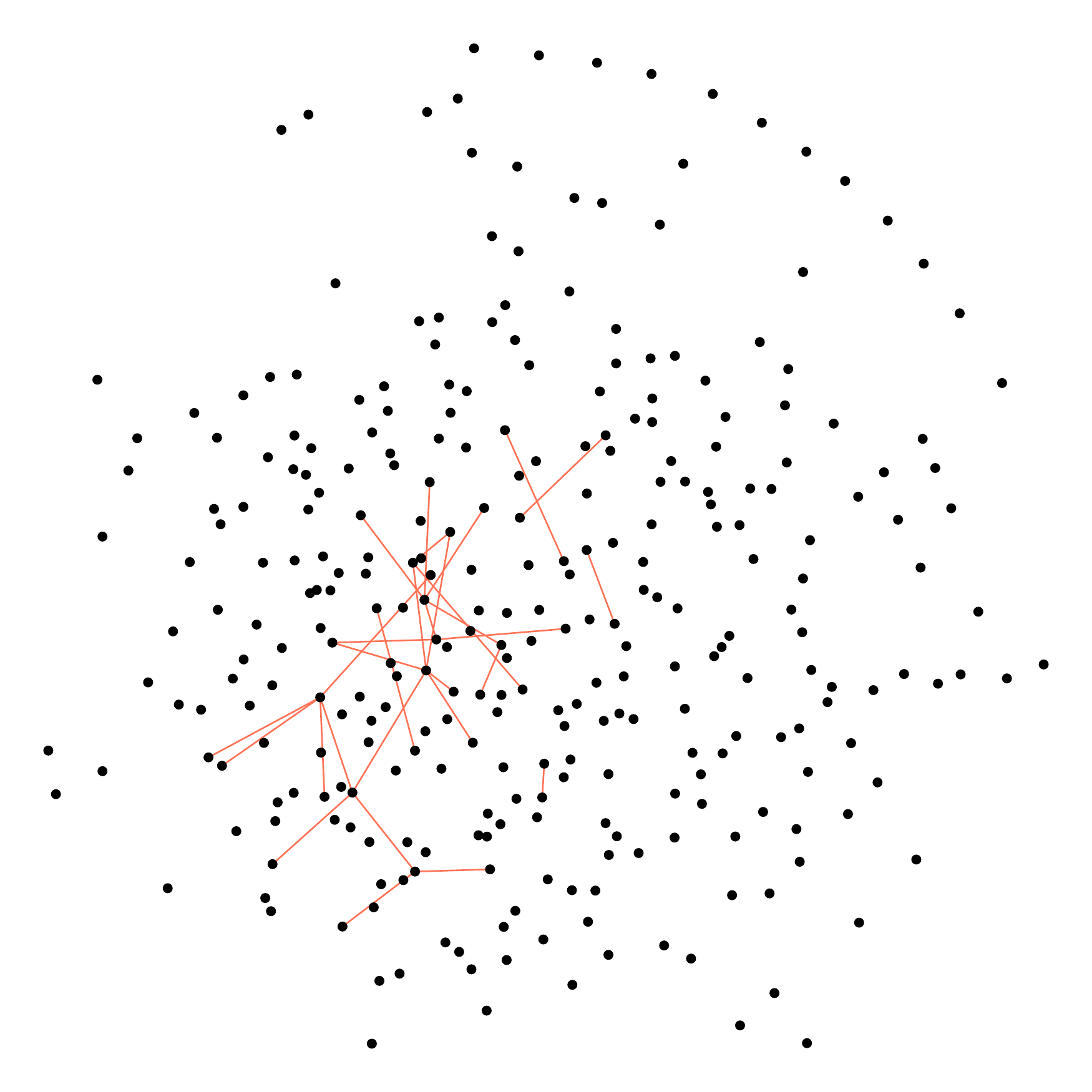} }
     }
\centerline{
\subfigure[$n=400, \, d=200.$ The resulting $F_1$-scores are $0.93$ for the \thav{} rSME and $0.62$ for the \thav{} sf-glasso.]{
     \includegraphics[width=0.3\columnwidth]{plots_AISTATS/random/true_graphn400d200.pdf}  \includegraphics[width=0.3\columnwidth]{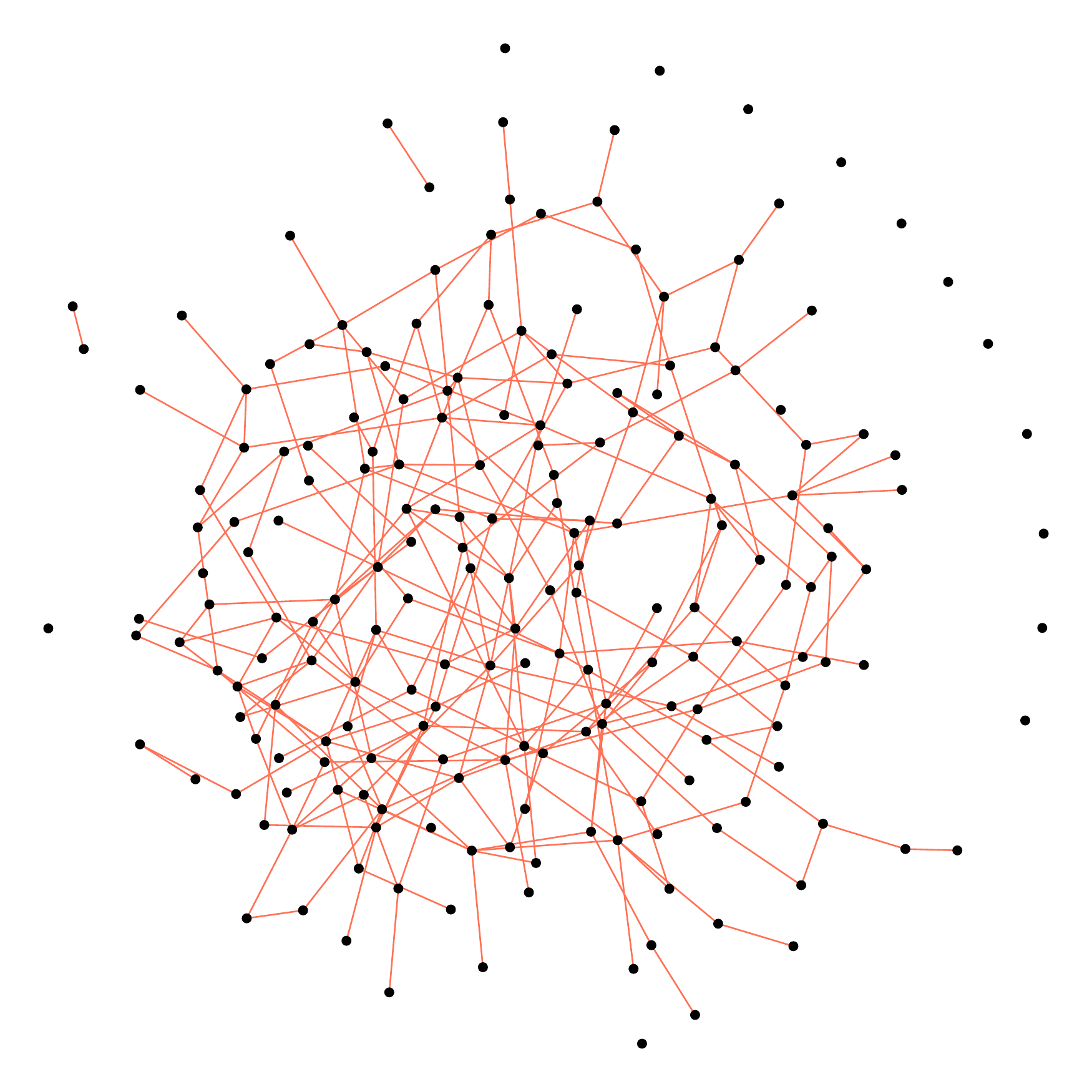} \includegraphics[width=0.3\columnwidth]{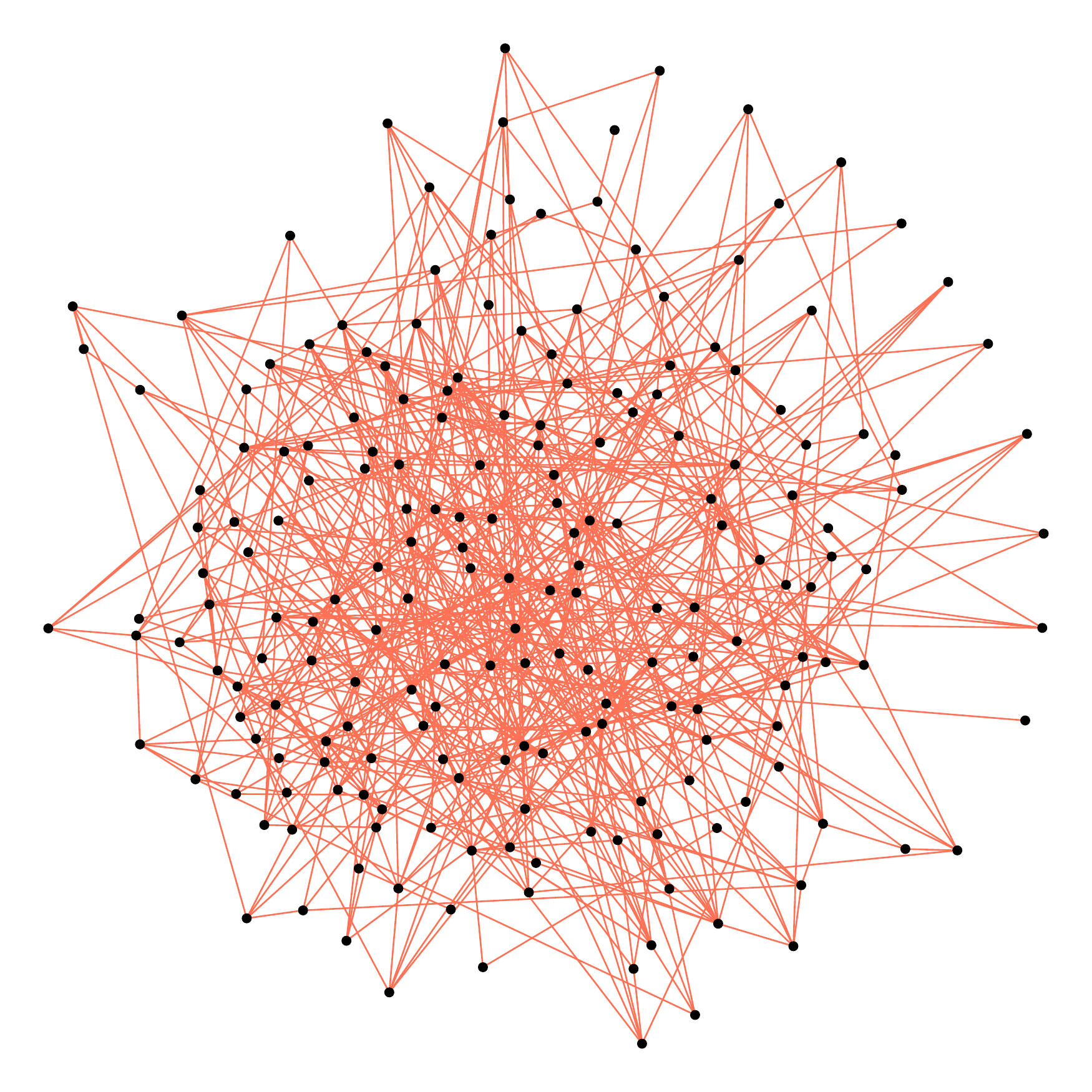} }}
     \caption{
     Examples of graph recovery for a random graph of varying size and number of samples. Each row depicts (from left to right) the true graph, the \thav{} rSME, and the \thav{} sf-glasso, respectively.}
     \label{fig:otherav_rand}
\end{figure*}

\begin{figure*}
 \centerline{
     \subfigure[$n=300, \, d=200.$ The resulting $F_1$-scores are $0.77$ for the \thav{} rSME and $0.72$ for the \thav{} sf-glasso.]{  \includegraphics[width=0.3\columnwidth]{plots_AISTATS/scale-free/true_graphn300d200.pdf}  \includegraphics[width=0.3\columnwidth]{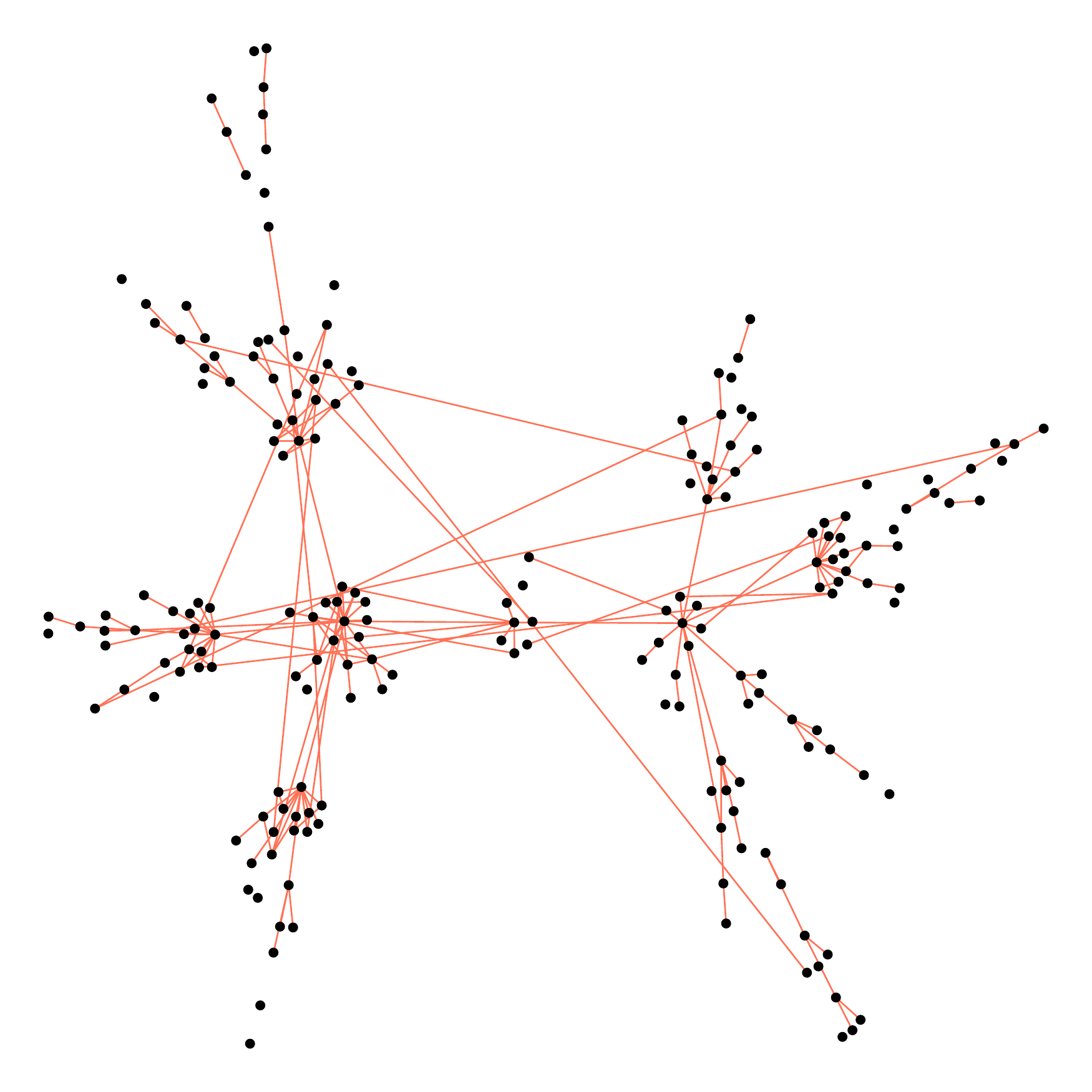} \includegraphics[width=0.3\columnwidth]{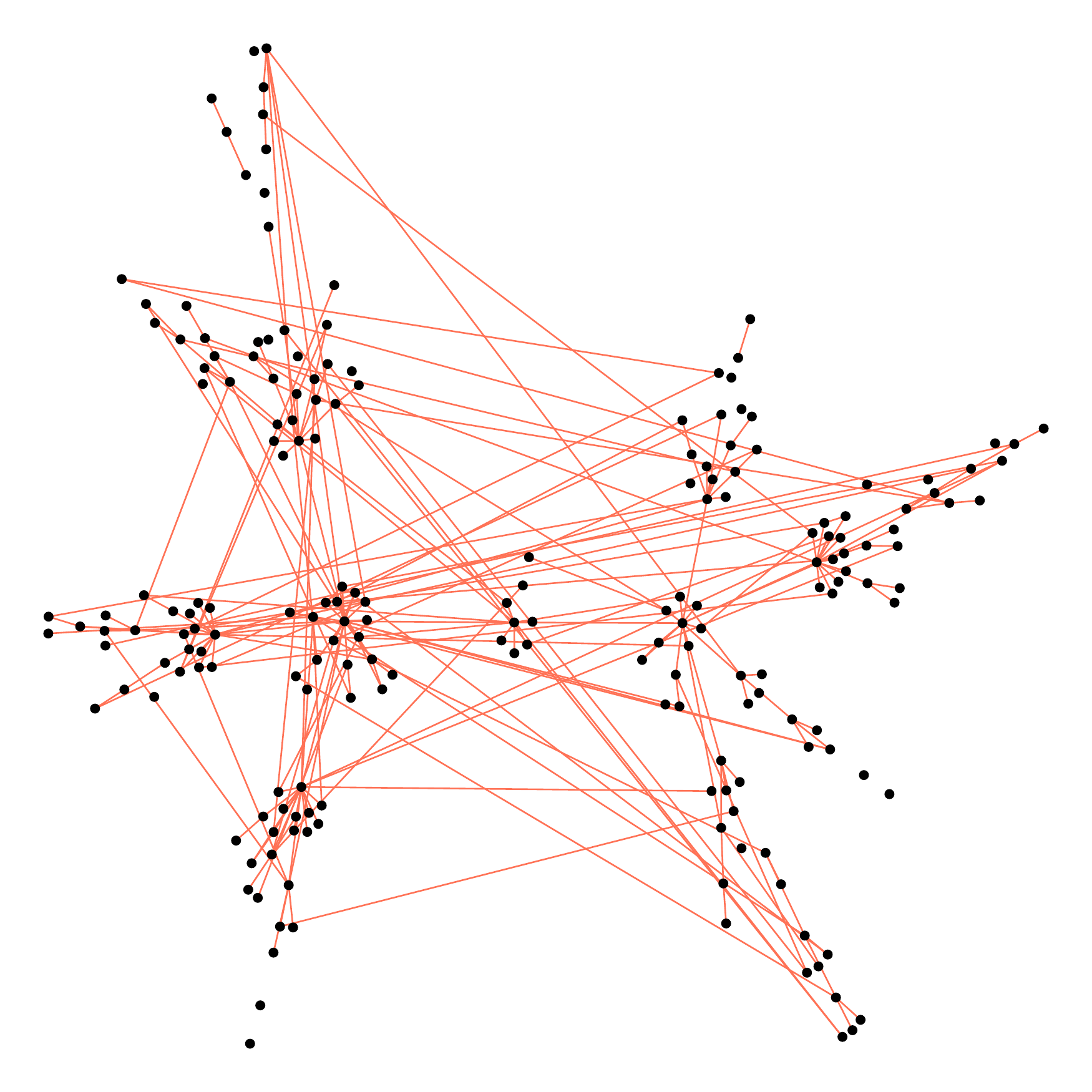} }}
 \centerline{
     \subfigure[$n=200, \, d=300.$ The resulting $F_1$-scores are $0.21$ for the \thav{} rSME and $0.23$ for the \thav{} sf-glasso.]{  \includegraphics[width=0.3\columnwidth]{plots_AISTATS/scale-free/true_graphn200d300.pdf}  \includegraphics[width=0.3\columnwidth]{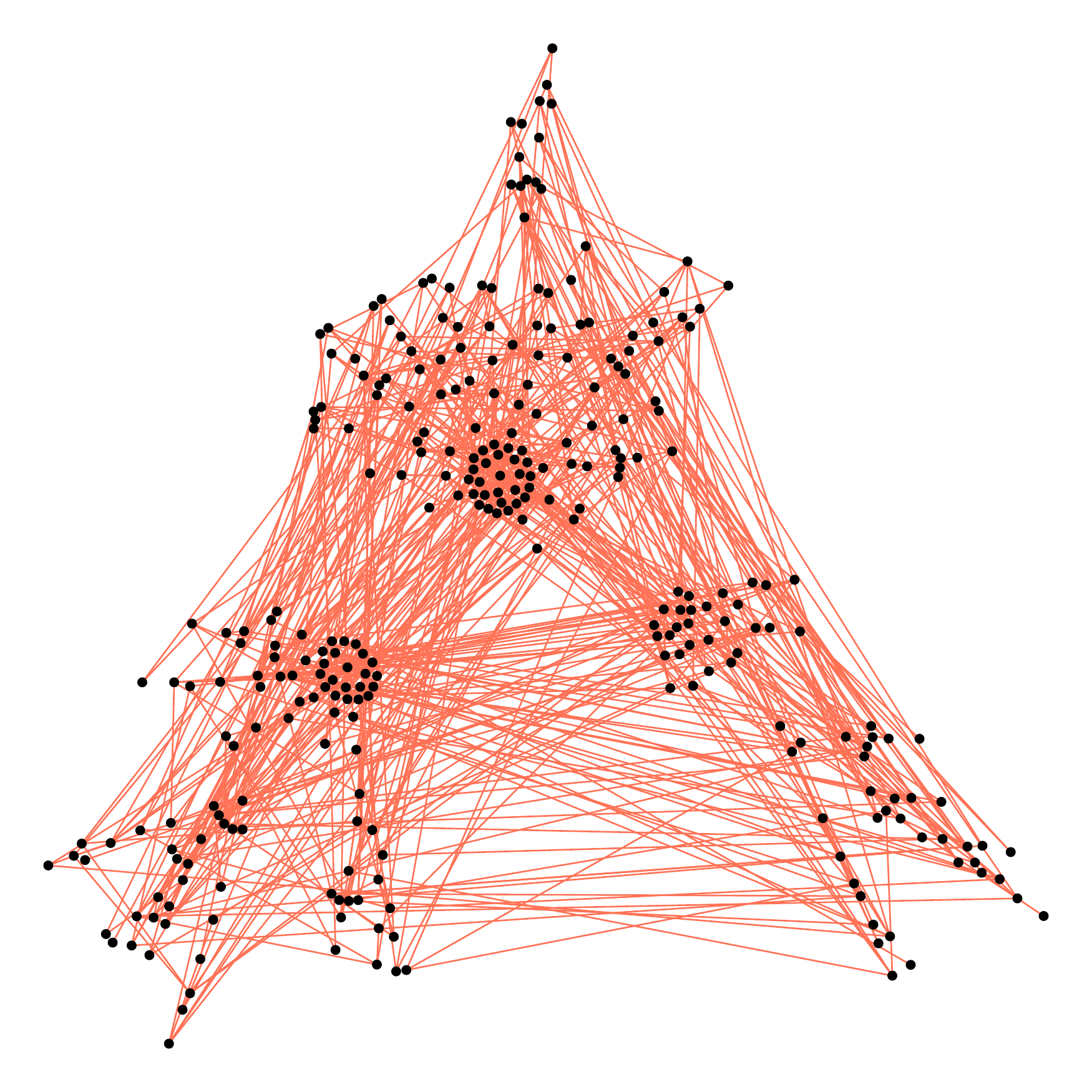} \includegraphics[width=0.3\columnwidth]{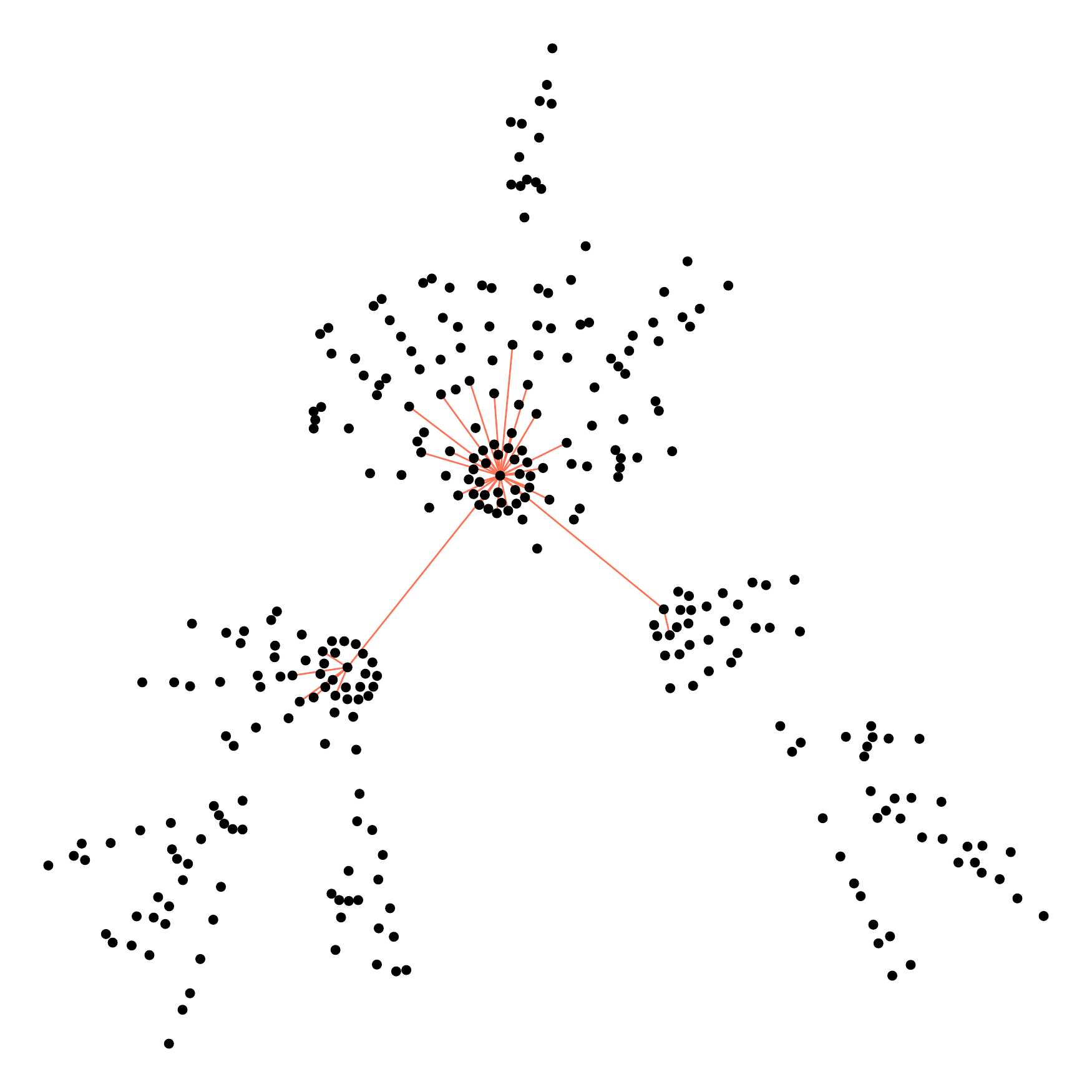} }}
 \centerline{
     \subfigure[$n=400, \, d=200.$ The resulting $F_1$-scores are $0.65$ for the \thav{} rSME and $0.70$ for the \thav{} sf-glasso.]{  \includegraphics[width=0.3\columnwidth]{plots_AISTATS/scale-free/true_graphn400d200.pdf}  \includegraphics[width=0.3\columnwidth]{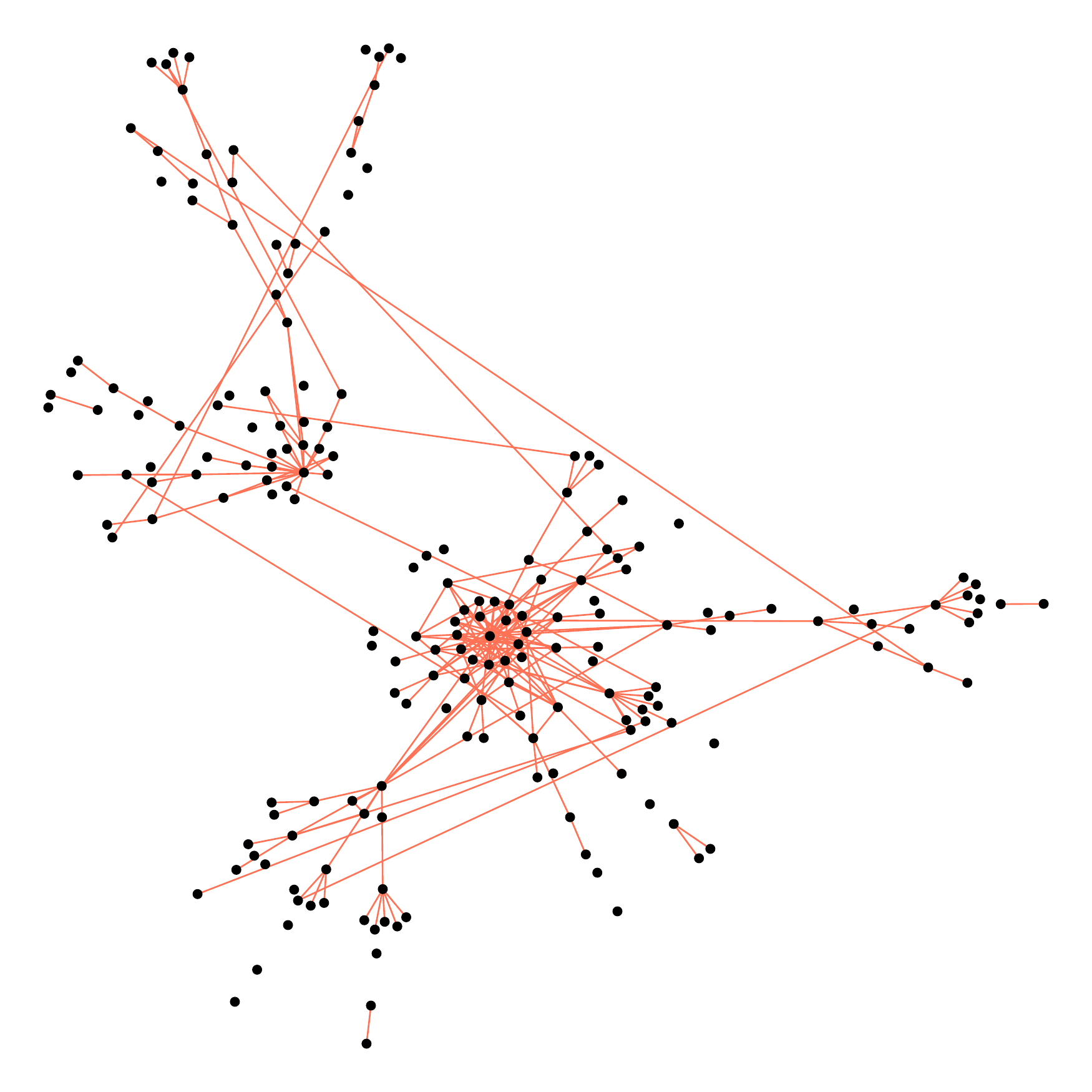} \includegraphics[width=0.3\columnwidth]{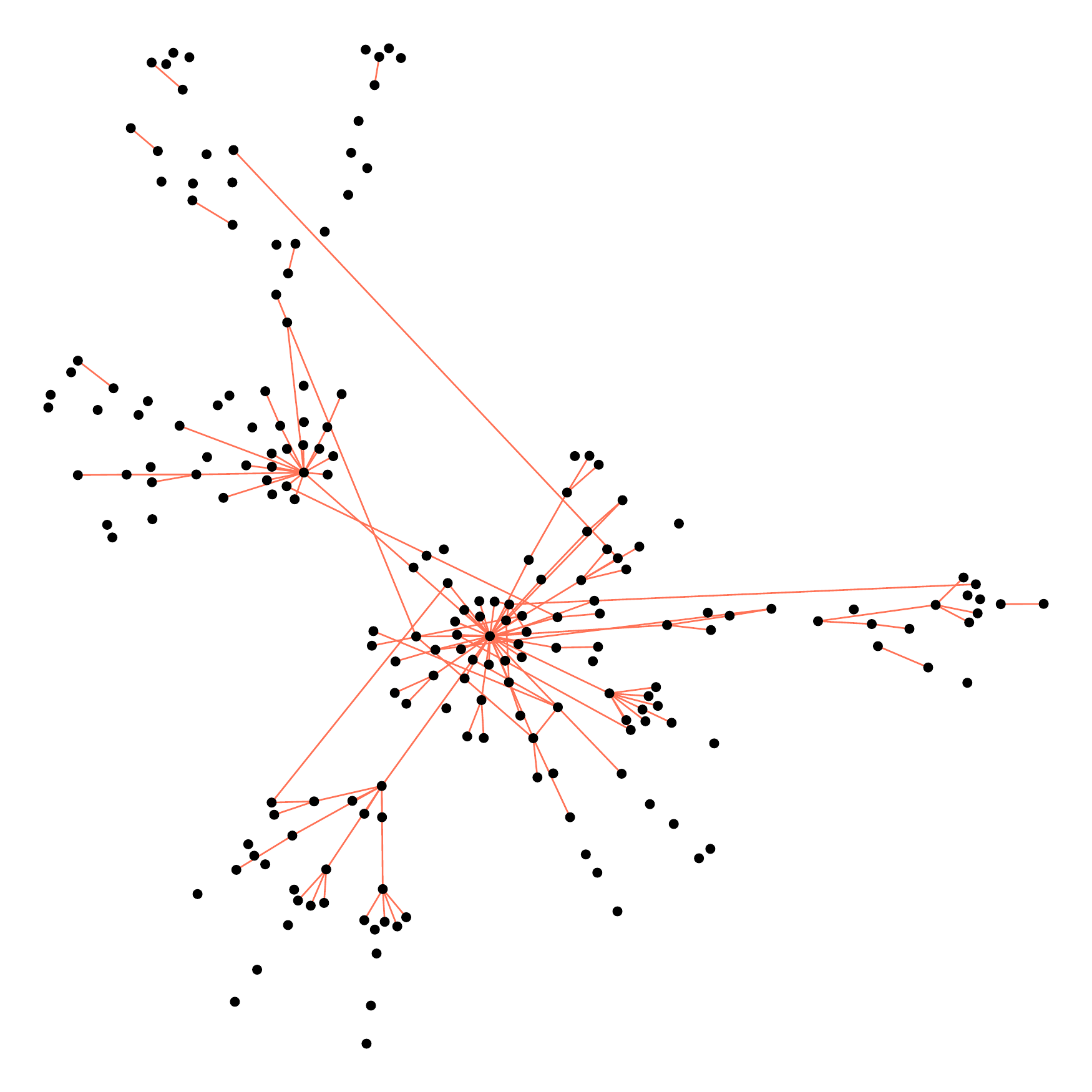} }}
     \caption{
     Examples of graph recovery for a scale-free graph of varying size and number of samples. Each row depicts (from left to right) the true graph, the \thav{} rSME, and the \thav{} sf-glasso, respectively.}
     \label{fig:otherav_sf}
 \end{figure*}
 
\begin{figure}[ht]
    \vskip 0.2in
    \centerline{
         \includegraphics[width=\columnwidth]{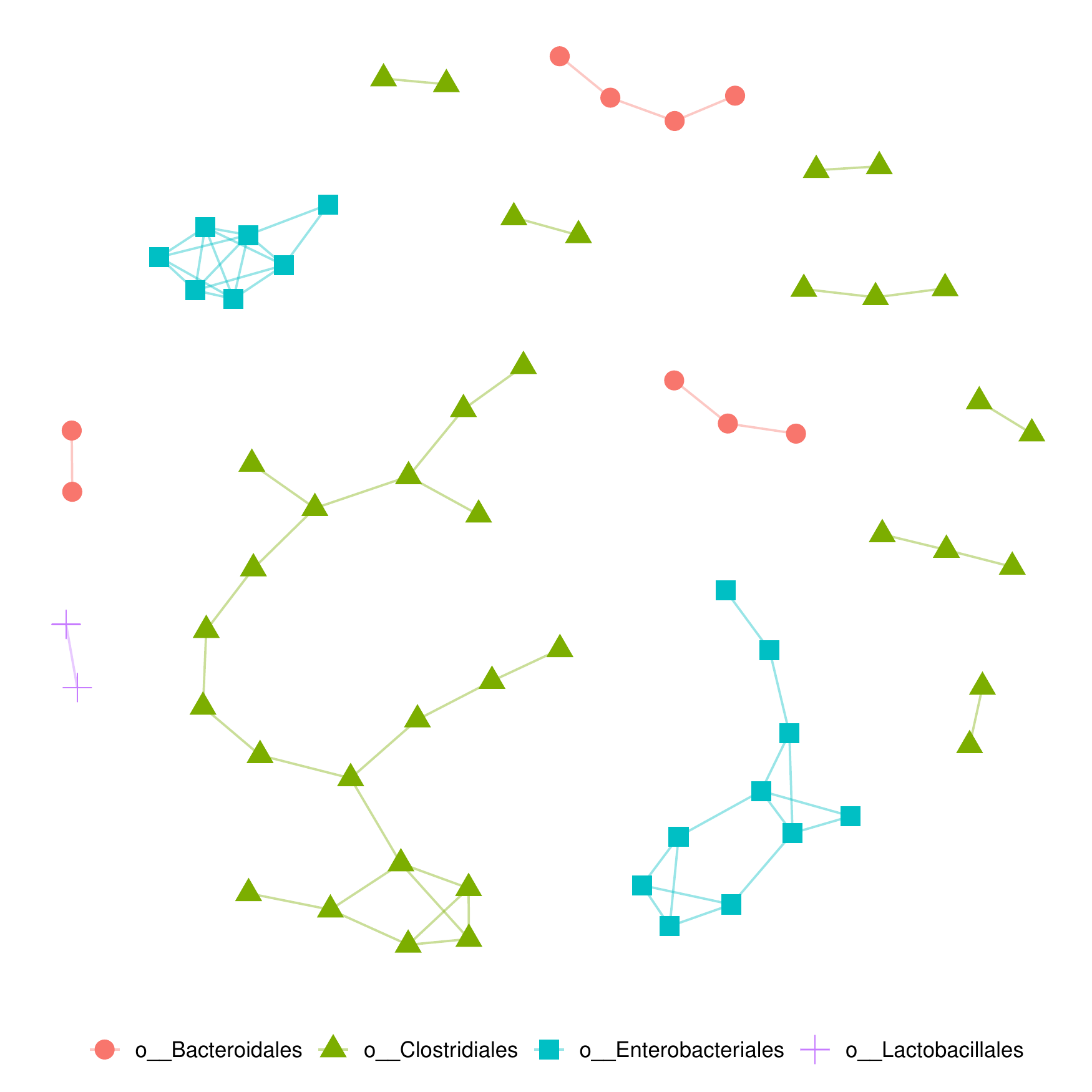}}
    \caption{Recovered microbial network based on the American gut data using \thav{} with the threshold $t:=C\hat{r}$. To avoid too large graphics, we exclude isolated vertices. The color and the shape of a node imply the biological cluster of each OTU.}
    \label{fig:gutthav2} 
\end{figure}

\end{document}